\documentclass[letterpaper,11pt]{article}

\usepackage[singlespacing]{setspace}
\usepackage[margin=1in]{geometry}

\usepackage[T1]{fontenc}
\usepackage[utf8]{inputenc}

\usepackage{natbib}

\usepackage[utf8]{inputenc} %
\usepackage[T1]{fontenc}    %
\usepackage{hyperref}       %
\hypersetup{
    colorlinks,
    breaklinks,
    linkcolor = blue,
    citecolor = blue,
    urlcolor  = blue,
}
\usepackage{url}            %
\usepackage{amsfonts}       %
\usepackage{nicefrac}       %
\usepackage{microtype}      %
\usepackage{amsmath,bm}
\usepackage{amsthm}
\usepackage[algo2e, ruled, vlined]{algorithm2e}
\usepackage{algorithmicx}
\usepackage{algorithm}
\usepackage[noend]{algcompatible}
\usepackage{bbm}      
\usepackage{mathtools}  \usepackage{matlab-prettifier}
\usepackage{amsopn}
\usepackage{amssymb}
\usepackage{graphicx}
\usepackage{xcolor}
\usepackage{footnote}
\usepackage{makecell}
\makesavenoteenv{tabular}
\makesavenoteenv{table}
\usepackage[table]{xcolor}
\usepackage{booktabs}

\usepackage{xcolor}
\usepackage{soul}
\usepackage{changepage}

\definecolor{Green}{rgb}{0.13, 0.65, 0.3}
\definecolor{Amber}{rgb}{0.3, 0.5, 1.0}
\usepackage[]{color-edits}
\addauthor{HL}{Green}
\addauthor{TJ}{Amber}
\usepackage{comment}

\usepackage{minitoc}

\newcommand{\Term}{\textsc{Term}}
\newcommand{\Err}{\textsc{Err}}
\newcommand{\Variance}{\texttt{Var}}

\newcommand{\Alg}{\textbf{Alg}}
\newcommand{\UnifPart}{\texttt{UnifPart}}

\newcommand{\PKL}{\mathrm{PKLCal}}

\newcommand{\Caldist}{\mathrm{Cal}}
\newcommand{\PCaldist}{\mathrm{PCal}}

\newcommand{\FSR}{\mathrm{PSReg}}

\newcommand{\ExtReg}{\mathrm{ExtReg}}

\newcommand{\optp}{p^\star}
\newcommand{\optq}{q^\star}

\sethlcolor{gray}

\newcommand{\Ind}[1]{ \field{I}{\left\{{#1}\right\}} }

\newcommand{\wh}[1]{\widehat{#1}}

\newcommand{\Betadist}{\texttt{Beta}}

\newcommand{\Clip}{\text{Clip}}

\newcommand{\naturalnum}{\mathbb{N}}

\newcommand{\bi}{\begin{itemize}}
\newcommand{\ei}{\end{itemize}}

\newcommand\numberthis{\addtocounter{equation}{1}\tag{\theequation}}

\theoremstyle{theorem} 
\newtheorem{theorem}{Theorem}[section]
\newtheorem{lemma}[theorem]{Lemma}

\newtheorem{corollary}[theorem]{Corollary}

\newtheorem{definition}[theorem]{Definition}
\newtheorem{condition}[theorem]{Condition}
\newtheorem{problem}[theorem]{Problem}

\usepackage{amssymb}%
\usepackage{pifont}%

\allowdisplaybreaks

\DeclareMathOperator*{\argmin}{\arg\!\min}

\usepackage{amsmath}
\usepackage{amssymb}
\usepackage{graphicx}
\usepackage{mathtools}
\usepackage{enumerate}
\usepackage{enumitem}
\usepackage{footnote}
\usepackage{float}
\usepackage{xspace}
\usepackage{multirow}
\usepackage{nicefrac}
\usepackage{wrapfig}
\usepackage{framed}
\usepackage{url}
\usepackage{tikz}
\usetikzlibrary{arrows,chains,matrix,positioning,scopes}

\newcommand{\calL}{{\mathcal{L}}}

\newcommand{\calS}{{\mathcal{S}}}

\newcommand{\calE}{{\mathcal{E}}}
\newcommand{\calR}{{\mathcal{R}}}
\newcommand{\calT}{{\mathcal{T}}}
\newcommand{\calP}{{\mathcal{P}}}
\newcommand{\calZ}{{\mathcal{Z}}}

\newcommand{\calN}{{\mathcal{N}}}
\newcommand{\calF}{{\mathcal{F}}}

\newcommand{\calQ}{{\mathcal{Q}}}

\newcommand{\loss}{\ell}

\newcommand{\field}[1]{\mathbb{#1}}

\newcommand{\fR}{\field{R}}

\newcommand{\E}{\field{E}}

\renewcommand{\P}{\field{P}}

\newcommand{\KL}{{\text{\rm KL}}}

\newcommand{\wt}{\widetilde}

\newcommand{\order}{\ensuremath{\mathcal{O}}}
\newcommand{\otil}{\ensuremath{\widetilde{\mathcal{O}}}}

\newcommand{\rbr}[1]{\left(#1\right)}

\newcommand{\sbr}[1]{\left[#1\right]}

\newcommand{\cbr}[1]{\left\{#1\right\}}

\newcommand{\abr}[1]{\left|#1\right|}

\DeclareFontFamily{OMX}{MnSymbolE}{}
\DeclareFontShape{OMX}{MnSymbolE}{m}{n}{
    <-6>  MnSymbolE5
   <6-7>  MnSymbolE6
   <7-8>  MnSymbolE7
   <8-9>  MnSymbolE8
   <9-10> MnSymbolE9
  <10-12> MnSymbolE10
  <12->   MnSymbolE12}{}
\DeclareSymbolFont{mnlargesymbols}{OMX}{MnSymbolE}{m}{n}
\SetSymbolFont{mnlargesymbols}{bold}{OMX}{MnSymbolE}{b}{n}
\DeclareMathDelimiter{\llangle}{\mathopen}{mnlargesymbols}{'164}{mnlargesymbols}{'164}
\DeclareMathDelimiter{\rrangle}{\mathclose}{mnlargesymbols}{'171}{mnlargesymbols}{'171}

\usepackage{prettyref}
\newcommand{\pref}[1]{\prettyref{#1}}

\newcommand{\savehyperref}[2]{\texorpdfstring{\hyperref[#1]{#2}}{#2}}
\newrefformat{eq}{\savehyperref{#1}{Eq.~\eqref{#1}}}
\newrefformat{eqn}{\savehyperref{#1}{Equation~\eqref{#1}}}
\newrefformat{lem}{\savehyperref{#1}{Lemma~\ref*{#1}}}
\newrefformat{def}{\savehyperref{#1}{Definition~\ref*{#1}}}
\newrefformat{line}{\savehyperref{#1}{line~\ref*{#1}}}
\newrefformat{thm}{\savehyperref{#1}{Theorem~\ref*{#1}}}
\newrefformat{cond}{\savehyperref{#1}{Condition~\ref*{#1}}}
\newrefformat{corr}{\savehyperref{#1}{Corollary~\ref*{#1}}}
\newrefformat{cor}{\savehyperref{#1}{Corollary~\ref*{#1}}}
\newrefformat{col}{\savehyperref{#1}{Corollary~\ref*{#1}}}
\newrefformat{sec}{\savehyperref{#1}{Section~\ref*{#1}}}
\newrefformat{app}{\savehyperref{#1}{Appendix~\ref*{#1}}}
\newrefformat{assum}{\savehyperref{#1}{Assumption~\ref*{#1}}}
\newrefformat{ex}{\savehyperref{#1}{Example~\ref*{#1}}}
\newrefformat{fig}{\savehyperref{#1}{Figure~\ref*{#1}}}
\newrefformat{tab}{\savehyperref{#1}{Table~\ref*{#1}}}
\newrefformat{alg}{\savehyperref{#1}{Algorithm~\ref*{#1}}}
\newrefformat{rem}{\savehyperref{#1}{Remark~\ref*{#1}}}
\newrefformat{conj}{\savehyperref{#1}{Conjecture~\ref*{#1}}}
\newrefformat{prop}{\savehyperref{#1}{Proposition~\ref*{#1}}}
\newrefformat{proto}{\savehyperref{#1}{Protocol~\ref*{#1}}}
\newrefformat{prob}{\savehyperref{#1}{Problem~\ref*{#1}}}
\newrefformat{claim}{\savehyperref{#1}{Claim~\ref*{#1}}}
\newrefformat{que}{\savehyperref{#1}{Question~\ref*{#1}}}
\newrefformat{obs}{\savehyperref{#1}{Observation~\ref*{#1}}}

\definecolor{myhighlight}{RGB}{220,220,220}

\title{Adaptive Calibration in Non-Stationary Environments}

\author{%
Junyan Liu\thanks{University of Washington. Email: \texttt{junyanl1@cs.washington.edu}.}
\and
Haipeng Luo\thanks{University of Southern California. Email: \texttt{haipengl@usc.edu}.}
\and
Lillian J. Ratliff\thanks{University of Washington. Email: \texttt{ratliffl@uw.edu}.
}
}

\date{}

\begin{document}
\maketitle

\doparttoc
\faketableofcontents

\begin{abstract}
Making calibrated online predictions is a central challenge in modern AI systems. Much of the existing literature focuses on fully adversarial environments where outcomes may be arbitrary, leading to conservative algorithms that can perform suboptimally in more benign settings, such as when outcomes are nearly stationary. This gap raises a natural question: can we design online prediction algorithms whose calibration error automatically adapts to the degree of non-stationarity in the environment, smoothly interpolating between i.i.d. and adversarial regimes? We answer this question in the affirmative and develop a suite of algorithms that achieve adaptive calibration guarantees under multiple calibration measures. Specifically, with $T$ being the number of rounds, $K$ being the unknown number of i.i.d. segments of the environment, and $C\in[0,T]$ being another unknown non-stationary measure defined as the minimal $\ell_1$ deviation of the mean outcomes, our algorithms attain $\otil(\min\{\sqrt{T}+(TC)^{\frac{1}{3}}, \sqrt{KT}\})$ for $\ell_1$ calibration error and $\otil(\min\{(1+C)^{\frac{1}{3}}, K\})$ for both $\ell_2$ and pseudo KL calibration error. These bounds match the optimal rates in the stationary case ($C=0$ and $K=1$) and recover known guarantees in the fully adversarial regime ($C, K=\Omega(T)$). Our approach builds on and extends prior work~\citep{hu2025efficient, luo2025simultaneous}, introducing an epoch-based scheduling together with a novel non-uniform partition of the prediction space that allocates finer resolution near the underlying ground truth.
\end{abstract}

\section{Introduction}\label{sec:intro}
Calibrated predictions are a cornerstone of reliable decision-making in modern AI systems. In high-stakes applications such as medical diagnosis, weather forecasting, and risk assessment, it is often not enough for predictions to be accurate on average; they must also be \textit{well-calibrated}, meaning that predicted probabilities faithfully reflect empirical frequencies. For example, among all events assigned probability $0.7$, approximately $70\%$ should occur. This requirement and its variants have led to a rich literature on calibration, starting from~\citep{dawid1982well}.

In many real-world applications, predictions are made sequentially, and the underlying data-generating process may evolve over time. This motivates the study of \textit{online calibration}, where a forecaster repeatedly outputs probabilistic predictions and observes outcomes in a potentially changing environment. Starting from~\citet{foster1998asymptotic}, a prominent line of work analyzes this problem under the fully adversarial regime, providing algorithms that guarantee vanishing calibration error regardless of how the outcomes are generated. While these algorithms offer strong robustness guarantees, they are often too conservative at the same time and may fail to exploit benign structure when the environment exhibits regularity, such as stationarity or slow variation.

For example, when outcomes are i.i.d. generated, a simple algorithm that proceeds in epochs with doubling length and predicts the empirical average outcomes of all previous epochs enjoys $\order(\log T)$ $\ell_2$ calibration error after $T$ rounds (as we will show), even though it completely fails if outcomes are arbitrary.
On the other hand, the best existing algorithms, designed for the adversarial regimes, achieve $\otil(T^{1/3})$ $\ell_2$ calibration error generally and fail to adapt to i.i.d. data~\citep{foster2023calibeating, fishelson2025full, hu2025efficient}.
This motivates a natural and fundamental question that our work focuses on: \textit{can we design online calibration algorithms that adapt to the degree of non-stationarity in the environment?} Ideally, such algorithms would achieve strong worst-case guarantees while automatically improving their performance in more stationary settings, interpolating smoothly between i.i.d. and adversarial regimes without any prior knowledge of the environment.

\paragraph{Contributions.}
In this work, we answer this question affirmatively, 
developing a suite of algorithms with adaptive calibration guarantees under different calibration measures, including the standard $\ell_1$ and $\ell_2$ calibration error, denoted as $\Caldist_1$ and $\Caldist_2$, as well as the recently proposed (pseudo) KL calibration error $\PKL$~\citep{luo2025simultaneous} (see \pref{sec:pre} for formal definitions).
More specifically, 
our contributions are summarized below; see also \pref{tab:summary_result} for an overview of our results and comparisons with existing work.

\begin{itemize}[leftmargin=*]
\item 
To establish a baseline, we first consider the i.i.d.~setting, where $\Caldist_1 = \Omega(\sqrt{T})$ is a folklore; see e.g.,~\citet{qiao2021stronger}.
We extend the ideas to show an $\Omega(\log T)$ lower bound for both $\Caldist_2$ and $\PKL$.
We then show that all these bounds are tight up to logarithmic factors via a simple epoch-based algorithm that predicts the empirical average outcomes of previous epochs (or a clipped version of it in the case of $\PKL$) (see \pref{app:simple_block} for details).
It is clear that this algorithm does not work in the adversarial regime, and we include it solely to establish the benchmark rates attainable in the i.i.d. regime.

\item To smoothly interpolate between i.i.d.~and adversarial regimes, we propose a non-stationarity measure $C \in [0,T]$ that is the minimal $\ell_1$ deviation of the mean outcomes and naturally shows up as an overhead due to non-stationarity in standard concentration bounds.
The i.i.d.~regime corresponds to $C=0$ while the adversarial regime corresponds to $C=T$.
Additionally, we also consider another natural non-stationarity measure $K \in \{1, \ldots, T\}$ which counts the number of i.i.d. segments of a piecewise stationary environment.
The exact values of $C$ and $K$ are unknown ahead of time, but we provide a reduction based on a doubling trick and a stationarity test
that converts any algorithm that requires the knowledge of $C$ into an algorithm that does not require the knowledge of $C$ or $K$, incurring only an additional logarithmic factor in the calibration error. 
We therefore assume the knowledge of $C$ in subsequent discussions.

\item Our first result (\pref{sec:cal1}) concerns $\Caldist_1$, arguably the most canonical calibration measure in the literature.
While it has been recently shown in a non-constructive way that $\Caldist_1=\order(T^{\frac{2}{3}-\epsilon})$ is achievable for a small fixed constant $\epsilon >0$~\citep{dagan2025breaking},
the best bound achieved by existing explicit algorithms is $\otil(T^{\frac{2}{3}})$.
In particular, we show that one of these algorithms, developed by~\citet{hu2025efficient}, is able to achieve $\Caldist_1 = \tilde{O}\big(\sqrt{T} + (T C)^{\frac{1}{3}}\big)$ (or more generally, $\Caldist_1 = \tilde{O}\big(\min\{\sqrt{T} + (T C)^{\frac{1}{3}}, \sqrt{KT}\}\big)$ after using the aforementioned reduction) when the parameter is set appropriately, which smoothly interpolates between the optimal $\otil(\sqrt{T})$ rate in the i.i.d.~regime and the existing $\otil(T^{\frac{2}{3}})$ rate in the adversarial regime.
While algorithmically this only requires changing the parameter's value, we emphasize that analytically this requires a novel proof that bounds the number of times a prediction is made in terms of its distance to a certain ground truth and the non-stationarity $C$.
This is a property specifically tied to the algorithm of~\citet{hu2025efficient}, and we are unable to prove the same adaptive $\Caldist_1$ bound using other algorithms.

\item Our next result (\pref{sec:cal2_approach1}) is a better bound for  $\Caldist_2$, which receives increasing interest recently since it is statistically easier to optimize compared to $\Caldist_1$ while still ensuring many strong decision-theoretic properties for downstream applications that $\Caldist_1$ enjoys.
We argue that in this case, the algorithm of~\citet{hu2025efficient} cannot achieve our goal by simply setting the parameter differently, since their uniform partition of the prediction space is too wasteful when $C$ is small.
Motivated by this, we propose a general epoch-based framework, where in each epoch, a carefully constructed non-uniform partition is created based on previous observations, so that finer resolution is allocated near the ground truth. 
We then apply the algorithm of~\citet{hu2025efficient} in each epoch using these non-uniform partitions and show that the final algorithm achieves $\Caldist_2 = \otil(\min\{(1+C)^{\frac{1}{3}}, K\})$, again matching the optimal $\otil(1)$ i.i.d.~rate and the best existing adversarial rate of $\otil(T^{\frac{1}{3}})$~\citep{foster2023calibeating, fishelson2025full, hu2025efficient}. %
In fact, our framework is versatile and also allows us to achieve the same $\Caldist_1$ result previously mentioned while improving the time complexity. 

\item Finally, in \pref{sec:PKL}, we further consider a recently proposed measure called (pseudo) KL calibration~\citep{luo2025simultaneous}, which is stronger than $\Caldist_2$ and enjoys an even broader range of applications.
\citet{luo2025simultaneous} show an $\otil(T^{\frac{1}{3}})$ upper bound for KL calibration in a non-constructive way and the same bound for the closely related pseudo KL calibration measure $\PKL$ using an explicit swap-regret-based approach.
We apply their algorithm to our epoch-based framework and show that the final algorithm achieves $\PKL = \otil(\min\{(1+C)^{\frac{1}{3}}, K\})$, matching the optimal i.i.d.~rate and the best known adversarial rate again.
This also shows the versatility of our epoch-based framework for adapting to non-stationarity.
Our analysis, however, is different from that for $\Caldist_2$ and requires a novel combination of the swap-regret guarantee and properties of our non-uniform partition.
\end{itemize}

\begin{table}[t] \label{tab:summary_result}
\centering
\caption{Comparison of calibration guarantees, with $T$ being the number of rounds, $K \in \{1,\ldots, T\}$ being the number of i.i.d.~segments of the environment, and $C\in[0,T]$ being a non-stationarity measure. Results from this paper are highlighted in gray, where the $K$-dependent bounds are due to a reduction from \pref{app:reduction_known2unknown}. The result $\Caldist_1=\otil(T^{\frac{2}{3}-\epsilon})$ by \citet{dagan2025breaking} holds for some small but unspecified constant $\epsilon>0$ without an explicit algorithm (thus no time complexity).
The best $\Caldist_1$ bound for explicit algorithms is $\otil(T^{\frac{2}{3}})$, achieved by for example those algorithms with $\Caldist_2=\otil(T^{\frac{1}{3}})$ since $\Caldist_1 \leq \sqrt{T\Caldist_2}$.}
\resizebox{\textwidth}{!}{
\begin{tabular}{ccccc}
\toprule
Metric & Reference & Bound &\makecell{Outcome \\ assumption}& \makecell{Time complexity \\ per round} \\
\midrule
\multirow{5}{*}{$\Caldist_1$} 
& folklore %
& $\Omega \big( \sqrt{T}  \big)$ &i.i.d. & -- \\
& \cellcolor{myhighlight} \pref{thm:simple_block_Cal1}  
& \cellcolor{myhighlight} $\otil \big( \sqrt{T}  \big)$ & \cellcolor{myhighlight}i.i.d. &  $ \cellcolor{myhighlight} \order(1)$\\
& \citet{dagan2025breaking}
& $\Omega \big( T^{0.54} \big)$ & none &-- \\
& \citet{dagan2025breaking}
& $\otil \big( T^{\frac{2}{3}-\epsilon} \big)$ & none &-- \\
&\cellcolor{myhighlight}  \pref{thm:CalL1_knownC} 
&\cellcolor{myhighlight} $\otil \big( \min \big\{\sqrt{T} +(TC)^{\frac{1}{3}},\sqrt{KT} \big\} \big)$ &\cellcolor{myhighlight}none &\cellcolor{myhighlight} 
$\otil \big(\min \big\{  (T^2/C)^{\frac{1}{3}},\sqrt{KT} \big\} \big)$  \\
&\cellcolor{myhighlight} \pref{thm:cal1_better} 
&\cellcolor{myhighlight} $\otil \big( \min \big\{\sqrt{T} +(TC)^{\frac{1}{3}},\sqrt{KT} \big\} \big)$ &\cellcolor{myhighlight}none &\cellcolor{myhighlight} 
$\otil \big( 1+ \min \big\{ C^{\frac{1}{3}},K^{\frac{1}{2}}T^{\frac{1}{6}} \big\} \big)$  \\
\midrule
\multirow{7}{*}{$\Caldist_2$} 
&\cellcolor{myhighlight} \pref{thm:lower_bound_Cal2}
&\cellcolor{myhighlight} $\Omega \big( \log T \big)$ &\cellcolor{myhighlight}i.i.d. &\cellcolor{myhighlight} -- \\
&\cellcolor{myhighlight} \pref{thm:simple_block_Cal2}  
&\cellcolor{myhighlight} $\order \big( \log^2 T \big)$ &\cellcolor{myhighlight}i.i.d. &\cellcolor{myhighlight} $\order(1)$ \\
& \citet{dagan2025breaking}
& $\Omega \big( T^{0.086} \big)$ & none &-- \\
& \citet{fishelson2025full}   
& $\otil \big( T^{\frac{1}{3}}  \big)$ & none & $\otil \big( T^{\frac{2}{3}}  \big)$\\
&  \citet{hu2025efficient}   
& $\otil \big( T^{\frac{1}{3}}  \big)$ & none & $\otil(T^{\frac{1}{3}})$  \\
&\cellcolor{myhighlight} \pref{thm:cal2_approach1}  
&\cellcolor{myhighlight} $\otil \big( \min \big\{(1+C)^{\frac{1}{3}} ,K\big\}  \big)$ &\cellcolor{myhighlight}none &\cellcolor{myhighlight} $\otil \big( \min \big \{  (1+C)^{\frac{1}{3}},K \big\}  \big)$ \\
&\cellcolor{myhighlight} \pref{corr:Cal2_bound_implied_by_PCal2} 
&\cellcolor{myhighlight} $\otil \big( \min \big\{(1+C)^{\frac{1}{3}},K \big\}  \big)$ &\cellcolor{myhighlight}none &\cellcolor{myhighlight} $\otil \big( \min \big \{  (1+C)^{\frac{2}{3}},K^2 \big\}  \big) $   \\
\midrule
\multirow{4}{*}{$\PKL$} 
& \cellcolor{myhighlight} \pref{corr:lower_bound_PKL}
&\cellcolor{myhighlight} $\Omega \big( \log T \big)$ &\cellcolor{myhighlight}i.i.d. &\cellcolor{myhighlight} -- \\
&\cellcolor{myhighlight} \pref{thm:simple_block_PKL}
&\cellcolor{myhighlight} $\order \big( \log^2 T  \big)$ &\cellcolor{myhighlight}i.i.d. &\cellcolor{myhighlight} $\order(1)$\\
& \citet{luo2025simultaneous}  
& $\otil \big( T^{\frac{1}{3}}  \big)$ & none &$\otil \big( T^{\frac{2}{3}}  \big) $ \\
&\cellcolor{myhighlight} \pref{thm:PKLCal_bound_knownC} 
&\cellcolor{myhighlight} $\otil \big( \min \big\{(1+C)^{\frac{1}{3}},K \big\}  \big)$ &\cellcolor{myhighlight}none & \cellcolor{myhighlight} $\otil \big( \min \big \{  (1+C)^{\frac{2}{3}},K^2 \big\}  \big)$ \\
\bottomrule
\end{tabular}
}
\end{table}

\paragraph{Related work.}
There is a vast literature on calibration; we only focus on results for the most relevant non-contextual online setting here.
\citet{foster1998asymptotic} are the first to show that $\Caldist_1=\otil(T^{\frac{2}{3}})$ is possible even without any assumption on how the outcomes are generated.
For more than two decades, the best lower bound for this problem remains to be $\Omega(\sqrt{T})$, coming from a folklore construction using i.i.d. data.
The first breakthrough in bridging this gap is by~\citet{qiao2021stronger}, who show a surprising $\Omega(T^{0.528})$ lower bound for $\Caldist_1$.
This is recently improved to $\Omega(T^{0.54})$ by~\citet{dagan2025breaking}, who, as mentioned, also develop an $\otil(T^{\frac{2}{3}-\epsilon})$ upper bound (for some small constant $\epsilon > 0$) in a non-constructive way, illustrating that the minimax rate for this problem is somewhere between $\Omega(T^{0.54})$ and  $\otil(T^{\frac{2}{3}-\epsilon})$, a surprising phenomenon in online learning.

Since the work of~\citet{foster1998asymptotic}, there have been many other calibration measures proposed in the literature to overcome different issues of $\Caldist_1$.
For example, one of the reasons that $\Caldist_2$ attracts increasing interest recently is that it is statistically easier to optimize while still sharing many decision-theoretic properties of $\Caldist_1$. 
Indeed, \citet{foster2023calibeating, fishelson2025full, luo2025simultaneous, hu2025efficient} show that $\Caldist_2 = \otil(T^{\frac{1}{3}})$ (a much better rate than those for $\Caldist_1$) is achievable, and this implies that all downstream agents using these calibrated predictions to make their decisions enjoy $\otil(T^{\frac{1}{3}})$ swap regret for any bounded proper losses with a smooth univariate form.
Similarly, \citet{luo2025simultaneous} also propose (pseudo) KL calibration, which upper bounds $\Caldist_2$ (thus stronger) and implies swap regret guarantees for any proper losses with
a twice continuously differentiable univariate form (that is not necessarily smooth).
Despite being stronger, \citet{luo2025simultaneous} show that the same bound $\otil(T^{\frac{1}{3}})$ is also achievable for KL calibration.
There are currently no better lower bounds for $\Caldist_2$ and KL calibration than simply converting the lower bound $\Caldist_1 = \Omega(T^{0.54})$ into $\Caldist_2 = \Omega(T^{0.086})$ via the connection $\Caldist_1 \leq \sqrt{T\Caldist_2}$.

While we only consider these three calibration measures (due to their fundamental roles in calibration), we believe that our work serves as an important first step in developing more adaptive calibration algorithms, a direction that has not been explored yet to our knowledge.
We also expect that our epoch-based framework with non-uniform partition to be useful for studying other calibration measures, such as smooth calibration~\citep{kakade2008deterministic} and its subsample version~\citep{haghtalab2024truthfulness}, distance from calibration~\citep{blasiok2023unifying, qiao2024distance},  U-calibration~\citep{kleinberg2023u, luo2024optimal}, and calibration decision loss~\citep{hu2024calibration}. 

Finally, we point out while our specific problem has not been studied before, adapting to non-stationarity is generally a heavily-studied topic in online learning.
In particular, the non-stationarity measure we use in this work (that is, minimal $\ell_1$ deviation) is most related to similar measures used in~\citet{jin2023no, liu2026online} for reinforcement learning.

\textbf{Discussion with concurrent and independent work \citep{huang2026instance}.} Concurrent with our work, \citet{huang2026instance} also study adaptive calibration in non-stationary environments. They propose an alternative algorithm achieving a bound of $\Caldist_1=\otil \big(\min \big\{T^{2/3}, \sqrt{KT} \big\} \big)$ which interpolates between the stationary and fully adversarial regimes. 
This result is incomparable to the bound $\Caldist_1=\otil(\sqrt{T}+(TC)^{1/3})$ established in the first version of our paper.
After becoming aware of each other’s work, however, we observed that both approaches can in fact yield both guarantees for $\Caldist_1$.
Specifically, by adapting \citep[Algorithm 2]{liu2026online} to the calibration setting, any algorithm designed for \textit{known} $C$ can be extended to simultaneously adapt to both unknown $K$ and unknown $C$. 
This powerful reduction, originally introduced by \citet{liu2026online} for the uninformed Markov games, readily applies here; see \pref{app:reduction_known2unknown} for details. Applying this framework to our results immediately yields $\Caldist_1=\otil \big(\min \big\{  \sqrt{T}+(TC)^{1/3} ,\sqrt{KT}\big\} \big)$ and improved per-round time complexity $\otil \big( 1+ \min \big\{ C^{\frac{1}{3}},K^{\frac{1}{2}}T^{\frac{1}{6}} \big\} \big)$, along with analogous improvements for $\Caldist_2$ and $\PKL$.
Beyond these overlapping results, the two works pursue different directions. Our paper develops a general framework that adapts to a broader class of calibration metrics, while focusing on the non-contextual setting. In contrast, \citet{huang2026instance} extend their algorithmic approach to the contextual setting (that is, multicalibration), but focus exclusively on the $\Caldist_1$ metric.

\section{Preliminaries} \label{sec:pre}

\textbf{Notations.} For an integer $N \in \naturalnum$, we denote $[N]=\{1,\ldots,N\}$.
Let $\calF_t$ be the history up to and including round $t$.
We use $\E_t$ to denote the expectation conditioning on $\calF_{t-1}$.
For a set $\calS$, we use $\Delta(\calS)$ to denote the set of all probability distributions over $\calS$.
We use $\texttt{Ber}(p)$ to denote the Bernoulli distribution with mean $p$.
We write $\KL(p,q)=p\log(p/q)+(1-p)\log((1-p)/(1-q))$ to denote the KL divergence of two Bernoulli distributions with means $p,q$. For any $a \leq b$, we write $\Clip_{[a,b]}(x)=\min\{\max\{x,a\},b\}$.
Throughout the paper, we use $\iota$ to denote a factor of order $\Theta(\log(T/\delta))$ where $\delta$ is a failure probability.

\paragraph{Problem setup.}
We consider the following sequential prediction problem between a forecaster and an adversary.
Ahead of time, the adversary decides $\{q_t\}_{t \in [T]}$ where $q_t \in [0,1]$ is chosen based on the knowledge of the forecaster’s algorithm.\footnote{We consider such an oblivious adversary for ease of presentation; our analysis directly generalizes to an adaptive adversary who decides $q_t$ based on the history $\calF_{t-1}$.}
At each round $t \in [T]$, the adversary draws an outcome $y_t \sim \texttt{Ber}(q_t)$, and simultaneously, the forecaster predicts $p_t \in [0,1]$, the probability of the outcome being $1$, after which the true outcome $y_t$ is revealed to the forecaster.
It is well known that randomness is generally required to achieve calibration, and we let $\calP_t \in \Delta([0,1])$ be the forecaster’s conditional distribution of $p_t$.

Given outcomes $y_1,\ldots,y_T$ and predictions $p_1,\ldots,p_T$, the forecaster's $\ell_r$-calibration error for $r\geq 1$ is defined as 
\begin{equation}\label{eq:cal}
\Caldist_r := \sum_{\substack{p \in [0,1]: \\ n(p)> 0}} n(p) \abr{ \frac{1}{n(p)} \sum_{t=1}^T \rbr{ y_t -p_t } \Ind{p_t = p} }^r
= \sum_{\substack{p \in [0,1]: \\ n(p)> 0}} n(p) \abr{ \frac{1}{n(p)} \sum_{t: p_t = p} y_t - p}^r,
\end{equation}
where $n(p) = \sum_{t=1}^T \Ind{p_t=p}$ is the number of times prediction $p$ is made. 
Note that the term $\abr{ \frac{1}{n(p)} \sum_{t: p_t = p} y_t - p}^r$ measures how close the forecaster's prediction $p$ is compared to the empirical average outcomes conditioning on all the rounds where the prediction $p$ is made.
In this paper, we focus on $r$ being $1$ or $2$, arguably the two most studied values.  
A closely related concept is \textit{pseudo} calibration error, first mentioned explicitly in~\citet{luo2025simultaneous}.
In particular, they show that minimizing $\Caldist_2$ can be achieved by minimizing the pseudo $\ell_2$-calibration error, defined as $\PCaldist_2:= \sum_{t=1}^T \E_{p \sim \calP_t}\big[ \rbr{\bar{\rho}_p-p}^2 \big]$, where $\bar{\rho}_p = \frac{ \sum_{t=1}^T \calP_t(p) y_t}{\sum_{t=1}^T \calP_t(p)}$. 
In other words, compared to \pref{eq:cal}, the pseudo version removes the random variable $p_t$ by replacing $n(p)$ with a conditional expected version $\sum_{t=1}^T \calP_t(p)$ and similarly $\frac{1}{n(p)} \sum_{t: p_t = p} y_t$ with $\bar{\rho}_p$, making it intuitively easier to minimize.

Another natural calibration measure of interest is KL-calibration~\citep{luo2025simultaneous}, which replaces the $\ell_r$ distance  with the KL divergence. 
\citet{luo2025simultaneous} show that this is stronger than $\Caldist_2$, but only provide an explicit algorithm for the pseudo version, defined as
$
    \PKL = \sum_{t=1}^T \E_{p \sim \calP_t } \sbr {   \KL \rbr{ \bar{\rho}_p , p } }.
$
Since our work focuses on designing explicit algorithms with non-stationarity adaptivity, we also only consider this pseudo version of KL-calibration.
  
\paragraph{Non-stationarity measure.}
To quantify the non-stationarity of the adversary, we propose the following minimal $\ell_1$ deviation measure:
\begin{equation} \label{eq:def4corrupt}
    C := \sum_{t=1}^T c_t,\quad \text{where } c_t= \abr{q_t  - \optp} \text{ and } \optp \in \argmin_{q \in [0,1]} \sum_{t=1}^T  \abr{ q_t  - q} .
\end{equation}
It is clear that $\optp$ is the median of the mean outcome sequence $q_1, \ldots, q_T$ (which we also sometimes refer to as the ``ground truth''), and thus $C$ can be viewed as the total amount of deviation from the median.
Such an $\ell_1$ deviation naturally shows up in concentration bounds (see e.g.,~\pref{lem:corruption_analysis_Cal2_Approach1}), which is also why it has been used in prior work for different problems~\citep{jin2023no, liu2026online}.
It is clear that $C$ is between $0$ and $T$:
when $C=0$, all $q_t$'s are equal and the outcomes $y_1, \ldots, y_T$ are i.i.d. samples of a fixed Bernoulli distribution, making the problem much easier than the worst-case when $q_t$'s are arbitrary and $C=\Omega(T)$.

Additionally, inspired by~\cite{huang2026instance}, we also consider another natural stationarity measure $K=1+\sum_{t=1}^{T-1} \Ind{q_t \neq q_{t+1}}$, which counts the number of i.i.d. segments of the environment.

\paragraph{Baselines for $C=0$.}
To identify the appropriate target for calibration guarantees that adapt to $C$, we need to examine the right bounds in the two extremes: $C=0$ and $C=\Omega(T)$.
While the latter has been the focus in the literature (which we review in \pref{sec:intro} already), we are unable to find explicit discussions for the former.
Therefore, as a baseline, in \pref{app:simple_block}, we establish matching lower and upper bounds for all three calibration measures we consider.
Specifically, we show that a simple epoch-based algorithm that predicts the average outcomes of previous epochs (or a clipped version of it for the case of $\PKL$) achieves $\otil(\sqrt{T})$ for $\Caldist_1$ and $\order(\log^2 T)$ for both $\Caldist_2$ and $\PKL$, matching the corresponding lower bounds up to logarithmic factors.
All our final adaptive calibration guarantees recover these results when $C=0$.

\paragraph{From unknown $C$ and $K$ to known $C$.}
While our goal is to adapt to the unknown non-stationarity level $C$ and the unknown number of stationary segments $K$, it suffices to first consider the case where $C$ is known. This is because the forecaster observes the outcome at the end of each round and can therefore monitor its calibration error online. Thus, adapting to unknown $C$ alone only requires a standard doubling trick.
To further adapt to unknown $K$, we borrow the idea of \citet{liu2026online} and
equip the doubling trick with stationarity tests. Each failed test indicates that the corresponding interval contains a change point, so the total number of failed tests reflects the number of stationary segments. With a suitable choice of the number of tests, the resulting framework adapts simultaneously to both unknown $C$ and unknown $K$ while incurring only additional logarithmic factors.
See \pref{app:reduction_known2unknown} for details. In light of this reduction, the rest of the paper focuses on the known-$C$ setting and the $C$-dependent bounds only.

\section{Achieving $\Caldist_1=\otil(\sqrt{T}+(TC)^{1/3})$} \label{sec:cal1}
We consider $\Caldist_1$ in this section.
It is natural to start with an existing algorithm that achieves $\Caldist_1 = \otil(T^{\frac{2}{3}})$ and study what modifications are needed to adapt to non-stationarity.
While there are several such algorithms in the literature,
we are only able to find one that achieves our goal: the algorithm of~\citet{hu2025efficient}.
We briefly review their algorithm in a partition-generic form (important for subsequent discussions on $\Caldist_2$); see \pref{alg:cal1_knownC} in the appendix. The algorithm takes as input a finite partition $\Pi=\{J_1,\ldots,J_N\}$ of the prediction space $[0,1]$, where each interval $J_i$ is associated with a grid prediction $z_{J_i}:=\sup_{x\in J_i}x.$
For $\Caldist_1$, we instantiate $\Pi$ as the uniform partition with discretization size $N\in\naturalnum$, namely $J_i=  [\frac{i-1}{N},\frac{i}{N} )$ for each $i \in [N-1]$ and $J_N=[1-\frac{1}{N},1]$.
In this case, $z_{J_i}=i/N$ for each $i\in[N]$.
The algorithm then runs an expert subroutine \textsc{MsMwC} \citep{chen2021impossible} over $2N$ experts
indexed by $(i,\sigma)\in [N]\times\{\pm 1\}$.
At each round $t$, the algorithm invokes \textsc{MsMwC} to obtain the expert weights $\{\omega_{t,i,\sigma}\}_{(i,\sigma)}$, which further induces a distribution $P_t$ over $[0,1]$ such that $\max_{y \in \{0,1\}}\E_{u \sim P_t}\big[\sum_{(i,\sigma)}  \omega_{t,i,\sigma} \cdot \sigma  \Ind{ u \in J_i} (u-y)   \big]= O(\nicefrac{1}{T})$.
The forecaster samples $\wt{p}_t\sim P_t$ and predicts $p_t =z_{J_{i_t}} = \frac{i_t}{N}$, where $i_t\in[N]$ is the unique index such that
$\wt{p}_t\in J_{i_t}$.
After observing $y_t$, the expert algorithm
is updated using the expected loss $  \loss_{t,i,\sigma} =    \E_{u \sim P_t} \sbr{ \sigma \Ind{u \in J_i} (u-y_t) \mid \calF_{t-1} }$.

To achieve $\Caldist_1 = \otil(T^{\frac{2}{3}})$, \citet{hu2025efficient} set the partition size $N$ to be of order $T^{\frac{1}{3}}$.
Our first main result is to show that simply changing the partition size $N$ to order $\min\cbr{\sqrt{T}, (T^2/C)^{\frac{1}{3}}}$ achieves our goal, even though it requires a novel refined analysis to prove so.

\begin{theorem} \label{thm:CalL1_knownC}
\pref{alg:cal1_knownC} with a uniform partition $\Pi$ for some appropriate value of $N$ (see \pref{eq:choice_N_cal1}) ensures that with probability at least $1-\delta$, $\Caldist_1 = \order \rbr{ \sqrt{\iota T \log T} + (\iota TC \log T)^{\frac{1}{3}}  }$. The computational complexity of the algorithm is $\otil \big(\min \big\{ \sqrt{T},(T^2/C)^{\frac{1}{3}} \big\}\big)$ per round.
\end{theorem}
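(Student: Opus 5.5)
We follow the analysis template of \citet{hu2025efficient}, splitting $\Caldist_1$ into a discretization part and a bias part. Write $n_i$ for the number of rounds with $i_t=i$. Since $p_t=z_{J_{i_t}}$ and $|z_{J_{i_t}}-\wt p_t|\le 1/N$,
\[
\Caldist_1=\sum_{i}\Big|\sum_{t:i_t=i}(y_t-z_{J_i})\Big|\le \sum_i\Big|\sum_{t:i_t=i}(y_t-\wt p_t)\Big| + \sum_{t}\big|\wt p_t - p_t\big|.
\]
The naive bound on the second term is $T/N$. This is the source of the $T^{2/3}$ rate, and it is too large once $N\ll T$. The key refinement is to bound the contribution of each cell separately, which is what makes the non-uniform dependence on the ground truth $\optp$ appear.

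For the bias term, take $i$ fixed and $\sigma=\mathrm{sign}(\sum_{t:i_t=i}(y_t-\wt p_t))$. By Freedman/Azuma, $\sum_{t:i_t=i}(y_t-\wt p_t)$ is within $\order(\sqrt{\iota n_i})$ of $\sum_t \E_{u\sim P_t}[\Ind{u\in J_i}(u-y_t)]$ up to sign. The latter is $-\sum_t \loss_{t,i,\sigma}$, and this equals $-\sum_t\sum_{(j,\sigma')}\omega_{t,j,\sigma'}\loss_{t,j,\sigma'}+\text{(regret to expert }(i,\sigma))$. By the choice of $P_t$, the algorithm's loss is $\order(1/T)$ per round. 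By the \textsc{MsMwC} second-order guarantee, the regret to $(i,\sigma)$ is $\order(\sqrt{\iota\log(N)\sum_t|\loss_{t,i,\sigma}|}+\iota)=\otil(\sqrt{\E\text{-count of cell }i})$. Summing over $i$, the bias term is $\otil(\sum_i\sqrt{n_i}+N)$.

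The crux, and the main obstacle, is a new structural lemma: the number of times cell $i$ is played is controlled by its distance from $\optp$. Concretely, for $d_i=\mathrm{dist}(\optp,J_i)$ we aim for
\[
d_i\,n_i\lesssim \sqrt{\iota n_i\log N}+C_i+\iota, \qquad \sum_i C_i\le C.
\]
Here $C_i=\sum_{t:i_t=i}c_t$ (or its conditional-expectation version). The idea is that the regret guarantee above forces $|\sum_{t:i_t=i}(y_t-\wt p_t)|$ to be small. However, when $\wt p_t\in J_i$ is at distance $d_i$ from $\optp$, we have $\E_t[y_t]-\wt p_t=(q_t-\optp)+(\optp-\wt p_t)$, whose magnitude is at least $d_i - c_t$ with a consistent sign. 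Combining this with the upper bound and a martingale concentration (handled carefully, since $n_i$ is random; we would use a union bound over $i$ and Freedman) gives the inequality. This yields $n_i\lesssim \iota\log N/d_i^2 + C_i/d_i$.

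Plugging in, the discretization term is at most $\sum_i n_i/N$. Cells within $\order(1/N)$ of $\optp$ contribute trivially via the bias bound. For the rest, with $d_i\approx k/N$, we get $n_i/N\lesssim \iota\log N\, N/k^2+C_i/k$, which sums to $\otil(N+C)$. The more refined route is to count only what is needed: the discretization error is only relevant via $\sum_i\min(n_i,\cdot)/N$, and splitting at $d_i\gtrless\epsilon$ gives $T\epsilon/N\cdot$(few cells) plus $C/(N\epsilon)$. Meanwhile $\sum_i\sqrt{n_i}\lesssim \sum_k(\sqrt{\iota\log N}N/k+\sqrt{C_iN/k})\le \otil(N)+\sqrt{C N\log N}$ by Cauchy--Schwarz. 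Collecting terms, we expect $\Caldist_1=\otil(N + T/N^{\text{eff}} + \sqrt{CN})$, with the $T/N$ term restricted to $\otil(\sqrt{T})$ rounds near $\optp$ by concentration. Balancing gives $N\asymp\min\{\sqrt{T},(T^2/C)^{1/3}\}$, which we take as \pref{eq:choice_N_cal1}, and the stated bound $\sqrt{\iota T\log T}+(\iota TC\log T)^{1/3}$. The per-round complexity is that of \textsc{MsMwC} with $2N$ experts plus computing $P_t$, i.e.\ $\otil(N)$.
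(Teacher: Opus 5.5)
Your main ingredients coincide with the paper's. These are the per-cell bound $\order(\sqrt{\iota n_J}+\iota+n_J\Delta_J)$ from \citet{hu2025efficient}, and the visit-count lemma $n_{J_i}\lesssim \iota/d_{J_i}^2+C_{J_i}/d_{J_i}$. That lemma comes from applying the regret guarantee to the expert whose sign makes $\sigma(u-\optp)\ge d_{J_i}$ on $J_i$ (so the conditional drift is at least $d_{J_i}-c_t$), followed by Freedman. The third ingredient is the harmonic-sum plus Cauchy--Schwarz estimate $\sum_{i\notin\calN}\sqrt{n_{J_i}}\lesssim\sqrt{\iota}\,N\log T+\sqrt{NC\log T}$. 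There is one minor slip. The choice of $P_t$ only guarantees the one-sided bound $\sum_{(i,\sigma)}\omega_{t,i,\sigma}\loss_{t,i,\sigma}\le\order(1/T)$, so the regret argument upper-bounds $\sum_t\loss_{t,i,\sigma'}$ for each $\sigma'$. With your $\sigma=\mathrm{sign}(\sum_{t:i_t=i}(y_t-\wt p_t))$, you therefore need the expert $(i,-\sigma)$.

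The real problem is the final assembly. The bottleneck in \citet{hu2025efficient} is not $T/N$ by itself. It is the Cauchy--Schwarz bound $\sum_i\sqrt{n_i}\le\sqrt{NT}$, which forces $N\approx T^{1/3}$ and hence $T/N\approx T^{2/3}$. Once your visit-count lemma replaces $\sqrt{NT}$ by $\sqrt{\iota T}+\otil(N+\sqrt{NC})$, a much larger $N$ is allowed and $T/N$ becomes harmless. Your attempt to refine the discretization term is therefore unnecessary, and as written it is incorrect:
\begin{itemize}
\item The visit-count lemma is vacuous on the cells in $\calN$ (in particular on $J_{j^*}$), so nothing in your argument bounds $\sum_{i\in\calN}n_{J_i}$ by less than $T$.
\item Concentration of the outcomes says nothing about how often the forecaster plays near $\optp$; in a stationary environment one expects it to do so in most rounds.
\item Hence the claim that the $T/N$ term is ``restricted to $\otil(\sqrt T)$ rounds near $\optp$'' fails.
\item The claim that near cells ``contribute trivially via the bias bound'' also fails, because the bias bound controls $\sqrt{\iota n_i}+\iota$, not $n_i\Delta_{J_i}$.
\item The $\epsilon$-splitting and the undefined $T/N^{\text{eff}}$ do not repair this.
\end{itemize}
The fix is simply to keep $\sum_J n_J\Delta_J\le T/N$, giving
\[
\Caldist_1\le\order\big(\sqrt{\iota T}+\iota N\log T+\sqrt{\iota NC\log T}+T/N\big).
\]
Your choice $N\asymp\min\{\sqrt T,(T^2/C)^{1/3}\}$ is exactly the balance of $T/N$ against $\iota N\log T$ and $\sqrt{\iota NC\log T}$. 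Had the discretization error really lived on only $\otil(\sqrt T)$ rounds, a much smaller $N$ would be optimal, so your own choice of $N$ already presupposes the plain $T/N$ bound. With this correction, your argument is the paper's proof.
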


\textbf{Proof sketch.}
We outline the key ideas of the proof for \pref{thm:CalL1_knownC} and highlight the novelty of our analysis, despite the algorithm being almost the same. 
We refer readers to \pref{app:proof_Cal1_theorem} for details and the analysis of computational complexity.
As $p_t \in \{z_J\}_{J \in \Pi}$ for all $t \in [T]$ and each interval $J$ uniquely corresponds to a point $z_J$, we rewrite $\Caldist_1$ as $\sum_{J \in \Pi}      \big| \sum_{t=1}^T \rbr{ y_t - z_{J} } \Ind{p_t = z_J } \big|$.
Let $n_J = \sum_{t=1}^T \Ind{p_t = z_J}$ denote the number of times prediction $z_J$ is made and $\Delta_J=\sup_{u,v\in J}|u-v|$.
We start with the following lemma to bound each $\big|  \sum_{t=1}^T \rbr{ y_t - z_J } \Ind{p_t = z_J } \big|$, which is adapted from \citet[Section 2.1]{hu2025efficient} and presented in a more general form.

\begin{lemma} \label{lem:abs_cal_error_byHu}
For any fixed partition $\Pi=\{J_i\}_{i \in [N]}$ of $[0,1]$ with $N=\order(\text{poly}(T))$, with probability at least $1-\delta/2$, 
$
  \forall J \in \Pi:\     \big| \sum_{t=1}^T \rbr{ y_t - z_{J} } \Ind{p_t = z_{J} } \big| \leq \order \rbr{  \sqrt{  \iota n_{J}} + \iota + n_{J} \Delta_{J} }.
$
\end{lemma}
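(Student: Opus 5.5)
We split the per-interval error at the continuous sample $\wt{p}_t$. For each $J\in\Pi$ we write
\[
\sum_{t=1}^T (y_t - z_J)\Ind{p_t=z_J} = \sum_{t=1}^T (y_t-\wt{p}_t)\Ind{\wt{p}_t\in J} + \sum_{t=1}^T (\wt{p}_t - z_J)\Ind{\wt{p}_t\in J}.
\]
This uses that $p_t=z_J$ if and only if $\wt{p}_t\in J$, since $J\mapsto z_J$ is injective for a partition. The second sum is at most $n_J\Delta_J$ in absolute value, because $\wt{p}_t, z_J\in \bar{J}$ and hence $|\wt{p}_t-z_J|\le \Delta_J$. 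So it remains to control $\big|\sum_t (y_t-\wt{p}_t)\Ind{\wt{p}_t\in J}\big| = \max_{\sigma\in\{\pm1\}}\sum_t \sigma\Ind{\wt{p}_t\in J}(\wt{p}_t-y_t)$.

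Next we pass from realized quantities to the expected losses fed to \textsc{MsMwC}. Define $X_{t,J,\sigma}=\sigma\Ind{\wt{p}_t\in J}(\wt{p}_t-y_t)-\loss_{t,J,\sigma}$. Under the oblivious adversary $y_t$ is independent of $\wt{p}_t$ given $\calF_{t-1}$, so this is a martingale difference sequence bounded by $2$. Its conditional variance is at most $\E_t[\Ind{\wt{p}_t\in J}]=P_t(J)$. Freedman's inequality with a union bound over the $2N=\order(\text{poly}(T))$ pairs $(J,\sigma)$ gives $\big|\sum_t X_{t,J,\sigma}\big|\le \order\big(\sqrt{\iota\sum_t P_t(J)}+\iota\big)$. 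A second Freedman bound on $\Ind{\wt{p}_t\in J}-P_t(J)$ gives $\sum_t P_t(J)\le 2n_J+\order(\iota)$. Together these show $\sum_t \sigma\Ind{\wt{p}_t\in J}(\wt{p}_t-y_t)\le \sum_t\loss_{t,J,\sigma}+\order(\sqrt{\iota n_J}+\iota)$, with total failure probability at most $\delta/2$.

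It remains to show $\sum_t \loss_{t,J,\sigma}\le \order\big(\sqrt{\iota\sum_t P_t(J)}+\iota\big)$ for every expert $(J,\sigma)$. This step is the crux and relies on the structure of \citet{hu2025efficient}. The construction of $P_t$ guarantees $\sum_{(i,\sigma)}\omega_{t,i,\sigma}\loss_{t,i,\sigma}\le \order(1/T)$ for either value of $y_t$, so the learner's cumulative loss is $\order(1)$. The regret of \textsc{MsMwC} against expert $(J,\sigma)$ is second-order: it scales as $\sqrt{\log(NT)\sum_t \loss_{t,J,\sigma}^2}+\log(NT)$, and $|\loss_{t,J,\sigma}|\le P_t(J)$ gives $\sum_t\loss_{t,J,\sigma}^2\le\sum_t P_t(J)$. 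Combining, $\sum_t \loss_{t,J,\sigma}\le \order(1)+\order\big(\sqrt{\iota \sum_t P_t(J)}+\iota\big)=\order(\sqrt{\iota n_J}+\iota)$ after substituting the bound on $\sum_t P_t(J)$.

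I expect the main obstacle to be making this second-order, expert-specific regret bound hold uniformly over the non-uniform partition. One has to check that the \textsc{MsMwC} instantiation with the multi-scale learning rates and prior used by \citet{hu2025efficient} depends only on $N$ and $T$, not on the uniformity of the partition, so that its guarantee carries over verbatim. Adding the three bounds then yields the claimed $\order(\sqrt{\iota n_J}+\iota+n_J\Delta_J)$.
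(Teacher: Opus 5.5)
Your proposal is correct and follows the paper's proof. It uses the same split at $\wt{p}_t$, giving the $n_J\Delta_J$ discretization term, and the remaining term $\big|\sum_t (y_t-\wt{p}_t)\Ind{p_t=z_J}\big|$ is handled by the argument of \citet[Section 2.1]{hu2025efficient}, which the paper simply cites (together with a union bound over $J\in\Pi$) and which you reconstruct via Freedman's inequality, the comparison of $\sum_t P_t(J)$ with $n_J$, and the second-order \textsc{MsMwC} bound combined with the $\order(1/T)$ per-round guarantee of $P_t$.

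One sign detail in that last step needs fixing. To turn $\sum_t\sum_{(i,\sigma')}\omega_{t,i,\sigma'}\loss_{t,i,\sigma'}\le\order(1)$ into an upper bound on $\sum_t\loss_{t,J,\sigma}$, \textsc{MsMwC} must treat $\loss_{t,i,\sigma}$ as a gain (i.e., be fed $-\loss_{t,i,\sigma}$), so that its guarantee reads $\sum_t\loss_{t,J,\sigma}-\sum_t\sum_{(i,\sigma')}\omega_{t,i,\sigma'}\loss_{t,i,\sigma'}\le\order\big(\sqrt{\iota\sum_t P_t(J)}+\iota\big)$. Under the usual loss-minimizing convention, the regret inequality only lower-bounds $\sum_t\loss_{t,J,\sigma}$.
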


From here, summing over all $J\in\Pi$ and applying Cauchy-Schwarz inequality with $\sum_J n_{J}=T$ would bound $\Caldist_1$ by $\sum_{J \in \Pi} \otil \big( \sqrt{  n_{J}} +  \frac{n_{J}}{N}  \big) \leq \otil \big( \sqrt{NT} +\frac{T}{N} \big)$ (which then leads to $\otil(T^{\frac{2}{3}})$ with $N$ set appropriately).
However, the clear issue of doing so is that the bound does not adapt to $C$.
To address this, we propose a refined analysis showing that the number of times a prediction $z_J$ is made (that is, $n_J$) is controlled by its distance to $\optp$ as well as the non-stationarity level $C$: the larger the distance, the smaller the $n_J$; on the other hand, the larger the non-stationarity, the larger the $n_J$.
Specifically,
let $j^* \in [N]$ be such that $\optp \in J_{j^*}$ and its neighborhood be $\calN= \{j^*-1,j^*,j^*+1\} \cap [N]$. For any interval $J$, further define
$d_{J} = \inf_{u \in J} \abr{\optp-u}$.
Then we have:

\begin{lemma} \label{lem:visit_number_control_Cal1_main}
For any fixed partition $\Pi=\{J_i\}_{i \in [N]}$ of $[0,1]$ with $N=\order(\text{poly}(T))$, with probability at least $1-\delta/2$,
\begin{equation} \label{eq:bound4ni}
\forall i \in [N] \backslash  \{j^*\}:\  \sqrt{n_{J_i}} \leq   \order \rbr{  \frac{\sqrt{\iota}}{d_{J_i}} + \sqrt{\frac{C_{J_i}}{d_{J_i}}  }    }, \ \text{where}\ \forall J \in \Pi: C_J = \sum_{t=1}^T c_t  \calP_t \rbr{z_{J}}.
\end{equation}
\end{lemma}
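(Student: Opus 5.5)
Fix an interval $J=J_i$ with $i\neq j^*$. The plan is to combine two ingredients: the algorithm's guarantee that the per-expert signed loss is small, and the observation that the prediction $z_J$ is far from $\optp$ by $d_J$.

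\textbf{Step 1: upper bound from the expert guarantee.} Since \textsc{MsMwC} has a multi-scale, second-order regret bound, its regret against expert $(i,\sigma)$ is $\order(\sqrt{\iota\, \sum_t \loss_{t,i,\sigma}^2}+\iota)$. The mixture $P_t$ is constructed so that the algorithm's own expected loss is at most $O(1/T)$ per round. Choosing the sign $\sigma$ adversarially, this gives
\[
\Bigl|\sum_{t=1}^T \E_{u\sim P_t}\bigl[\Ind{u\in J}(u-y_t)\bigr]\Bigr| \le \order\Bigl(\sqrt{\iota \textstyle\sum_t \calP_t(z_J)}+\iota\Bigr),
\]
using $\loss_{t,i,\sigma}^2\le \calP_t(z_J)$, where $\calP_t(z_J)=P_t(J)$. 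This is the same step that underlies \pref{lem:abs_cal_error_byHu}.

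\textbf{Step 2: lower bound from the distance to $\optp$.} Replace $y_t$ by $q_t$; the martingale difference $\sum_t P_t(J)(y_t-q_t)$ is $\order(\sqrt{\iota\sum_t P_t(J)}+\iota)$ by Freedman's inequality. Then write $q_t=\optp\pm c_t$. Suppose without loss of generality that $J$ lies above $\optp$, so that $u-\optp\ge d_J$ for every $u\in J$. This yields
\[
\sum_t \E_{u\sim P_t}[\Ind{u\in J}(u-q_t)] \ge d_J \sum_t P_t(J) - \sum_t c_t P_t(J) = d_J \bar n_J - C_J,
\]
where $\bar n_J=\sum_t\calP_t(z_J)$. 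Combining Steps 1 and 2 gives
\[
d_J\bar n_J \le \order\bigl(\sqrt{\iota \bar n_J}+\iota+C_J\bigr).
\]
Solving this quadratic inequality in $\sqrt{\bar n_J}$ gives
\[
\sqrt{\bar n_J}\le \order\Bigl(\frac{\sqrt\iota}{d_J}+\sqrt{\frac{\iota+C_J}{d_J}}\Bigr),
\]
and since $d_J\le 1$, the middle term $\sqrt{\iota/d_J}$ is dominated by $\sqrt\iota/d_J$.

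\textbf{Step 3: from expected to realized counts.} Pass from $\bar n_J$ to $n_J=\sum_t\Ind{p_t=z_J}$ via Freedman's inequality again, which gives $n_J\le 2\bar n_J+\order(\iota)$. The additive $\iota$ is absorbed because $\sqrt\iota\le\sqrt\iota/d_J$. Finally, take a union bound over the $N=\mathrm{poly}(T)$ intervals and the two signs, which costs only a change in the constant inside $\iota$.

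The main obstacle I expect is Step 1: making precise the cancellation between the weighted combination $\sum\omega_{t,i,\sigma}\sigma\,\ell$ and the single-expert loss, so that the bound involves $P_t(J)$ rather than the coarser $\sqrt T$. This requires the second-order (variance-adaptive) regret of \textsc{MsMwC}, rather than a worst-case bound, together with the self-bounding quadratic inequality in Step 2.
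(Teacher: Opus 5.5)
Your proposal is correct and follows essentially the same argument as the paper. It uses the per-expert upper bound on $\sum_t \ell_{t,i,\sigma}$ from the \textsc{MsMwC}-based analysis of \citet{hu2025efficient}, Freedman to swap $y_t$ for $q_t$, the lower bound $d_J\sum_t \calP_t(z_J) - C_J$ with the sign matching the side of $p^\star$ on which $J$ lies, the resulting quadratic in $\sqrt{\sum_t \calP_t(z_J)}$, and a second Freedman step to reach $n_J$. The only difference is cosmetic: you derive the per-expert bound from \textsc{MsMwC}'s second-order regret (using $\ell_{t,i,\sigma}^2 \le \calP_t(z_J)$ and the $O(1/T)$ property of $P_t$), whereas the paper simply cites that analysis.
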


\pref{lem:visit_number_control_Cal1_main} thus allows us to do a more careful analysis as:
\begin{align*}
\Caldist_1 &\leq  \otil \rbr{\sum_{i \in \calN}  \sqrt{ n_{J_i}}  +\sum_{i \in [N] \backslash \calN}  \sqrt{ n_{J_i}} +\frac{T}{N}   } \leq   \otil \rbr{\sqrt{T }  +\sum_{i \in [N] \backslash \calN}    \rbr{\frac{1}{d_{J_i}} +\sqrt{\frac{C_{J_i}}{d_{J_i}}} }  +\frac{T}{N}  } .
\end{align*}

Further, for any $i \in  [N] \backslash \calN$,  $|i-j^*|\geq 2$ and $d_{J_i} \geq \frac{|i-j^*|-1}{N} \geq \frac{|i-j^*|}{2N}$ hold.
Thus, we have $\sum_{i \in  [N] \backslash \calN}  \frac{1}{d_{J_i}}  \leq \order \rbr{     \sum_{i \in  [N] \backslash \calN}  \frac{N}{|i-j^*|}   }  \leq \order \rbr{     \sum_{z=2}^N  \frac{N}{z}   } \leq \otil \rbr{ N}$, and also (by Cauchy-Schwarz inequality) $\sum_{i \in  [N] \backslash \calN}  \sqrt{C_{J_i}/d_{J_i}}   \leq   \sqrt{  \sum_{i \in  [N] \backslash \calN} \frac{1}{d_{J_i}}}  \sqrt{  \sum_{i \in  [N] \backslash \calN} C_{J_i}} \leq \otil  \big(  \sqrt{NC}  \big)$. Hence, we arrive at $\Caldist_1 \leq \otil \big(  \sqrt{T } +\frac{T}{N} + \sqrt{ NC}     \big)$. Choosing $N$ optimally completes the proof.

We note that \pref{lem:visit_number_control_Cal1_main} is tied to the algorithm's property (in particular, its usage of the \textsc{MsMwC} algorithm), and as mentioned, we are unable to achieve similar bounds using other algorithms.
In particular, we also point out that while $\Caldist_1 = \otil(T^{\frac{2}{3}})$ can be achieved by showing $\Caldist_2 = \otil(T^{\frac{1}{3}})$ since $\Caldist_1 \leq \sqrt{T\Caldist_2}$,
we cannot achieve our $\Caldist_1=\otil(\sqrt{T}+(TC)^{\frac{1}{3}})$ result using our $\Caldist_2 = \otil((1+C)^{\frac{1}{3}})$ results from the following sections in the same way --- doing so only achieves $\Caldist_1=\otil(\sqrt{T}(1+C)^{\frac{1}{6}})$, which is strictly weaker.

\section{Achieving $\Caldist_2 = \otil \rbr{(1+C)^{1/3} }$}
 \label{sec:cal2_approach1}

\paragraph{General framework with non-uniform partition.}
\pref{alg:framework_non_uniform}  proceeds in epochs with doubling length.
In each epoch, we run a certain base calibration algorithm (from scratch) for $s_m = 2^m$ rounds using a partition $\Pi_{m}$ constructed based on observations from the previous epoch.
This base algorithm does not have to be the one of~\citet{hu2025efficient}; indeed, we will instantiate it differently for different results to be presented.
The construction of $\Pi_{m}$ is as follows (see also \pref{fig:my_figure} for an illustration).

\setcounter{AlgoLine}{0}
\begin{algorithm}[t]
\DontPrintSemicolon
\caption{General framework with non-uniform partitions for non-stationarity adaptation}
\label{alg:framework_non_uniform}
\textbf{Input}: confidence $\delta \in (0,1)$, non-stationarity $C$, base algorithm $\Alg$, number of grids $\{N_m\}_m$ for inner regions, and number of grids $K$ for each outer band.

Let $s_m = 2^m$. Predict arbitrarily for the first epoch that lasts for $s_1$ rounds.

\For{epoch $m=2,\ldots$}{

Let $\wh{y}_m$ be the average outcome of the previous epoch.

Define inner region $I_m=[a_m, b_m] =  \sbr{ \wh{y}_m - 2r_m,\wh{y}_m + 2r_m  } \cap [0,1]$ with  $r_m=\sqrt{ \frac{ \iota  }{s_{m-1}} }  + \frac{C}{s_{m-1}}$. %

Define inner partition $\Pi_{m}^{\text{in}} = \UnifPart(I_m,N_m)$ where $\UnifPart$ is from \pref{def:unifpart}.

Define left and right outer bands as (for each $q=0,\ldots,Q$ where $Q = \lceil \log_2 T \rceil$):
\begin{equation} \label{eq:outer_division_Cal2}
\begin{aligned}
        L_{m,q} &= [a_m -(2^{q+1}-1) r_m , a_m -(2^{q}-1) r_m ) \cap [0,1],\\
        R_{m,q} &=  (  b_m +( 2^{q} -1)r_m , b_m + (2^{q+1}-1) r_m ] \cap [0,1].
\end{aligned}
\end{equation}

Define %
$\calL_{m,q} = \UnifPart(L_{m,q},K)$ and $\calR_{m,q} = \UnifPart(R_{m,q},K)$ for each $q=0,\ldots,Q$.

Define outer partition $\Pi_{m}^{\text{out}}= \{\calL_{m,q}\}_{q=0}^{Q} \cup \{\calR_{m,q}\}_{q=0}^{Q}$ and overall partition $\Pi_{m} = \Pi_{m}^{\text{in}} \cup \Pi_{m}^{\text{out}}$.

Run $\Alg$ over the partition $\Pi_m$ from scratch for $s_m$ rounds.
}       
\end{algorithm}

Given that the algorithm of \citet{hu2025efficient} also achieves $\Caldist_2 = \otil(T^{\frac{1}{3}})$, it is natural to suspect that by appropriately setting $N$ in terms of $C$ again, their algorithm might also achieve an adaptive $\Caldist_2$ guarantee.
However, we emphasize that it is highly unclear whether this is feasible.
Indeed, since $\Caldist_2$ can be written as $\sum_{J \in \Pi} \frac{1}{n_J}\big| \sum_{t=1}^T \rbr{ y_t - z_{J} } \Ind{p_t = z_J } \big|^2$,
we can bound each summand using \pref{lem:abs_cal_error_byHu} again by $\otil(1 + n_J \Delta_J^2)$, but this is already too loose and unable to lead to $C$-dependent bounds because
summing over $J\in\Pi$ would lead to exactly $\otil(N + \frac{T}{N^2})$ (as $\Delta_J = \frac{1}{N}$ for a uniform partition and $\sum_{J\in\Pi} n_J = T$).

\paragraph{High-level idea.}
We suspect that this is not fixable with a uniform partition since $\ell_2$ distance is more sensitive than $\ell_1$ and requires a partition with finer resolution around the ground truth $\optp$, especially when $C$ is small.
Indeed, since $n_J$ is larger when $J$ is closer to $\optp$ (\pref{lem:visit_number_control_Cal1_main}),
if we could accordingly use a shorter length $\Delta_J$ (hence finer resolution), then the term $n_J \Delta_J^2$ could be overall smaller.
In particular, for the special case of $C=0$ such that $n_J$ is of order $1/d_J^2$ based on \pref{lem:visit_number_control_Cal1_main},
we would want $d_J$ and $\Delta_J$ to be of the same order so that $n_J\Delta_J^2 = \otil(1)$, leading to a non-uniform partition with $\Delta_J$ growing exponentially as one moves away from the region containing $\optp$, both to the left and to the right.
To handle the case with a general $C$, we could further uniformly partition the aforementioned intervals to provide better resolution (since for a large $C$, these intervals could have a large width).
Finally, since $\optp$ is unknown, it is natural to deploy an epoch-based schedule to track the region where $\optp$ lies in using an exponentially increasing amount of data.
Putting all these ideas together leads to our general framework presented in \pref{alg:framework_non_uniform}.

\begin{itemize}[leftmargin=*]
\item 
(\textbf{Inner region and partitions})
First, we create an inner region $I_m=[a_m, b_m]$ that covers $\optp$ with high probability and uniformly partition it into $\Pi_{m}^{\text{in}}$ with size $N_m$ that is supposed to be relatively large.
This inner region $I_m$ is defined as $\sbr{ \wh{y}_m - 2r_m,\wh{y}_m + 2r_m  } \cap [0,1]$, where $\wh{y}_m$ is the empirical average outcome of epoch $m-1$ and $r_m = \sqrt{ \frac{ \iota  }{s_{m-1}} }  + \frac{C}{s_{m-1}}$ is such that $\abr{\wh{y}_m -\optp} \leq  r_m$ holds with high probability based on standard concentration (\pref{lem:corruption_analysis_Cal2_Approach1}).

\item 
(\textbf{Outer region, bands, and partitions})
We refer to $[0,a_m)$ and $(b_m,1]$ as the left and right outer regions respectively.
They are handled in the same way, so below we focus on discussing the left outer region.
First, we partition it into a few \textit{bands} $\{L_{m,q}\}$ for $q$ ranging from $0$ to at most $Q =\lceil\log_2 T\rceil$.
The first band $L_{m,0}$ is right next to the inner region and has length $r_m$, while the remaining bands have a length that is doubling as $q$ increases; see \pref{eq:outer_division_Cal2} for the formal definition.\footnote{In this definition, we allow $q$ to be as large as $Q = \lceil \log_2 T\rceil$, which is an overestimate of the number of bands. It makes the presentation more concise and some bands empty, but clearly does not affect the algorithm.}
However, as discussed, merely using these logarithmically many bands is too coarse for a large $C$, and we thus further uniformly partition each band $L_{m,q}$ into $\calL_{m,q}$ with size $K$. %
We denote the collection of $\calL_{m,q}$ as well as the counterpart $\calR_{m,q}$ for the right outer region as the final outer partition $\Pi_{m}^{\text{out}}$.
The final overall partition to be used in this epoch is $\Pi_{m} = \Pi_{m}^{\text{in}} \cup \Pi_{m}^{\text{out}}$. 
\end{itemize}

\begin{figure}[t]
    \centering
    \includegraphics[width=0.9\textwidth]{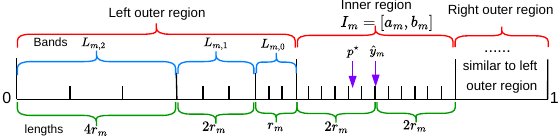}
    \caption{Illustration of the non-uniform partition created by \pref{alg:framework_non_uniform} for epoch $m$.}
    \label{fig:my_figure}
\end{figure}

\paragraph{Main results for $\Caldist_2$.}
Our result for $\Caldist_2$ is obtained by applying the algorithm of~\citet{hu2025efficient} as the base algorithm in our general framework \pref{alg:framework_non_uniform} and picking the parameters appropriately.

\begin{theorem} \label{thm:cal2_approach1}
With  $\Alg$ instantiated as \pref{alg:cal1_knownC}, $N_m=\big \lceil \rbr{s_m}^{1/3} |I_{m}|^{2/3} \iota^{-1/3} \big \rceil$,
and $K = \big\lceil \rbr{ (1+C)/ \iota }^{1/3}  \big \rceil$, %
\pref{alg:framework_non_uniform} ensures that with probability at least $1-\delta$,
\[
\Caldist_2 = \order \rbr{  C^{1/3}\iota^{2/3} \log^2 T     + \iota  \log^2  T  }.
\]

The computational complexity per round is $\otil \big( (1+C)^{1/3}\big)$.
\end{theorem}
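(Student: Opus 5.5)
We prove the bound epoch by epoch and then sum. Since the predictions of different epochs are made by independent runs of \pref{alg:cal1_knownC} on disjoint grids, and the same grid point may recur across epochs, we first use the subadditivity
\[
\frac{\big(\sum_m x_m\big)^2}{\sum_m n_m}\le \sum_m \frac{x_m^2}{n_m},
\]
which follows from Cauchy--Schwarz. This gives $\Caldist_2 \le \sum_m \Caldist_2^{(m)}$, where $\Caldist_2^{(m)} = \sum_{J\in\Pi_m} \frac{1}{n_J}\big|\sum_{t\in\text{epoch }m}(y_t-z_J)\Ind{p_t=z_J}\big|^2$. There are $O(\log T)$ epochs, so it suffices to show $\Caldist_2^{(m)} = O(C^{1/3}\iota^{2/3}\log T + \iota\log T)$ with high probability. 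The first epoch contributes $O(1)$.

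Fix an epoch $m$. By \pref{lem:corruption_analysis_Cal2_Approach1} and a union bound, with high probability $|\wh y_m - \optp|\le r_m$, so $\optp$ lies in $I_m$ at distance at least $r_m$ from its endpoints. Apply \pref{lem:abs_cal_error_byHu} to the partition $\Pi_m$, restricted to epoch $m$ with $n_J$ counted in that epoch. Squaring and dividing by $n_J$ gives, for each $J$,
\[
\frac{1}{n_J}\Big|\sum_t (y_t-z_J)\Ind{p_t=z_J}\Big|^2 \le O\big(\iota + n_J\Delta_J^2\big).
\]

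\emph{Inner cells.} There are $N_m$ of them, each with $\Delta_J = |I_m|/N_m$, and $\sum_J n_J\le s_m$. Their total contribution is $O(\iota N_m + s_m|I_m|^2/N_m^2)$. With $N_m=\lceil s_m^{1/3}|I_m|^{2/3}\iota^{-1/3}\rceil$ this is $O\big(\iota + \iota^{2/3}(s_m|I_m|^2)^{1/3}\big)$. Now $|I_m|\le 4r_m = O(\sqrt{\iota/s_m} + C/s_m)$ and $s_m=2s_{m-1}$, so $s_m|I_m|^2 = O(\iota + C^2/s_m)$. This yields $O(\iota + \iota^{2/3}C^{2/3}s_m^{-1/3})$. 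To tie this to $C^{1/3}$, note that if $s_m\le C$ then $r_m\ge1$ and $I_m=[0,1]$, so the outer region is empty. In that case I would instead bound the inner cost by $O(\iota N_m + s_m/N_m^2)$ with $N_m=\Theta((s_m/\iota)^{1/3})$, which is $O(\iota^{2/3}s_m^{1/3}) = O(\iota^{2/3}C^{1/3})$. If $s_m> C$, then $C^{2/3}s_m^{-1/3}\le C^{1/3}$. Either way the inner cost is $O(\iota + \iota^{2/3}C^{1/3})$.

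\emph{Outer cells.} This is the main step. The outer partition has $O(K\log T)$ cells, so the $\iota$ terms sum to $O(\iota K\log T) = O(\iota^{2/3}(1+C)^{1/3}\log T)$. For the $n_J\Delta_J^2$ term, consider a cell $J$ in band $q$. It has $\Delta_J\le 2^q r_m/K$ and, because $\optp$ sits at least $r_m$ inside $I_m$, distance $d_J\ge (2^q-1)r_m + r_m = 2^q r_m$. Applying \pref{lem:visit_number_control_Cal1_main} within the epoch gives
\[
n_J \le O\Big(\frac{\iota}{d_J^2} + \frac{C_J}{d_J}\Big),
\]
hence
\[
n_J\Delta_J^2 \le O\Big(\frac{\iota}{K^2} + \frac{C_J\, 2^q r_m}{K^2}\Big) \le O\Big(\frac{\iota}{K^2} + \frac{C_J}{K^2}\Big),
\]
using $2^q r_m\le 1$ for nonempty bands. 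Summing over the $O(K\log T)$ cells, the first part gives $O(\iota\log T/K)$. For the second part, $\sum_J C_J\le C$, giving $O(C/K^2)$. With $K\approx((1+C)/\iota)^{1/3}$, we get $C/K^2\le \iota^{2/3}C^{1/3}$ and $\iota\log T/K\le \iota\log T$.

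The delicate points are two. First, $d_J$ must be measured from $\optp$, which requires the high-probability event $\optp\in[a_m+r_m,\,b_m-r_m]$. Second, \pref{lem:visit_number_control_Cal1_main} must be applied per epoch using only that epoch's $c_t$. Summing over the $O(\log T)$ epochs and taking union bounds with $\delta$ split appropriately gives the claimed $O(C^{1/3}\iota^{2/3}\log^2T + \iota\log^2T)$.

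For complexity, the per-round cost of \pref{alg:cal1_knownC} is linear in $|\Pi_m| = N_m + O(K\log T)$, up to logarithmic factors. From the choice of $N_m$, $N_m = O(1 + (\iota + C^2/s_m)^{1/3}\iota^{-1/3})$. For $s_m\ge C$ this is $O(1+C^{1/3})$. For $s_m< C$ we have $N_m\le s_m^{1/3}\le C^{1/3}$, since $|I_m|\le1$. So the per-round cost is $\otil((1+C)^{1/3})$.
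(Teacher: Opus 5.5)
Your proposal is correct and follows essentially the same route as the paper. It uses subadditivity across epochs, the per-cell bound $O(\iota + n_J\Delta_J^2)$, the choice of $N_m$ to balance the inner cells, and the per-epoch visit-count lemma with $d_J \ge 2^q r_m$ and $\Delta_J \le 2^q r_m/K$ for the outer bands. The differences are minor: you use Cauchy--Schwarz instead of Jensen for subadditivity, and your per-epoch inner bound is cruder but still within the $\log^2 T$ budget. You should add the trivial case $n_J<\iota$, where the cell's error is at most $n_J\le\iota$, to justify dropping the $\iota^2/n_J$ term after squaring.
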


\paragraph{Proof sketch.} %
We fix an epoch $m$ and illustrate the idea to bound the $\ell_2$-calibration error for this epoch, defined as $\Caldist_2^{(m)} = \sum_{J \in \Pi_m:n_{m,J}>0 } \frac{1}{n_{m,J} }\rbr{   \sum_{t \in \calT_m } \rbr{ y_t -p_t } \Ind{p_t = z_{J}} }^2$ where $n_{m,J} = \sum_{t \in \calT_m} \Ind{p_t=z_{J}}$ and $\calT_m$ denotes the set of all the $s_m=2^m$ rounds in this epoch.

Based on \pref{lem:abs_cal_error_byHu} and the fact %
$|\Pi_{m}|=|\Pi_{m}^{\text{in}}|+|\Pi_{m}^{\text{out}}| = N_m + \otil(K)$, we proceed as
\begin{align*}
    \Caldist_2^{(m)} \leq \otil \rbr{   |\Pi_{m}|  + \sum_{ J \in \Pi_{m} }  n_{m,J}  \Delta_J^2    } = \otil \rbr{ N_{m}  + \sum_{ J \in \Pi_{m}^{\text{in}} }  n_{m,J}  \Delta_J^2   + K  + \sum_{ J \in \Pi_{m}^{\text{out}} }  n_{m,J}  \Delta_J^2   }.
\end{align*}

Using the facts $\Delta_J = \frac{|I_{m}|}{N_{m}}$ for any $J \in \Pi_{m}^{\text{in}}$
and $\sum_{J \in \Pi_{m}^{\text{in}}} n_{m,J} \leq s_{m}$, the sum of the first two terms above is at most $\otil   \rbr{N_{m}  +s_{m} \frac{|I_{m}|^2}{N_{m}^2}}$,
which explains the choice of the value of $N_{m}$ since it minimizes the last bound to $\otil \rbr{  s_{m}^{1/3}  |I_{m}|^{2/3}   } = \otil \rbr{  (1+C)^{1/3}  }$ (last step uses the definition of $r_{m}$, roughly the width of $I_{m}$).

It remains to bound $\sum_{ J \in \Pi_{m}^{\text{out}} }  n_{m,J}  \Delta_J^2$, which can be decomposed as $ \sum_{ q =0}^{Q} \sum_{J \in \calL_{m,q}}  n_{m,J}  \Delta_J^2    +\sum_{ q =0}^{Q} \sum_{J \in \calR_{m,q}}  n_{m,J}  \Delta_J^2$ (note that the summation over $J$ is naturally and implicitly only over those nonempty $J$'s).
Due to symmetry, we only focus on the term related to the left outer region and apply the high-level idea mentioned at the beginning of the section:
use \pref{lem:visit_number_control_Cal1_main} (more specifically, an epoch version of it stated in \pref{lem:visit_number_control_Cal2_block_main}) to bound
$n_{m,J}$ by $\order \rbr{ \frac{\iota}{d_{J}^2} +\frac{C_{m,J}}{d_{J}}}$ where $C_{m,J} = \sum_{t\in \calT_m} c_t  \calP_t \rbr{z_{J}}$;
then, using the facts $\Delta_J \leq \frac{2^q r_{m}}{K}$  and $d_J \geq 2^q r_{m}$ for any nonempty $J \in \calL_{m,q}$,
we see that the order of $\sum_{J \in \calL_{m,q}}  n_{m,J}  \Delta_J^2$ is
\begin{align*}
&\sum_{J \in \calL_{m,q}}   \rbr{ \frac{1}{d_{J}^2} +\frac{C_{m,J}}{d_{J}}   }\rbr{   \frac{2^q r_m}{K} }^2 
\leq\sum_{J \in \calL_{m,q}} \rbr{ \frac{1}{ (2^q r_m)^2 } +\frac{C_{m,J}}{2^q r_m}   } \rbr{   \frac{2^q r_m}{K} }^2  \\
& \leq \sum_{J \in \calL_{m,q}}\frac{1}{ K^2 } +\frac{C_{m,J} 2^q r_m}{K^2}    \leq   1 +\frac{C}{K^2}   ,
\end{align*}
where %
the last step uses $|\calL_{m,q}|= K$, $\sum_{J \in \calL_{m,q}} C_{m,J} \leq C$, and $2^q r_{m} \leq d_J \leq  1$. %
Noting that $Q = \otil(1)$ and combining with the earlier term $|\Pi_{m}^{\text{out}}|=\otil(K)$, %
we finally arrive at $K  + \sum_{ J \in \Pi_{m}^{\text{out}} }  n_{m,J}  \Delta_J^2
=\otil\rbr{K + \frac{C}{K^2}}$.
This explains the choice of the value of $K$, which minimizes the last bound to $\otil \rbr{  (1+C)^{1/3} }$.
Finally, since $\Caldist_2$ is subadditive (\pref{corr:decomp_Cal2_into_block}) and there are at most $\order(\log T)$ epochs, summing over all epochs proves our result.

Finally, we mention that with a different value of $K$, this algorithm in fact also achieves $\Caldist_1 = \otil(\sqrt{T}+(TC)^{\frac{1}{3}})$, the same guarantee achieved by the simpler algorithm from \pref{sec:cal1}. 
The advantage of using the more complicated \pref{alg:framework_non_uniform}, however, is that it enjoys better time complexity that in particular is only $\otil(1)$ per round when $C=0$, as opposed to $\otil(\sqrt{T})$; %
see \pref{thm:cal1_better} for details.
This further illustrates the usefulness of our framework.

\section{Achieving $\PKL=\otil \big( (1+C)^{1/3} \big)$}\label{sec:PKL}

Finally, in this section, we switch our focus to pseudo KL calibration, a notion that is strong than $\Caldist_2$ but still enjoys an $\otil(T^{\frac{1}{3}})$ bound in the worst case~\citep{luo2025simultaneous}.
The only existing algorithm is through minimizing \textit{pseudo swap regret} ($\FSR$) with respect to log loss, which we show can be used in combination of our general framework \pref{alg:framework_non_uniform} to achieve $\PKL=\otil \big( (1+C)^{1/3} \big)$. 
The algorithm of~\citet{luo2025simultaneous} builds on the idea of \citet{fishelson2025full}, who showed that minimizing $\FSR$ with respect to the squared loss is equivalent to minimizing pseudo $\ell_2$-calibration error $\PCaldist_2$ (recall the definition from \pref{sec:pre}).
Since these two works share very similar ideas
but the algorithm/analysis for $\PKL$ is more involved, 
we will use $\PCaldist_2$ as a warm-up to illustrate our key ideas, with most details related to $\PKL$ deferred to the appendix.

More specifically, our \pref{alg:framework_non_uniform} 
achieves $\PCaldist_2=\otil((1+C)^{1/3})$ when  instantiated with the algorithm of \citet{fishelson2025full} as the base algorithm (and $N_m$ and $K$ set to similar values as \pref{thm:cal2_approach1}). We briefly review the algorithm of \citet{fishelson2025full} in the context of our non-uniform partition. Given a partition $\Pi_m$ of $[0,1]$ in epoch $m$ and letting $\calZ_m$ be the set of all the endpoints of the intervals from $\Pi_m$,
each point $s \in \calZ_m$ is associated with an instance of online gradient descent (OGD). At each round $t$, each OGD instance $s \in \calZ_m$ outputs an action $a_{s,t} \in [0,1]$, which is then randomized and leads to a certain distribution $H(a_{s,t}) \in \Delta(\calZ_m)$ supported on the two endpoints of the interval $J_{s,t}$ containing $a_{s,t}$.
Next, the algorithm constructs a row-stochastic matrix $\calQ_t \in \mathbb{R}^{|\calZ_m|\times|\calZ_m|}$, where $\calQ_t(s,\cdot)=H(a_{s,t})$, and computes its stationary distribution $\calP_t$. 
Finally, the algorithm samples a prediction $p_t$ from $\calP_t$, and each OGD instance $s \in \calZ_m$ is  updated using the weighted squared loss $\calP_t(s)\loss(a_{s,t},y_t)$ where $\ell(a,y) = (a-y)^2$.

\textbf{Analysis.} 
Fix an epoch $m$.
The pseudo $\ell_2$-calibration error for this epoch is equivalent to 
the pseudo swap regret for this epoch: $\FSR^{(m)}  = \sup_{\phi:  [0,1] \to  [0,1]} \sum_{t \in \calT_m} \E_{p \sim \calP_t  }  \sbr{ \rbr{\loss(p,y_t)  - \loss(\phi(p),y_t)} }$. %
Following an analysis similar to \citet{fishelson2025full}, we can obtain $ \FSR^{(m)} \leq \otil \big(|\Pi_m|+ \sum_{t \in \calT_m} \sum_{s \in \calZ_m}   \calP_t(s) \Delta^2_{J_{s,t}}\big)$, which is very similar to the $\otil \rbr{   |\Pi_{m}|  + \sum_{ J \in \Pi_{m} }  n_{m,J}  \Delta_J^2    }$ term in the analysis of \pref{thm:cal2_approach1}.
Indeed, we can also decompose it into an inner-region-related term $|\Pi_m^{\text{in}}| + \sum_{t \in \calT_m} \sum_{s \in \calZ_m}  \calP_t(s) \Delta^2_{J_{s,t}} \Ind{J_{s,t}  \in \Pi_m^{\text{in}}}$,
which can be handled in the exact same way as \pref{thm:cal2_approach1},
and an outer-region-related term $|\Pi_m^{\text{out}}| + \sum_{t \in \calT_m} \sum_{s \in \calZ_m}  \calP_t(s) \Delta^2_{J_{s,t}} \Ind{J_{s,t}  \in \Pi_m^{\text{out}}}$,
which requires a different analysis since $J_{s,t}$ is the interval that contains prediction $a_{s,t}$ instead of $s$
and thus there is no clear relationship between $\calP_t(s) $ and $\Delta_{J_{s,t}}$.
Instead, we propose to first bound the term $\sum_{t \in \calT_m} \sum_{s \in \calZ_m}  \calP_t(s)  \Delta^2_{J_{s,t}}  \Ind{J_{s,t}  \in \Pi_m^{\text{out}} }$ by $\otil \rbr{ \sum_{t \in \calT_m} \sum_{s \in \calZ_m}  \frac{\calP_t(s)  (a_{s,t}-\optp)^2 }{K^2} }$ due to $\Delta_{J_{s,t}} \leq \frac{|a_{s,t}-\optp|}{K}$, and then apply the following critical lemma that is our key novelty.
\begin{lemma} \label{lem:bound_occupancy_PCal2}
With probability at least $1-\delta$, for any epoch $m$, we have $\sum_{t \in \calT_m}   \sum_{s \in \calZ_m}  \calP_t(s) (a_{s,t}-\optp)^2 \leq \order \rbr{|\Pi_m|\log T +C_m +\iota }$ where $C_m = \sum_{t \in \calT_m} c_t$.
\end{lemma}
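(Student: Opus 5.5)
We bound the occupancy-weighted squared distance $\sum_{t\in\calT_m}\sum_{s}\calP_t(s)(a_{s,t}-\optp)^2$ by comparing each OGD instance's cumulative loss against the fixed comparator $\optp$. The key identity is
\[
(a-y)^2-(\optp-y)^2=(a-\optp)^2+2(a-\optp)(\optp-y).
\]
Since $\E_t[y_t]=q_t$, we can write $\optp-y_t=(\optp-q_t)+(q_t-y_t)$. Summing with the weights $\calP_t(s)$ then gives
\[
\sum_{t,s}\calP_t(s)(a_{s,t}-\optp)^2=\sum_{t,s}\calP_t(s)\big[\ell(a_{s,t},y_t)-\ell(\optp,y_t)\big]+2\sum_{t,s}\calP_t(s)(a_{s,t}-\optp)(q_t-\optp)+2M_m,
\]
where $M_m=\sum_{t,s}\calP_t(s)(a_{s,t}-\optp)(y_t-q_t)$ is a martingale. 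The three terms on the right are handled as follows.

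\textbf{Regret term.} The first sum is exactly $\sum_s$ of the weighted-loss regret of OGD instance $s$ against the fixed comparator $\optp$. Each instance runs on losses $\calP_t(s)\ell(\cdot,y_t)$, which are $\calP_t(s)$-strongly convex in the sense of exp-concavity. Using the standard OGD guarantee with adaptive step sizes $\eta\propto 1/\sum_{\tau\le t}\calP_\tau(s)$, as in \citet{fishelson2025full}, the regret of instance $s$ is $\order(\log T)$. Summing over $|\calZ_m|=\order(|\Pi_m|)$ instances gives $\order(|\Pi_m|\log T)$. For this to hold, the per-instance regret must be logarithmic, not merely sublinear, so I would verify the strongly convex OGD bound $\sum_t \frac{G_t^2}{\sum_{\tau\le t}\calP_\tau(s)}$ with $G_t\le 2\calP_t(s)$, which yields a $\log T$ bound.

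\textbf{Corruption term.} Since $|a_{s,t}-\optp|\le1$ and $\sum_s\calP_t(s)=1$, the middle term is at most $2\sum_t c_t=2C_m$.

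\textbf{Martingale term.} The increments of $M_m$ are bounded by $1$ and have conditional variance at most $\sum_s\calP_t(s)(a_{s,t}-\optp)^2$, by Jensen's inequality and $\mathrm{Var}(y_t)\le1$. Freedman's inequality, combined with a union bound over epochs and a peeling over the variance scale, gives
\[
|M_m|\le \order\Big(\sqrt{\iota V_m}+\iota\Big), \qquad V_m=\sum_{t,s}\calP_t(s)(a_{s,t}-\optp)^2 .
\]
This is where the self-bounding step enters, and I expect it to be the main obstacle. The quantity $V_m$ is the very quantity being bounded, and the weights $\calP_t(s)$ and actions $a_{s,t}$ are adaptive. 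The filtration and the uniformity over epochs therefore have to be handled carefully: everything is $\calF_{t-1}$-measurable, so Freedman's inequality applies.

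\textbf{Solving for $V_m$.} Combining the three bounds gives
\[
V_m\le \order(|\Pi_m|\log T)+2C_m+\order\big(\sqrt{\iota V_m}+\iota\big).
\]
Using $\sqrt{\iota V_m}\le V_m/4+\iota$ and rearranging yields $V_m=\order(|\Pi_m|\log T+C_m+\iota)$. Taking a union bound over the $\order(\log T)$ epochs absorbs the extra $\log$ factor into $\iota$.
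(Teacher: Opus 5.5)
Your proposal is correct and follows essentially the same route as the paper. Both use the squared-loss identity against the comparator $\optp$, the $\order(\log T)$ OGD regret summed over the $\order(|\Pi_m|)$ instances, the split $y_t-\optp=(q_t-\optp)+(y_t-q_t)$, and a Freedman-based self-bounding step; the only differences are cosmetic. You bound the corruption term directly by $2C_m$ via $|a_{s,t}-\optp|\le 1$, whereas the paper takes an unnecessary Cauchy--Schwarz/AM--GM detour to get $\frac{1}{8}\Gamma_m+2C_m$. You also use the $\sqrt{V\iota}$ form of Freedman with peeling, while the paper takes the fixed-$\lambda$ form with $\lambda=1/8$ and so avoids peeling.
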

This lemma is proven using the no-regret guarantee of OGD and a certain self-bounding argument; see \pref{app:PCal2}.
Put together, this shows that the outer-region-related term $|\Pi_m^{\text{out}}| + \sum_{t \in \calT_m} \sum_{s \in \calZ_m}  \calP_t(s) \Delta^2_{J_{s,t}} \Ind{J_{s,t}  \in \Pi_m^{\text{out}}}$ is of order $\otil\rbr{\frac{C_m}{K^2}+K}$, which, when summed over all epochs and plugged in the (optimal) value of $K$, is of the desired order $\otil((1+C)^{1/3})$.

\paragraph{Extension to $\PKL$} %
Finally, we briefly mention the extra elements needed to achieve the same bound for $\PKL$.
The key difference is that, instead of working directly on the prediction space $[0,1]$, the base algorithm operates on the transformed domain $[0,\pi]$, and maps each point $z$ back to $[0,1]$ via $\sin^2(z/2)$. This transformation aligns the geometry of the problem with the KL divergence.
Recall that in each epoch $m$, the inner and outer regions are determined by the empirical estimate $\wh{y}_{m}$. To adapt our framework to this transformed setting, we instead work with the transformed estimate $\theta(\wh{y}_{m})$, where for any $y \in [0,1]$, $\theta(y) = 2\arcsin(\sqrt{y})$, and then construct a confidence interval around $\theta(\wh{y}_{m})$ to define the inner and outer regions.
See \pref{app:PKL} for the complete algorithm/proof.

\section{Conclusion and Future Work}

In this paper, we study online calibration in non-stationary environments.
We consider a natural non-stationarity measure $K$ that is the number of i.i.d.~segments of the environment and also propose another non-stationarity measure $C$, defined as the minimal $\ell_1$ deviation of the mean outcomes $q_1,\ldots,q_T$. 
We develop algorithms whose calibration guarantees adapt to both $K$ and $C$ without requiring their knowledge in advance.
For $\Caldist_1$, we show that an existing efficient calibration algorithm can be refined to achieve $\otil( \min \{\sqrt{T}+(TC)^{1/3},\sqrt{KT}\} )$. 
For $\Caldist_2$ and $\PKL$, we introduce an epoch-based framework with non-uniform partitions that allocate finer resolution near the underlying ground truth, leading to $\otil(\min \{(1+C)^{1/3},K\})$ guarantees. 
We further show that the same framework can also recover the adaptive $\Caldist_1$ guarantee while improving the per-round time complexity to $\otil(1+\min \{C^{1/3},\sqrt{K}T^{1/6}\})$. 
These bounds recover the optimal stationary rates up to logarithmic factors when $C=0$ or $K=1$. They also match the best known adversarial rates when $C$ and $K$ are linear in $T$.

Our work leaves several directions for future investigation. First, it would be interesting to extend the framework to other calibration notions, such as smooth calibration~\citep{kakade2008deterministic} and its subsample version~\citep{haghtalab2024truthfulness}, distance from calibration~\citep{blasiok2023unifying, qiao2024distance},  U-calibration~\citep{kleinberg2023u, luo2024optimal}, and calibration decision loss~\citep{hu2024calibration}. Second, while our results establish adaptive upper bounds for several fundamental calibration measures, sharper lower bounds as a function of $C$ and $K$ would help clarify the optimality of these rates in the intermediate non-stationary regimes. Finally, extending these ideas to contextual, multiclass, or more structured prediction settings is an important next step toward a more general theory of adaptive calibration.

\paragraph{Acknowledgment}
HL thanks Spandan Senapati for helpful discussion on the stationary setting.

\bibliographystyle{abbrvnat}
\bibliography{refs}

\newpage
\clearpage
\appendix
\begingroup
\part{Appendix}
\let\clearpage\relax
\parttoc[n]
\endgroup

\newpage

\section{Omitted Proofs in \pref{sec:pre}} \label{app:reduction}

\subsection{Lower Bounds for $\Caldist_2,\PCaldist_2$, and $\PKL$ for $C=0$}
In this section, we reduce the problem of lower bounding $\Caldist_2$ and $\PKL$ to the problem of lower bounding the squared error for estimating an unknown Bernoulli mean. 
For all problems in this section with $C=0$, an instance is completely specified by the unknown
Bernoulli mean $q\in[0,1]$. Therefore, the usual minimax risk $\inf_{\Alg}\sup_{\text{instance}}$ can be written equivalently as $\inf_{\Alg} \sup_{q \in [0,1]}$.

\begin{problem} \label{prob:Bernoulli_est_mean_problem}
Ahead of time, $y_1,\ldots,y_T \overset{i.i.d.}{\sim} \texttt{Ber}(q)$  where $q \in [0,1]$ is unknown.
    At each round $t=1,\ldots,T$, the forecaster predicts $p_t \in [0,1]$ based on past observations $y_1,\ldots,y_{t-1}$ and then observes $y_t$. The goal is to minimize $\sum_{t=1}^T\E[ (p_t-q)^2]$ where the expectation is taken over the algorithm's randomness and observed samples.
\end{problem}

\begin{theorem} \label{thm:Bernoulli_est_mean_problem_lower_bound}
For the problem stated in \pref{prob:Bernoulli_est_mean_problem},  $\inf_{\text{Alg}} \sup_{q \in [0,1]}  \sum_{t=1}^T \E_{\Alg,q} [ (p_t-q)^2] \geq \Omega(\log T)$.
\end{theorem}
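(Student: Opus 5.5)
We lower bound the minimax risk by a Bayes risk under a prior on $q$, and then bound each round separately. Since the forecaster's $p_t$ depends only on $y_1,\ldots,y_{t-1}$ and internal randomness, the minimax cumulative risk is at least the Bayes risk under any prior $\mu$ on $[0,1]$:
\[
\inf_{\Alg}\sup_{q}\sum_{t=1}^T \E_{\Alg,q}[(p_t-q)^2] \;\ge\; \inf_{\Alg}\E_{q\sim\mu}\sum_{t=1}^T \E_{\Alg,q}[(p_t-q)^2] \;\ge\; \sum_{t=1}^T \inf_{\hat q_{t}}\E_{q\sim \mu}\E_q[(\hat q_t - q)^2],
\]
where the last infimum is over estimators $\hat q_t$ that are measurable functions of $y_1,\ldots,y_{t-1}$ (plus independent randomness). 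The second inequality holds because the sum of infima is at most the infimum of the sum: each summand only involves $p_t$, which is one such estimator. Randomization cannot help under squared loss, by Jensen's inequality. Hence the per-round infimum equals the Bayes risk of the posterior mean based on $n=t-1$ samples.

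For the prior we take $\mu=\Betadist(1,1)$, i.e.\ uniform on $[0,1]$. By conjugacy, the posterior after $n$ samples with $S$ successes is $\Betadist(S+1,n-S+1)$. Its variance is
\[
\frac{(S+1)(n-S+1)}{(n+2)^2(n+3)}.
\]
The Bayes risk is the expectation of this posterior variance. Under the uniform prior, $S$ is uniform on $\{0,\ldots,n\}$, so the expectation is a direct computation. Its value is $\frac{1}{6(n+2)}$; equivalently, $\E[q(1-q)]/(n+2)$ up to constants, since $\E[q(1-q)]=1/6$.

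Alternatively, if one wants to avoid the exact computation, Le Cam's two-point method gives the same order. Compare $q=1/2$ with $q=1/2+1/\sqrt{n}$. The KL divergence between the two $n$-sample product laws is $O(1)$, which forces risk $\Omega(1/n)$ at one of the two points. However, the worst point could differ across $t$, so the Bayes approach with one fixed prior is cleaner, and I would use it.

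Summing over rounds gives
\[
\sum_{t=1}^T \frac{1}{6(t+1)} = \Omega(\log T).
\]
The round $t=1$ corresponds to $n=0$ and contributes $1/12$, which is harmless. This proves \pref{thm:Bernoulli_est_mean_problem_lower_bound}.

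The main point of care is justifying the interchange that makes each round's infimum separate, and the reduction of a randomized forecaster to a deterministic estimator. Both follow from writing $\E[(p_t-q)^2]\ge \E[(\E[p_t\mid y_{1:t-1}]-q)^2]$ and then using the fact that the posterior mean minimizes Bayes squared error. Everything else is a routine Beta-Binomial calculation.
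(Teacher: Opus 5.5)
Your proposal is correct and follows essentially the same route as the paper: place a uniform prior on $q$, lower bound each round's risk by the expected variance of the Beta posterior, evaluate it as $\frac{1}{6(t+1)}$ using that $S_{t-1}$ is uniform on $\{0,\ldots,t-1\}$, and sum over $t$ to get $\Omega(\log T)$. Your explicit justification of the sum/infimum interchange and of why randomization cannot help just spells out what the paper compresses into the one-line remark that the posterior mean is the best squared-error prediction.
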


\begin{proof}
Fix an arbitrary algorithm $\Alg$.
We introduce a prior $H \sim \texttt{Unif}[0,1]$, and then $y_t|H \sim \texttt{Ber}(H)$. We have
\[
\sup_{q \in [0,1]}  \sum_{t=1}^T \E_{\Alg,q} [ (p_t-q)^2] \geq  \E_{H \sim \texttt{Unif}[0,1] } \sbr{ \sum_{t=1}^T \E_{\Alg,H} [ (p_t-H)^2]   }.
\]

For any forecaster, conditioning on the past $y_1,\ldots,y_{t-1}$, the best possible squared-error prediction of $H$ is the posterior mean. Thus, 
\[
 \E[(p_t-H)^2 ] \geq \E \sbr{ \Variance(H|y_1,\ldots,y_{t-1}) }.
\]

For shorthand, let $S_{t}=\sum_{i \leq t}y_i$. Under the uniform prior, the posterior is $H|S_{t-1}=s \sim \Betadist(s+1,t-s)$. Thus, 
\[
\Variance(H|S_{t-1}=s) = \frac{(s+1)(t-s)}{(t+1)^2 (t+2)}.
\]

Under the uniform prior, $S_{t-1}$ is uniform on $\{0,\ldots,t-1\}$, and thus
\[
\P(S_{t-1}=s) = \binom{t-1}{s} \int_{0}^1 q^s(1-q)^{t-1-s} dq =\frac{1}{t},
\]
which implies that
\[
\E_{H} \sbr{ \Variance(H|S_{t-1}) } = \frac{1}{t} \sum_{s=0}^{t-1}   \frac{(s+1)(t-s)}{(t+1)^2 (t+2)} = \frac{1}{6(t+1)}.
\]

Therefore, 
\[
 \E_{H \sim \texttt{Unif}[0,1] } \sbr{ \sum_{t=1}^T \E_{\Alg,H} [ (p_t-H)^2]   } \geq \sum_{t=1}^T \frac{1}{6(t+1)} \geq \Omega(\log T).
\]

The claimed result thus follows.
\end{proof}

\begin{theorem} \label{thm:lower_bound_Cal2}
For $C=0$, $\inf_{\text{Alg}} \sup_{q \in [0,1]} \E_{\text{Alg},q} \big[ \Caldist_2 \big] \geq \Omega(\log T)$.
\end{theorem}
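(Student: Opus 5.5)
We reduce to \pref{prob:Bernoulli_est_mean_problem} and invoke \pref{thm:Bernoulli_est_mean_problem_lower_bound}. Fix any calibration algorithm $\Alg$ and any $q\in[0,1]$, and let the outcomes be i.i.d.\ $\texttt{Ber}(q)$ (so $C=0$). The key claim is
\[
\E_{\Alg,q}\big[\Caldist_2\big] \;\geq\; \sum_{t=1}^T \E_{\Alg,q}\big[(p_t-q)^2\big] - \order\big(\E[\#\{\text{distinct predictions}\}]\big),
\]
or a cleaner variant of it. Since the predictions $p_t$ produced by $\Alg$ depend only on $y_1,\ldots,y_{t-1}$ and internal randomness, $\Alg$ is a valid forecaster for \pref{prob:Bernoulli_est_mean_problem}. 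The lower bound $\Omega(\log T)$ on $\sup_q\sum_t\E[(p_t-q)^2]$ would then transfer, provided the error term can be controlled.

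To prove the claim, fix a prediction value $p$ with $n(p)>0$ and write its contribution as
\[
n(p)\big(\bar y_p-p\big)^2 \quad\text{where } \bar y_p=\tfrac{1}{n(p)}\sum_{t:p_t=p}y_t.
\]
Expand $\bar y_p - p = (q-p) + (\bar y_p - q)$. Then
\[
n(p)(\bar y_p-p)^2 \geq n(p)(q-p)^2 + 2(q-p)\sum_{t:p_t=p}(y_t-q).
\]
The last term is a martingale sum: $\sum_t (q-p_t)(y_t-q)$, because $p_t$ is $\calF_{t-1}$-measurable together with the internal randomness, and $\E_t[y_t-q]=0$. Summing over $p$ therefore gives
\[
\E[\Caldist_2] \geq \sum_t \E[(p_t-q)^2] + 2\,\E\Big[\sum_t (q-p_t)(y_t-q)\Big] = \sum_t \E[(p_t-q)^2].
\]
There is no error term at all, because the cross term has zero expectation. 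We need to check that the regrouping by value $p$ is legitimate: the identity $\sum_p (q-p)\sum_{t:p_t=p}(y_t-q)=\sum_t(q-p_t)(y_t-q)$ is exact pathwise, so it is.

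Combining, $\sup_q \E[\Caldist_2]\geq \sup_q\sum_t\E[(p_t-q)^2]\geq\Omega(\log T)$ by \pref{thm:Bernoulli_est_mean_problem_lower_bound}, and then we take the infimum over $\Alg$. The main point to double-check is measurability: $p_t$ must not depend on $y_t$. This holds because the forecaster predicts simultaneously with, or before seeing, $y_t$, so $\E[(q-p_t)(y_t-q)]=\E[(q-p_t)\E_t[y_t-q]]=0$ even with randomized $p_t\sim\calP_t$. Since $(\bar y_p - q)^2\ge 0$ was simply dropped, no further obstacle is expected.
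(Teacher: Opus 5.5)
Your proof is correct and is essentially the paper's proof. Your pathwise bound $\Caldist_2 \ge \sum_t (q-p_t)^2 + 2\sum_t (q-p_t)(y_t-q)$ is exactly the paper's $\Caldist_2 \ge \sum_t (y_t-p_t)^2 - \sum_t (y_t-q)^2$, with the same dropped slack $\sum_p n(p)(\bar y_p-q)^2$ (the paper phrases it as squared-loss regret against the constant predictor $q$), and it is followed by the same conditional-expectation step and the same reduction to \pref{thm:Bernoulli_est_mean_problem_lower_bound}; the hedged error term in your opening claim is unnecessary, as your own derivation shows.
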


\begin{proof}
Fix an arbitrary algorithm $\Alg$.
For each prediction $\alpha$, we define
\[
T(\alpha) = \cbr{t: p_t=\alpha},\qquad n(\alpha) = |T(\alpha)|,\qquad  \bar{y}(\alpha) = \frac{1}{n(\alpha)} \sum_{t \in T(\alpha)} y_t.
\]

Recall that $\Caldist_2 =\sum_{\alpha: n(\alpha)>0} n(\alpha) (\bar{y}(\alpha)-\alpha)^2$.
For any $\alpha$ such that $n(\alpha)>0$, we have
\begin{align*}
n(\alpha) (\bar{y}(\alpha)-\alpha)^2  &= \sum_{t \in T(\alpha)} (y_t -\alpha)^2 -\sum_{t \in T(\alpha)} (y_t -\bar{y}(\alpha))^2    \\
 &=  \sum_{t \in T(\alpha)} (y_t -\alpha)^2 -\min_{z \in [0,1]}\sum_{t \in T(\alpha)} (y_t -z)^2  .
\end{align*}

Summing over all $\alpha$ such that $n(\alpha)>0$,
\begin{align*}
    \Caldist_2  &=  \sum_{\alpha: n(\alpha)>0} \sum_{t \in T(\alpha)} (y_t -\alpha)^2-  \sum_{\alpha: n(\alpha)>0}  \min_{z \in [0,1]}\sum_{t \in T(\alpha)} (y_t -z)^2  \\
    &=  \sum_{t =1}^T (y_t - p_t)^2- \sum_{\alpha: n(\alpha)>0}  \min_{z \in [0,1]}\sum_{t \in T(\alpha)} (y_t -z)^2 \\
    &\geq   \sum_{t =1}^T (y_t - p_t)^2-   \min_{z \in [0,1]} \sum_{\alpha: n(\alpha)>0} \sum_{t \in T(\alpha)} (y_t -z)^2 \\
     &=   \sum_{t =1}^T (y_t - p_t)^2-   \min_{z \in [0,1]} \sum_{t=1}^T (y_t -z)^2.
\end{align*}

For any $q \in [0,1]$, we have
\[
 \Caldist_2 \geq \sum_{t =1}^T (y_t - p_t)^2-  \sum_{t=1}^T(y_t -q)^2.
\]

Taking expectation under the stochastic model $y_t \sim \texttt{Ber}(q)$ and using $\E_{\Alg,q}[\sum_{t =1}^T (y_t - p_t)^2-  \sum_{t=1}^T(y_t -q)^2 |\calF_{t-1},p_t] = (p_t-q)^2$ ($p_t$ is chosen before $y_t$), we have
\begin{align*}
    \E_{\Alg,q}[ \Caldist_2] \geq \sum_{t=1}^T \E_{\Alg,q}  \sbr{ (p_t-q)^2 } .
\end{align*}

It suffices to give a lower bound for the 
\[
\inf_{\text{Alg}} \sup_{q \in [0,1]}  \sum_{t=1}^T \E_{\text{Alg},q} [ (p_t-q)^2] .
\]

Invoking \pref{thm:Bernoulli_est_mean_problem_lower_bound} completes the proof.
\end{proof}

\begin{theorem} \label{thm:lower_bound_PCal2}
For $C=0$, $\inf_{\text{Alg}} \sup_{q \in [0,1]} \E_{\text{Alg},q} \big[ \PCaldist_2 \big] \geq \Omega(\log T)$.
\end{theorem}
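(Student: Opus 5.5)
We follow the route used for \pref{thm:lower_bound_Cal2} and reduce to \pref{thm:Bernoulli_est_mean_problem_lower_bound}. Fix an arbitrary algorithm $\Alg$ and let $\calP_t$ be its conditional distribution of $p_t$. Since $\calP_t$ is chosen before $y_t$ is revealed, it depends only on $\calF_{t-1}$ and the algorithm's internal randomness. The aim is a pathwise inequality
\[
\PCaldist_2 \;\geq\; \sum_{t=1}^T \E_{p\sim\calP_t}\big[(y_t-p)^2\big] - \sum_{t=1}^T (y_t-q)^2
\]
for every fixed $q\in[0,1]$. We then take expectations.

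\textbf{Step 1: pseudo bias--variance decomposition.} For each $p$ in the (countable) union of the supports of the $\calP_t$, write $w_t(p)=\calP_t(p)$ and $W(p)=\sum_t w_t(p)$. By definition, $\bar\rho_p$ is the $w(p)$-weighted mean of the $y_t$. Expanding the square about this weighted mean gives
\[
W(p)(\bar\rho_p-p)^2=\sum_t w_t(p)(y_t-p)^2-\sum_t w_t(p)(y_t-\bar\rho_p)^2 .
\]
The subtracted term equals $\min_z\sum_t w_t(p)(y_t-z)^2$, so it is at most $\sum_t w_t(p)(y_t-q)^2$. Summing over $p$ and using $\sum_p w_t(p)=1$ gives $\PCaldist_2=\sum_p W(p)(\bar\rho_p-p)^2$ on the left. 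On the right it gives $\sum_t\E_{p\sim\calP_t}[(y_t-p)^2]-\sum_t(y_t-q)^2$, which is the claimed inequality.

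If the $\calP_t$ are not discrete, the definition of $\PCaldist_2$ implicitly assumes finite support. I would either argue by restriction to finitely supported distributions or carry out the same computation with densities; the identity is unchanged.

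\textbf{Step 2: taking expectations.} Let $y_t\sim\texttt{Ber}(q)$ i.i.d. For each $p$ fixed before $y_t$, we have $\E[(y_t-p)^2-(y_t-q)^2\mid\calF_{t-1},\calP_t]=(p-q)^2$. Therefore
\[
\E_{\Alg,q}[\PCaldist_2]\geq\sum_t\E_{\Alg,q}\big[\E_{p\sim\calP_t}(p-q)^2\big].
\]
The right-hand side is exactly $\sum_t\E[(p_t-q)^2]$ for the forecaster that samples $p_t\sim\calP_t$. This forecaster is a valid algorithm for \pref{prob:Bernoulli_est_mean_problem}, since it uses only past observations and its own randomness.

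\textbf{Step 3: conclusion.} Taking $\sup_q$ and then $\inf_{\Alg}$, \pref{thm:Bernoulli_est_mean_problem_lower_bound} yields $\Omega(\log T)$.

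The only delicate point is Step 1. There we need the weighted-mean minimization with weights $\calP_t(p)$ that depend on the history, and we must check that the inequality holds pathwise regardless of that dependence. It does, because the argument is purely algebraic for each realization.
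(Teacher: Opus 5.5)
Your proposal is correct and follows the paper's proof essentially step for step. The paper uses the same three moves: the weighted bias--variance identity with $\bar\rho_p$ replaced by $q$ via the minimizing property of the weighted mean; the conditional-expectation step giving $\E_{p\sim\calP_t}[(p-q)^2]$; and the reduction to \pref{thm:Bernoulli_est_mean_problem_lower_bound}. Grouping by the prediction value $p$ makes explicit why the cross term vanishes, which the paper's per-round display leaves implicit.
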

\begin{proof}
Fix an arbitrary algorithm $\Alg$.
Recall that $\bar{\rho}_p = \frac{ \sum_{t=1}^T \calP_t(p) y_t}{\sum_{t=1}^T \calP_t(p)}$.
We write for any $q \in [0,1]$
\begin{align*}
   \PCaldist_2 &= \sum_{t=1}^T \E_{p \sim \calP_t} \sbr{  \rbr{p-\bar{\rho}_p}^2 } \\
   &= \sum_{t=1}^T \E_{p \sim \calP_t} \sbr{  \rbr{p-y_t}^2 } -\sum_{t=1}^T \E_{p \sim \calP_t} \sbr{  \rbr{y_t-\bar{\rho}_p}^2 } \\
    &\geq  \sum_{t=1}^T \E_{p \sim \calP_t} \sbr{  \rbr{p-y_t}^2 } -\sum_{t=1}^T \E_{p \sim \calP_t} \sbr{  \rbr{y_t-q}^2 } .
\end{align*}

Suppose that $y_t \sim \texttt{Ber}(q)$ for all $t$. As $\calP_t$ is chosen before $y_t$, we have
\begin{align*}
   & \E_{\Alg,q} \sbr{ \E_{p \sim \calP_t} \sbr{   \rbr{p-y_t}^2-  \rbr{y_t-q}^2  } |\calF_{t-1}   } \\
    &= \E_{p \sim \calP_t} \sbr{ \E_{\Alg,q} \sbr{    \rbr{p-y_t}^2-  \rbr{y_t-q}^2  |\calF_{t-1} ,p}   } \\
      &= \E_{p \sim \calP_t} \sbr{ (p-q)^2  }.
\end{align*}

Thus, we have $ \E_{\Alg,q}[ \PCaldist_2] \geq \sum_{t=1}^T \E_{\Alg,q}  \sbr{ \E_{p \sim \calP_t} \sbr{ (p-q)^2  }} = \sum_{t=1}^T \E_{\Alg,q}  \sbr{  (p_t-q)^2  }$. Invoking \pref{thm:Bernoulli_est_mean_problem_lower_bound} completes the proof.
\end{proof}

\begin{corollary} \label{corr:lower_bound_PKL}
For $C=0$,
    $\inf_{\text{Alg}} \sup_{q \in [0,1]} \E_{\text{Alg},q} \big[ \PKL \big] \geq  \Omega(\log T)$.
\end{corollary}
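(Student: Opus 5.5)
The plan is to reduce to \pref{thm:lower_bound_PCal2} via the pointwise inequality $\KL(p,q) \geq 2(p-q)^2$, which is Pinsker's inequality for Bernoulli distributions. Applying it to each summand in the definition of $\PKL$ gives, for any realization,
\[
\PKL = \sum_{t=1}^T \E_{p \sim \calP_t}\sbr{\KL\rbr{\bar{\rho}_p, p}} \geq 2\sum_{t=1}^T \E_{p \sim \calP_t}\sbr{\rbr{\bar{\rho}_p - p}^2} = 2\,\PCaldist_2 .
\]
The definitions of $\bar{\rho}_p$ and $\calP_t$ are the same in both measures, so this comparison is term by term and purely deterministic. Taking expectations, for every algorithm and every $q\in[0,1]$ we get $\E_{\Alg,q}[\PKL] \geq 2\,\E_{\Alg,q}[\PCaldist_2]$.

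Next we take $\sup_q$ and then $\inf_{\Alg}$ on both sides. Since the inequality holds for every fixed algorithm and instance, it passes through the minimax operators. Then \pref{thm:lower_bound_PCal2} gives $\inf_{\Alg}\sup_q \E[\PKL] \geq 2\,\Omega(\log T) = \Omega(\log T)$.

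The only point needing care is the boundary case where $p\in\{0,1\}$ and $\bar{\rho}_p \neq p$. There $\KL(\bar{\rho}_p,p) = +\infty$, and the inequality holds trivially under the convention that $\PKL$ may be infinite. Also, terms with $p$ outside the support of $\calP_t$ contribute zero to both sides. So there is no real obstacle: the corollary follows from Pinsker plus the already established $\PCaldist_2$ lower bound.
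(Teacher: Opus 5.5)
Your proposal is correct and follows the paper's route. The paper also reduces to \pref{thm:lower_bound_PCal2} through the pointwise domination $\PKL \geq \PCaldist_2$, which it cites from Proposition 3 of \citet{luo2025simultaneous}, whereas you derive it directly from Pinsker's inequality for Bernoulli distributions, making the step self-contained with constant $2$ and correctly handling the boundary and off-support cases.
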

\begin{proof}
Since $\PKL \geq \PCaldist_2$ by \citep[Proposition 3]{luo2025simultaneous}, the lower bound on $\PCaldist_2$ in \pref{thm:lower_bound_PCal2} immediately implies the claimed result.
\end{proof}

\subsection{Technical Lemmas}

We here first show that the calibration measures used in this paper indeed satisfy the subadditivity.

\begin{lemma} \label{lem:toollem4decomp_Cal}
    Let $\{\calT_m\}_{m=1}^M$ be a partition of $[T]$ and let $\calZ$ be a finite set. For every $t \in [T]$, $\omega_t(z) \geq 0$ for all $z \in \calZ$.
    Let $N_z = \sum_{t=1}^T \omega_t(z)$ and $N_{m,z} = \sum_{t \in \calT_m} \omega_t(z)$.
    Let $F_z: [0,1] \to \fR \cup \{+\infty\}$ be convex for every $z \in \calZ$. Then,
    \[
    \sum_{z \in \calZ:  N_z>0 } N_z F_z\rbr{ \frac{\sum_{t=1}^T \omega_t(z) y_t  }{ N_z}  } \leq \sum_{m=1}^M    \sum_{z \in \calZ:  N_{m,z}>0 } N_{m,z} F_z\rbr{ \frac{\sum_{t \in \calT_m} \omega_t(z) y_t  }{ N_{m,z}}  } .  
    \]
\end{lemma}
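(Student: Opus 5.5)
The plan is to prove the inequality one $z\in\calZ$ at a time and then sum over $z$. Since all terms are indexed by $z$ and the functions $F_z$ are decoupled across $z$, it suffices to show, for each fixed $z$ with $N_z>0$,
\[
N_z F_z\rbr{\frac{\sum_{t=1}^T \omega_t(z) y_t}{N_z}} \leq \sum_{m:\, N_{m,z}>0} N_{m,z} F_z\rbr{\frac{\sum_{t\in\calT_m}\omega_t(z)y_t}{N_{m,z}}}.
\]
If $N_z=0$, then every $N_{m,z}=0$ because all weights are nonnegative, so both sides contribute nothing for that $z$.

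\textbf{Step 1 (rewrite the global average).} Fix $z$ with $N_z>0$ and let $\bar y_{m,z} = \frac{1}{N_{m,z}}\sum_{t\in\calT_m}\omega_t(z)y_t$ for each $m$ with $N_{m,z}>0$. Because $\{\calT_m\}$ partitions $[T]$, we have $N_z=\sum_m N_{m,z}$, and blocks with $N_{m,z}=0$ carry zero weight. Hence
\[
\frac{\sum_{t=1}^T \omega_t(z) y_t}{N_z} = \sum_{m:\, N_{m,z}>0} \frac{N_{m,z}}{N_z}\,\bar y_{m,z}.
\]
This is a convex combination of the points $\bar y_{m,z}$. Each $\bar y_{m,z}$ lies in $[0,1]$: it is a weighted average of the $y_t$, which lie in $[0,1]$ in every application (they are Bernoulli outcomes), so the arguments stay inside the domain of $F_z$.

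\textbf{Step 2 (Jensen).} Apply the finite form of Jensen's inequality for the convex function $F_z$, which may take the value $+\infty$, with weights $\lambda_m = N_{m,z}/N_z$:
\[
F_z\rbr{\sum_m \lambda_m \bar y_{m,z}} \leq \sum_m \lambda_m F_z(\bar y_{m,z}).
\]
Multiplying by $N_z>0$ gives exactly the per-$z$ inequality above. Summing over $z\in\calZ$ and exchanging the finite sums over $z$ and $m$ then yields the lemma.

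\textbf{Main obstacle.} The argument is essentially routine, and the only point needing care is the extended-real convention. Jensen with $+\infty$ values holds directly: if any $F_z(\bar y_{m,z})=+\infty$ with positive weight, the right-hand side is $+\infty$ and the inequality is trivial. Otherwise, convexity in the extended sense (the epigraph is convex) gives the finite inequality. It is also worth noting that this is where one checks the intended instantiations, for example $F_z(x)=(x-z)^2$ for $\Caldist_2$ and $\PCaldist_2$, and $F_z(x)=\KL(x,z)$ for $\PKL$, which are convex in their first argument on $[0,1]$.
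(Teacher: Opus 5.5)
Your proposal is correct and follows the paper's own proof: write the global weighted average as the convex combination $\sum_{m:N_{m,z}>0}\frac{N_{m,z}}{N_z}\bar y_{m,z}$, apply Jensen's inequality to the convex $F_z$, multiply by $N_z$, and exchange the finite sums over $z$ and $m$. Your remarks on the $N_z=0$ case, on the $+\infty$ values, and on the averages staying in $[0,1]$ are details the paper leaves implicit.
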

\begin{proof}
Consider any $z \in \calZ$ such that $N_z >0$. We have 
\begin{align*}
&\sum_{z \in \calZ:  N_z>0 } N_z F_z\rbr{ \frac{\sum_{t=1}^T \omega_t(z) y_t  }{ N_z}  }  \\
&=\sum_{z \in \calZ:  N_z>0 } N_z F_z\rbr{\sum_{m\in [M]: N_{m,z}>0}  \frac{N_{m,z}}{N_z}   \frac{\sum_{t \in \calT_m} \omega_t(z) y_t  }{ N_{m,z}  }  }  \\
&\leq \sum_{z \in \calZ:  N_z>0 } N_z  \sum_{m\in [M]: N_{m,z}>0}   \frac{N_{m,z}}{N_z}  F_z\rbr{  \frac{\sum_{t \in \calT_m} \omega_t(z) y_t  }{ N_{m,z}  }  }  \\
&= \sum_{z \in \calZ:  N_z>0 }  \sum_{m\in [M]: N_{m,z}>0}  N_{m,z}  F_z\rbr{  \frac{\sum_{t \in \calT_m} \omega_t(z) y_t  }{ N_{m,z}  }  }  \\
&=\sum_{m=1}^M    \sum_{z \in \calZ:  N_{m,z}>0 } N_{m,z} F_z\rbr{ \frac{\sum_{t \in \calT_m} \omega_t(z) y_t  }{ N_{m,z}}  } ,
\end{align*}
where the inequality follows from Jensen’s inequality. The claimed result thus follows.
\end{proof}

For any interval $I \subseteq [T]$, let $n_{I,\alpha}=\sum_{t \in I} \Ind{p_t=\alpha}$ and 
\[
\Caldist_1(I)= \sum_{\alpha \in [0,1]: n_{I,\alpha}>0}  \abr{ \sum_{t \in  I } \rbr{ y_t -p_t } \Ind{p_t = \alpha} } .
\]

We also similarly define $\Caldist_2(I),\PCaldist_2(I),\PKL(I)$. When, $I=[T]$, we have $\Caldist_1([T]) =\Caldist_1$, $\Caldist_2([T]) =\Caldist_2$, $\PCaldist_2([T]) =\PCaldist_2$, and $\PKL([T]) =\PKL$.

\begin{corollary} \label{corr:decomp_Cal2_into_block}
For any partition $\{I_i\}_{i}$ of $[T]$.
$\Caldist_1  \leq \sum_{i} \Caldist_1(I_i)$.
$\Caldist_2  \leq \sum_{i} \Caldist_2(I_i)$, $\PCaldist_2 \leq \sum_{i} \PCaldist_2(I_i)$, and $\PKL \leq \sum_{i} \PKL(I_i)$.
\end{corollary}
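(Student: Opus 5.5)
The plan is to obtain every claim from \pref{lem:toollem4decomp_Cal}, choosing the finite index set $\calZ$, the weights $\omega_t(z)$ and the convex functions $F_z$ suitably for each measure. Only finitely many distinct predictions $\alpha$ occur in the sequence $p_1,\ldots,p_T$, or, for the pseudo measures, in the supports of $\calP_1,\ldots,\calP_T$ (assumed finitely supported, as in the algorithms). We therefore take $\calZ$ to be this finite set of predictions. We also take $\calT_m = I_m$, the given partition of $[T]$.

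\textbf{$\Caldist_1$ and $\Caldist_2$.} Set $\omega_t(\alpha)=\Ind{p_t=\alpha}$, so that $N_\alpha = n(\alpha)$ and $N_{m,\alpha}=n_{I_m,\alpha}$. For $\Caldist_2$ take $F_\alpha(x)=(x-\alpha)^2$. Then $N_\alpha F_\alpha(\bar y(\alpha)) = n(\alpha)(\bar y(\alpha)-\alpha)^2$, which is exactly the summand in \pref{eq:cal} with $r=2$. The per-block right-hand side is likewise exactly $\Caldist_2(I_m)$, so the lemma gives the claim directly. For $\Caldist_1$ take $F_\alpha(x)=|x-\alpha|$, which is convex. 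Then $N_\alpha F_\alpha(\bar y(\alpha)) = |\sum_t (y_t-\alpha)\Ind{p_t=\alpha}|$, and the same applies blockwise, so the claim follows identically. The empty case needs a small check: predictions $\alpha$ with $N_{m,\alpha}=0$ are excluded on the right-hand side, and the lemma's Jensen step only averages over blocks with $N_{m,z}>0$, so this is consistent.

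\textbf{$\PCaldist_2$.} Set $\omega_t(p)=\calP_t(p)$ and $F_p(x)=(x-p)^2$. Then $N_p=\sum_t\calP_t(p)$ and $N_pF_p(\bar\rho_p)$ equals $\sum_t \calP_t(p)(\bar\rho_p-p)^2$. Summing over $p$ and exchanging the order of summation gives $\sum_t \E_{p\sim\calP_t}[(\bar\rho_p-p)^2]=\PCaldist_2$. The block version gives $\PCaldist_2(I_m)$, with $\bar\rho_p$ replaced by its block analogue.

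\textbf{$\PKL$.} Use the same weights with $F_p(x)=\KL(x,p)$. This function is convex in its first argument on $[0,1]$; it may equal $+\infty$ when $p\in\{0,1\}$ and $x\neq p$, which the lemma explicitly allows through its codomain $\fR\cup\{+\infty\}$.

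\textbf{Main obstacle.} The only delicate point is bookkeeping: confirming that the pseudo measures restricted to $I_m$ are defined with the block-level $\bar\rho_p$, and handling infinite values in the KL case. For the latter, Jensen's inequality with extended-valued convex functions still holds, so either the right-hand side is $+\infty$ or all terms are finite.
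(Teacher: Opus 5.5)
Your proposal is correct and follows the paper's proof. The $\Caldist_2$, $\PCaldist_2$ and $\PKL$ claims come, exactly as in the paper, from \pref{lem:toollem4decomp_Cal} with weights $\Ind{p_t=z}$ or $\calP_t(z)$ and $F_z(x)=(x-z)^2$ or $\KL(x,z)$. For $\Caldist_1$ the paper uses the triangle inequality directly rather than the lemma with $F_\alpha(x)=|x-\alpha|$, but Jensen's inequality for $|\cdot-\alpha|$ is exactly that inequality, and your remarks on finiteness of $\calZ$ and $+\infty$ values of $\KL$ just spell out details the paper leaves implicit.
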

\begin{proof}
We first show $\Caldist_1  \leq \sum_{i} \Caldist_1(I_i)$.
Recall that $n_{\alpha}=\sum_{t \in [T]} \Ind{p_t=\alpha}$.
\begin{align*}
\Caldist_1 
&= \sum_{\alpha \in [0,1]: n_{\alpha}>0}  \abr{  \sum_{i}  \sum_{t \in I_i} \rbr{ y_t -p_t } \Ind{p_t = \alpha} } \\
&\leq \sum_{\alpha \in [0,1]: n_{\alpha}>0}   \sum_{i}  \abr{   \sum_{t \in I_i} \rbr{ y_t -p_t } \Ind{p_t = \alpha} } \\
&= \sum_{i} \sum_{\alpha \in [0,1]: n_{I,\alpha}>0}    \abr{   \sum_{t \in I_i} \rbr{ y_t -p_t } \Ind{p_t = \alpha} } \\
& =\sum_{i}  \Caldist_1(I_i).
\end{align*}

Setting $\omega_t(z)=\Ind{p_t=z}$ and $F_z(y)=(y-z)^2$ in \pref{lem:toollem4decomp_Cal} gives $\Caldist_2 \leq \sum_{i} \Caldist_2(I_i)$.
Setting $\omega_t(z)=\calP(z)$ and $F_z(y)=(y-z)^2$ in \pref{lem:toollem4decomp_Cal} gives $\PCaldist_2 \leq \sum_{i} \PCaldist_2(I_i)$.
Setting $\omega_t(z)=\calP(z)$ and $F_z(y)=\KL(y,z)$ in \pref{lem:toollem4decomp_Cal} gives $\PKL \leq \sum_{i} \PKL(I_i)$.
\end{proof}

\subsection{Adapting to Unknown $C$ and Piecewise Stationary Environments}
\label{app:reduction_known2unknown}

\setcounter{AlgoLine}{0}
\begin{algorithm}[t]
\DontPrintSemicolon
\caption{Framework of adapting to unknown $C$ and piecewise stationary environments}
\label{alg:meta_alg_piecewise}
\textbf{Input}: confidence $\delta \in (0,1)$, an algorithm $\Alg$ that satisfies \pref{cond:anytime_adaptive_alg} with function $U_{\cdot}(\cdot,\cdot)$, error measure $\Err$, sub-block scheduling $\{G_b\}_b$.

Define $\delta' =\frac{\delta}{10T^2 \lceil \log_8(8+T) \rceil }$.

\For{block $b=1,2,\ldots$}{

\For{$\ell=1,2\ldots,G_b+1$}{

Set the guess $\wh{C}^{(b)} = 2^{3b}$.

If $\ell\leq G_b$, restart $\Alg$ with input $(\delta', 0 )$; otherwise, restart $\Alg$ with input $(\delta', \wh{C}^{(b)})$.

Initialize $\calT_{\ell}^{(b)} \gets  \emptyset$, $\Err \big(\calT_{\ell}^{(b)} \big) \gets 0$ and $D \gets 0$.

\While{
$\Err \big(\calT_{\ell}^{(b)} \big) \leq D$
}{

Let $t$ be the current round.

Run $\Alg$ to make a prediction $p_t$ and observe outcome $y_t$.

Update $\calT_{\ell}^{(b)} \gets \calT_{\ell}^{(b)} \cup \{t\}$ and $\Err \big(\calT_{\ell}^{(b)} \big)$.

If $\ell\leq G_b$, set $D \gets  U_{\delta'}\big(\calT_{\ell}^{(b)}, 0 \big)$; otherwise, set $D \gets  U_{\delta'}\big([T], \wh{C}^{(b)} \big)$.

}

}
}    
\end{algorithm}

In this section, we present a generic adaptive-restart framework, \pref{alg:meta_alg_piecewise}, that converts
known-\(C\) base algorithms into algorithms that are simultaneously adaptive to the unknown
non-stationarity level \(C\) and to piecewise stationary structure. The framework is adapted
from the meta-algorithm of \citet{liu2026online}, originally developed for
non-stationarity adaptation in uninformed Markov games. Their algorithm uses a block and
sub-block schedule to combine a doubling search over an unknown non-stationarity level with
tests for infrequent environment switches. 

The framework takes four inputs, including a confidence level $\delta \in (0,1)$, an error measure  $\Err$ (e.g., $\Err=\Caldist_1$), a base
algorithm $\Alg$, and sub-block scheduling $\{G_b\}_b$.
For any contiguous interval $I \subseteq [T]$, let $\Err(I)$ denote the cumulative error incurred in interval $I$, and let $C(I) =\min_{q \in [0,1]}\sum_{t \in I} |q_t-q|$ be the non-stationarity for interval $I$. 
We assume that $\Err$ is \textit{subadditive}, in the sense that for any partition $\{I_i\}_i$ of $[T]$, $\Err([T]) \leq \sum_i \Err(I_i)$.
Indeed, this property holds for all calibration measures studied in this paper, including $\Caldist_1,\Caldist_2,\PCaldist_2,\PKL$ (see \pref{corr:decomp_Cal2_into_block}).
The base algorithm takes confidence $\delta \in (0,1)$ and a guess on $C$ as inputs, and it is required to satisfy the following condition.
\begin{condition}[Calibration certificate condition]\label{cond:anytime_adaptive_alg}
Under an error measure $\Err$, an algorithm satisfies the certificate condition with a certificate function $U_{\cdot}(\cdot,\cdot)$ if for every contiguous interval $I \subseteq [T]$, every confidence level $\delta\in(0,1)$, and every guess
$\widetilde C \ge C(I)$, a fresh run of this algorithm on $I$ with input $(\delta,\widetilde C)$ guarantees $\P \big( \Err(I) \leq U_{\delta}(I,\wt{C}) \big) \geq 1-\delta$, where $U_{\delta}(I,\wt{C}) $ is a non-negative function, non-decreasing with respect to both \(I\) and
\(\widetilde C\).\footnote{
For the first argument, which is an interval, monotonicity is understood with respect to set inclusion: if $I\subseteq I'$, then $U_\delta(I,\widetilde C)\le U_\delta(I',\widetilde C)$ for any $\widetilde C$.
}.
\end{condition}

\pref{alg:meta_alg_piecewise} proceeds in blocks $b=1,2,\ldots$. Each block $b$ contains
$G_b+1$ sub-blocks. The first $G_b$ sub-blocks serve as stationarity tests. In each such
sub-block, the base algorithm is restarted with the stationary guess $\widetilde C=0$, and the
sub-block continues as long as the realized error remains below the stationary certificate
$U_{\delta'}(T^{(b)}_\ell,0)$. Therefore, if one of these sub-blocks terminates, then its error is
too large to be explained by a stationary environment. On the high-probability event from
\pref{cond:anytime_adaptive_alg}, this implies that the sub-block must contain at least two segments of stationary environment. Hence, the number of
terminated stationary-test sub-blocks can be charged to the number of stationary segments.

The last sub-block of each block implements a doubling search over the unknown non-stationarity level $C$. In block $b$, the algorithm chooses a guess $\widetilde C_b =2^{3b}$ and runs the base algorithm with this guess. This sub-block terminates only if the realized error exceeds the block-level threshold $U_{\delta'}\big([T], \wh{C}^{(b)} \big)$. Once $b$ is large enough so that $\widetilde C_b$ is a valid upper bound on the true non-stationarity level, \pref{cond:anytime_adaptive_alg} prevents this sub-block from terminating with high probability. Thus, the last sub-blocks provide the usual unknown-$C$ adaptation, while the stationary-test sub-blocks provide the additional piecewise-stationary guarantee. The sub-block schedule $G_b$ balances these two sources of error.

\textbf{Additional notations.} We use $B$ to denote the total number of blocks that \pref{alg:meta_alg_piecewise} proceeds during $T$ rounds. For each block $b$, we use $M_b$ to denote its total number of sub-blocks that the algorithm has ever entered.

\begin{definition}[Nice event] \label{def:high_prob_event_reduction}
    For any fixed algorithm $\Alg_{\delta}$ with an input confidence $\delta \in (0,1)$, which satisfies \pref{cond:anytime_adaptive_alg} under error measure $\Err$, let $\calE(\Alg_{\delta},\Err)$ be the event such that $\forall \text{ contiguous } I \subseteq [T]$ and $\forall \wt{C} \in \{ 2^{3b}: 1\leq b\leq \lceil \log_8(8+T) \rceil \} \cup \{0\}$, if $\wt{C} \geq C(I)$, then $\Err(I) \leq U_{\delta}(I,\wt{C}) $.
\end{definition}

\begin{lemma}
    For any fixed $\delta \in (0,1)$, $\P(\calE(\Alg_{ \delta' },\Err)) \geq 1-\delta$ where $\delta' =\frac{\delta}{10T^2\lceil \log_8(8+T) \rceil }$.
\end{lemma}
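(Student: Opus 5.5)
The plan is a union bound over all contiguous intervals and all candidate guesses, applying \pref{cond:anytime_adaptive_alg} once to each pair.

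\textbf{Counting.} A contiguous interval $I\subseteq[T]$ is determined by its two endpoints $1\le s\le e\le T$, so there are $T(T+1)/2\le T^2$ of them. The admissible guesses are $\{2^{3b}: 1\le b\le \lceil\log_8(8+T)\rceil\}\cup\{0\}$, which has $\lceil\log_8(8+T)\rceil+1\le 2\lceil\log_8(8+T)\rceil$ elements, since $\lceil\log_8(8+T)\rceil\ge 1$. Hence the number of pairs $(I,\wt C)$ is at most $2T^2\lceil\log_8(8+T)\rceil$.

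\textbf{Per-pair bound.} Fix a pair with $\wt C\ge C(I)$; pairs with $\wt C<C(I)$ impose no requirement in \pref{def:high_prob_event_reduction}. By \pref{cond:anytime_adaptive_alg}, a fresh run of $\Alg_{\delta'}$ on $I$ with input $(\delta',\wt C)$ satisfies $\Err(I)\le U_{\delta'}(I,\wt C)$ except with probability at most $\delta'$.

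\textbf{Union bound.} Summing the failure probabilities gives
\[
\P\big(\calE(\Alg_{\delta'},\Err)^c\big)\le 2T^2\lceil\log_8(8+T)\rceil\cdot\delta' = \frac{2\delta}{10} \le \delta .
\]

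\textbf{Main subtlety.} The event must be read as a statement about a hypothetical fresh run of the base algorithm started at the beginning of each interval $I$, each with its own fixed guess $\wt C$. \pref{alg:meta_alg_piecewise} only ever executes runs that start at restart times, and those runs use the same random outcomes $y_t$. So the union bound has to be taken over the (at most $2T^2\lceil\log_8(8+T)\rceil$) potential run instances, indexed by start point, end point and guess, and not over the random intervals the algorithm actually produces. Each instance is a deterministic pair fixed in advance, to which \pref{cond:anytime_adaptive_alg} applies directly. The union bound needs no independence between instances, so the shared outcomes cause no difficulty. It is also worth noting explicitly that a run started at $s$ and observed up to $e$ is the same process as a fresh run on $[s,e]$, because the base algorithm never looks ahead. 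This is what later lets the event be applied to the data-dependent sub-blocks of \pref{alg:meta_alg_piecewise}.
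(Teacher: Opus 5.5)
Your proof is correct and follows the paper's argument: a union bound over the $O(T^2)$ contiguous intervals and the $\lceil \log_8(8+T)\rceil+1$ admissible guesses. Your count is slightly tighter than the paper's (which allows up to $2T^2$ intervals) and gives failure probability $\delta/5$. Your reading of $\calE$ as a statement about hypothetical fresh runs indexed by deterministic start points, end points, and guesses is a useful clarification that the paper leaves implicit.
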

\begin{proof}
There are at most $2T^2$ contiguous intervals across $T$ rounds and at most $\lceil \log_8(8+T) \rceil+1 $ guesses. By repeating the union bound, we complete the proof.
\end{proof}

\begin{lemma} \label{lem:bound_block_reduction}
Suppose that \pref{alg:meta_alg_piecewise} runs algorithm $\Alg_{\delta}$ with input $\delta \in (0,1)$ under $\Err$. Suppose that $\calE(\Alg_{ \delta' },\Err)$ holds where $\delta' =\frac{\delta}{10T^2\lceil \log_8(8+T) \rceil }$.
We have $B \leq \lceil \log_8(8+C) \rceil $.
\end{lemma}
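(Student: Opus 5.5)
Let $b^\star$ be the smallest index $b\ge 1$ with $2^{3b}\ge C$. Since $C\le T$, we have $b^\star\le \lceil \log_8(8+C)\rceil \le \lceil \log_8(8+T)\rceil$, so the guess $\wh C^{(b^\star)}=2^{3b^\star}$ is one of the guesses covered by the nice event of \pref{def:high_prob_event_reduction}. The plan is to show that, on $\calE(\Alg_{\delta'},\Err)$, block $b^\star$ can never be left. This forces $B\le b^\star\le\lceil\log_8(8+C)\rceil$.

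First, observe that the algorithm enters block $b+1$ only after every sub-block of block $b$ has terminated, including the last sub-block $\ell=G_b+1$. In that last sub-block the algorithm restarts $\Alg$ with input $(\delta',\wh C^{(b)})$ on the interval $\calT^{(b)}_{G_b+1}$. By the while-loop condition, this sub-block ends only if $\Err\big(\calT^{(b)}_{G_b+1}\big) > U_{\delta'}\big([T],\wh C^{(b)}\big)$. The stationarity-test sub-blocks $\ell\le G_b$ can terminate freely, but they do not matter: each one just hands over to the next sub-block, so eventually the last sub-block of block $b^\star$ is entered (or the horizon ends first, which is also fine).

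Next, suppose the last sub-block of block $b^\star$ terminates, and let $I=\calT^{(b^\star)}_{G_{b^\star}+1}$ be the interval it covers, including the terminating round. The interval $I$ is contiguous and contained in $[T]$. Non-stationarity is monotone under restriction: taking $q=\optp$ gives $C(I)\le\sum_{t\in I}|q_t-\optp|\le C\le \wh C^{(b^\star)}$. So the nice event applies to $I$ with guess $\wh C^{(b^\star)}$ and gives $\Err(I)\le U_{\delta'}\big(I,\wh C^{(b^\star)}\big)$. Monotonicity of $U$ in the interval argument (\pref{cond:anytime_adaptive_alg}) then gives $U_{\delta'}\big(I,\wh C^{(b^\star)}\big)\le U_{\delta'}\big([T],\wh C^{(b^\star)}\big)$. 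This contradicts the termination condition, so the last sub-block of block $b^\star$ runs until round $T$ and no block $b>b^\star$ is ever entered.

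The step I expect to need care is the interface between the while-loop test and the event: the termination check compares the error on the interval including the last added round, and that interval must be exactly the $I$ the event is invoked on. This works because the event quantifies over all contiguous intervals, including ones ending at a data-dependent time. I would also verify the edge case $C<1$: there $b^\star=1$ and $\lceil\log_8(8+C)\rceil\ge 1$, so the bound still holds.
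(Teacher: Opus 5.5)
Your proof is correct and follows the same route as the paper. You take the first block $b$ with $2^{3b}\ge C$ and argue that, on the nice event, its last sub-block can never trigger the termination test, so $B$ cannot exceed that index. Beyond the paper's two-line version, you make explicit that this guess lies in the finite set covered by $\calE(\Alg_{\delta'},\Err)$, that $C(I)\le C$ for the sub-block interval $I$, and that monotonicity of $U_{\delta'}$ in its interval argument gives the comparison with the threshold $U_{\delta'}([T],\wh{C}^{(b)})$.
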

\begin{proof}
For each block $b$, the guess is $\wh{C}^{(b)}=2^{3b}$. 
Let $B' = \inf\{b \geq 1: 2^{3b} \geq C\}$. Then, the termination condition will never be met in the last sub-block of block $B'$. Thus, we have $B \leq B' \leq \lceil \log_8(8+C) \rceil $.
\end{proof}

\subsubsection{Analysis for Achieving $\Caldist_1=\otil \rbr{ \min \cbr{ \sqrt{KT}, \sqrt{T}+(CT)^{1/3} }  }$}

For $\Caldist_1$, we choose the sub-block scheduling $G_b$ as
\begin{align} \label{eq:Gb_choice_Cal1}
    G_b = \left \lceil \rbr{    \frac{U_{\delta'}([T],2^{3b})}{U_{\delta'}([T],0)}    }^2       \right \rceil .
\end{align}

\begin{theorem} \label{thm:combine_piecewise_cal1}
Let $\delta \in (0,1)$ and $\delta' =\frac{\delta}{10T^2\lceil \log_8(8+T) \rceil }$.
If the input algorithm of \pref{alg:meta_alg_piecewise}, denoted by $\Alg^{\Caldist_1}$ satisfies \pref{cond:anytime_adaptive_alg} with $U_{\delta'}(I,\wt{C}) = \alpha_1 \sqrt{|I|}+ \alpha_2 (\wt{C}|I|)^{1/3}$ and enjoys the per-round time complexity $\alpha_3 + \alpha_4 \wt{C}^{1/3}$, for any contiguous interval $I \subseteq [T]$, and any guess $\wt{C} \geq C(I)$, where $\alpha_1,\alpha_2,\alpha_3,\alpha_4 >0$ are polynomial in $\log T$ or $\iota$, then running \pref{alg:meta_alg_piecewise} with confidence $\delta \in (0,1)$, error measure $\Err=\Caldist_1$, and $G_b$ is chosen as \pref{eq:Gb_choice_Cal1} for each block $b$, ensures that with probability at least $1-\delta$,
\[
\Caldist_1  =\order \rbr{  \min \cbr{  \alpha_1 \log T \sqrt{T }+\alpha_2  ((1+C)T)^{1/3}  ,\alpha_1 \sqrt{K T \log T }  } }.
\]

Moreover, the per-round time complexity is
\[
\order \rbr{   \min \cbr{  \alpha_3 +\alpha_4 (1+C)^{1/3} , \alpha_3 + \frac{\alpha_1 \alpha_4}{\alpha_2} \sqrt{K} T^{1/6} }     }.
\]
\end{theorem}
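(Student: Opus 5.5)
On the nice event $\calE(\Alg_{\delta'},\Caldist_1)$ (probability at least $1-\delta$), I would use subadditivity (\pref{corr:decomp_Cal2_into_block}) to bound $\Caldist_1$ by the sum of errors over all sub-blocks ever entered. Each sub-block ends as soon as its running error exceeds the threshold $D$. Hence its error is at most its threshold (evaluated on the sub-block minus its last round) plus one round's contribution, which is at most $\order(1)$ since the error changes by at most one per round. By \pref{lem:bound_block_reduction}, $B\le\lceil\log_8(8+C)\rceil=\order(\log T)$.

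I would then prove the two bounds separately and take the minimum.

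\textbf{The $C$-bound.}
1. A stationary-test sub-block $\ell\le G_b$ contributes $\order(\alpha_1\sqrt{|\calT^{(b)}_\ell|}+1)$.
2. The last sub-block of block $b$ contributes $\order(U_{\delta'}([T],2^{3b})+1)$.
3. The number of sub-blocks in block $b$ is at most $G_b+1$. By Cauchy--Schwarz, the stationary parts of block $b$ contribute at most $\alpha_1\sqrt{G_b T}+G_b$.
4. By the choice \pref{eq:Gb_choice_Cal1}, $\sqrt{G_b}\le 1+U_{\delta'}([T],2^{3b})/(\alpha_1\sqrt T)$. So $\alpha_1\sqrt{G_bT}\lesssim \alpha_1\sqrt T+U_{\delta'}([T],2^{3b})$.
5. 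Since $G_b\lesssim 1+(\alpha_2/\alpha_1)^2 2^{2b}T^{-1/3}$, the additive $G_b$ term is also dominated, because $2^{2b}\lesssim (1+C)^{2/3}\le T^{2/3}$.
6. Summing over $b\le B$: the $\alpha_1\sqrt T$ terms give $\alpha_1\sqrt T\log T$, and the geometric sum $\sum_b \alpha_2(2^{3b}T)^{1/3}$ is dominated by its last term $\alpha_2(8(8+C)T)^{1/3}$.

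The result is $\order(\alpha_1\log T\sqrt T+\alpha_2((1+C)T)^{1/3})$.

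\textbf{The $K$-bound.}
1. A stationary-test sub-block that terminates had running error exceeding $U_{\delta'}(\calT^{(b)}_\ell,0)$. On $\calE$, this forces $C(\calT^{(b)}_\ell)>0$, i.e., the sub-block contains a change point.
2. The terminated test sub-blocks are disjoint, so at most $K-1$ of them terminate.
3. In each block $b<B$, the last sub-block was entered, so all $G_b$ tests terminated. Hence $\sum_{b<B}G_b\le K-1$. The same count also bounds the sub-blocks entered in block $B$ plus one unfinished one.
4. The total number of sub-blocks is therefore $M\le K+\order(\log T)$ plus the last sub-blocks. Stationary ones contribute at most $\alpha_1\sqrt{MT}$ by Cauchy--Schwarz.
5. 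For the last sub-blocks, I would use $U_{\delta'}([T],2^{3b})\le \alpha_1\sqrt{G_bT}$ from the choice of $G_b$. Since those with $b<B$ had $G_b\le K$, Cauchy--Schwarz over $b$ gives $\alpha_1\sqrt{T\log T\sum_b G_b}\lesssim\alpha_1\sqrt{KT\log T}$.
6. The last block $B$ needs care: its final sub-block may be reached only if the $G_B$ tests all failed, giving $G_B\le K$. Otherwise that sub-block is never entered.

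This yields $\order(\alpha_1\sqrt{KT\log T})$.

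\textbf{Time complexity.} The guesses used are $0$ or $2^{3b}\le 8(8+C)$, so every round costs $\order(\alpha_3+\alpha_4(1+C)^{1/3})$. Moreover, an entered last sub-block has $G_b\le K$, giving $(\alpha_2/\alpha_1)^2 2^{2b}T^{-1/3}\lesssim K$. This implies $\alpha_4 2^b\lesssim (\alpha_1\alpha_4/\alpha_2)\sqrt K T^{1/6}$, which yields the second term of the minimum.

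\textbf{Main obstacle.} I expect the delicate step to be the $K$-bound bookkeeping: verifying that every block whose last sub-block is entered has $G_b\le K$, including the final block, and handling the overshoot and rounding terms in $G_b$ without losing more than a $\sqrt{\log T}$ factor.
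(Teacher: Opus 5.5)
Your proposal is correct and follows essentially the same route as the paper: subadditivity over sub-blocks, and for the $C$-bound a per-block estimate via $\sqrt{G_b}\le 1+U_{\delta'}([T],2^{3b})/(\alpha_1\sqrt{T})$ combined with $B\le\lceil\log_8(8+C)\rceil$. For the $K$-bound you likewise charge each terminated test sub-block to a change point and use $U_{\delta'}([T],2^{3b})\le\alpha_1\sqrt{G_bT}$ together with $\sum_b G_b\le K$ and Cauchy--Schwarz over $\order(\log T)$ blocks; the time-complexity argument through $G_b\le K$ is the paper's as well, and your explicit treatment of the one-round overshoot at termination and of the final, possibly unfinished, block fills in bookkeeping that the paper's proof passes over.
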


\pref{thm:combine_piecewise_cal1} implies that using the algorithm that achieves the result in \pref{thm:cal1_better} as a base algorithm of \pref{alg:meta_alg_piecewise} yields a bound of $\Caldist_1=\otil \big( \min\{\sqrt{KT},\sqrt{T}+(CT)^{1/3}\} \big)$ with per-round time complexity $\otil(1+\min \{C^{1/3},\sqrt{K}T^{1/6}\})$.

\begin{lemma} \label{lem:piecewise_Cbound_Cal1}
Suppose that \pref{alg:meta_alg_piecewise} is run under the same condition in \pref{thm:combine_piecewise_cal1} and $\calE(\Alg_{ \delta'}^{\Caldist_1},\Caldist_1)$ holds where $\calE(\cdot,\cdot)$ is given in \pref{def:high_prob_event_reduction} and $\Alg^{\Caldist_1}_{\delta'}$ is the input algorithm of \pref{alg:meta_alg_piecewise}  with confidence $\delta'=\frac{\delta}{10T^2\lceil \log_8(8+T) \rceil }$. We have
\[
\Caldist_1  = \order \rbr{  \alpha_1  \sqrt{T} \log T +\alpha_2 (T(1+C))^{1/3} }.
\]
\end{lemma}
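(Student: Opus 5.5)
We bound $\Caldist_1$ by subadditivity (\pref{corr:decomp_Cal2_into_block}). The time horizon is cut into the sub-blocks $\calT_\ell^{(b)}$ that \pref{alg:meta_alg_piecewise} runs, and we bound the error of each piece using its termination rule. Throughout we work on the event $\calE(\Alg^{\Caldist_1}_{\delta'},\Caldist_1)$. By \pref{lem:bound_block_reduction}, this event gives $B\le \lceil\log_8(8+C)\rceil=\order(\log T)$ blocks.

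\textbf{Error of a single sub-block.} A sub-block runs while $\Err\le D$, where $D$ is computed after including the current round. At termination, therefore, the error is at most the threshold value for the sub-block minus its last round, plus the contribution of that last round. That round changes $\Caldist_1$ by at most $1$. Using monotonicity of $U$, we get the following bounds:
\begin{itemize}
\item a test sub-block ($\ell\le G_b$) has error at most $U_{\delta'}(\calT_\ell^{(b)},0)+1\le \alpha_1\sqrt{|\calT_\ell^{(b)}|}+1$;
\item the last sub-block of block $b$ has error at most $U_{\delta'}([T],2^{3b})+1=\alpha_1\sqrt T+\alpha_2(2^{3b}T)^{1/3}+1$.
\end{itemize}
Every sub-block, including one still running at time $T$, obeys the same bound, because the while-condition held just before its last round.

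\textbf{Summing over test sub-blocks.} Block $b$ has $M_b\le G_b+1$ entered sub-blocks. Its test sub-blocks cover disjoint lengths summing to at most $T$, so Cauchy--Schwarz gives
\[
\sum_{\ell\le G_b}\alpha_1\sqrt{|\calT_\ell^{(b)}|}\le \alpha_1\sqrt{G_bT}.
\]
We expand $G_b$ using \pref{eq:Gb_choice_Cal1}:
\[
\sqrt{G_b}\le 1+\frac{U_{\delta'}([T],2^{3b})}{U_{\delta'}([T],0)}=1+\frac{\alpha_1\sqrt T+\alpha_2 2^b T^{1/3}}{\alpha_1\sqrt T}.
\]
Hence $\alpha_1\sqrt{G_bT}\le \order\big(\alpha_1\sqrt T+\alpha_2 2^bT^{1/3}\big)$. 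This is the same order as the last sub-block's bound, which is exactly why $G_b$ was chosen this way. We also need the additive $+1$ terms to sum to at most $\sum_b(G_b+1)$. This sum is not obviously small, since $G_b$ can be of order $T^{1/3}$. We handle it by noting $G_b\le \order(1+\alpha_2^2 2^{2b}T^{-1/3}/\alpha_1^2)$, which is dominated by $\alpha_2 2^bT^{1/3}$.

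\textbf{Summing over blocks.} Each block thus contributes $\order(\alpha_1\sqrt T+\alpha_2 2^bT^{1/3})$. Summing over $b\le B$ gives $\order\big(B\alpha_1\sqrt T+\alpha_2 2^{B}T^{1/3}\big)$, since the terms $2^b$ form a geometric series. Finally, $2^{3B}\le 8^{\lceil\log_8(8+C)\rceil}\le 8(8+C)$, so $2^B=\order((1+C)^{1/3})$. This yields the claimed $\order(\alpha_1\sqrt T\log T+\alpha_2(T(1+C))^{1/3})$.

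The main obstacle is the bookkeeping at the sub-block level. First, the block count must be bounded via \pref{lem:bound_block_reduction}, which holds on the nice event because once $2^{3b}\ge C\ge C(I)$ the certificate prevents termination. Second, the test sub-blocks must be shown not to blow up; this is handled by the Cauchy--Schwarz step together with the matched choice of $G_b$, including the additive per-sub-block overshoot terms discussed above.
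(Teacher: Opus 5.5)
Your proof is correct and follows the paper's argument. It uses subadditivity over sub-blocks and bounds each sub-block by its termination certificate. It then applies Cauchy--Schwarz over the $G_b$ test sub-blocks, which together with the choice of $G_b$ matches the last sub-block's $\order(\alpha_1\sqrt T+\alpha_2 2^bT^{1/3})$, and finishes with a geometric sum over $B\le\lceil\log_8(8+C)\rceil$ blocks.

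Your explicit treatment of the one-round overshoot is a detail the paper absorbs into its $\order(\cdot)$. It can be handled more simply via $1\le\sqrt{|\calT_\ell^{(b)}|}$, which replaces $\alpha_1$ by $\alpha_1+1$ in the Cauchy--Schwarz step. This also avoids the implicit requirement $\alpha_2 2^b\lesssim\alpha_1^2T^{2/3}$ in your domination claim.
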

\begin{proof}
For any block $b$, by the termination condition, we have
\begin{align*}
   \sum_{\ell=1}^{M_b} \Caldist_1 \Big(\calT^{(b)}_{\ell} \Big) & \leq \order \rbr{   \sum_{\ell=1}^{G_b} \alpha_1 \sqrt{|\calT^{(b)}_{\ell}|} + U_{\delta'} \rbr{[T],\wh{C}^{(b)}} } \\
      & \leq \order \rbr{   \sum_{\ell=1}^{G_b} \alpha_1 \sqrt{|\calT^{(b)}_{\ell}|} + \alpha_1 \sqrt{T} +\alpha_2 (\wh{C}^{(b)} T)^{1/3}   } \\
   &\leq \order \rbr{    \alpha_1 \sqrt{ G_b T}+ \alpha_2 \cdot 2^b T^{1/3}  }\\
   &\leq \order \rbr{    \alpha_1 \sqrt{T}+ \alpha_2 \cdot 2^b T^{1/3}  },
\end{align*}
where the first inequality holds because $M_b \leq G_b+1$ and if the algorithm enters the last sub-block of block $b$, then the termination condition implies that the error in the last sub-block can be bounded by $\order \Big( U_{\delta'} \rbr{[T],\wh{C}^{(b)}} \Big)$, the third inequality uses the Cauchy–Schwarz inequality, and the last inequality follows from the following:
\begin{align*}
G_b = \left \lceil  \rbr{\frac{ U_{\delta'}([T],\wh{C}^{(b)}) }{ U_{\delta'}([T],0) }}^2  \right \rceil  \leq 1+ \rbr{\frac{ U_{\delta'}([T],\wh{C}^{(b)}) }{ U_{\delta'}([T],0) }}^2  \leq  \order  \rbr{ \rbr{1+ \frac{\alpha_2 2^{b}  }{\alpha_1 T^{1/6}}  }^2   }.
\end{align*}

Then, we have
\begin{align*}
  \Caldist_1 &\leq \sum_{b=1}^B \sum_{\ell=1}^{M_b} \Caldist_1 \Big(\calT^{(b)}_{\ell} \Big)  \\
  & \leq \order \rbr{    \alpha_1 \sum_{b=1}^B\sqrt{ T}+  \alpha_2 T^{1/3} \sum_{b=1}^B 2^b   } \\
    & \leq \order \rbr{    \alpha_1 \log T\sqrt{T}+\alpha_2  2^{B} T^{1/3}   } \\
  & \leq \order \rbr{    \alpha_1 \log T \sqrt{T }+\alpha_2  ((1+C)T)^{1/3}   },
\end{align*}
where the first inequality uses the subadditivity of $\Caldist_1$ based on \pref{corr:decomp_Cal2_into_block}, and the last inequality holds due to \pref{lem:bound_block_reduction}.
\end{proof}

\begin{lemma}\label{lem:piecewise_KTbound_Cal1}
Suppose that \pref{alg:meta_alg_piecewise} is run under the same condition in \pref{thm:combine_piecewise_cal1} and $\calE(\Alg_{ \delta'}^{\Caldist_1},\Caldist_1)$ holds where $\calE(\cdot,\cdot)$ is given in \pref{def:high_prob_event_reduction} and $\Alg^{\Caldist_1}_{\delta'}$ is the input algorithm of \pref{alg:meta_alg_piecewise}  with confidence $\delta'=\frac{\delta}{10T^2\lceil \log_8(8+T) \rceil }$. We have
\[
\Caldist_1  = \order \rbr{ \alpha_1 \sqrt{K T \log T} }.
\]
\end{lemma}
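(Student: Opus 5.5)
We will bound the total error by summing over all sub-blocks that the algorithm ever enters, using subadditivity of $\Caldist_1$ (\pref{corr:decomp_Cal2_into_block}):
\[
\Caldist_1 \le \sum_{b=1}^B\sum_{\ell=1}^{M_b}\Caldist_1\big(\calT^{(b)}_\ell\big).
\]
We then separate the stationary-test sub-blocks ($\ell\le G_b$) from the doubling sub-blocks ($\ell=G_b+1$).

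\textbf{Stationary-test sub-blocks.} By the termination rule, each such sub-block incurs error at most $U_{\delta'}(\calT^{(b)}_\ell,0)$ plus the error of its last round. Since $\calT^{(b)}_\ell$ grows by one round at a time and $\Caldist_1$ changes by $\order(1)$ per round, this gives $\order(\alpha_1\sqrt{|\calT^{(b)}_\ell|}+1)$. We next count the sub-blocks.

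\begin{itemize}
\item \emph{Terminated sub-blocks.} On $\calE(\Alg^{\Caldist_1}_{\delta'},\Caldist_1)$, a sub-block terminates only if its interval is not contained in a single stationary segment: if $C(\calT)=0$, the certificate with $\wt C=0$ would never be exceeded. Terminated sub-blocks are disjoint intervals, each containing a change point, so there are at most $K-1$ of them in total.
\item \emph{Non-terminated sub-blocks.} At most one sub-block per block is non-terminated (the last one entered overall), and $B\le \lceil\log_8(8+C)\rceil=\order(\log T)$ by \pref{lem:bound_block_reduction}.
\item \emph{Total.} The number of stationary-test sub-blocks is therefore $\order(K+\log T)$.
\end{itemize}

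Applying Cauchy--Schwarz with $\sum|\calT^{(b)}_\ell|\le T$ bounds their total error by $\order\big(\alpha_1\sqrt{(K+\log T)T}\big)$, which is $\order(\alpha_1\sqrt{KT\log T})$.

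\textbf{Doubling sub-blocks.} The key step is to show that the algorithm rarely reaches them when $K$ is small. To finish block $b$'s $G_b$ tests, all $G_b$ stationary sub-blocks must terminate, each consuming one change point. Hence if block $b$ reaches its last sub-block, then $\sum_{b'\le b}G_{b'}\le K-1$. In particular $G_b\le K$ for every block whose doubling sub-block is entered, and $G_b\ge (U_{\delta'}([T],2^{3b})/U_{\delta'}([T],0))^2$ by \pref{eq:Gb_choice_Cal1}. So the error of the last sub-block satisfies
\[
U_{\delta'}([T],2^{3b}) \le \sqrt{G_b}\,U_{\delta'}([T],0)\le \sqrt{K}\,\alpha_1\sqrt{T},
\]
up to the $\order(1)$ final-round overshoot. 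Summing naively over blocks would cost an extra $\log T$ factor. Instead, we use that $\sum_b G_b\le K$ and apply Cauchy--Schwarz across blocks:
\[
\sum_b \sqrt{G_b}\,\alpha_1\sqrt T\le \alpha_1\sqrt{T B\sum_b G_b}=\order\big(\alpha_1\sqrt{KT\log T}\big).
\]
The (at most one) doubling sub-block in the final block, which may not have terminated, is covered by the same bound. Adding the two parts gives the claim.

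\textbf{Main obstacle.} The hardest part is the charging argument: that each terminated stationary test contains a distinct change point, and that the doubling sub-block error can be charged to $\sqrt{G_b}$ with $\sum_b G_b\le K$. Two boundary effects need care here: the last-round overshoot of the termination condition, and the final unfinished sub-block.
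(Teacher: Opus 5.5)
Your proposal is correct and follows essentially the same route as the paper. You split the error into stationary-test sub-blocks, where each terminated one is charged to a distinct change point and then Cauchy--Schwarz is applied, and doubling sub-blocks, which are bounded via $U_{\delta'}([T],2^{3b})\le\alpha_1\sqrt{G_bT}$ and Cauchy--Schwarz over $B=\order(\log T)$ blocks using $\sum_b G_b\le K$. Your explicit handling of the $\order(1)$ last-round overshoot and of the possibly unfinished final sub-block tightens details the paper glosses over; note that only one sub-block overall can be unfinished, so your extra $\log T$ count there is unnecessary but harmless.
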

\begin{proof}
Let us define:
\begin{equation} \label{eq:def_S_all_test_sub_blocks}
    \calS:= \cbr{ \calT^{(b)}_{\ell}: \ell \leq G_b, b \leq B  }.
\end{equation}

As the input algorithm satisfies \pref{cond:anytime_adaptive_alg}, for any sub-block $\ell \leq G_b$ of any block $b$, then on the nice event it cannot be contained in a single stationary segment. Hence it contains at least one change point. Since sub-blocks are disjoint in time, the number of terminated stationary-test sub-blocks is at most $\order(K)$, thereby $|S| \leq \order(K)$.
Then,
\begin{align*}
  \sum_{I \in \calS}  \Caldist_1 ( I ) \leq \order  \rbr{  \alpha_1 \sum_{I \in \calS} \sqrt{ |I|}  }  \leq \order  \rbr{  \alpha_1  \sqrt{ KT}  },
\end{align*}
where the last inequality uses the Cauchy–Schwarz inequality together with facts that $\sum_{I \in \calS} |I| \leq T$ and $|S| \leq K$.
Notice that 
\[
\sqrt{G_b} \geq \frac{U_{\delta'} \rbr{[T],\wh{C}^{(b)}}}{U_{\delta'} \rbr{[T], 0  }} = \frac{U_{\delta'} \rbr{[T],\wh{C}^{(b)}}}{  \alpha_1 \sqrt{T}  },
\]
which gives
\begin{equation} \label{eq:U_bound_alpha_rootGT}
    U_{\delta'} \rbr{[T],\wh{C}^{(b)}} \leq \alpha_1 \sqrt{G_bT}.
\end{equation}

Thus, we can show that
\begin{align*}
    \sum_{b \leq B} \Caldist_1 ( \calT^{b}_{G_b+1} ) &\leq \order  \rbr{  \sum_{b \leq B}  U_{\delta'} \rbr{[T],\wh{C}^{(b)}} }\\
    &\leq \order \rbr{ \alpha_1 \sum_{b \leq B}  \sqrt{G_b T} } \leq \order \rbr{ \alpha_1 \sqrt{\log T \sum_{b \leq B} G_b T} }\leq \order \rbr{ \alpha_1 \sqrt{\log T KT} },
\end{align*}
where the first inequality uses the termination condition, and the last inequality follows from the facts that $B =\order(\log T)$ and for any block $b$, once the algorithm enters the last sub-block, the number of environment changes is at least $G_b$, thereby $\sum_b G_b \leq \order(K)$.

By the subadditivity of $\Caldist_1$ (\pref{corr:decomp_Cal2_into_block}), we have
\begin{align*}
     \Caldist_1 &\leq   \sum_{b \leq B} \Caldist_1 ( \calT^{b}_{G_b+1} )+\sum_{I \in \calS}  \Caldist_1 ( I )   \leq \order \rbr{ \alpha_1 \sqrt{ KT \log T} },
\end{align*}
which completes the proof.
\end{proof}

\begin{proof}[Proof of \pref{thm:combine_piecewise_cal1}]
Conditioning on event $\calE(\Alg_{ \delta'}^{\Caldist_1},\Caldist_1)$, which occurs with probability at least $1-\delta$, \pref{lem:piecewise_Cbound_Cal1} and \pref{lem:piecewise_KTbound_Cal1} jointly give the claimed result of $\Caldist_1$. Then, we analyze the per-round time complexity. 
For any sub-block $\ell \leq G_b$, the per-round time complexity is at most $\alpha_3$ since the input algorithm is with guess $0$.
For any block $b$, the last sub-block runs the base algorithm with guess $\wh{C}_b = 2^{3b}$, and thus the per-round time complexity is at most $\order(\alpha_3 +\alpha_4 2^{b})$. Recall that if the algorithm enters the last sub-block of block $b$, then we have $G_b \leq K$. Thus, using \pref{eq:U_bound_alpha_rootGT}, we have for any block $b$
\[
U_{\delta'} \rbr{[T],\wh{C}^{(b)}} = \alpha_1\sqrt{T} +\alpha_2 2^{b}T^{1/3} \leq \alpha_1 \sqrt{G_bT} \leq \alpha_1 \sqrt{KT} \Longrightarrow  2^b \leq \order \rbr{ \frac{\alpha_1}{\alpha_2} \sqrt{K} T^{1/6}}.
\]

Thus, one can bound the per-round time complexity by $\order \big(\alpha_3 + \frac{\alpha_1 \alpha_4}{\alpha_2} \sqrt{K} T^{1/6} \big)$.
As the total number of blocks is at most $1+\log_8(1+C)$, the per-round cost for the last sub-block of any block $b$ is at most $\order(\alpha_3 +\alpha_4 2^{b}) \leq \order(\alpha_3 +\alpha_4 (1+C)^{1/3})$. Combining all together, we get the claimed bound on the per-round time complexity.
\end{proof}

\subsubsection{Analysis for Achieving $\Caldist_2=\otil \rbr{ \min \cbr{K, (1+C)^{1/3} }  }$}
For $\Caldist_2$, we choose the sub-block scheduling $G_b$ as
\begin{align} \label{eq:Gb_choice_Cal2}
    G_b = \left \lceil    \frac{U_{\delta'}([T],2^{3b})}{U_{\delta'}([T],0)}         \right \rceil .
\end{align}

\begin{theorem} \label{thm:combine_piecewise_cal2}
Let $\delta \in (0,1)$, $\delta' =\frac{\delta}{10T^2\lceil \log_8(8+T) \rceil }$, and $\beta \in [0,1]$.
If the input algorithm of \pref{alg:meta_alg_piecewise}, denoted by $\Alg^{\Caldist_2}$ satisfies \pref{cond:anytime_adaptive_alg} with $U_{\delta'}(I,\wt{C}) = \alpha_1 + \alpha_2 \wt{C}^{1/3}$ and enjoys the per-round time complexity $\alpha_3 + \alpha_4 \wt{C}^{\beta}$, for any contiguous interval $I \subseteq [T]$, and any guess $\wt{C} \geq C(I)$, where $\alpha_1,\alpha_2,\alpha_3,\alpha_4>0$ are polynomial in $\log T$ or $\iota$, then running \pref{alg:meta_alg_piecewise} with confidence $\delta \in (0,1)$, error measure $\Err=\Caldist_2$, and $G_b$ is chosen as \pref{eq:Gb_choice_Cal2} for each block $b$, ensures that with probability at least $1-\delta$,
\[
\Caldist_2  =\order \rbr{  \min \cbr{\alpha_1 \log T +\alpha_2  (1+C)^{1/3}   ,\alpha_1 K  } }.
\]

Moreover, the per-round time complexity is
\[
\order \rbr{   \min \cbr{  \alpha_3 +\alpha_4 (1+C)^{\beta} , \alpha_3 + \alpha_4 \rbr{\frac{\alpha_1 }{\alpha_2} K }^{3\beta }  }     }.
\]
\end{theorem}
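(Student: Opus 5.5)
We mirror the $\Caldist_1$ argument of \pref{thm:combine_piecewise_cal1}, working throughout on the nice event $\calE(\Alg^{\Caldist_2}_{\delta'},\Caldist_2)$ of \pref{def:high_prob_event_reduction}, which holds with probability at least $1-\delta$. On this event \pref{lem:bound_block_reduction} gives $B\le\lceil\log_8(8+C)\rceil=\order(\log T)$. Since $\Caldist_2$ is subadditive (\pref{corr:decomp_Cal2_into_block}), it suffices to bound the sum of $\Caldist_2$ over all sub-blocks in two separate ways and take the minimum. The relevant certificate here does not depend on $|I|$: $U_{\delta'}(I,0)=\alpha_1$ and $U_{\delta'}([T],2^{3b})=\alpha_1+\alpha_2 2^b$. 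Hence
\[
G_b=\left\lceil 1+\frac{\alpha_2 2^b}{\alpha_1}\right\rceil \le 2+\frac{\alpha_2 2^b}{\alpha_1}.
\]

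\textbf{$C$-dependent bound.} Within a block $b$, every test sub-block with $\ell\le G_b$ ends as soon as its error first exceeds $\alpha_1$. Its error is therefore at most $\alpha_1$ plus the one-round overshoot. Each round contributes $\order(1)$ to $\Caldist_2$, and I would argue this overshoot is absorbed into $\order(\alpha_1)$. The last sub-block contributes at most $U_{\delta'}([T],2^{3b})+\order(1)$ by its termination rule. So block $b$ costs
\[
\order\big(G_b\alpha_1+\alpha_1+\alpha_2 2^b\big)=\order\big(\alpha_1+\alpha_2 2^b\big).
\]
Summing over $b\le B$ gives $\order(\alpha_1\log T+\alpha_2 2^B)$. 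Using $2^B=\order((1+C)^{1/3})$, this is $\order(\alpha_1\log T+\alpha_2(1+C)^{1/3})$.

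\textbf{$K$-dependent bound.} Let $\calS$ be the set of test sub-blocks, as in \pref{eq:def_S_all_test_sub_blocks}. Every test sub-block that terminates must contain a change point, since otherwise \pref{cond:anytime_adaptive_alg} with guess $0$ would forbid its termination on the nice event. These sub-blocks are disjoint, so at most $\order(K)$ of them terminate, plus at most one unfinished sub-block at time $T$. Each costs $\order(\alpha_1)$, for a total of $\order(\alpha_1 K)$.

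For the last sub-block of block $b$, we use $U_{\delta'}([T],2^{3b})\le \alpha_1 G_b$. Moreover, entering that sub-block requires $G_b$ terminated tests in block $b$, so $\sum_b G_b\le \order(K)$. The last sub-blocks therefore cost $\order(\alpha_1\sum_b G_b)=\order(\alpha_1 K)$.

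Here the linear (not squared) choice of $G_b$ in \pref{eq:Gb_choice_Cal2} is what matters. No Cauchy–Schwarz step is needed, because the certificate does not grow with $|I|$, so the bound is linear in $K$.

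\textbf{Time complexity.} Test sub-blocks cost $\alpha_3$ per round, and the last sub-block of block $b$ costs $\alpha_3+\alpha_4 2^{3b\beta}$. Bounding $2^b=\order((1+C)^{1/3})$ gives the first term, $\alpha_3+\alpha_4(1+C)^{\beta}$. For the second term, being in block $b$'s last sub-block requires $G_b\le\order(K)$, so $\alpha_2 2^b/\alpha_1\le G_b\le \order(K)$, i.e.\ $2^b\le\order(\alpha_1K/\alpha_2)$. This gives $\alpha_3+\alpha_4(\alpha_1K/\alpha_2)^{3\beta}$.

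\textbf{Main obstacle.} The delicate step is the bookkeeping in the $K$-bound. One must justify that a block whose last sub-block is entered really has $G_b$ distinct change points charged to it. One must also handle the final, possibly unfinished, sub-block and the one-round overshoot beyond the threshold. I would handle the overshoot by noting that one additional round changes $\Caldist_2$ by at most $\order(1)\le\order(\alpha_1)$.
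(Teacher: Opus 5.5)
Your proposal is correct and follows the paper's proof essentially step for step. It uses the same nice event. The $C$-bound comes from the per-block bound $O(G_b\alpha_1+\alpha_1+\alpha_2 2^b)$ summed over $B=O(\log T)$ blocks. The $K$-bound comes from charging each terminated test sub-block to a change point, combined with $U_{\delta'}([T],2^{3b})\le\alpha_1 G_b$ and $\sum_b G_b\le K$. The time-complexity argument is also the same. You are slightly more careful than the paper on two points it absorbs silently: the one-round overshoot past the threshold, which subadditivity bounds since $\Caldist_2(\{t\})\le 1$, and the unfinished final sub-block.
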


\pref{thm:combine_piecewise_cal2} implies that using the algorithm that achieves the result in \pref{thm:cal2_approach1} as a base algorithm of \pref{alg:meta_alg_piecewise} yields a bound of $\Caldist_2=\otil \big( \min\{K,(1+C)^{1/3}\} \big)$ with per-round time complexity $\otil \big( \min\{K,(1+C)^{1/3}\} \big)$ (here using $\beta=1/3$ in \pref{thm:combine_piecewise_cal2}). 
\pref{thm:combine_piecewise_cal2} implies that using the algorithm that achieves the result in \pref{corr:Cal2_bound_implied_by_PCal2} as a base algorithm of \pref{alg:meta_alg_piecewise} yields a bound of $\Caldist_2=\otil \big( \min\{K,(1+C)^{1/3}\} \big)$ with per-round time complexity $\otil \big( \min\{K^2,(1+C)^{2/3}\} \big)$ (here using $\beta=2/3$ in \pref{thm:combine_piecewise_cal2}).

\begin{lemma} \label{lem:piecewise_Cbound_Cal2}
Suppose that \pref{alg:meta_alg_piecewise} is run under the same condition in \pref{thm:combine_piecewise_cal2}
and $\calE(\Alg_{ \delta' }^{\Caldist_2},\Caldist_2)$ holds where $\calE(\cdot,\cdot)$ is given in \pref{def:high_prob_event_reduction} and $\Alg_{ \delta'}^{\Caldist_2}$ is the input algorithm of \pref{alg:meta_alg_piecewise} with confidence $\delta' =\frac{\delta}{10T^2\lceil \log_8(8+T) \rceil }$.
We have
\[
\Caldist_2  = \order \rbr{ \alpha_1 \log T +\alpha_2  (1+C)^{1/3}   }.
\]
\end{lemma}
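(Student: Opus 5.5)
The plan mirrors the proof of \pref{lem:piecewise_Cbound_Cal1}: bound the error accumulated within each block by the termination rule, then sum over blocks with subadditivity (\pref{corr:decomp_Cal2_into_block}) and the block count bound \pref{lem:bound_block_reduction}.

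\textbf{Per-block bound.} Fix a block $b$. It has $M_b \le G_b+1$ sub-blocks.
\begin{itemize}
\item For a stationarity-test sub-block $\ell \le G_b$, the while-loop keeps $\Caldist_2(\calT^{(b)}_\ell) \le U_{\delta'}(\calT^{(b)}_\ell,0)=\alpha_1$ until the final round. The last round can add only $\order(1)$, since one additional observation changes $\Caldist_2$ by at most a constant. So each such sub-block contributes $\order(\alpha_1)$.
\item The last sub-block similarly has error at most $U_{\delta'}([T],2^{3b})+\order(1)=\order(\alpha_1+\alpha_2 2^{b})$.
\end{itemize}
The choice \pref{eq:Gb_choice_Cal2} gives $G_b \le 1+\frac{\alpha_1+\alpha_2 2^b}{\alpha_1}$. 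Hence the test sub-blocks together contribute
\[
\order(\alpha_1 G_b)=\order(\alpha_1+\alpha_2 2^b).
\]
The whole block $b$ therefore contributes $\order(\alpha_1+\alpha_2 2^b)$.

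\textbf{Summing over blocks.} By subadditivity,
\[
\Caldist_2 \le \sum_{b=1}^{B}\order(\alpha_1+\alpha_2 2^b)=\order\bigl(\alpha_1 B+\alpha_2 2^{B}\bigr).
\]
On the nice event, \pref{lem:bound_block_reduction} gives $B\le\lceil\log_8(8+C)\rceil$. This yields $B=\order(\log T)$ and $2^B=\order((1+C)^{1/3})$, which is the claim.

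\textbf{Main obstacle.} The delicate point is the overshoot at the terminating round, where the certificate has just been violated. I would control it either with a one-step perturbation bound on $\Caldist_2$ (one added round changes a single summand by $\order(1)$), or by charging that round's error to the round itself. Either way, the stationary certificate $U_{\delta'}(\cdot,0)=\alpha_1$ being independent of interval length is what makes the geometric choice of $G_b$ give a linear rather than square-root tradeoff.
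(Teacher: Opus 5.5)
Your proposal is correct and matches the paper's proof step by step. Both bound each block's error by $\order\bigl(G_b\alpha_1 + U_{\delta'}([T],2^{3b})\bigr)$ via the termination rule, use \pref{eq:Gb_choice_Cal2} to reduce this to $\order(\alpha_1+\alpha_2 2^b)$, and sum over $B\le\lceil\log_8(8+C)\rceil$ blocks using subadditivity. You also handle the terminating-round overshoot explicitly: one extra round raises $\Caldist_2$ by at most $2$, which is absorbed since $\alpha_1=\Omega(1)$. The paper leaves this implicit in its ``by the termination condition'' step.
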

\begin{proof}
For any block $b$, by the termination condition, we have
\begin{align*}
   \sum_{\ell=1}^{M_b} \Caldist_2 \Big(\calT^{(b)}_{\ell} \Big)  &\leq \order \rbr{   G_b \alpha_1  +  U_{\delta'} \rbr{[T],\wh{C}^{(b)} } }\\
   &= \order \rbr{   G_b \alpha_1  +   \alpha_1 +\alpha_2 2^b }\\
   &\leq \order \rbr{    \alpha_1 +\alpha_2 2^b  } ,
\end{align*}
where the first inequality holds because $M_b \leq G_b+1$ and if the algorithm enters the last sub-block of block $b$, then the termination condition implies that the error in the last sub-block can be bounded by $\order \big(U_{\delta'} \rbr{[T],\wh{C}^{(b)} }  \big)$, and the last inequality uses the following:
\[
  G_b = \left \lceil    \frac{U_{\delta'}([T],2^{3b})}{U_{\delta'}([T],0)}         \right \rceil  \leq 1+ \frac{U_{\delta'}([T],2^{3b})}{U_{\delta'}([T],0)}    \leq \order \rbr{  1+ \frac{\alpha_2}{\alpha_1}2^b }.
\]

Then, we have
\begin{align*}
  \Caldist_2 &\leq \sum_{b=1}^B \sum_{\ell=1}^{M_b} \Caldist_2 \Big(\calT^{(b)}_{\ell} \Big)  
   \leq \order \rbr{     \sum_{b=1}^B \rbr{ \alpha_1 +\alpha_2 2^b} } 
   \leq \order \rbr{    \alpha_1 \log T +\alpha_2  (1+C)^{1/3}   },
\end{align*}
where the last inequality uses the fact that $B=\order(\log T)$ and applies \pref{lem:bound_block_reduction}.
\end{proof}

\begin{lemma}\label{lem:piecewise_KTbound_Cal2}
Suppose that \pref{alg:meta_alg_piecewise} is run under the same condition in \pref{thm:combine_piecewise_cal2}
and $\calE(\Alg_{ \delta' }^{\Caldist_2},\Caldist_2)$ holds where $\calE(\cdot,\cdot)$ is given in \pref{def:high_prob_event_reduction} and $\Alg^{\Caldist_2}_{\delta'}$ is the input algorithm of \pref{alg:meta_alg_piecewise}  with confidence $\delta'=\frac{\delta}{10T^2\lceil \log_8(8+T) \rceil }$.
We have
\[
\Caldist_2  = \order \rbr{ \alpha_1 K }.
\]
\end{lemma}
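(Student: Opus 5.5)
We mirror the proof of \pref{lem:piecewise_KTbound_Cal1} for the additive certificate $U_{\delta'}(I,\wt{C})=\alpha_1+\alpha_2\wt{C}^{1/3}$. We work on the event $\calE(\Alg^{\Caldist_2}_{\delta'},\Caldist_2)$ throughout. By subadditivity of $\Caldist_2$ (\pref{corr:decomp_Cal2_into_block}), we split $\Caldist_2$ over all sub-blocks the algorithm ever enters. These fall into three groups: the stationarity-test sub-blocks in $\calS$ (as in \pref{eq:def_S_all_test_sub_blocks}), the last sub-blocks $\calT^{(b)}_{G_b+1}$ for $b\le B$, and possibly one unfinished sub-block at time $T$. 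We bound each group by $\order(\alpha_1 K)$.

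\emph{Test sub-blocks.} Every test sub-block $\ell\le G_b$ runs with guess $0$ and its threshold is $U_{\delta'}(\calT^{(b)}_\ell,0)=\alpha_1$. Its error is therefore at most $\alpha_1$ plus the error of the final round that triggered termination. A single round adds at most $\order(1)$ to $\Caldist_2$: adding one round to one prediction value changes $n(p)(\bar y(p)-p)^2$ by $\order(1)$. So each test sub-block contributes $\order(\alpha_1)$.

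Now count the terminated test sub-blocks. A test sub-block that terminates has $\Caldist_2>U_{\delta'}(\cdot,0)$. On $\calE$, this forces $C(\calT^{(b)}_\ell)>0$, so the sub-block contains a change point. Sub-blocks are disjoint, so there are at most $K-1$ terminated test sub-blocks. At most one test sub-block is still running at time $T$ without terminating. Hence $\sum_{I\in\calS}\Caldist_2(I)=\order(\alpha_1 K)$.

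\emph{Last sub-blocks.} The last sub-block of block $b$ has error at most $\order(U_{\delta'}([T],\wh C^{(b)}))=\order(\alpha_1+\alpha_2 2^b)$, again up to one extra round. By \pref{eq:Gb_choice_Cal2}, $G_b\ge U_{\delta'}([T],\wh C^{(b)})/U_{\delta'}([T],0)$, which gives $U_{\delta'}([T],\wh C^{(b)})\le \alpha_1 G_b$. So the last sub-block of block $b$ contributes $\order(\alpha_1 G_b)$.

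We need $\sum_{b\le B}G_b=\order(K)$. This holds because entering the last sub-block of block $b$ requires all $G_b$ test sub-blocks of that block to have terminated, and each terminated test sub-block contains a distinct change point. Every block except possibly the final one entered its last sub-block. The final block $B$ either also did, or its last sub-block does not exist; in the latter case its contribution is already counted in $\calS$. Summing over $b\le B$ then gives $\order(\alpha_1\sum_b G_b)=\order(\alpha_1 K)$.

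The main point to handle carefully is this last charging step. The cleanest bookkeeping is to charge $G_b$ to the change points inside block $b$'s test sub-blocks. The resulting bound is $\sum_{b:\text{entered last}}G_b\le K-1$, with no $\log T$ loss; the additive (rather than square-root) form of $U$ is what lets us avoid the Cauchy–Schwarz step used in \pref{lem:piecewise_KTbound_Cal1}. Combining the two groups yields $\Caldist_2=\order(\alpha_1 K)$.
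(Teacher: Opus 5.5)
Your proof is correct and follows essentially the same route as the paper. Like the paper, you use subadditivity to split the error into two groups. The test sub-blocks in $\calS$ each contribute $\order(\alpha_1)$, and there are $\order(K)$ of them because, on the nice event, every terminated one contains a change point. The last sub-blocks each contribute at most $U_{\delta'}([T],2^{3b})\le \alpha_1 G_b$, and $\sum_b G_b\le K$. Your extra bookkeeping makes explicit two points the paper absorbs into its $\order(\cdot)$ notation: the terminating round adds at most $\order(1)$ error by subadditivity, and at most one sub-block is still unfinished at time $T$.
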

\begin{proof}
Recall the definition of $\calS$ in \pref{eq:def_S_all_test_sub_blocks}.
As the input algorithm satisfies \pref{cond:anytime_adaptive_alg}, for any sub-block $\ell \leq G_b$ of any block $b$, then on the nice event it cannot be contained in a single stationary segment. Hence it contains at least one change point. Since sub-blocks are disjoint in time, the number of terminated stationary-test sub-blocks is at most $\order(K)$, thereby $|S| \leq \order(K)$.
Then,
\begin{align*}
  \sum_{I \in \calS}  \Caldist_2 ( I ) \leq \order  \rbr{  \sum_{I \in \calS}  \alpha_1   }  \leq \order  \rbr{  \alpha_1  K }.
\end{align*}

We can use $\sum_b G_b \leq K$ to show that
\begin{align*}
    \sum_{b \leq B} \Caldist_2 ( \calT^{b}_{G_b+1} ) \leq \order  \rbr{  \sum_{b \leq B} U_{\delta'}([T],2^{3b})} = \order \rbr{ \sum_{b \leq B} \rbr{\alpha_1  +\alpha_2 2^b} }\leq \order \rbr{ \alpha_1 \sum_{b \leq B} G_b  } \leq \order (\alpha_1 K),
\end{align*}
where the second inequality uses the fact that $B=\order(\log T)$ and
\begin{equation} \label{eq:U_bound_alpha_Gb_cal2}
      G_b \geq      \frac{U_{\delta'}([T],2^{3b})}{U_{\delta'}([T],0)}             =   1+ \frac{\alpha_2}{\alpha_1}2^b .
\end{equation}

By the subadditivity of $\Caldist_2$ (\pref{corr:decomp_Cal2_into_block}), we have
\begin{align*}
     \Caldist_2 &\leq   \sum_{b \leq B} \Caldist_2 ( \calT^{b}_{G_b+1} )+\sum_{I \in \calS}  \Caldist_2 ( I )   \leq \order \rbr{ \alpha_1 K },
\end{align*}
which completes the proof.
\end{proof}

\begin{proof}[Proof of \pref{thm:combine_piecewise_cal2}]
Conditioning on event $\calE(\Alg_{ \delta' }^{\Caldist_2},\Caldist_2)$, which occurs with probability at least $1-\delta$, \pref{lem:piecewise_Cbound_Cal2} and \pref{lem:piecewise_KTbound_Cal2} jointly give the claimed result of $\Caldist_2$.
Then, we analyze the per-round time complexity. 
For any sub-block $\ell \leq G_b$, the per-round time complexity is at most $\alpha_3$ since the input algorithm is with guess $0$.
For any block $b$, the last sub-block runs the base algorithm with guess $\wh{C}_b = 2^{3b}$, and thus the per-round time complexity is at most $\order(\alpha_3 +\alpha_4 2^{3b \beta})$. Recall that if the algorithm enters the last sub-block of block $b$, then we have $G_b \leq K$. Thus, using \pref{eq:U_bound_alpha_Gb_cal2}, we have for any block $b$
\[
U_{\delta'} \rbr{[T],\wh{C}^{(b)}} = \alpha_1 +\alpha_2 2^{b} \leq \alpha_1 G_b \leq \alpha_1 K \Longrightarrow  2^b \leq \order \rbr{ \frac{\alpha_1}{\alpha_2} K }.
\]

Thus, one can bound the per-round time complexity by $\order \big(\alpha_3 + \alpha_4 \big(\frac{\alpha_1 }{\alpha_2} K \big)^{3\beta }\big)$.
As the total number of blocks is at most $1+\log_8(1+C)$, the per-round cost for the last sub-block of any block $b$ is at most $\order(\alpha_3 +\alpha_4 2^{3b\beta}) \leq \order(\alpha_3 +\alpha_4 (1+C)^{\beta})$. Combining all together, we get the claimed bound on the per-round time complexity.
\end{proof}

\subsubsection{Analysis for Achieving $\PKL=\otil \rbr{ \min \cbr{K, (1+C)^{1/3} }  }$}

\begin{theorem} \label{thm:combine_piecewise_PKL}
Let $\delta \in (0,1)$ and $\delta' =\frac{\delta}{10T^2\lceil \log_8(8+T) \rceil }$.
Suppose that the input algorithm of \pref{alg:meta_alg_piecewise}, denoted by $\Alg^{\PKL}$ satisfies \pref{cond:anytime_adaptive_alg} with $U_{\delta'}(I,\wt{C}) = H+\alpha_1 + \alpha_2 \wt{C}^{1/3}$ and enjoys the per-round time complexity $\alpha_3 + \alpha_4 \wt{C}^{2/3}$, for any contiguous interval $I \subseteq [T]$, and any guess $\wt{C} \geq C(I)$, where $\alpha_1,\alpha_2,\alpha_3,\alpha_4,H > 0$ are polynomial in $\log T$ or $\iota$. If  \pref{alg:meta_alg_piecewise} is run with confidence $\delta \in (0,1)$, error measure $\Err=\PKL$, $G_b$ is chosen as \pref{eq:Gb_choice_Cal2} for each block $b$, and $\Err(\{t\}) \leq H$ for each round $t \in [T]$, then with probability at least $1-\delta$,
\[
\PKL  =\order \rbr{  \min \cbr{     (H+\alpha_1)\log T + \alpha_2 (1+C)^{1/3}    ,(\alpha_1 +H) K  } }.
\]

Moreover, the per-round time complexity is
\[
\order \rbr{   \min \cbr{  \alpha_3 +\alpha_4 (1+C)^{2/3} , \alpha_3 +\alpha_4  \rbr{\frac{(\alpha_1+H) }{\alpha_2} K}^2  }     }.
\]
\end{theorem}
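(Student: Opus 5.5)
The proof follows the template of \pref{thm:combine_piecewise_cal2}. The one new ingredient is the per-round bound $\Err(\{t\})\le H$. For $\PKL$ the termination check fires only after the error has already exceeded the threshold, and a single round can make $\PKL$ large. So the error of a terminated sub-block is not at most $D$, as it essentially was for $\Caldist_2$. Instead it is at most $D$ plus the error of its last round. By subadditivity (\pref{corr:decomp_Cal2_into_block}), $\Err(\calT)\le \Err(\calT\setminus\{t_{\text{last}}\})+\Err(\{t_{\text{last}}\})\le D+H$, and this is exactly why $H$ enters every bound.

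\textbf{The $C$-dependent bound.} All statements are made on the nice event $\calE(\Alg^{\PKL}_{\delta'},\PKL)$ of \pref{def:high_prob_event_reduction}, which holds with probability at least $1-\delta$. By \pref{lem:bound_block_reduction}, $B\le\lceil\log_8(8+C)\rceil$. Fix a block $b$. Each of its first $G_b$ sub-blocks has error at most $U_{\delta'}(\calT_\ell^{(b)},0)+H\le 2H+\alpha_1$. Its last sub-block has error at most $U_{\delta'}([T],2^{3b})+H=2H+\alpha_1+\alpha_2 2^b$. With the choice \pref{eq:Gb_choice_Cal2} we get $G_b\le 1+\frac{H+\alpha_1+\alpha_2 2^b}{H+\alpha_1}=\order(1+\frac{\alpha_2 2^b}{H+\alpha_1})$. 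Hence the total error of block $b$ is $\order(H+\alpha_1+\alpha_2 2^b)$. Summing over $b\le B=\order(\log T)$ and applying subadditivity gives $\order((H+\alpha_1)\log T+\alpha_2(1+C)^{1/3})$.

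\textbf{The $K$-dependent bound.} Let $\calS$ be the set of terminated stationary-test sub-blocks, as in \pref{eq:def_S_all_test_sub_blocks}. On the nice event, a test sub-block that terminates cannot lie inside a single stationary segment, because there $C(I)=0$ and the certificate would hold. Each such sub-block therefore contains a change point, and since sub-blocks are disjoint in time, $|\calS|\le\order(K)$. Their total error is then $\order((H+\alpha_1)K)$. If block $b$ reaches its last sub-block, all $G_b$ tests of that block have failed, so $\sum_{b\le B}G_b\le\order(K)$. Also $G_b\ge \frac{U_{\delta'}([T],2^{3b})}{H+\alpha_1}$. The last sub-blocks therefore contribute at most $\sum_b(U_{\delta'}([T],2^{3b})+H)\le\sum_b\big((H+\alpha_1)G_b+H\big)$. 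Because $G_b\ge1$, this is $\order((H+\alpha_1)K)$. Combining the two parts by subadditivity gives the stated minimum.

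\textbf{Time complexity.} Test sub-blocks run with guess $0$, so they cost $\alpha_3$ per round. The last sub-block of block $b$ costs $\alpha_3+\alpha_4 2^{2b}$ per round. Since $b\le B$, we have $2^{3b}\lesssim 1+C$, which gives the bound $\alpha_3+\alpha_4(1+C)^{2/3}$. Alternatively, $H+\alpha_1+\alpha_2 2^b=U_{\delta'}([T],2^{3b})\le (H+\alpha_1)G_b\le (H+\alpha_1)K$, so $2^b\le\frac{(H+\alpha_1)K}{\alpha_2}$, which gives $\alpha_3+\alpha_4\big(\frac{(H+\alpha_1)K}{\alpha_2}\big)^2$.

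\textbf{Main obstacle.} The delicate point is the overshoot accounting. For $\PKL$ one cannot bound the realized error at termination by the threshold, so the subadditive split of the final round must be used carefully. It is this split that yields the additive $H$, both in each sub-block's error and in the counting of $G_b$.
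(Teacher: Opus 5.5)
Your proof is correct and matches the paper's argument step for step. On the nice event you bound each block's error by $\order(H+\alpha_1+\alpha_2 2^b)$ using $G_b=\order(1+\alpha_2 2^b/(H+\alpha_1))$, charge failed test sub-blocks (and hence $\sum_b G_b$) to change points for the $K$-bound, and get both time-complexity bounds from $2^{3B}\lesssim 1+C$ and $U_{\delta'}([T],2^{3b})\le (H+\alpha_1)G_b\le (H+\alpha_1)K$; your explicit split $\Err(\calT)\le \Err(\calT\setminus\{t_{\text{last}}\})+\Err(\{t_{\text{last}}\})\le D+H$ is exactly what the paper's additive $H$ terms encode implicitly, and a possibly unterminated final test sub-block adds only another $\order(H+\alpha_1)$.
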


\pref{thm:combine_piecewise_PKL} implies that using the algorithm that achieves the result in \pref{thm:PKLCal_bound_knownC} as a base algorithm of \pref{alg:meta_alg_piecewise} yields a bound of $\PKL=\otil \big( \min\{K,(1+C)^{1/3}\} \big)$ with per-round time complexity $\otil \big( \min\{K^2,(1+C)^{2/3}\} \big)$.

\begin{lemma} \label{lem:piecewise_Cbound_PKL}
Suppose that \pref{alg:meta_alg_piecewise} is run under the same condition in \pref{thm:combine_piecewise_PKL}
and $\calE(\Alg_{\delta'}^{\PKL},\PKL)$ holds where $\calE(\cdot,\cdot)$ is given in \pref{def:high_prob_event_reduction} and $\Alg^{\PKL}_{\delta'}$ is the input algorithm of \pref{alg:meta_alg_piecewise}  with confidence $\delta'=\frac{\delta}{10T^2\lceil \log_8(8+T) \rceil }$.
We have
\[
\PKL  = \order \rbr{     (H+\alpha_1)\log T + \alpha_2 (1+C)^{1/3}   }.
\]
\end{lemma}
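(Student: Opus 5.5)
I will follow the proof of \pref{lem:piecewise_Cbound_Cal2} almost verbatim; the only new ingredient is the additive term $H$. It is needed because the while-loop of \pref{alg:meta_alg_piecewise} checks the termination condition only after a round has been played. So a terminated sub-block can overshoot its threshold by the error of that single final round. Concretely, for a sub-block $\calT^{(b)}_\ell$ that terminates at round $t$, I apply subadditivity of $\PKL$ (\pref{corr:decomp_Cal2_into_block}) to the split $\calT^{(b)}_\ell=(\calT^{(b)}_\ell\setminus\{t\})\cup\{t\}$. The prefix did not trigger termination, so its error is at most the threshold. Since $U$ is monotone in the interval, that threshold is at most $U_{\delta'}(\calT^{(b)}_\ell,0)=H+\alpha_1$ for test sub-blocks, and at most $U_{\delta'}([T],\wh C^{(b)})$ for the last sub-block. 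The final round contributes at most $H$ by the assumption $\Err(\{t\})\le H$. A sub-block that never terminates, being the final one run, obeys the same bound without the extra $H$.

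Fix a block $b$, and note $M_b\le G_b+1$. Summing over its sub-blocks gives
\[
\sum_{\ell=1}^{M_b}\PKL(\calT^{(b)}_\ell)\le \order\rbr{G_b(2H+\alpha_1)+H+H+\alpha_1+\alpha_2 2^b}.
\]
Next I bound $G_b$ from \pref{eq:Gb_choice_Cal2}:
\[
G_b\le 1+\frac{H+\alpha_1+\alpha_2 2^b}{H+\alpha_1}=\order\rbr{1+\frac{\alpha_2 2^b}{H+\alpha_1}}.
\]
Plugging this in, the block contributes $\order(H+\alpha_1+\alpha_2 2^b)$. The extra factor $2H$ versus $H$ only costs a constant, since $(2H+\alpha_1)/(H+\alpha_1)\le 2$.

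Finally I sum over blocks using subadditivity again. On the nice event, \pref{lem:bound_block_reduction} gives $B\le\lceil\log_8(8+C)\rceil=\order(\log T)$. The terms $H+\alpha_1$ then sum to $\order((H+\alpha_1)\log T)$. The geometric terms sum to $\alpha_2\sum_{b\le B}2^b=\order(\alpha_2 2^B)=\order(\alpha_2(1+C)^{1/3})$, because $2^{3B}\le 8^{\lceil\log_8(8+C)\rceil}=\order(1+C)$.

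The only step needing care is the overshoot accounting for terminated sub-blocks. It works because $\PKL$ over one round is bounded by $H$ (in the algorithm, predictions are clipped away from $0$ and $1$, which is where $H$ comes from), and because $H$ is absorbed in the certificate itself, so that $G_b$ still charges $2H+\alpha_1\lesssim H+\alpha_1$ per test sub-block. Everything else is routine.
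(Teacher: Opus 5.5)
Your proposal is correct and follows essentially the same route as the paper. Both proofs bound each block by $\order\rbr{G_b(\alpha_1+H)+U_{\delta'}([T],2^{3b})+H}$ via the termination rule, where your explicit one-round-overshoot split is exactly what the paper's extra $H$ terms account for. Both then use $G_b=\order\rbr{1+\alpha_2 2^b/(\alpha_1+H)}$ and sum over $B=\order(\log T)$ blocks, with $2^B=\order((1+C)^{1/3})$ from \pref{lem:bound_block_reduction}.
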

\begin{proof}
For any block $b$, by the termination condition, we have
\begin{align*}
\sum_{\ell=1}^{M_b} \PKL \Big(\calT^{(b)}_{\ell} \Big) & \leq \order \rbr{   G_b (\alpha_1+H)  +  U_{\delta'} \rbr{[T],2^{3b}} +H}\\
&= \order \rbr{      G_b (\alpha_1+H)  +   \alpha_1 +\alpha_2 2^b +H } \\
&\leq \order \rbr{   \alpha_1 +\alpha_2 2^b +H },
\end{align*}
where the first inequality holds because $M_b \leq G_b+1$ and if the algorithm enters the last sub-block of block $b$, then the termination condition implies that the error in the last sub-block can be bounded by $\order \big(U_{\delta'} \rbr{[T],\wh{C}^{(b)} }  \big)$, and the last inequality uses the following:
\[
  G_b = \left \lceil    \frac{U_{\delta'}([T],2^{3b})}{U_{\delta'}([T],0)}         \right \rceil  \leq 1+ \frac{U_{\delta'}([T],2^{3b})}{U_{\delta'}([T],0)}    \leq \order \rbr{  1+ \frac{\alpha_2}{\alpha_1+H}2^b }.
\]

Then, we have
\begin{align*}
  \PKL &\leq \sum_{b=1}^B \sum_{\ell=1}^{M_b} \PKL \Big(\calT^{(b)}_{\ell} \Big)  \\
   &\leq \order \rbr{   \sum_{b=1}^B \rbr{ \alpha_1 +\alpha_2 2^b +H } } 
   \leq  \order \rbr{    (H+\alpha_1)\log T + \alpha_2 (1+C)^{1/3}   },
\end{align*}
where the last inequality uses the fact that $B=\order(\log T)$ and applies \pref{lem:bound_block_reduction}.
\end{proof}

\begin{lemma}\label{lem:piecewise_KTbound_PKL}
Suppose that \pref{alg:meta_alg_piecewise} is run under the same condition in \pref{thm:combine_piecewise_PKL}
and $\calE(\Alg_{  \delta'}^{\PKL},\PKL)$ holds where $\calE(\cdot,\cdot)$ is given in \pref{def:high_prob_event_reduction} and $\Alg^{\PKL}_{\delta'}$ is the input algorithm of \pref{alg:meta_alg_piecewise}  with confidence $\delta'=\frac{\delta}{10T^2 \lceil \log_8(8+T) \rceil }$.
We have
\[
\PKL  = \order \rbr{(\alpha_1 +H) K }.
\]
\end{lemma}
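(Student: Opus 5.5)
Following the proof of \pref{lem:piecewise_KTbound_Cal2}, with the extra per-round term $H$ absorbing the single round at which each sub-block's stopping rule fires. We work on the nice event $\calE(\Alg_{\delta'}^{\PKL},\PKL)$ and split the whole horizon into two families of sub-blocks: the stationarity-test sub-blocks $\calS$ from \pref{eq:def_S_all_test_sub_blocks}, and the last sub-blocks $\calT^{(b)}_{G_b+1}$, $b\le B$. By subadditivity of $\PKL$ (\pref{corr:decomp_Cal2_into_block}), $\PKL \le \sum_{I\in\calS}\PKL(I) + \sum_{b\le B}\PKL(\calT^{(b)}_{G_b+1})$, so it suffices to bound each family by $\order((\alpha_1+H)K)$.

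\textbf{Test sub-blocks.} Fix a test sub-block $I=\calT^{(b)}_\ell$ with $\ell\le G_b$. Let $I^-$ be $I$ with its final round removed. The while-loop ran through every round of $I^-$, so $\PKL(I^-)\le U_{\delta'}(I^-,0)=H+\alpha_1$. Applying subadditivity to the final round and using $\PKL(\{t\})\le H$ gives $\PKL(I)\le \alpha_1+2H$. This also covers a sub-block cut short at round $T$.

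Next we count the test sub-blocks. A terminated test sub-block cannot lie inside a single stationary segment: there $C(I)=0$, and the nice event would give $\PKL(I)\le U_{\delta'}(I,0)$, contradicting termination. So each terminated test sub-block contains a change point. The sub-blocks are disjoint, so there are at most $K-1$ terminated ones. Each block has at most one unterminated test sub-block, namely the current one at time $T$, and $B=\order(\log T)$ by \pref{lem:bound_block_reduction}. Here I would either argue $B\lesssim K$ or absorb the $\order(\log T)$ extra sub-blocks into $\order(K)$ up to logs, as the $\Caldist_2$ proof implicitly does. Hence $|\calS|=\order(K)$ and $\sum_{I\in\calS}\PKL(I)=\order((\alpha_1+H)K)$.

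\textbf{Last sub-blocks.} By the same ``remove the last round'' argument, $\PKL(\calT^{(b)}_{G_b+1})\le U_{\delta'}([T],2^{3b})+H = 2H+\alpha_1+\alpha_2 2^b$. By the choice \pref{eq:Gb_choice_Cal2},
\[
G_b \ge \frac{H+\alpha_1+\alpha_2 2^b}{H+\alpha_1},
\]
so this bound is at most $\order((H+\alpha_1)G_b)$. Entering the last sub-block of block $b$ requires all $G_b$ test sub-blocks of that block to have terminated, each containing a distinct change point. Summing over blocks, $\sum_{b\le B}G_b\le K$, and so $\sum_b\PKL(\calT^{(b)}_{G_b+1})=\order((\alpha_1+H)K)$. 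Combining the two families gives the claim.

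\textbf{Main obstacle.} The delicate step is accounting for the terminating round, and for the fact that the certificate is applied to $I^-$ rather than to $I$ itself. This is exactly why the hypothesis $\Err(\{t\})\le H$ is needed: unlike $\Caldist_2$, $\PKL$ on a single round can be large. I would make sure monotonicity of $U$ in $I$ is used only in the correct direction.
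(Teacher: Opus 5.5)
Your proof is correct and follows essentially the same route as the paper. You split the horizon into the test sub-blocks $\calS$, each with error $\order(\alpha_1+H)$ and at most $\order(K)$ of them by change-point counting, and the last sub-blocks, each with error $\order((\alpha_1+H)G_b)$ where $\sum_b G_b\le K$; your explicit ``drop the final round'' step just makes rigorous where the paper invokes $\Err(\{t\})\le H$.

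Your hedge about unterminated sub-blocks resolves without losing any log factor. Only the single sub-block active at round $T$ can be unterminated, and in any case $B\le K$, since each completed block consumed $G_b\ge 1$ disjoint change points. So drop the ``absorb up to logs'' option, which would not give the stated bound.
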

\begin{proof}
Recall the definition of $\calS$ in \pref{eq:def_S_all_test_sub_blocks}.
As the input algorithm satisfies \pref{cond:anytime_adaptive_alg}, for any sub-block $\ell \leq G_b$ of any block $b$, then on the nice event it cannot be contained in a single stationary segment. Hence it contains at least one change point. Since sub-blocks are disjoint in time, the number of terminated stationary-test sub-blocks is at most $\order(K)$, thereby $|S| \leq \order(K)$.
Then,
\begin{align*}
  \sum_{I \in \calS}  \PKL ( I ) \leq \order  \rbr{  \sum_{I \in \calS}  (\alpha_1 +H)  }  \leq \order  \rbr{ (\alpha_1 +H) K }.
\end{align*}

Moreover, we have
\begin{align*}
    \sum_{b \leq B} \PKL ( \calT^{b}_{G_b+1} ) &\leq \order  \rbr{  \sum_{b \leq B} U_{\delta'}([T],2^{3b})} \\
    &= \order \rbr{ \sum_{b \leq B} \rbr{H+\alpha_1  +\alpha_2 2^b} }\leq \order \rbr{ (H+\alpha_1 )\sum_{b \leq B} G_b  } \leq \order ((H+\alpha_1 ) K),
\end{align*}
where the second inequality uses the fact that $B=\order(\log T)$ and
\begin{equation} \label{eq:U_bound_alpha_Gb_PKL}
      G_b \geq      \frac{U_{\delta'}([T],2^{3b})}{U_{\delta'}([T],0)}             =   1+ \frac{\alpha_2}{H+\alpha_1}2^b .
\end{equation}

By the subadditivity of $\PKL$ (\pref{corr:decomp_Cal2_into_block}), we have
\begin{align*}
     \PKL &\leq   \sum_{b \leq B} \PKL ( \calT^{b}_{G_b+1} )+\sum_{I \in \calS}  \PKL ( I )   \leq \order \rbr{ (\alpha_1 +H) K },
\end{align*}
which completes the proof.
\end{proof}

\begin{proof}[Proof of \pref{thm:combine_piecewise_PKL}]
Conditioning on event $\calE(\Alg_{ \delta' }^{\PKL},\PKL)$, which occurs with probability at least $1-\delta$,
\pref{lem:piecewise_Cbound_PKL} and \pref{lem:piecewise_KTbound_PKL} jointly give the claimed result.
Then, we analyze the per-round time complexity. 
For any sub-block $\ell \leq G_b$, the per-round time complexity is at most $\alpha_3$ since the input algorithm is with guess $0$.
For any block $b$, the last sub-block runs the base algorithm with guess $\wh{C}_b = 2^{3b}$, and thus the per-round time complexity is at most $\order(\alpha_3 +\alpha_4 2^{2b})$. Recall that if the algorithm enters the last sub-block of block $b$, then we have $G_b \leq K$. Thus, using \pref{eq:U_bound_alpha_Gb_PKL}, we have for any block $b$
\[
U_{\delta'} \rbr{[T],\wh{C}^{(b)}} = H+\alpha_1 +\alpha_2 2^{b} \leq (\alpha_1+H) G_b \leq (\alpha_1+H) K \Longrightarrow  2^b \leq \order \rbr{ \frac{\alpha_1+H}{\alpha_2} K }.
\]

Thus, one can bound the per-round time complexity by $\order \big(\alpha_3 +\alpha_4 \big( \frac{(\alpha_1+H)}{\alpha_2} K \big)^2 \big)$.
As the total number of blocks is at most $1+\log_8(1+C)$, the per-round cost for the last sub-block of any block $b$ is at most $\order(\alpha_3 +\alpha_4 2^{2b}) \leq \order(\alpha_3 +\alpha_4 (1+C)^{2/3})$. Combining all together, we get the claimed bound on the per-round time complexity.
\end{proof}

\subsection{Omitted Details for Simple Epoch Algorithm in the Stochastic Setting}
\label{app:simple_block}

\subsubsection{Simple Epoch Algorithms}
The simple epoch algorithm is given in \pref{alg:simple_block}, which proceeds in epoch $m=1,2,\ldots$. In each epoch $m$, the algorithm always predicts $\wh{y}_{m-1}$, which is the empirical mean of outcomes from epoch $m-1$. We set $\wh{y}_0 =  \frac{1}{2}$.
We again use $\calT_m$ to denote the set of rounds in epoch $m$, and let $|\calT_m|=s_m=2^{m}$.

\setcounter{AlgoLine}{0}
\begin{algorithm}[H]
\DontPrintSemicolon
\caption{Epoch-based calibration for $\Caldist_1,\Caldist_2$}
\label{alg:simple_block}

\textbf{Initialize:} $\wh{y}_0\gets \frac{1}{2}$.

\For{epoch $m=1,2,\ldots$}{

Predict $\wh{y}_{m-1}$ for $s_m=2^m$ rounds and collect outcomes $\{y_t\}_{t \in \calT_m}$.

Set $\wh{y}_m \gets \frac{1}{|\calT_{m}|} \sum_{t \in \calT_m}  y_t$.

}    
\end{algorithm}

For $\PKL$, the algorithm needs to be slightly modified. We define
\begin{align*}
    d_m = \min \cbr{\frac{\pi}{4}, 8\pi \sqrt{ \frac{\iota}{s_m} } }.
\end{align*}

\setcounter{AlgoLine}{0}
\begin{algorithm}[H]
\DontPrintSemicolon
\caption{Epoch-based calibration for $\PKL$}
\label{alg:simple_block_PKL}

\textbf{Initialize:} $\wh{y}_0\gets \frac{1}{2}$.

\For{epoch $m=1,2,\ldots$}{

Predict $\wh{p}_m \gets \psi( \Clip_{[d_m,\pi-d_m]}\theta(\wh{y}_{m-1}))$ for $s_m=2^m$ rounds and collect outcomes $\{y_t\}_{t \in \calT_m}$.

Set $\wh{y}_m \gets \frac{1}{|\calT_{m}|} \sum_{t \in \calT_m}  y_t$. 

}    
\end{algorithm}

\subsubsection{Achieving $\Caldist_1=\otil(\sqrt{T})$}

\begin{theorem} \label{thm:simple_block_Cal1}
For $C=0$, with probability at least $1-\delta$, \pref{alg:simple_block} ensures that $\Caldist_1 = \order \rbr{  \sqrt{  \iota T  }}$.
\end{theorem}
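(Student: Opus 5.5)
By subadditivity (\pref{corr:decomp_Cal2_into_block}) I would split $[T]$ into the epochs $\calT_m$ and bound each $\Caldist_1(\calT_m)$ separately. With $C=0$ all $q_t=q$, so $y_t$ are i.i.d.\ $\texttt{Ber}(q)$, and in epoch $m$ the algorithm plays the single prediction $\wh{y}_{m-1}$. Hence
\[
\Caldist_1(\calT_m)=\abr{\sum_{t\in\calT_m}(y_t-\wh{y}_{m-1})}\le \abr{\sum_{t\in\calT_m}(y_t-q)}+s_m\abr{q-\wh{y}_{m-1}}.
\]
This reduces the problem to two concentration statements per epoch.

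The first term is controlled by Hoeffding's inequality: it is at most $\order(\sqrt{\iota s_m})$ with probability at least $1-\delta/(2\log_2 T)$. For $m\ge 2$, the estimate $\wh{y}_{m-1}$ is the average of $s_{m-1}=s_m/2$ i.i.d.\ samples that are disjoint from epoch $m$. By Hoeffding again, $|q-\wh{y}_{m-1}|\le\order(\sqrt{\iota/s_{m-1}})$, so the second term is $\order(s_m\sqrt{\iota/s_{m-1}})=\order(\sqrt{\iota s_m})$. For $m=1$ the prediction is $1/2$ and the epoch has only $s_1=2$ rounds, so the error there is $\order(1)$. A union bound over the $\order(\log T)$ epochs, with $\iota=\Theta(\log(T/\delta))$ absorbing the extra $\log\log T$, makes all of these events hold simultaneously with probability at least $1-\delta$.

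Summing gives $\Caldist_1\le\sum_{m\le M}\order(\sqrt{\iota s_m})$, where $M=\order(\log T)$ is the number of epochs needed to cover $T$ rounds. The last epoch may be truncated, which only shrinks its length, and Hoeffding applies equally to the truncated length. Since $s_m=2^m$, the sum is geometric and dominated by its final term. The final epoch has length at most $T$, and the one before it at most $T/2$, so the total is $\order(\sqrt{\iota T})$, with no extra $\log T$ factor.

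I expect no real obstacle here. The only step needing care is making sure the geometric summation gives $\sqrt{T}$ rather than $\sqrt{T}\log T$, which requires bounding $s_m$ for the full epochs by $T$ and for the truncated last epoch by its actual length.
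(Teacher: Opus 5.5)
Your proposal is correct and follows essentially the same argument as the paper: split by epochs (the paper uses the triangle inequality over predicted values, which amounts to the same subadditivity), decompose each epoch's error into a Hoeffding term plus the estimation error $s_m|\optp-\wh{y}_{m-1}|=\order(\sqrt{\iota s_m})$, and sum the geometric series to get $\order(\sqrt{\iota T})$. Your explicit treatment of the first epoch and the truncated final epoch fills in details the paper leaves implicit.
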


\begin{proof}
By Hoeffding's bound and a union bound over all epochs, with probability at least $1-\delta$,
\begin{align} \label{eq:concentration_simple_block}
\forall m:\quad \abr{  \sum_{t \in \calT_m} (y_t-\optp) } \leq \order \rbr{  \sqrt{s_m \iota} }.
\end{align}

The following analysis conditions on the event that \pref{eq:concentration_simple_block} holds.
We write
\begin{align*}
\Caldist_1 &=\sum_{\alpha \in [0,1]:n(\alpha)>0}  \abr{ \sum_{t=1}^T \rbr{ y_t -p_t } \Ind{p_t = \alpha } }\\
&= \sum_{\alpha \in [0,1]:n(\alpha)>0} \abr{  \sum_{m: \wh{y}_{m-1} =\alpha }  \sum_{t \in \calT_m} \rbr{ y_t - \wh{y}_{m-1} } \Ind{p_t = \wh{y}_{m-1}  } } \\
&\leq \sum_{\alpha \in [0,1]:n(\alpha)>0} \sum_{m: \wh{y}_{m-1} =\alpha } \abr{    \sum_{t \in \calT_m} \rbr{ y_t - \wh{y}_{m-1} } \Ind{p_t = \wh{y}_{m-1}  } } \\
&=  \sum_{m}  \abr{ \sum_{t \in \calT_m} \rbr{ y_t - \wh{y}_{m-1} }   } \\
&\leq \sum_{m}  \abr{ \sum_{t \in \calT_m} \rbr{ y_t - \optp }   }  +  \sum_{m}  \abr{ \sum_{t \in \calT_m} \rbr{ \optp - \wh{y}_{m-1} }   }  \\
&\leq \order \rbr { \sum_{m} \sqrt{ s_m \iota}   }+  \sum_{m}  \abr{ \sum_{t \in \calT_m} \rbr{ \optp - \wh{y}_{m-1} }   } \tag{By \pref{eq:concentration_simple_block}}  \\
&\leq \order \rbr {  \sqrt{ T \iota}   }+  \sum_{m}  \abr{ \sum_{t \in \calT_m} \rbr{ \optp - \wh{y}_{m-1} }   }  \numberthis{}  \label{eq:simple_block_step1},
\end{align*}
where the last inequality uses the fact that $s_m=2^m$.

We then use \pref{eq:concentration_simple_block} to show that for $m \geq 2$
\begin{align*}
\abr{ \optp -\wh{y}_{m-1} } & = \frac{1}{s_{m-1}} \abr{\sum_{t \in \calT_{m-1}}  \rbr{ \optp-y_t  } }   \leq \order \rbr{ \sqrt{ \frac{\iota}{s_{m-1}}    }  }.
\end{align*}

Plugging the above into \pref{eq:simple_block_step1} and using the fact that $s_m=2s_{m-1}$, we have 
\begin{align*}
    \Caldist_1 \leq \order \rbr{  \sqrt{  \iota T  }}.
\end{align*}

The proof is thus complete.
\end{proof}

\subsubsection{Achieving $\Caldist_2 = \order \rbr{\iota \log T }$}

\begin{theorem} \label{thm:simple_block_Cal2}
For $C=0$, with probability at least $1-\delta$, \pref{alg:simple_block} ensures that $\Caldist_2 = \order \rbr{\iota \log T }$.
\end{theorem}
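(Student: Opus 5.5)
We follow the template of the proof of \pref{thm:simple_block_Cal1}, but exploit the squared structure. First we condition on the same Hoeffding event \pref{eq:concentration_simple_block}: for every epoch $m$, $\big|\sum_{t\in\calT_m}(y_t-\optp)\big|\le \order(\sqrt{s_m\iota})$. A union bound over the $\order(\log T)$ epochs keeps the failure probability at most $\delta$. Since $C=0$, $\optp=q$, and this event also gives $|\wh{y}_{m-1}-\optp|\le \order(\sqrt{\iota/s_{m-1}})$ for $m\ge 2$.

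Next we reduce to per-epoch quantities. By \pref{corr:decomp_Cal2_into_block}, applied to the partition of $[T]$ into epochs, $\Caldist_2\le\sum_m \Caldist_2(\calT_m)$. Within epoch $m$ the algorithm makes a single prediction $\wh{y}_{m-1}$, so
\[
\Caldist_2(\calT_m)=\frac{1}{s_m}\Big(\sum_{t\in\calT_m}(y_t-\wh{y}_{m-1})\Big)^2 .
\]
We split $y_t-\wh{y}_{m-1}=(y_t-\optp)+(\optp-\wh{y}_{m-1})$ and use $(a+b)^2\le 2a^2+2b^2$. This gives
\[
\Caldist_2(\calT_m)\le \frac{2}{s_m}\Big(\sum_{t\in\calT_m}(y_t-\optp)\Big)^2+2s_m(\optp-\wh{y}_{m-1})^2 .
\]
On the good event, the first term is $\order(\iota)$. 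For $m\ge2$, the second term is $\order(s_m\iota/s_{m-1})=\order(\iota)$ because $s_m=2s_{m-1}$. For $m=1$, $s_1=2$ and the error is trivially $\order(1)$.

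Summing over the at most $\order(\log T)$ epochs (epoch lengths double) gives $\Caldist_2=\order(\iota\log T)$.

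The only step needing care is that the whole argument must hold on one high-probability event. The bias term in epoch $m$ is controlled by the concentration from epoch $m-1$, and the noise term by the concentration in epoch $m$ itself. Both are covered by the single union-bounded event, so no conditional independence argument is needed. The last epoch may be truncated at $T$; there the same bound applies with $s_m$ replaced by the actual length, which is at most $s_m$, so the bias term only shrinks.
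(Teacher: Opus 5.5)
Your proposal is correct and follows essentially the same route as the paper. Both condition on the per-epoch Hoeffding event, use \pref{corr:decomp_Cal2_into_block} to reduce to per-epoch errors, and split each epoch's single-prediction error into a noise term and a bias term, each $\order(\iota)$. Summing over $\order(\log T)$ epochs then gives the bound; your explicit treatment of the first epoch and of a possibly truncated last epoch fills in details the paper leaves implicit.
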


\begin{proof}
The following analysis conditions on the event that \pref{eq:concentration_simple_block} holds.
Let $n_m(\alpha)$ be the number of times that $\alpha$ is predicted by the forecaster in epoch $m$.
By \pref{corr:decomp_Cal2_into_block}, we have that $\Caldist_2 \leq \sum_{m} \Caldist_2^{(m)}$, and then for all $m \geq 2$
\begin{align*}
\Caldist_2^{(m)} &=  \frac{1}{n_m(\wh{y}_{m-1})}  \rbr{  \sum_{t \in \calT_m} \rbr{ y_t - \wh{y}_{m-1} }   }^2 \\
&=  \frac{1}{s_m}  \rbr{  \sum_{t \in \calT_m} \rbr{ y_t - \wh{y}_{m-1} }   }^2 \\
&\leq    \frac{2}{s_m} \rbr{ \sum_{t \in \calT_m}  \rbr{ y_t  -  \optp }  }^2  +   \frac{2}{s_m} \rbr{ \sum_{t \in \calT_m}  \rbr{ \optp -  \wh{y}_{m-1} }  }^2  \\
&\leq \order \rbr{ \iota  } ,
\end{align*}
where the last inequality follows a similar argument in \pref{thm:simple_block_Cal1}.

Since the total number of epochs is at most $\order(\log T)$, we complete the proof.
\end{proof}

\subsubsection{Achieving $\PKL = \order \rbr{\iota \log T }$}

\begin{lemma} \label{lem:sup_sin_bound}
Let $d \in[0,\pi/4]$.   
For any $u,v$ such that $v\in [d,\pi-d]$ and $|u-v| \leq 2d$, we have
\begin{align*}
    \sup_{\text{$w$ between $u,v$}} \sin w \leq (1+\pi)\sin v.
\end{align*}
\end{lemma}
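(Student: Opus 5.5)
Since $|u-v|\le 2d$, every $w$ between $u$ and $v$ satisfies $|w-v|\le 2d$. The plan is to bound $\sin w$ by $\sin v$ plus a Lipschitz error, and then use the constraint $v\in[d,\pi-d]$ to absorb that error into $\sin v$.

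First, $\sin$ is $1$-Lipschitz, so for any such $w$,
\[
\sin w \le \sin v + |w-v| \le \sin v + 2d .
\]
It therefore suffices to show $2d \le \pi \sin v$.

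Second, we lower bound $\sin v$. Because $v\in[d,\pi-d]$ and $\sin$ is symmetric about $\pi/2$ and concave on $[0,\pi]$, we get $\sin v \ge \sin d$. Jordan's inequality $\sin x \ge \frac{2}{\pi}x$ on $[0,\pi/2]$ applies because $d\le \pi/4$, and gives $\sin d \ge \frac{2d}{\pi}$. Hence $2d \le \pi \sin d \le \pi \sin v$.

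Combining the two steps,
\[
\sup_{w \text{ between } u,v} \sin w \le \sin v + \pi \sin v = (1+\pi)\sin v .
\]
If $w$ falls outside $[0,\pi]$, the bound still holds: the Lipschitz step never used the range of $w$, and the right-hand side is nonnegative. The only step needing care is the reduction $\sin v \ge \sin d$ on $[d,\pi-d]$, which follows from concavity (the minimum over the interval is attained at an endpoint, where the value is $\sin d$). There is no real obstacle here.
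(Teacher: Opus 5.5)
Your proof is correct and follows the paper's proof essentially step for step: the $1$-Lipschitz bound $\sin w \le \sin v + 2d$, then $2d \le \pi \sin d \le \pi \sin v$ via Jordan's inequality and $v \in [d,\pi-d]$. Your note that $\sin$ is $1$-Lipschitz on all of $\mathbb{R}$ is a small improvement over the paper's wording, which only invokes Lipschitzness on $[0,\pi]$ even though $w$ can fall outside that interval.
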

\begin{proof}
Consider any point $w$ between $u,v$.
As $\sin(\cdot)$ is $1$-Lipschitz on $[0,\pi]$, 
\[
\sin w \leq \sin v +|w-v| \leq \sin v +2d.
\]

As $d \leq \pi/4$, we have $\sin d \geq 2d/\pi$. Also, $v \in [d,\pi-d]$ implies that $\sin v \geq \sin d$. Thus, $2d \leq \pi \sin v$, which further gives $\sin w \leq \sin v + \pi \sin v.$
The proof is thus complete.
\end{proof}

\begin{theorem} \label{thm:simple_block_PKL}
For $C=0$, with probability at least $1-\delta$, \pref{alg:simple_block_PKL} ensures that $\PKL = \order \rbr{\iota \log T }$.
\end{theorem}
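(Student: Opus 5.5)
The plan is to argue epoch by epoch. In each epoch the algorithm makes a single deterministic prediction, so the pseudo KL error of that epoch is just its KL calibration error. We then sum over epochs using the subadditivity of $\PKL$ from \pref{corr:decomp_Cal2_into_block}. Here $\psi(z)=\sin^2(z/2)$ and $\theta(y)=2\arcsin\sqrt{y}$.

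\textbf{Per-epoch reduction.} Fix epoch $m$. Since $\calP_t$ is a point mass at $\wh{p}_m$ for all $t\in\calT_m$, we have $\bar\rho_{\wh p_m}=\wh y_m$. Hence
\[
\PKL(\calT_m)=s_m\,\KL(\wh y_m,\wh p_m).
\]
The first $\order(1)$ epochs contribute $\order(1)$, since $d_m$ bounded away from $0$ keeps $\wh p_m$ away from the boundary. So it suffices to show $s_m\KL(\wh y_m,\wh p_m)=\order(\iota)$ for every $m$. Summing over the $\order(\log T)$ epochs then gives the claimed $\order(\iota\log T)$.

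\textbf{Concentration in the angular domain.} We condition on the Hoeffding event \pref{eq:concentration_simple_block}, together with a Bernstein-type refinement. The cleanest route is a Chernoff bound stated in KL form: with probability $1-\delta$, for all $m$, $\KL(\wh y_m,\optp)\le \iota/s_m$. Then Pinsker-type comparisons for the arcsine map give
\[
|\theta(\wh y_{m})-\theta(\optp)|\le \order\big(\sqrt{\iota/s_m}\big).
\]
We use the standard fact that $\KL(a,b)$ is comparable to $(\theta(a)-\theta(b))^2$ up to factors depending on the curvature of $\sin$; this is exactly why Lemma~\ref{lem:sup_sin_bound} is stated. Since $s_m=2s_{m-1}$, both $\theta(\wh y_{m-1})$ and $\theta(\wh y_m)$ lie within $\order(\sqrt{\iota/s_m})\le d_m/2$ of $\theta(\optp)$, once the constant $8\pi$ in $d_m$ is large enough.

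\textbf{Bounding the KL term (main obstacle).} Write $v=\Clip_{[d_m,\pi-d_m]}\theta(\wh y_{m-1})$ and $u=\theta(\wh y_m)$, so that $v\in[d_m,\pi-d_m]$. Clipping moves $\theta(\wh y_{m-1})$ by at most its distance to the interval. When $\theta(\optp)$ lies within $d_m/2$ of that region, this gives $|u-v|\le 2d_m$, because the clip point is at most $d_m$ away from the true angle. When $d_m=\pi/4$ the claim is trivial. Next we bound $\KL(\psi(u),\psi(v))$ by Taylor expansion in the first argument around $v$. The second derivative of $a\mapsto\KL(a,\psi(v))$ is $1/(a(1-a))$; in angular coordinates this yields
\[
\KL(\psi(u),\psi(v))\le \order\Big((u-v)^2\sup_{w}\frac{\sin^2 w}{\sin^2 v}\Big)
\]
over $w$ between $u$ and $v$, roughly since $d\psi=\tfrac12\sin w\,dw$ and $\psi(1-\psi)=\tfrac14\sin^2 w$. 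The delicate point is that the ratio blows up near the boundary. Lemma~\ref{lem:sup_sin_bound}, applied with $d=d_m$, bounds $\sup_w\sin w\le(1+\pi)\sin v$, so $\KL\le\order((u-v)^2)=\order(\iota/s_m)$. Multiplying by $s_m$ gives $\order(\iota)$ per epoch, which completes the argument. I expect the Taylor/angular KL comparison with a possibly near-boundary $\optp$ to be the hardest step. If the KL-form Chernoff bound is awkward, I would fall back on the variance-sensitive Bernstein bound $|\wh y_m-\optp|\lesssim\sqrt{\optp(1-\optp)\iota/s_m}+\iota/s_m$, which translates into the same angular deviation.
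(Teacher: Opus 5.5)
Your proposal is correct and follows essentially the paper's proof. The shared steps are: the per-epoch identity $\PKL(\calT_m)=s_m\KL(\wh y_m,\wh p_m)$; angular concentration via a KL-form Chernoff bound and the Hellinger comparison (the paper's \pref{lem:concentration_optp_PKL} with $C=0$); the clipping estimate $|u-v|\le 2d_m$, which holds without your extra condition on $\theta(\optp)$ because clipping moves any point of $[0,\pi]$ by at most $d_m$; and \pref{lem:sup_sin_bound}, which yields $\order(\iota)$ per epoch.

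The one step to tighten is the KL bound. A Lagrange-form Taylor expansion evaluates $1/(a(1-a))$ at an intermediate point, and that point can sit at the boundary (e.g.\ when $\wh y_m=0$). The estimate with $\sin^2 v$ in the denominator should therefore come from $\KL(q,p)\le (q-p)^2/(p(1-p))$, or from the integral form of the remainder. This is exactly what the paper uses before applying the mean value theorem to $\psi$.
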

\begin{proof}
Suppose that the high probability bounds in \pref{lem:concentration_optp_PKL} hold for $C=0$.
By \pref{corr:decomp_Cal2_into_block}, we have that $\PKL \leq \sum_{m} \PKL^{(m)}$.
Recall that $\bar{\rho}_{m,p} = \frac{ \sum_{t \in \calT_m} \calP_t(p) y_t}{\sum_{t \in \calT_m}  \calP_t(p)}$.
As the algorithm always predicts a single value $\wh{p}_m$ for all rounds in an epoch $m$, we have $\bar{\rho}_{m,\wh{p}_m} = \frac{ \sum_{t \in \calT_m}  y_t}{s_m} =\wh{y}_m$.
Then, we can write for all $m \geq 2$, 
\begin{align*}
\PKL^{(m)} &= s_m \KL \rbr{ \bar{\rho}_{m,\wh{p}_m} , \wh{p}_m} \\
&= s_m \KL \rbr{ \wh{y}_m , \wh{p}_m} .
\end{align*}

Recall that $\theta(x)=2\arcsin(\sqrt{x})$ and we define $\psi(z)=\theta^{-1}(z)=\sin^2(z/2)$.
We use the fact that for any $q\in [0,1]$ and $p \in(0,1)$, $\KL(q,p) \leq \frac{(q-p)^2}{p(1-p)}$, which implies that
\begin{align*}
   \KL \rbr{ \wh{y}_m , \wh{p}_m}  \leq \frac{ (\wh{y}_m - \wh{p}_m)^2 }{ \wh{p}_m(1-\wh{p}_m) }  = 4\frac{ \rbr{\psi( \theta(\wh{y}_m) )-\psi( \theta(\wh{p}_m) )}^2 }{\sin^2 (\theta(\wh{p}_m))},
\end{align*}
where the equality uses the fact that $   \wh{p}_m(1-\wh{p}_m) = \psi( \theta(\wh{p}_m) ) \rbr{ 1- \psi( \theta(\wh{p}_m) )} =\frac{1}{4} \sin^2 (\theta(\wh{p}_m))$.

Since $\psi'(z)=\frac{1}{2} \sin z$, by mean-value theorem, 
\begin{align*}
    \abr{ \psi( \theta(\wh{y}_m) )-\psi( \theta(\wh{p}_m) )  } & \leq \abr{\theta(\wh{y}_m)-\theta(\wh{p}_m )} \sup_{\text{$z$ between $\theta(\wh{p}_m)$ and $\theta(\wh{y}_m)$}} \frac{\sin z}{2} \\
   & \leq \order \rbr{ \abr{\theta(\wh{y}_m)-\theta(\wh{p}_m )} \sin(\theta(\wh{p}_m)) }, \numberthis{} \label{eq:simple_block_verification_PKL}
\end{align*}
where the reasoning of the second inequality is deferred to the end of this proof.
Therefore, 
\begin{align*}
\KL \rbr{ \wh{y}_m , \wh{p}_m} & \leq \order \rbr{\rbr{\theta(\wh{y}_m)-\theta(\wh{p}_m )}^2} \\
& \leq \order \rbr{  \rbr{\theta(\wh{y}_m)-\theta(\optp )}^2+\rbr{\theta(\optp)-\theta(\wh{p}_m )}^2    } \\
& \leq \order \rbr{ \frac{\iota}{s_m}   +\rbr{\theta(\optp)-\theta(\wh{p}_m )}^2     },
\end{align*}
where the last inequality uses \pref{lem:concentration_optp_PKL}. 
We then show that
\begin{align*}
  &\abr{\theta(\optp)-\theta(\wh{p}_m )} \\
  &=\abr{ \theta(\optp)- \Clip_{[d_m,\pi-d_m]}\theta(\wh{y}_{m-1}) } \\
  &\leq \abr{\theta(\optp)-\theta(\wh{y}_{m-1})} +\inf_{w \in [d_m,\pi-d_m]}  |w-\theta(\optp)| \\
    &\leq \abr{\theta(\optp)-\theta(\wh{y}_{m-1})} + d_m \\
    &\leq 10\pi \sqrt{  \frac{\iota}{s_m}  }  . \numberthis{} \label{eq:simple_block_tool1_PKL}
\end{align*}

As a result, we have $\KL \rbr{ \wh{y}_m , \wh{p}_m} \leq \order (\iota/s_m)$, which further implies that $\PKL^{(m)} \leq \order(\iota)$. Summing over all epochs, we get the claimed bound.

To show \pref{eq:simple_block_verification_PKL}, we consider two cases.
If $d_m =8\pi \sqrt{\iota/s_m} <\pi/4$, then
\begin{align*}
    \abr{\theta(\wh{p}_m) -\theta(\wh{y}_m) } & \leq   \abr{\theta(\wh{p}_m) -\theta(\optp) }  +  \abr{\theta(\optp) -\theta(\wh{y}_m) }  \\
     & \leq \abr{\theta(\wh{p}_m) -\theta(\optp) } + 2\pi \sqrt{  \frac{\iota}{s_m} }  \\
    &\leq 12\pi \sqrt{  \frac{\iota}{s_m}  } \le 2d_m ,
\end{align*}
where the second inequality applies \pref{lem:concentration_optp_PKL} and last inequality uses \pref{eq:simple_block_tool1_PKL}.

Since $\abr{\theta(\wh{p}_m) -\theta(\wh{y}_m) } \leq 2d_m$ and $\theta(\wh{p}_m) =  \Clip_{[d_m,\pi-d_m]}\theta(\wh{y}_{m-1}) \in [d_m,\pi-d_m]$, we can apply \pref{lem:sup_sin_bound} to obtain the claimed inequality. 

If $d_m=\pi/4$, then $\theta(\wh{p}_m)  \in [\pi/4,3\pi/4]$, and thus the desired result trivially holds because $\sin \theta(\wh{p}_m)  \geq 1/\sqrt{2}$ and $\sup_z \sin z \leq 1$.
\end{proof}

\section{Omitted Proofs in \pref{sec:cal1}}
\label{app:proof_Cal1_theorem}

\setcounter{AlgoLine}{0}
\begin{algorithm}[t]
\DontPrintSemicolon
\caption{Calibration Algorithm for $\Caldist_1,\Caldist_2$~\citep{hu2025efficient}}
\label{alg:cal1_knownC}
\textbf{Input}: partition $\Pi=\{J_i\}_{i=1}^N$ of $[0,1]$.

\textbf{Initialize:} An instance of the \textsc{MsMwC} algoriothm of~\citet{chen2021impossible} over $2N$ experts.

\textbf{Initialize:} Prediction grids $\calZ = \{z_i\}_{i=1}^N$ where $\forall i \in [N]: z_i = \sup_{x \in J_i} x$.

\For{$t=1,\ldots,T$}{

Receive all weights $\omega_{t,i,\sigma}$ over all experts $(i,\sigma)$ from \textsc{MsMwC}.

For any $p \in [0,1]$, define
\[
\Phi_t(p) :=  \sum_{(i,\sigma)} \omega_{t,i,\sigma} \sigma  \Ind{ p \in J_i}.
\]

If $\Phi_t(0) > 0$, set $P_t \in \Delta([0,1])$ to put all mass on $0$; else if $\Phi_t(1) \leq 0$, set $P_t$ to put all mass on $1$; else choose $i \in \{0,\ldots,T-1\}$ such that $\Phi_t(i/T)\Phi_t((i+1)/T) \leq 0$ and $P_t$ such that
\begin{align*}
    P_t(i/T) = \frac{|\Phi_t((i+1)/T)| }{ |\Phi_t(i/T)| + |\Phi_t((i+1)/T)|} , \quad   P_t((i+1)/T) = \frac{|\Phi_t(i/T)|}{ |\Phi_t(i/T)| + |\Phi_t((i+1)/T)| }.
\end{align*}

Sample $\wt{p}_{t} \sim P_t$ and predict $p_t =z_{i_t} $, where $i_t \in [N]$ such that $\wt{p}_{t} \in J_{i_t}$.

Observe outcome $y_t \in \{0,1\}$.

For every expert $(i,\sigma)$, feed $ \loss_{t,i,\sigma} =    \E_{u \sim P_t} \sbr{ \sigma \Ind{u \in J_i} (u-y_t) \mid \calF_{t-1} }$ to \textsc{MsMwC}.

}       
\end{algorithm}

\subsection{Proofs of Technical Lemmas}

\begin{lemma}[Restatement of \pref{lem:abs_cal_error_byHu}]
For any fixed partition $\Pi=\{J_i\}_{i \in [N]}$ of $[0,1]$ with $N=\order(\text{poly}(T))$, with probability at least $1-\delta/2$, for all $J \in \Pi$, 
\begin{align*}
     \abr{ \sum_{t=1}^T \rbr{ y_t - z_{J} } \Ind{p_t = z_{J} } } \leq \order \rbr{  \sqrt{  \iota n_{J}} + \iota + n_{J} \Delta_{J} }.
\end{align*}
\end{lemma}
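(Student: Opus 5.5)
We split the target quantity into a martingale part, a discretization part, and an algorithmic part controlled by the \textsc{MsMwC} guarantee. Fix $J=J_i\in\Pi$ and write $\wt p_t\sim P_t$ for the pre-rounding sample, so that $\Ind{p_t=z_J}=\Ind{\wt p_t\in J}$. Decompose
\[
\sum_{t=1}^T (y_t-z_J)\Ind{\wt p_t\in J} = \sum_{t=1}^T (y_t-\wt p_t)\Ind{\wt p_t\in J} + \sum_{t=1}^T (\wt p_t - z_J)\Ind{\wt p_t\in J}.
\]
Since $z_J=\sup_{x\in J}x$, the second sum is bounded in absolute value by $n_J\Delta_J$, which gives the last term of the claim.

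For the first sum, we replace the realized indicator by its conditional expectation. Define
\[
X_t=(y_t-\wt p_t)\Ind{\wt p_t\in J}-\E_{u\sim P_t}\sbr{(y_t-u)\Ind{u\in J}}.
\]
Conditioned on $\calF_{t-1}$ and $y_t$ (which is independent of the forecaster's sampling), $X_t$ has mean zero, satisfies $|X_t|\le 1$, and has conditional variance at most $P_t(J)$. Freedman's inequality, with a union bound over the $N=\mathrm{poly}(T)$ intervals and the two signs, gives
\[
\abr{\sum_t X_t}\le \order\rbr{\sqrt{\iota \textstyle\sum_t P_t(J)}+\iota}.
\]
The same argument applied to $\Ind{\wt p_t\in J}-P_t(J)$ yields $\sum_t P_t(J)\le 2n_J+\order(\iota)$. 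Hence this term is $\order(\sqrt{\iota n_J}+\iota)$. These two events together take probability $\delta/2$.

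It remains to bound $\abr{\sum_t \E_{u\sim P_t}[(u-y_t)\Ind{u\in J}]}=\abr{\sum_t \loss_{t,i,\sigma}}$ for the appropriate sign $\sigma$, which is minus the cumulative loss of expert $(i,\sigma)$. This is the main step, and it follows the argument of \citet{hu2025efficient}. By the construction of $P_t$ (the sign-change / two-point randomization), the learner's expected loss satisfies $\sum_{(i,\sigma)}\omega_{t,i,\sigma}\loss_{t,i,\sigma}=\E_{u\sim P_t}[\Phi_t(u)(u-y_t)]\le \order(1/T)$.

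The mechanism is as follows. If $\Phi_t(0)>0$, all mass sits at $0$ and $u-y_t\le0$; the case $\Phi_t(1)\le 0$ is symmetric. Otherwise the two adjacent points $i/T,(i+1)/T$ carry weights making $\E[\Phi_t(u)]\cdot(\cdot)$ vanish up to the $1/T$ gap between the points. This uses $\sum|\omega|\le1$ so that $|\Phi_t|\le1$.

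Therefore, for every expert, $-\sum_t\loss_{t,i,\sigma}\le \text{Reg}_{(i,\sigma)}+\order(1)$. We then invoke the second-order, expert-dependent regret bound of \textsc{MsMwC} \citep{chen2021impossible} with a uniform prior over the $2N$ experts:
\[
\text{Reg}_{(i,\sigma)}\le \order\rbr{\sqrt{\log(N T)\textstyle\sum_t \loss_{t,i,\sigma}^2}+\log(NT)}.
\]
Using $\loss_{t,i,\sigma}^2\le P_t(J)$ and the bound on $\sum_t P_t(J)$ above, this becomes $\order(\sqrt{\iota n_J}+\iota)$. The delicate point is to ensure the \textsc{MsMwC} variant delivers a bound in $\sum_t\loss_{t,i,\sigma}^2$ for each expert rather than $\sqrt{T}$. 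This is why \textsc{MsMwC} is used, and we would set its learning rates as in \citet{hu2025efficient}.

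Combining the three parts by the triangle inequality proves the lemma for all $J$ simultaneously.
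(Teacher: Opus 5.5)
Your decomposition matches the paper's. Its proof does exactly this triangle-inequality split into $n_J\Delta_J$ plus $\abr{\sum_t (y_t-\wt p_t)\Ind{p_t=z_J}}$, and then cites \citet[Section 2.1]{hu2025efficient} for the second term. Your Freedman steps (on $X_t$ and on $\Ind{\wt p_t\in J}-P_t(J)$) are a correct unpacking of that citation.

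The problem is the regret step, where the inequality runs the wrong way. Under the loss convention you adopt, the \textsc{MsMwC} guarantee reads $\sum_t\langle\omega_t,\loss_t\rangle-\sum_t\loss_{t,i,\sigma}\le\mathrm{Reg}_{(i,\sigma)}$, i.e.\ $-\sum_t\loss_{t,i,\sigma}\le\mathrm{Reg}_{(i,\sigma)}-\sum_t\langle\omega_t,\loss_t\rangle$. Your conclusion $-\sum_t\loss_{t,i,\sigma}\le\mathrm{Reg}_{(i,\sigma)}+\order(1)$ would therefore need a \emph{lower} bound $\langle\omega_t,\loss_t\rangle\ge-\order(1/T)$. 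The construction of $P_t$ only gives the upper bound $\E_{u\sim P_t}[\Phi_t(u)(u-y)]\le\order(1/T)$, and no lower bound holds: if $\Phi_t(0)>0$ and $y_t=1$, the value is $-\Phi_t(0)$, which can be close to $-1$. So, as written, the step you yourself call the main one does not follow.

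The missing point is the direction of the game. The expert algorithm plays the maximizing side: \textsc{MsMwC} must be run on gains $\loss_{t,i,\sigma}$ (equivalently, losses $-\loss_{t,i,\sigma}$). The distribution $P_t$ is the forecaster's response that keeps this gain at most $\order(1/T)$ for either outcome.

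With that convention, the regret bound gives, for every expert, the \emph{upper} bound $\sum_t\loss_{t,i,\sigma}\le\sum_t\langle\omega_t,\loss_t\rangle+\mathrm{Reg}_{(i,\sigma)}\le\order(1)+\mathrm{Reg}_{(i,\sigma)}$. Since $\loss_{t,i,-\sigma}=-\loss_{t,i,\sigma}$, this yields $\abr{\sum_t\loss_{t,i,+1}}=\max_{\sigma}\sum_t\loss_{t,i,\sigma}\le\order(1)+\max_{\sigma}\mathrm{Reg}_{(i,\sigma)}$. Note that the relevant expert is the one with nonnegative cumulative $\loss$, not the one you picked.

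From there, your second-order bound with $\loss_{t,i,\sigma}^2\le P_t(J)$ and $\sum_tP_t(J)\le 2n_J+\order(\iota)$ finishes exactly as you describe. This is also the direction the paper uses in the proof of \pref{lem:visit_number_control_Cal1}, where it upper-bounds $\sum_t\loss_{t,i,\sigma_i^*}$.
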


\begin{proof}
For any fixed interval $J \in \Pi$, we can show that
\begin{align*}
\abr{\sum_{t=1}^T (y_t-p_t)\Ind{p_t = z_J} } & \leq  \abr{\sum_{t=1}^T (y_t-\wt{p}_t)\Ind{p_t = z_J} }
+ \abr{\sum_{t=1}^T (\wt{p}_t-p_t)\Ind{p_t = z_J} } \\
& \leq  \abr{\sum_{t=1}^T (y_t-\wt{p}_t)\Ind{p_t = z_J} }
+ \sum_{t=1}^T \abr{\wt{p}_t-p_t}\Ind{p_t = z_J} \\
& \leq  \abr{\sum_{t=1}^T (y_t-\wt{p}_t)\Ind{p_t = z_J} }
+  \Delta_J n_J \\
& \leq  \order \rbr{ \iota +\sqrt{\iota n_J} }
+  \Delta_J n_J,
\end{align*}
where the third inequality uses the fact that if $p_t= z_J$, then $\abr{\wt{p}_t-p_t} \leq \Delta_J$, and the last inequality uses the same argument in \citet[Section 2.1]{hu2025efficient} to bound $\abr{\sum_{t=1}^T (y_t-\wt{p}_t)\Ind{p_t = z_J} }$ by properly scaling the confidence $\delta \in (0,1)$. 

We complete the proof by a union bound over all $J \in \Pi$ and the fact that $N=\order(\text{poly}(T))$.
\end{proof}

\begin{lemma}[Freedman's inequality] \label{lem:freedman}
    Let $X_1,\ldots,X_n$ be a martingale difference sequence adapted to $\{\calF_t\}_{t=0}^n$. Assume that $|X_t| \leq B$ almost surely for all $t$. Define $S_n =\sum_{t=1}^n X_t$ and $V_n=\sum_{t=1}^n \E_t[X_t^2]$. Then, for any $\lambda \in  (0,1/B]$, with probability at least $1-\delta$
\begin{align*}
    |S_n| \leq \lambda V_n + \frac{ \log(2/\delta) }{\lambda}.
\end{align*}

Moreover, with probability at least $1-\delta$,
\begin{align*}
    |S_n| \leq \order \rbr{\sqrt{V_n \log(2/\delta)} + B \log(2/\delta)}.
\end{align*}
\end{lemma}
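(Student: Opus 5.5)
The plan is to deduce the second (explicit) form from the first (the $\lambda$-parametrized bound) by optimizing $\lambda$. The first form is the standard exponential-supermartingale argument.

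For the first inequality, I would fix $\lambda\in(0,1/B]$ and use the elementary bound $e^{x}\le 1+x+x^2$, valid for $x\le 1$. Since $|\lambda X_t|\le 1$ and $\E_t[X_t]=0$, this gives
\[
\E_t\big[e^{\lambda X_t}\big]\le 1+\lambda^2\E_t[X_t^2]\le \exp\big(\lambda^2\E_t[X_t^2]\big).
\]
Hence $M_k=\exp(\lambda S_k-\lambda^2 V_k)$ is a nonnegative supermartingale with $M_0=1$, so $\E[M_n]\le 1$. Markov's inequality then gives $\P(M_n\ge 2/\delta)\le\delta/2$. On the complementary event,
\[
\lambda S_n\le \lambda^2 V_n+\log(2/\delta), \qquad\text{i.e.}\qquad S_n\le \lambda V_n+\log(2/\delta)/\lambda .
\]
Applying the same argument to $-X_t$ and taking a union bound over the two sides yields $|S_n|\le\lambda V_n+\log(2/\delta)/\lambda$ with probability at least $1-\delta$.

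For the second inequality, the difficulty is that the ideal choice $\lambda=\sqrt{\log(2/\delta)/V_n}$ is random, because $V_n$ depends on the data. I would handle this with a geometric grid. Note that $V_n\le nB^2$. Consider the candidates
\[
\lambda_j=\min\{1/B,\; 2^{-j}/B\}, \qquad j=0,1,\dots,\lceil \tfrac12\log_2 n\rceil+\text{const}.
\]
There are $O(\log n)$ of them. Apply the first inequality to each $\lambda_j$ with confidence $\delta/(\text{number of candidates})$ and take a union bound. Then pick the $j$ for which $\lambda_j$ is within a factor $2$ of $\min\{1/B,\sqrt{\log(2/\delta)/V_n}\}$, clipped below by the smallest grid value.

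This gives two cases:

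\begin{itemize}
\item If the ideal $\lambda$ exceeds $1/B$, then $V_n\le B^2\log(2/\delta)$. Using $\lambda=1/B$ gives $|S_n|\le O(B\log(2/\delta))$.
\item Otherwise, the grid value gives $|S_n|\le O(\sqrt{V_n\log(2/\delta)})$. If $V_n$ is so small that the ideal value lies below the smallest grid point, the bound is absorbed into the $B\log$ term.
\end{itemize}

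The union bound inflates $\log(2/\delta)$ to $\log(2\log n/\delta)$. This is what the paper's $\order(\cdot)$ (and its use of $\iota=\Theta(\log(T/\delta))$) absorbs. Alternatively, the constant-only statement can be obtained by citing the standard Freedman inequality directly.

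The main obstacle is this data-dependent choice of $\lambda$, together with the precise bookkeeping of the logarithmic factor it introduces. The supermartingale step itself is routine.
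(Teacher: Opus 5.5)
The paper states this lemma without proof, as a standard tool, so your argument has to stand on its own.

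\textbf{First display.} Your exponential-supermartingale argument is correct as written. The inequality $e^x\le 1+x+x^2$ holds for all $x\le 1$, and $V_k$ is predictable. Hence $\exp(\lambda S_k-\lambda^2 V_k)$ is a nonnegative supermartingale. Markov's inequality on each side plus a union bound then gives the first display exactly, with constant $1$ in front of $\lambda V_n$.

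\textbf{Second display.} Your peeling over $\lambda_j=2^{-j}/B$ is also correct. It yields $|S_n|\le\order\big(\sqrt{V_n L}+BL\big)$ with $L=\log(2N/\delta)$, where $N=O(\log n)$ is the number of grid points. There is one slip. The ideal $\lambda^\star=\sqrt{L/V_n}$ drops below the grid when $V_n$ is \emph{large}, not small. That case is in fact vacuous, because $V_n\le nB^2$ forces $\lambda^\star\ge\sqrt{L}/(B\sqrt n)$, which sits above your smallest grid point. Had it occurred, it would not be absorbed into the $BL$ term.

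\textbf{The genuine problem} is your closing claim that the version with $\log(2/\delta)$ and only absolute constants ``can be obtained by citing the standard Freedman inequality directly.'' Freedman's inequality bounds $\P(S_n\ge x,\ V_n\le\sigma^2)$ for a \emph{deterministic} $\sigma^2$. Passing to the random $V_n$ is exactly your peeling step, and its $\log\log n$ cost cannot be removed.

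Here is a counterexample. Let $L_0=\log(2/\delta)$, and let $\tau$ be the first $k$ with $\sum_{s\le k}\epsilon_s> c\sqrt{kL_0}+cL_0$, where the $\epsilon_s$ are i.i.d.\ Rademacher. Set $X_t=\epsilon_t$ for $t\le\tau$ and $X_t=0$ afterwards.
\begin{itemize}
\item This is a martingale difference sequence with $B=1$ and $V_n=\min\{\tau,n\}$.
\item On $\{\tau\le n\}$ we have $S_n=S_\tau> c\big(\sqrt{V_nL_0}+BL_0\big)$, so the second display with constant $c$ fails there.
\item By the law of the iterated logarithm, $\P(\tau\le n)\to 1$ as $n\to\infty$ for every fixed $c$ and $\delta$.
\end{itemize}
So the constant-only statement is false as written, and the $\log\log n$ you picked up is necessary rather than an artifact.

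It is harmless for the paper. Every application either uses the fixed-$\lambda$ form, or has $n\le T$ with the logarithm absorbed into $\iota=\Theta(\log(T/\delta))$. The fixed-$\lambda$ uses are $\lambda=1/8$ in \pref{lem:bound_occupancy_PCal2} and $\lambda=\Theta(1/\log(1/\eta))$ in \pref{lem:bound_occupancy_PKL}. You should therefore state and prove the $\log(\log n/\delta)$ version rather than appeal to a constant-only one.
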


\begin{lemma}[Restatement of \pref{lem:visit_number_control_Cal1_main}] \label{lem:visit_number_control_Cal1}
For any fixed partition $\Pi=\{J_i\}_{i \in [N]}$ of $[0,1]$ with $N=\order(\text{poly}(T))$, with probability at least $1-\delta/2$, for all $i \in [N] \backslash \{j^*\}$,
\begin{equation} 
\sqrt{n_{J_i}} \leq   \order \rbr{  \frac{\sqrt{\iota}}{d_{J_i}} + \sqrt{\frac{C_{J_i}}{d_{J_i}}  }    }.
\end{equation}
\end{lemma}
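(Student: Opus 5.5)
We will prove the bound for a fixed index $i \neq j^*$ by comparing two facts about the rounds where $z_{J_i}$ is played: the \textsc{MsMwC} guarantee forces the signed cumulative loss of the best expert for interval $J_i$ to be small, while stochasticity forces $\sum_t (\wt p_t - y_t)\Ind{\wt p_t\in J_i}$ to have a drift of size about $d_{J_i}$ per visit, up to the corruption $c_t$. Concretely, fix $J=J_i$ and assume first that $J$ lies to the right of $\optp$ (the left case is symmetric, using $\sigma=-1$ instead of $+1$). For every $u\in J$ we have $u-\optp\ge d_J$. Consider the expert $(i,\sigma)$ with $\sigma=+1$. Its cumulative loss is $\sum_t \loss_{t,i,+1}=\sum_t \E_{u\sim P_t}[\Ind{u\in J}(u-y_t)]$. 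The learner's per-round loss is $\sum_{(i,\sigma)}\omega_{t,i,\sigma}\loss_{t,i,\sigma}=\E_{u\sim P_t}[\Phi_t(u)(u-y_t)]$, which by the construction of $P_t$ is at most $\order(1/T)$, so its total is $\order(1)$. Hence the regret against $(i,-1)$ gives the lower bound $-\sum_t\loss_{t,i,+1}\ge -\order(1)-\mathrm{Reg}_{i,-1}$, and by the \textsc{MsMwC} second-order bound $\mathrm{Reg}_{i,-1}\le \order\big(\sqrt{\log(NT)\sum_t \loss_{t,i,-1}^2}+\log(NT)\big)$. Since $|u-y_t|\le 1$, we get $\loss_{t,i,\sigma}^2\le P_t(J)$ (by Jensen), and so $\sum_t\loss^2\le \sum_t P_t(J)=:\bar n_J$. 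Note that $P_t(J)=\calP_t(z_J)$.

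Next we lower bound $\sum_t \loss_{t,i,+1}$ using the data. Write $u-y_t=(u-\optp)+(\optp-q_t)+(q_t-y_t)$. The first term contributes at least $d_J\bar n_J$. The second is at least $-\sum_t c_t P_t(J)=-C_J$. The third is a martingale difference, since $P_t$ is $\calF_{t-1}$-measurable and $\E_t[y_t]=q_t$. Its increment is $P_t(J)(q_t-y_t)$, with conditional variance at most $P_t(J)$. By Freedman (\pref{lem:freedman}) with a union bound over $i$, this term is $\order(\sqrt{\iota\bar n_J}+\iota)$. Combining the upper and lower bounds gives
\[ d_J\bar n_J \le C_J+\order\big(\sqrt{\iota \bar n_J}+\iota\big). \]

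Solving this quadratic in $\sqrt{\bar n_J}$ yields $\sqrt{\bar n_J}\le \order\big(\sqrt\iota/d_J+\sqrt{C_J/d_J}+\sqrt{\iota/d_J}\big)$, and $\sqrt{\iota/d_J}\le\sqrt\iota/d_J$ because $d_J\le1$. Finally, $n_J=\sum_t\Ind{p_t=z_J}$ has conditional mean $\calP_t(z_J)$. By Freedman again, $n_J\le 2\bar n_J+\order(\iota)$, and $\sqrt{\iota}\le\sqrt\iota/d_J$, which gives the claim.

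The main obstacle is the regret step. We need the \emph{second-order}, expert-specific bound of \textsc{MsMwC}, with $\sqrt{\sum_t\loss_{t,i,\sigma}^2}$ rather than $\sqrt T$, and we need to check that its learning-rate/prior setup with $2N=\mathrm{poly}(T)$ experts yields only a $\log T$ factor. We also must verify that the regret is taken against the algorithm's own loss, which the fixed-point construction makes $\le \order(1/T)$ rather than exactly zero. Handling $d_J=0$ is vacuous: only $j^*$ can contain $\optp$, and we excluded it.
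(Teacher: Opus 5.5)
Your overall route is the paper's. For the expert $(i,\sigma_i^*)$ you split $u-y_t=(u-\optp)+(\optp-q_t)+(q_t-y_t)$, lower bound the expected part by $d_J\bar n_J-C_J$, and control the martingale part by Freedman. You then bound $\sum_t\loss_{t,i,\sigma_i^*}$ by $\order(\iota+\sqrt{\iota\bar n_J})$ via the second-order \textsc{MsMwC} guarantee, solve the quadratic, and pass from $\bar n_J=\sum_t\calP_t(z_J)$ to $n_J$ with a second Freedman bound. All of these steps are fine except the regret step.

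As written, the regret step does not go through. The construction of $P_t$ controls the learner's quantity $\E_{u\sim P_t}[\Phi_t(u)(u-y_t)]$ only from above. It can be far below $-\order(1/T)$: if $\Phi_t(0)>0$, all mass goes to $0$ and the quantity is $-\Phi_t(0)y_t$.

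If \textsc{MsMwC} minimizes $\loss$, the regret against $(i,-1)$ gives $\sum_t\loss_{t,i,+1}\le \mathrm{Reg}_{i,-1}-\sum_t\E_{u\sim P_t}[\Phi_t(u)(u-y_t)]$. That needs a lower bound on the learner's quantity, which you do not have. If it maximizes, the regret against $(i,-1)$ bounds $\sum_t\loss_{t,i,+1}$ from below, which is the wrong direction.

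The fix is to read the expert algorithm as maximizing $\loss_{t,i,\sigma}$, equivalently as being fed $-\loss_{t,i,\sigma}$ as losses. That is the reading under which the fixed-point choice of $P_t$ does its job. Then use the regret against $(i,+1)=(i,\sigma_i^*)$ itself:
\[
\sum_t \loss_{t,i,+1}\le \sum_t \E_{u\sim P_t}\big[\Phi_t(u)(u-y_t)\big]+\mathrm{Reg}_{i,+1}\le \order(1)+\order\Big(\sqrt{\iota\textstyle\sum_t\loss_{t,i,+1}^2}+\iota\Big),
\]
with $\loss_{t,i,+1}^2\le \calP_t(z_J)$ as you noted. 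This is exactly the bound the paper imports from \citet{hu2025efficient}.

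A minor point: excluding $j^*$ does not rule out $d_J=0$ for $i\ne j^*$. For example, take the interval just left of $J_{j^*}=[a,b)$ when $\optp=a$. In that case the bound is simply vacuous with $1/d_J=+\infty$, which is how the paper treats it.
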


\begin{proof}
Fix any $i \in [N] \backslash  \{j^*\}$.
We choose 
\[
\sigma_i^* = \begin{cases} 
      1, & J_i \subseteq (\optp,1], \\
      -1 ,& J_i \subseteq [0,\optp).
   \end{cases}
\]

With this definition, for any $u \in J_i$, we have $\sigma_i^* (u-\optp) =|u-\optp|\geq d_{J_i}$. We further define for any $(i,\sigma)$ and any round $t$,
\begin{align*}
       \wt{\loss}_{t,i,\sigma} &:=    \E_{u \sim P_t} \sbr{ \sigma \Ind{u \in J_i} (u-q_t) \mid \calF_{t-1} },\\
         \loss_{t,i,\sigma} &:=    \E_{u \sim P_t} \sbr{ \sigma \Ind{u \in J_i} (u-y_t) \mid \calF_{t-1} } .
\end{align*}

We have that
\begin{align*}
 \wt{\loss}_{t,i,\sigma_i^*}  
    & =  \E_{u \sim P_t} \sbr{ \sigma_i^* \Ind{u \in J_i} (u-\optp) \mid \calF_{t-1} }   +   \sigma_i^* (\optp-q_t)  \calP_t(z_{J_i})\\
    & \geq d_{J_i} \calP_t(z_{J_i})  - c_t \calP_t(z_{J_i}) .
\end{align*}

Further, we have $\{ \loss_{t,i,\sigma_i^*}   -  \wt{\loss}_{t,i,\sigma_i^*} \}_t =\sigma_i^* \calP_t(z_{J_i})(q_t-y_t)$ is a martingale difference sequence.
\[
\rbr{\loss_{t,i,\sigma_i^*}   -  \wt{\loss}_{t,i,\sigma_i^*}}^2 = (q_t-y_t)^2 \calP_t(z_{J_i})^2 \leq  \calP_t(z_{J_i})^2  \leq \calP_t(z_{J_i}).
\]

By Freedman's inequality (see \pref{lem:freedman}), with probability at least $1-\delta/4$, for all $i$
\[
\abr{  \sum_{t=1}^T \rbr{\loss_{t,i,\sigma_i^*}   -  \wt{\loss}_{t,i,\sigma_i^*}} } \leq \order \rbr{ \sqrt{\iota \sum_{t=1}^T \calP_t(z_{J_i})} +\iota  }.
\]

From analysis in \citep[Section 2.1]{hu2025efficient}, which does not rely on any specific partition of $[0,1]$,
$ \sum_{t=1}^T   \loss_{t,i,\sigma_i^*} \leq \order \rbr{\iota+\sqrt{\iota \sum_{t=1}^T \calP_t(z_{J_i})}}.$
Thus,
\begin{align*}
d_{J_i} \sum_{t=1}^T \calP_t(z_{J_i})  - \sum_{t=1}^T  c_t \calP_t(z_{J_i}) & \leq \sum_{t=1}^T     \wt{\loss}_{t,i,\sigma_i^*}    \\
&= \sum_{t=1}^T   \loss_{t,i,\sigma_i^*}   - \sum_{t=1}^T  \rbr{  \loss_{t,i,\sigma_i^*} -  \wt{\loss}_{t,i,\sigma_i^*}  } \\
&\leq \sum_{t=1}^T  \loss_{t,i,\sigma_i^*}   + \abr{  \sum_{t=1}^T  \rbr{  \wt{\loss}_{t,i,\sigma_i^*} -\loss_{t,i,\sigma_i^*}  } }\\
&\leq    \order \rbr{\iota+ \sqrt{\iota \sum_{t=1}^T \calP_t(z_{J_i})}} .
\end{align*}

The above inequality $d_{J_i} \sum_{t=1}^T \calP_t(z_{J_i}) \leq C_{J_i}+  \order \rbr{\iota+ \sqrt{\iota \sum_{t=1}^T \calP_t(z_{J_i})}}$ implies that (here we only consider $d_{J_i}>0$. If $d_{J_i}=0$, we treat $1/d_{J_i}$ as $+\infty$.)
\begin{equation}
   \sum_{t=1}^T \calP_t(z_{J_i}) \leq \order  \rbr{ \frac{\iota}{d_{J_i}^2} + \frac{ C_{J_i}}{d_{J_i}}   }.
\end{equation}

Moreover, let $Y_{t,i}=\Ind{p_t=z_{J_i}}-\calP_t(z_{J_i})$, and $\{Y_{t,i}\}_t$ is a martingale difference sequence.
We have $|Y_{t,i}| \leq 1$ and $\sum_{t=1}^T \E_t[Y_{t,i}^2] \leq \sum_{t=1}^T \calP_t(z_{J_i}) $.
By Freedman's inequality (see \pref{lem:freedman}) and a union bound over all $i$, with probability at least $1-\delta/4$, for all $i$
\[
\abr{ n_{J_i} - \sum_{t=1}^T \calP_t(z_{J_i})  }  \leq \order \rbr{ \sqrt{\iota \sum_{t=1}^T \calP_t(z_{J_i})} +\iota  },
\]
which implies $n_{J_i}  \leq  \order  \rbr{ \frac{\iota}{d_{J_i}^2} + \frac{C_{J_i}}{d_{J_i}}   }.$ Thus, $\sqrt{n_{J_i}} \leq \order{\rbr{  \sqrt{ \frac{\iota}{d_{J_i}^2} + \frac{C_{J_i}}{d_{J_i}} }} } \leq \order \rbr{  \frac{\sqrt{\iota}}{d_{J_i}} + \sqrt{\frac{C_{J_i}}{d_{J_i}}  }    }$.

One can repeat this argument for all $i \in [N] \backslash  \{j^*\}$ to complete the proof.
\end{proof}

\subsection{Proof of \pref{thm:CalL1_knownC}} 
For this proof, the specific choice of $N$ is
\begin{equation} \label{eq:choice_N_cal1}
N = \left \lceil \min \cbr{  \sqrt{ \frac{T}{\iota \log T} }, \rbr{   \frac{T^2}{\iota (1+C) \log T} }^{1/3} } \right \rceil.
\end{equation}

Since the computational complexity per round is $\order(N)$ and $N$ is given in \pref{eq:choice_N_cal1}, the claimed computational complexity thus follows.

The following analysis conditions on the nice event that the high probability bounds in both \pref{lem:abs_cal_error_byHu} and \pref{lem:visit_number_control_Cal1} hold simultaneously. Then, such a nice event holds with probability at least $1-\delta$.
One can show
\begin{align*}
\Caldist_1 &= \sum_{i=1}^N      \abr{ \sum_{t=1}^T \rbr{ y_t - z_{J_i} } \Ind{p_t = z_{J_i} } } \\
&\leq  \order \rbr{\sum_{i=1}^N  \sqrt{\iota n_{J_i}}+\iota N +\frac{T}{N}  }  \\
&=  \order \rbr{\sum_{i \in \calN}  \sqrt{\iota n_{J_i}}  +\sum_{i \in [N] \backslash \calN}  \sqrt{\iota n_{J_i}} +\frac{T}{N} +\iota N   } \\
&\leq   \order \rbr{\sqrt{T \iota}  +\sum_{i \in [N] \backslash \calN}  \sqrt{\iota n_{J_i}} +\frac{T}{N}  +\iota N } ,
\end{align*}
where the first inequality uses \pref{lem:abs_cal_error_byHu} and the fact that $\sum_{i =1}^N n_{J_i}=T$, and the last inequality holds since $|\calN|$ is constant level.

Using \pref{eq:bound4ni}, we have
\begin{align*}
    \sum_{i \in  [N] \backslash \calN}  \sqrt{n_{J_i}} \leq \order \rbr{    \sum_{i \in  [N] \backslash \calN}  \frac{\sqrt{\iota}}{d_{J_i}} + \sum_{i \in  [N] \backslash \calN}  \sqrt{\frac{C_{J_i}}{d_{J_i}}}   }.
\end{align*}

On the one hand, we use the fact that for any $i \in  [N] \backslash \calN$, we have $|i-j^*| \geq 2$, and $d_{J_i} \geq \frac{|i-j^*|}{2N}$ to show that
\begin{align} \label{eq:bound_oneoverdi}
    \sum_{i \in  [N] \backslash \calN}  \frac{1}{d_{J_i}}  \leq \order \rbr{     \sum_{i \in  [N] \backslash \calN}  \frac{N}{|i-j^*|}   }  \leq \order \rbr{     \sum_{z=2}^N  \frac{N}{z}   } \leq \order \rbr{ N \log N} \leq \order \rbr{ N \log T}.
\end{align}

On the other hand, we use the Cauchy–Schwarz inequality to show 
\begin{align*}
\sum_{i \in  [N] \backslash \calN}  \sqrt{\frac{C_{J_i}}{d_{J_i}}}   \leq   \sqrt{  \sum_{i \in  [N] \backslash \calN} \frac{1}{d_{J_i}}}  \sqrt{  \sum_{i \in  [N] \backslash \calN} C_{J_i}} \leq \order \rbr{   \sqrt{NC \log T}     },
\end{align*}
where the last inequality follows from \pref{eq:bound_oneoverdi} and $\sum_{i \in [N]} C_{J_i} = \sum_{t=1}^T c_t  \sum_{i \in [N]} \calP_t(z_{J_i})  =C$. Putting together, we have
\[
\Caldist_1 \leq \order  \rbr{  \sqrt{T \iota} +\frac{T}{N} +\iota N \log T + \sqrt{ \iota NC \log T}    }.
\]

It remains to plug in the choice of \(N\). If \(\iota\log T>T\), then the
claimed bound is at least order \(T\), and the result is trivial since
\(\Caldist_1\le T\). Thus assume \(\iota\log T\le T\). The ceiling in the
definition of \(N\) only changes the following bounds by constants.

By \pref{eq:choice_N_cal1}, we have
\[
\frac{T}{N}
\le
\order \left(
\sqrt{\iota T\log T}
+
(\iota T(1+C)\log T)^{1/3}
\right)
\le
\order \left(
\sqrt{\iota T\log T}
+
(\iota T C\log T)^{1/3}
\right),
\]
where the last step uses \((1+C)^{1/3}\le 1+C^{1/3}\) and absorbs
\((\iota T\log T)^{1/3}\) into \(\sqrt{\iota T\log T}\). Also,
\[
\iota N\log T
\le
\order \left(
\iota\log T\sqrt{\frac{T}{\iota\log T}}
\right)
=
\order \left(\sqrt{\iota T\log T}\right),
\]
and
\[
\sqrt{\iota N C\log T}
\le
\order \left(
\sqrt{\iota C\log T}
\left(\frac{T^2}{\iota(1+C)\log T}\right)^{1/6}
\right)
\le
\order \left((\iota T C\log T)^{1/3}\right),
\]
where the last inequality uses \(C^{1/2}/(1+C)^{1/6}\le C^{1/3}\).
Putting these three bounds together gives
\[
\Caldist_1
\le
\order \left(
\sqrt{\iota T\log T}
+
(\iota T C\log T)^{1/3}
\right),
\]
which completes the proof.

\section{Omitted Proofs in \pref{sec:cal2_approach1}}

\begin{definition}[$\UnifPart$]\label{def:unifpart}
Given an interval $I=[l,u]$ and an integer $N \geq 1$, the function $\UnifPart(I,N)$ uniformly partitions interval $I$ into a collection of $N$ equal-length subintervals.
If $N=1$, $\UnifPart(I,N)=\{I\}$. If $N \geq 2$, then
\[
\UnifPart(I,N) = \cbr{ \left[  l+\frac{k-1}{N}\Delta_I,l+\frac{k}{N}\Delta_I \right) :k =1,\ldots,N -1  } \cup \cbr{  \sbr{l+\frac{N-1}{N}\Delta_I,u}   }.
\]
\end{definition}

\subsection{Technical Lemmas and Nice Event Construction}

\begin{lemma} \label{lem:corruption_analysis_Cal2_Approach1}
With probability at least $1-\delta/3$, for each $m \geq 2$
    \begin{equation*}
    \abr{\wh{y}_m -\optp} \leq  r_m =    \sqrt{ \frac{ \iota  }{s_{m-1}} }  + \frac{C}{s_{m-1}}.
\end{equation*}
\end{lemma}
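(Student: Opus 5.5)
For each fixed epoch $m\ge 2$ I would compare $\wh{y}_m$ to $\optp$ through the intermediate quantity $\bar q_{m-1} := \frac{1}{s_{m-1}}\sum_{t\in\calT_{m-1}} q_t$, the average of the conditional means over the previous epoch. Note that $\wh{y}_m$ is the average outcome of epoch $m-1$, which has length $s_{m-1}$. The triangle inequality then gives
\[
\abr{\wh{y}_m-\optp} \le \abr{\frac{1}{s_{m-1}}\sum_{t\in\calT_{m-1}}(y_t-q_t)} + \abr{\frac{1}{s_{m-1}}\sum_{t\in\calT_{m-1}}(q_t-\optp)} .
\]

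The second term is deterministic. It is at most $\frac{1}{s_{m-1}}\sum_{t\in\calT_{m-1}}c_t \le \frac{C}{s_{m-1}}$, since $c_t=\abr{q_t-\optp}\ge 0$ and the sum over one epoch is at most the total $C$. This is exactly where the $C/s_{m-1}$ term in $r_m$ comes from.

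The first term is a normalized sum of the martingale difference sequence $X_t=y_t-q_t$ with $|X_t|\le 1$. This holds for an oblivious adversary, and also for an adaptive one where $q_t$ is $\calF_{t-1}$-measurable. The rounds of each epoch are fixed in advance because epoch lengths are deterministic, so Azuma--Hoeffding (or \pref{lem:freedman} with $V_n\le s_{m-1}$) gives $\abr{\sum_{t\in\calT_{m-1}}X_t}\le \sqrt{2 s_{m-1}\log(2/\delta_m)}$ with probability at least $1-\delta_m$.

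I would take $\delta_m=\delta/(3\lceil\log_2 T\rceil+3)$ and union bound over the at most $\order(\log T)$ epochs, for a total failure probability of at most $\delta/3$. Since $\iota=\Theta(\log(T/\delta))$ with a suitable constant, $\sqrt{2\log(2/\delta_m)/s_{m-1}}\le\sqrt{\iota/s_{m-1}}$, and combining the two terms gives the claimed bound $r_m$.

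The only point needing care is checking that the constant hidden in $\iota$ absorbs the $\log\log T$ coming from the union bound and the factor $2$ from Azuma. This is routine, since $\log(6\log_2 T/\delta)\le \order(\log(T/\delta))$. There is no real obstacle beyond that.
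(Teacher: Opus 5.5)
Your proposal is correct and follows essentially the same route as the paper: the triangle inequality through the epoch-$(m-1)$ average of the $q_t$'s, the deterministic bound $C/s_{m-1}$ on the deviation term, Hoeffding's inequality for the zero-mean bounded variables $y_t-q_t$, and a union bound over the $\order(\log T)$ epochs with constants absorbed into $\iota$. Using Azuma--Hoeffding instead of plain Hoeffding is a harmless strengthening that also covers the adaptive-adversary setting mentioned in the paper's footnote.
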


\begin{proof}
Fix an epoch $m \geq 2$.
We have
\begin{align*}
 \abr{\wh{y}_m -\optp} & \leq \frac{1}{s_{m-1}} \abr{\sum_{t \in \calT_{m-1}} y_t  -  \sum_{t \in \calT_{m-1}} q_t  } + \frac{1}{s_{m-1}}\abr{ \sum_{t \in \calT_{m-1}} \rbr{q_t  -\optp}} \leq \sqrt{ \frac{ \iota  }{s_{m-1}} }  + \frac{C}{s_{m-1}},
\end{align*}
where the last inequality uses Hoeffding's inequality for zero-mean and $[-1,1]$-bounded random variable $y_t-q_t$.

Using the fact that the total number of epochs is at most $\order(\log T)$ and a union bound over all epochs, we complete the proof.
\end{proof}

\begin{lemma} \label{lem:block_version_bound}
With probability at least $1-\delta/3$, for each $m \geq 2$, we have that for all $J \in \Pi_m$,
\begin{align*}
   \abr{  \sum_{t \in \calT_m } \rbr{ y_t -p_t } \Ind{p_t = z_{J}} } \leq \order \rbr{ \sqrt{ \iota \cdot  n_{m,J}  } + \iota + n_{m,J}  \Delta_J } .
\end{align*}
\end{lemma}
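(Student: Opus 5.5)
The plan is to apply \pref{lem:abs_cal_error_byHu} epoch by epoch, conditioning on the previous epochs, and then take a union bound over the $\order(\log T)$ epochs. Fix an epoch $m\ge 2$. The partition $\Pi_m$ is determined by $\wh{y}_m$, which is $\calF$-measurable at the start of epoch $m$. Within the epoch, \pref{alg:framework_non_uniform} runs \pref{alg:cal1_knownC} from scratch on the fixed partition $\Pi_m$ for $s_m$ rounds. So, conditioned on the history up to the start of epoch $m$, the situation is exactly the one covered by \pref{lem:abs_cal_error_byHu}: a fresh run of the base algorithm on a fixed partition over a horizon $s_m\le T$ (more precisely $s_m\le 2T$ if the last epoch is truncated; only the first $\le T$ rounds are used). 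The outcome distributions are still $\texttt{Ber}(q_t)$, and that lemma does not use any stationarity.

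Next I check the size condition. We have $|\Pi_m| = N_m + \order(K\log T)$. Here $N_m=\lceil s_m^{1/3}|I_m|^{2/3}\iota^{-1/3}\rceil\le \order(T^{1/3})$ and $K\le \order(T^{1/3})$, so $|\Pi_m|=\order(\text{poly}(T))$ as required. Empty bands can be dropped without affecting anything.

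Applying \pref{lem:abs_cal_error_byHu} with confidence $\delta/(3\lceil\log_2 T\rceil+3)$ then gives, conditionally and hence unconditionally by the tower rule, that with probability at least $1-\delta/(3(\lceil\log_2 T\rceil+1))$, for all $J\in\Pi_m$,
\[
\abr{\sum_{t\in\calT_m}(y_t-p_t)\Ind{p_t=z_J}}\le \order\rbr{\sqrt{\iota n_{m,J}}+\iota+n_{m,J}\Delta_J}.
\]
The $\iota$ stays $\Theta(\log(T/\delta))$ after rescaling by the $\log T$ factor. Note that on the event $p_t=z_J$ we have $y_t-p_t=y_t-z_J$, which matches the lemma's form.

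The only subtle point is whether the proof of \pref{lem:abs_cal_error_byHu}, i.e.\ the martingale and Freedman argument from \citet{hu2025efficient}, remains valid when the partition is random but predictable with respect to the epoch start. I would handle this by noting that all concentration inequalities used there are applied to martingale difference sequences restricted to $\calT_m$, with the filtration starting at the beginning of epoch $m$. Conditioning on $\calF$ at that time fixes $\Pi_m$, so the argument goes through verbatim. A union bound over all epochs $m\le \lceil\log_2 T\rceil+1$ then yields total failure probability at most $\delta/3$.
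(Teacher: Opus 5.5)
Your proposal is correct and matches the paper's proof: condition on the history before epoch $m$ so that $\Pi_m$ is fixed, rerun the argument of \pref{lem:abs_cal_error_byHu} within the epoch (where the base algorithm is restarted from scratch), and take a union bound over the $\order(\log T)$ epochs. Your extra checks, namely that $|\Pi_m|=\order(\text{poly}(T))$ and that the martingale arguments restart with the epoch's filtration, are details the paper leaves implicit.
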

\begin{proof}
For this proof, we condition on the history prior to epoch $m$, under which the partition $\Pi_m$ is fixed. We then repeat the argument of \pref{lem:abs_cal_error_byHu} within epoch $m$. Finally, applying a union bound over all epochs, together with the fact that the total number of epochs is at most $\order(\log T)$, completes the proof.
\end{proof}

\begin{lemma} \label{lem:visit_number_control_Cal2_block_main}
With probability at least $1-\delta/3$, for each $m \geq 2$ and $J \in \Pi_m^{\text{out}}$, we have that 
\[
n_{m,J} \leq \order \rbr{ \frac{\iota}{d_{J}^2} +\frac{C_{m,J}}{d_{J}}  }, \ \text{where}\ \forall J \in \Pi: C_{m,J} = \sum_{t\in \calT_m} c_t  \calP_t \rbr{z_{J}} .
\]
\end{lemma}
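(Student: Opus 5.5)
The plan is to reproduce the proof of \pref{lem:visit_number_control_Cal1} epoch by epoch. Fix an epoch $m \geq 2$ and condition on $\calF$ up to the end of epoch $m-1$, so that $\wh{y}_m$, $r_m$, and therefore $\Pi_m$ are fixed. Within the epoch, \pref{alg:cal1_knownC} is restarted from scratch on $\Pi_m$, so every argument applied to it on $[T]$ now applies to $\calT_m$. We also work on the event of \pref{lem:corruption_analysis_Cal2_Approach1} from the previous epoch, namely $|\wh{y}_m-\optp|\le r_m$.

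\textbf{Step 1: outer intervals lie strictly on one side of $\optp$.} On this event $\optp\in[\wh{y}_m-r_m,\wh{y}_m+r_m]$, which sits inside $I_m=[a_m,b_m]$ at distance at least $r_m$ from each endpoint, unless clipping at $0$ or $1$ occurs. Any $J\in\Pi_m^{\text{out}}$ lies in $[0,a_m)$ or $(b_m,1]$. In the first case $J\subseteq[0,\optp)$ and $d_J\ge a_m$-gap, and we set $\sigma^*_J=-1$. In the second case $J\subseteq(\optp,1]$ and we set $\sigma^*_J=+1$. In either case $\sigma^*_J(u-\optp)=|u-\optp|\ge d_J$ for all $u\in J$, and $d_J>0$. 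This plays the role of ``$i\neq j^*$'' in \pref{lem:visit_number_control_Cal1}.

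\textbf{Step 2: the MsMwC argument.} Define $\wt{\loss}_{t,J,\sigma}$ and $\loss_{t,J,\sigma}$ exactly as before, for $t\in\calT_m$. The decomposition from \pref{lem:visit_number_control_Cal1} gives $\wt{\loss}_{t,J,\sigma^*_J}\ge (d_J-c_t)\calP_t(z_J)$. Freedman's inequality (\pref{lem:freedman}) bounds the martingale $\sum_{t\in\calT_m}(\loss-\wt{\loss})$ by $\order(\sqrt{\iota\sum_{t\in\calT_m}\calP_t(z_J)}+\iota)$. The partition-free regret bound of \citet[Section 2.1]{hu2025efficient} for MsMwC, applied to this fresh run, gives $\sum_{t\in\calT_m}\loss_{t,J,\sigma^*_J}\le\order(\iota+\sqrt{\iota\sum_{t\in\calT_m}\calP_t(z_J)})$. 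Combining these yields
\[
d_J\textstyle\sum_{t\in\calT_m}\calP_t(z_J)\le C_{m,J}+\order\big(\iota+\sqrt{\iota\sum_{t\in\calT_m}\calP_t(z_J)}\big).
\]
Solving this quadratic inequality gives $\sum_{t\in\calT_m}\calP_t(z_J)\le\order(\iota/d_J^2+C_{m,J}/d_J)$. A second Freedman step on $\Ind{p_t=z_J}-\calP_t(z_J)$ converts this into the claimed bound on $n_{m,J}$.

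\textbf{Step 3: union bounds and the main obstacle.} Take a union bound over all $J\in\Pi_m^{\text{out}}$ (there are $\order(K\log T)$ of them, which is polynomial in $T$), over all $\order(\log T)$ epochs, and over the event of \pref{lem:corruption_analysis_Cal2_Approach1}. Rescaling $\delta$ gives total failure probability $\delta/3$. The only non-routine point is Step 1. The argument needs $\sigma^*_J$ to be determined by $\Pi_m$ alone, yet its validity depends on the random event $|\wh{y}_m-\optp|\le r_m$. This is harmless because that event is $\calF$-measurable before epoch $m$ begins, and Freedman's inequality and the MsMwC bound hold for every choice of $(J,\sigma)$ simultaneously. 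We therefore apply the within-epoch bounds to both signs and select $\sigma^*_J$ afterwards, which avoids any conditioning subtlety. Finally, if $a_m=0$ or $b_m=1$, the corresponding outer bands are empty, so there is nothing to prove there.
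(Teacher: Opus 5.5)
Your proposal is correct and follows the paper's proof. Like the paper, you condition on the history before epoch $m$ so that $\Pi_m$ is fixed, then rerun the argument of \pref{lem:visit_number_control_Cal1} on the fresh run over $\calT_m$: sign choice, Freedman, the partition-free MsMwC bound, the quadratic inequality, and a second Freedman step, followed by a union bound over the $\mathrm{poly}(T)$ intervals and $\order(\log T)$ epochs. Your Step~1 is harmless but unnecessary, because $\optp$ is fixed: any $J$ whose closure misses $\optp$ lies strictly on one side of it, and otherwise $d_J=0$ makes the bound vacuous, so the concentration event of \pref{lem:corruption_analysis_Cal2_Approach1} is not needed here.
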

\begin{proof}
For this proof, we condition on the history prior to epoch $m$, under which the partition $\Pi_m$ is fixed. We then repeat the argument of \pref{lem:visit_number_control_Cal1_main} within epoch $m$ to show $n_{m,J} \leq   \order \rbr{ \rbr{ \frac{\sqrt{\iota}}{d_{J}} + \sqrt{\frac{C_{m,J}}{d_{J}}  } }^2   } \leq \order \rbr{ \frac{\iota}{d_{J}^2} +\frac{C_{m,J}}{d_{J}}  }$. Finally, applying a union bound over all epochs, together with the fact that the total number of epochs is at most $\order(\log T)$, completes the proof. 
\end{proof}

\begin{definition}[Nice event $\calE_1$] \label{def:nice_event_Cal2_approach1}
Let $\calE_1$ be the event that all high probability bounds in \pref{lem:visit_number_control_Cal2_block_main}, \pref{lem:corruption_analysis_Cal2_Approach1}, and \pref{lem:block_version_bound} hold simultaneously.
\end{definition}

\begin{corollary} \label{corr:Cal2m_bound_approach1}
Suppose that $\calE_1$ holds where $\calE_1$ is defined in \pref{def:nice_event_Cal2_approach1}.
For each $m \geq 2$, we have
\begin{align*}
     \Caldist_2^{(m)}  = \sum_{J \in \Pi_m: n_{m,J}>0} \frac{1}{n_{m,J}} \abr{ \sum_{t \in \calT_m: p_t =z_{J}} (y_t -z_{J}) }^2 \leq  \order \rbr{ |\Pi_m| \iota + \sum_{ J \in \Pi_m }  n_{m,J}  \Delta_J^2   } .
\end{align*}
\end{corollary}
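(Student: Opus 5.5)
The plan is to bound each summand of $\Caldist_2^{(m)}$ separately using \pref{lem:block_version_bound}, which holds on $\calE_1$. Since every prediction in epoch $m$ lies in $\{z_J\}_{J\in\Pi_m}$, and distinct intervals give distinct grid points, the epoch-$m$ error is exactly the sum over $J\in\Pi_m$ with $n_{m,J}>0$ of $\frac{1}{n_{m,J}}\big|\sum_{t\in\calT_m}(y_t-z_J)\Ind{p_t=z_J}\big|^2$. This holds because $p_t=z_J$ on the indicator.

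Fix such a $J$. On $\calE_1$, \pref{lem:block_version_bound} gives
\[
\Big|\sum_{t\in\calT_m}(y_t-z_J)\Ind{p_t=z_J}\Big|\le \order\big(\sqrt{\iota n_{m,J}}+\iota+n_{m,J}\Delta_J\big).
\]
Squaring and using $(a+b+c)^2\le 3(a^2+b^2+c^2)$, the square is at most $\order(\iota n_{m,J}+\iota^2+n_{m,J}^2\Delta_J^2)$. Dividing by $n_{m,J}$ gives $\order(\iota+\iota^2/n_{m,J}+n_{m,J}\Delta_J^2)$.

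The only delicate point is the $\iota^2/n_{m,J}$ term, which is not obviously $\order(\iota)$ when $n_{m,J}$ is small. I would handle it with a trivial bound. Since $y_t,z_J\in[0,1]$, the absolute sum is at most $n_{m,J}$, so each summand is also at most $n_{m,J}$.

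Now split into two cases. If $n_{m,J}\le\iota$, the summand is at most $n_{m,J}\le\iota$. If $n_{m,J}>\iota$, then $\iota^2/n_{m,J}<\iota$, and the summand is $\order(\iota+n_{m,J}\Delta_J^2)$. In both cases each summand is $\order(\iota+n_{m,J}\Delta_J^2)$.

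Summing over the at most $|\Pi_m|$ intervals, and adding back nonnegative terms for intervals with $n_{m,J}=0$, yields $\order(|\Pi_m|\iota+\sum_{J\in\Pi_m}n_{m,J}\Delta_J^2)$, which is the claimed bound. The case split on $n_{m,J}$ versus $\iota$ is the only step that needs care; everything else is direct substitution.
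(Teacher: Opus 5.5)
Your proposal is correct and follows the paper's proof essentially step for step. The paper likewise applies \pref{lem:block_version_bound}, uses $(a+b+c)^2\le 3(a^2+b^2+c^2)$, handles the $\iota^2/n_{m,J}$ term by the same case split on whether $n_{m,J}$ is below $\iota$ (using the trivial bound $n_{m,J}$ in the small case), and then sums over $J\in\Pi_m$.
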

\begin{proof}
Fix an epoch $m \geq 2$.
Consider an arbitrary interval $J \in \Pi_m$ such that $n_{m,J}> 0$. 

Now we consider two cases. If $0<n_{m,J} <\iota$, then 
\begin{align*}
    \frac{1}{n_{m,J}} \abr{ \sum_{t \in \calT_m: p_t =z_{J}} (y_t -z_{J}) }^2  \leq    n_{m,J} \leq \iota .
\end{align*}

If $n_{m,J} \geq \iota$, with \pref{lem:block_version_bound}, we can show that
\begin{align*}
& \frac{1}{n_{m,J}} \abr{ \sum_{t \in \calT_m: p_t =z_{J}} (y_t -z_{J}) }^2 \\
&\leq \order \rbr{ \frac{1}{n_{m,J}} \rbr{   \sqrt{ \iota \cdot  n_{m,J}  } + \iota + n_{m,J}  \Delta_J   }^2    } \\
&\leq \order \rbr{ \iota +  \frac{\iota^2}{ n_{m,J}  }  +  n_{m,J}  \Delta_J^2   }  \\
&\leq \order \rbr{ \iota +  n_{m,J}  \Delta_J^2   } ,
\end{align*}
where the second inequality uses $(a+b+c)^2 \leq 3(a^2+b^2+c^2)$, and the last inequality uses $\iota \leq n_{m,J}$.

Summing over all $J \in \Pi_m$, we have the claimed bound for epoch $m$. Repeating this argument for each $m \geq 2$ completes the proof.
\end{proof}

\begin{lemma} \label{lem:outer_bound_Cal2_approach1}
Suppose that $\calE_1$ holds where $\calE_1$ is defined in \pref{def:nice_event_Cal2_approach1}.
For any $m \geq 2$, we have
\begin{align*}
|\Pi_{m}^{\text{out}}| \iota + \sum_{ J \in \Pi_{m}^{\text{out}} }  n_{m,J}  \Delta_J^2  \leq  \order  \rbr{\iota \log T+\iota^{2/3}  (1+C)^{1/3} \log T }. 
\end{align*}
\end{lemma}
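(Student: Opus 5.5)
We bound the count term and the discretization term separately, treating only the left bands $\{\calL_{m,q}\}_q$; the right bands are symmetric. For the count, each nonempty band is split into $K$ cells and there are at most $2(Q+1)=\order(\log T)$ bands. Hence
\[
|\Pi_m^{\text{out}}|\iota \le \order(K\iota\log T)=\order\big(\iota\log T+\iota^{2/3}(1+C)^{1/3}\log T\big),
\]
using $K=\lceil((1+C)/\iota)^{1/3}\rceil$.

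For the discretization term, fix $q$ and a nonempty $J\in\calL_{m,q}$. Under $\calE_1$, \pref{lem:corruption_analysis_Cal2_Approach1} gives $|\wh{y}_m-\optp|\le r_m$, so $\optp\ge \wh{y}_m-r_m = a_m+r_m$ (when $a_m=\wh{y}_m-2r_m$ is not clipped; if it is clipped at $0$, the left outer region is empty). Every $u\in L_{m,q}$ satisfies $u\le a_m-(2^q-1)r_m$, so
\[
d_J\ge a_m+r_m-a_m+(2^q-1)r_m=2^q r_m .
\]
Also $\Delta_J\le |L_{m,q}|/K\le 2^q r_m/K$. \pref{lem:visit_number_control_Cal2_block_main} applies since $J\in\Pi_m^{\text{out}}$, giving $n_{m,J}\le \order(\iota/d_J^2+C_{m,J}/d_J)$. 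Multiplying,
\[
n_{m,J}\Delta_J^2\le \order\Big(\frac{\iota}{K^2}+\frac{C_{m,J}\,2^q r_m}{K^2}\Big)\le \order\Big(\frac{\iota}{K^2}+\frac{C_{m,J}}{K^2}\Big),
\]
where the last step uses $2^q r_m\le d_J\le 1$ since $J\subseteq[0,1]$ is nonempty.

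Summing over the $K$ cells of $\calL_{m,q}$ gives $\order(\iota/K + C_m'/K^2)$ with $C_m'=\sum_{J\in\calL_{m,q}}C_{m,J}$. We have $\sum_{J\in\Pi_m}C_{m,J}=\sum_{t\in\calT_m}c_t\sum_J\calP_t(z_J)\le C$, so each $C_m'\le C$. Summing over $\order(\log T)$ bands yields
\[
\sum_{J\in\Pi_m^{\text{out}}} n_{m,J}\Delta_J^2\le \order\Big(\frac{\iota\log T}{K}+\frac{C\log T}{K^2}\Big).
\]
With $K\ge((1+C)/\iota)^{1/3}$ we get $C/K^2\le C\,\iota^{2/3}(1+C)^{-2/3}\le \iota^{2/3}(1+C)^{1/3}$, and $\iota/K\le\iota$. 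Adding the count term gives the claim.

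The main obstacle is the geometric fact $d_J\ge 2^q r_m$. It relies on the inner region being widened by $2r_m$ while the estimation error is only $r_m$, which leaves an $r_m$ margin. It also requires a careful treatment of clipping at the boundaries $0$ and $1$: clipping only shrinks bands, so it can only decrease $\Delta_J$, while the lower bound on $d_J$ is unaffected.
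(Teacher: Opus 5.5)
Your proposal is correct and matches the paper's proof essentially step for step. On each nonempty left-band cell you bound $d_J\ge 2^q r_m$ and $\Delta_J\le 2^q r_m/K$, plug in the epoch visit-count bound from \pref{lem:visit_number_control_Cal2_block_main}, use $2^q r_m\le d_J\le 1$ and $\sum_{J}C_{m,J}\le C$ within each band, sum over $\order(\log T)$ bands, and finish with $|\Pi_m^{\text{out}}|=\order(K\log T)$ and the choice of $K$. The differences are cosmetic: you keep $\iota/K$ where the paper loosens to $\iota$, and you spell out the $r_m$-margin argument for $d_J\ge 2^q r_m$, which the paper only asserts here (the same reasoning appears in \pref{lem:bound4_DeltaJ_abs_gap_Cal2}).
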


\begin{proof}
Fix any epoch $m \geq 2$. One can write (note that the summation over $J$ is naturally and implicitly only over those nonempty $J$'s)
\begin{align*}
  \sum_{ J \in \Pi_{m}^{\text{out}} }  n_{m,J}  \Delta_J^2 = \sum_{ q =0}^{Q} \sum_{J \in \calL_{m,q}}  n_{m,J}  \Delta_J^2    +\sum_{ q =0}^{Q} \sum_{J \in \calR_{m,q}}  n_{m,J}  \Delta_J^2  .
\end{align*}

For any non-empty $J \in \calL_{m,q}$, we have $d_J \geq  2^q r_{m}$ and $\Delta_J \leq \frac{2^q r_{m}}{K}$. We write that 
\begin{align*}
\sum_{ q =0}^{Q} \sum_{J \in \calL_{m,q}}  n_{m,J}  \Delta_J^2  
&\leq  \order \rbr{  \sum_{ q =0}^{Q} \sum_{J \in \calL_{m,q}}  \rbr{   \frac{2^q r_m}{K} }^2 \rbr{ \frac{\iota }{d_{J}^2} +\frac{C_{m,J}}{d_{J}}   }  }\\
& \leq  \order \rbr{ \sum_{ q =0}^{Q} \sum_{J \in \calL_{m,q}}  \rbr{   \frac{2^q r_m}{K} }^2 \rbr{ \frac{\iota }{ (2^q r_m)^2 } +\frac{C_{m,J}}{2^q r_m}   }   }\\
& \leq \order \rbr{ \sum_{ q =0}^{Q} \sum_{J \in \calL_{m,q}}   \rbr{ \frac{\iota }{ K^2 } +\frac{C_{m,J} 2^q r_m}{ K^2}   }} \\
& \leq \order \rbr{   \sum_{ q =0}^{Q}  \rbr{ \frac{\iota }{ K } +\frac{C 2^q r_m}{ K^2}   } } \\
& \leq \order \rbr{ \sum_{ q =0}^{Q}  \rbr{\iota +\frac{C 2^q r_m}{ K^2}   }} \\
& \leq \order \rbr{ \log T  \rbr{\iota +\frac{C}{ K^2}   }},
\end{align*}
where the first inequality uses \pref{lem:visit_number_control_Cal2_block_main}, and the last inequality bounds $2^q r_m \leq d_J \leq 1$.

Similarly, we also have
\begin{align*}
    \sum_{ q =0}^{Q} \sum_{J \in \calR_{m,q}}  n_{m,J}  \Delta_J^2   \leq  \order \rbr{  \log T \rbr{ \iota +\frac{C }{ K^2}   } }.
\end{align*}

Moreover, $|\Pi_{m}^{\text{out}}| = \order \big(  K \log T \big)$. Then, one can show that
\begin{align*}
  |\Pi_{m}^{\text{out}}| \iota + \sum_{ J \in \Pi_{m}^{\text{out}} }  n_{m,J}  \Delta_J^2   
 &\leq \order \rbr{ \log T  \rbr{  \iota K +\frac{C }{ K^2} }   } \leq \order \rbr{ \iota \log T + \iota^{2/3}  (1+C)^{1/3} \log T    }.
\end{align*}

The proof is thus complete.
\end{proof}

\begin{lemma} \label{lem:inner_error_Cal2_approach1}
The following holds.
\begin{align*}
\sum_{m=2}^M \rbr{N_m \iota  + s_{m} \frac{|I_{m}|^2}{N_m^2}} \leq  \order \rbr{   \iota^{2/3} C^{1/3}   + \iota \log  T }.
\end{align*} 
\end{lemma}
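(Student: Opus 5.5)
We bound each summand separately and then sum over the at most $\order(\log T)$ epochs. The choice $N_m=\lceil s_m^{1/3}|I_m|^{2/3}\iota^{-1/3}\rceil$ is designed to balance the two terms. Since $N_m \le s_m^{1/3}|I_m|^{2/3}\iota^{-1/3}+1$, we have $N_m\iota \le \iota^{2/3}s_m^{1/3}|I_m|^{2/3}+\iota$. Since $N_m\ge s_m^{1/3}|I_m|^{2/3}\iota^{-1/3}$, we have
\[
s_m|I_m|^2/N_m^2 \le s_m|I_m|^2\, s_m^{-2/3}|I_m|^{-4/3}\iota^{2/3} = \iota^{2/3}s_m^{1/3}|I_m|^{2/3}.
\]
(If $|I_m|=0$, the second term is simply $0$.) Hence each summand is $\order\big(\iota^{2/3}s_m^{1/3}|I_m|^{2/3}+\iota\big)$. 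The additive $\iota$ summed over $\order(\log T)$ epochs contributes $\order(\iota\log T)$.

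Next we control $s_m^{1/3}|I_m|^{2/3}$ using the definition of $I_m$. We have $|I_m|\le 4r_m = 4\big(\sqrt{\iota/s_{m-1}}+C/s_{m-1}\big)$, and $s_m=2s_{m-1}$. By $(a+b)^{2/3}\le a^{2/3}+b^{2/3}$,
\[
s_m^{1/3}|I_m|^{2/3} \le \order\big(s_{m-1}^{1/3}(\iota/s_{m-1})^{1/3} + s_{m-1}^{1/3}C^{2/3}s_{m-1}^{-2/3}\big) = \order\big(\iota^{1/3} + C^{2/3}s_{m-1}^{-1/3}\big).
\]
Multiplying by $\iota^{2/3}$, the first piece gives $\order(\iota)$ per epoch, hence $\order(\iota\log T)$ in total.

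The main obstacle is the second piece, $\iota^{2/3}C^{2/3}s_{m-1}^{-1/3}$. Summed over epochs it is dominated by small $s_{m-1}$ and gives $C^{2/3}$, not $C^{1/3}$. The fix is to use the trivial bound $|I_m|\le 1$ as well, i.e.\ $|I_m|\le \min\{1,4r_m\}$. For epochs with $s_{m-1}\le C$, use $|I_m|\le1$, so the summand is at most $\iota^{2/3}s_m^{1/3}$. This is a geometric sum dominated by its largest term, giving $\order(\iota^{2/3}C^{1/3})$. For epochs with $s_{m-1}>C$, keep $\iota^{2/3}C^{2/3}s_{m-1}^{-1/3}$. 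This is a geometric sum dominated by the smallest admissible $s_{m-1}\approx C$, giving $\order(\iota^{2/3}C^{2/3}C^{-1/3})=\order(\iota^{2/3}C^{1/3})$.

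Adding the geometric contributions $\order(\iota^{2/3}C^{1/3})$ and the per-epoch $\order(\iota)$ terms summed over $\order(\log T)$ epochs yields the claimed $\order(\iota^{2/3}C^{1/3}+\iota\log T)$. The edge case $C<1$ is harmless: every epoch falls in the second regime, where the second piece sums to $\order(\iota^{2/3}C^{2/3})\le\order(\iota^{2/3}C^{1/3})$ (taking $s_{m-1}\ge1$), or is absorbed into $\iota\log T$.
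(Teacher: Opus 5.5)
Your proof is correct and follows essentially the same route as the paper: you bound each summand by $\order(\iota+\iota^{2/3}s_m^{1/3}|I_m|^{2/3})$ via the choice of $N_m$, use $|I_m|\le\min\{1,4r_m\}$, split epochs according to whether $s$ is below or above $C$, and sum a geometric series in each regime. The only difference is cosmetic: you use $(a+b)^{2/3}\le a^{2/3}+b^{2/3}$ to handle the regime $s_{m-1}>C$ in one step, whereas the paper splits that regime again at $s_m=C^2/\iota$ into a $C$-dominated case and an $\iota$-dominated case; your separate remark on $C<1$ is also unnecessary, since the geometric sum over $s_{m-1}>C$ is $\order(C^{-1/3})$ for every $C>0$.
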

\begin{proof}
Consider any epoch $m \geq 2$.
We can show that
\begin{align*}
& N_m \iota  + s_{m} \frac{|I_{m}|^2}{N_m^2}  \\
& =  \order \rbr{\iota+  s_m^{1/3}  (\iota |I_{m}|)^{2/3}   } \tag{Choice of $N_m$} \\
& \leq  \order \rbr{\iota+   s_m^{1/3} \iota^{2/3} \min \cbr{1,\sqrt{\frac{\iota }{ s_m} } + \frac{C}{s_m} }^{2/3}   } ,
\end{align*}
where the last inequality follows from the definition of $I_m$ and the fact that $s_{m}=2s_{m-1}$:
\begin{equation} \label{eq:bound4Im}
    |I_m| \leq \min \cbr{1,4r_m}  \leq \order \rbr{ \min \cbr{1,\sqrt{\frac{\iota}{ s_{m-1}} } + \frac{C}{s_{m-1}} }  }  \leq \order \rbr{ \min \cbr{1,\sqrt{\frac{\iota}{ s_{m}} } + \frac{C}{s_{m}} }  }.
\end{equation}

We then consider three cases.

\textbf{Case 1: $s_m \leq C$.} In this case, we have 
\begin{align*}
     s_m^{1/3}   \min \cbr{1, \sqrt{\frac{\iota }{ s_m} } + \frac{C}{s_m} }^{2/3}  \leq   s_m^{1/3}  .
\end{align*} 

\textbf{Case 2: $C< s_m \leq C^2/\iota$.}
This case only exists when $C \geq \iota$.
We have $\sqrt{\iota/s_m} \leq C/s_m$ and then
\begin{align*}
     s_m^{1/3}   \min \cbr{1,\sqrt{\frac{\iota }{ s_m} } + \frac{C}{s_m} }^{2/3} \leq s_m^{1/3}   \min \cbr{1,  \frac{2C}{s_m} }^{2/3}  \leq  2C^{2/3} s_m^{-1/3} ,
\end{align*} 
where the last inequality lower bounds $s_m >C$.

\textbf{Case 3: $s_m > \max \{C^2 /\iota,C\} $.} As $s_m >C^2/\iota$, we have $\sqrt{\iota/s_m} > C/s_m$ and then
\begin{align*}
     s_m^{1/3}   \min \cbr{1,\sqrt{\frac{\iota }{ s_m} } + \frac{C}{s_m} }^{2/3}  \leq    s_m^{1/3}  \min \cbr{1,2\sqrt{\frac{\iota }{ s_m} }   }^{2/3}  \leq \order \rbr{ \iota^{1/3}},
\end{align*}

Then, we can write
\begin{align*}
&\sum_{m=2}^M \rbr{N_m \iota  + s_{m} \frac{|I_{m}|^2}{N_m^2} } \\
&\leq \order \rbr{ \sum_{m=2}^M \rbr{\iota+  s_m^{1/3}  (\iota |I_{m}|)^{2/3} } } \\
&\leq \order(\iota \log T)+  \iota^{2/3} \cdot \order \rbr{ \sum_{m: s_m \leq C}  s_m^{1/3}  +\sum_{m: C< s_m \leq C^2/\iota } C^{2/3} s_m^{-1/3} +\sum_{m: s_m > \max \{C, C^2/\iota \}} \iota^{1/3} } \\
&\leq  \order(\iota \log T)+\iota^{2/3} \cdot \order \rbr{ C^{1/3}   +\iota^{1/3} \log  T  } \\
&=  \order \rbr{  \iota^{2/3} C^{1/3}   +\iota \log  T  }.
\end{align*}

\end{proof}

\subsection{Proofs of \pref{thm:cal2_approach1}} 
The computational complexity is dominated by that of the base algorithm, which scales linearly with the number of discretization points. Therefore, it suffices to bound $|\Pi_m|$ for each epoch $m$.
For the first epoch, the algorithm may predict arbitrary values, so we can simply force it to deterministically output a fixed prediction, which incurs only $\order(1)$ computational cost. For subsequent epochs, using $|\Pi_m^{\text{out}}| = \order(K \log T)$ together with \pref{eq:bound4Im}, we obtain, for any $m \ge 2$,
\begin{align*}
    |\Pi_{m}| & = |\Pi_{m}^{\text{in}}| + |\Pi_{m}^{\text{out}}| = N_{m}+ |\Pi_{m}^{\text{out}}|\leq \order \rbr{  \rbr{s_{m}}^{1/3} |I_{m}|^{2/3}+ K \log T  }  \leq \otil \rbr{ (1+C)^{1/3}  }.
\end{align*}
where the second inequality uses a similar analysis in \pref{lem:inner_error_Cal2_approach1} to bound $\rbr{s_{m}}^{1/3} |I_{m}|^{2/3}$ and uses the choice of $K$ to bound the remaining term.

The following proof conditions on the nice event $\calE_1$ defined in \pref{def:nice_event_Cal2_approach1}.
\pref{corr:decomp_Cal2_into_block} shows that $\Caldist_2 \leq \sum_{m=1}^M \Caldist_2^{(m)}$, where $\Caldist_2^{(m)}= \sum_{J \in \Pi_m: n_{m,J}>0} \frac{1}{n_{m,J}} \abr{ \sum_{t \in \calT_m: p_t =z_{J}} (y_t -z_{J}) }^2$, and then it suffices to bound each $\Caldist_2^{(m)}$.  \pref{corr:Cal2m_bound_approach1} gives that for any $m \geq 2$,
\begin{align*}
     \Caldist_2^{(m)}  &\leq  \order \rbr{ |\Pi_m| \iota + \sum_{ J \in \Pi_m }  n_{m,J}  \Delta_J^2   }  \\
     &=  \order \rbr{ N_{m} \iota + \sum_{ J \in \Pi_m^{\text{in}} }  n_{m,J}  \Delta_J^2   } +\order \rbr{ |\Pi_m^{\text{out}}| \iota + \sum_{ J \in \Pi_m^{\text{out}} }  n_{m,J}  \Delta_J^2   }.
\end{align*}

Using the facts that $\sum_{J \in \Pi_m^{\text{in}}} n_{m,J} \leq s_m$ for any $m$, $\Delta_J = \frac{|I_{m}|}{N_{m}}$ for any $J \in \Pi_{m}^{\text{in}}$ and $|\Pi_{m}^{\text{in}}|=N_{m}$ to show
\begin{align*}
 N_{m}\iota + \sum_{ J \in \Pi_m^{\text{in}} }  n_{m,J}  \Delta_J^2  
& \leq \order   \rbr{N_{m}\iota  +s_m \frac{|I_{m}|^2}{N_{m}^2} } .
\end{align*}

On the other hand, \pref{lem:outer_bound_Cal2_approach1} gives
\begin{align*}
     |\Pi_m^{\text{out}}| \iota + \sum_{ J \in \Pi_m^{\text{out}} }  n_{m,J}  \Delta_J^2  \leq \order  \rbr{\iota \log T + \iota^{2/3}  (1+C)^{1/3}  }.
\end{align*}

Using $M=\order(\log T)$ and bounding $\Caldist_2^{(1)} = \order(1)$, we have 
\begin{align*}
    \Caldist_2 &\leq \Caldist_2^{(1)} +\sum_{m=2}^M \Caldist_2^{(m)} \\
    &\leq \order \rbr{\iota \log^2 T+ \log^2 T\iota^{2/3}  (1+C)^{1/3} +\sum_{m=2}^M \rbr{ N_{m}\iota  +s_m \frac{|I_{m}|^2}{N_{m}^2} }  } \\
      &\leq \order \rbr{\iota \log^2 T+ \log^2 T\iota^{2/3}  (1+C)^{1/3}    + \iota  \log  T} \\
        &\leq \order \rbr{\iota \log^2 T+ \log^2 T\iota^{2/3}  C^{1/3}   },
\end{align*}
where the third inequality uses \pref{lem:inner_error_Cal2_approach1}.

The proof is thus complete.

\section{Omitted Details for Achieving $\PCaldist_2 =\otil \big((1+C)^{1/3} \big)$} \label{app:PCal2}

Throughout this section, the algorithm of \citet{fishelson2025full} refers specifically to the algorithm in their paper that achieves the guarantee $\PCaldist_2 = \otil(T^{1/3})$.

For any $s \in \calZ_m$, let $ \ExtReg^{(m)}_s$ be the pseudo external regret of the $s$-th instance of OGD in epoch $m$, which is defined as
\begin{align}  \label{eq:pseudo_ext_reg}
 \ExtReg^{(m)}_s =   \sum_{t \in \calT_m}   \calP_t(s)  \loss(a_{s,t},y_t)   - \inf_{k \in  [0,1]} \sum_{t \in \calT_m}   \calP_t(s) \loss(k,y_t)  .
\end{align}

\subsection{Technical Lemmas and Nice Event Construction}

\begin{lemma}[Restatement of \pref{lem:bound_occupancy_PCal2}]
With probability at least $1-\delta/2$, for any $m \geq 2$, we have
\begin{align*}
\sum_{t \in \calT_m}   \sum_{s \in \calZ_m}  \calP_t(s) (a_{s,t}-\optp)^2 \leq \order \rbr{|\Pi_m|\log T +C_m + \iota },
\end{align*}
where $C_m = \sum_{t \in \calT_m} c_t$.
\end{lemma}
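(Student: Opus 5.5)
We fix an epoch $m\ge 2$ and condition on the history before epoch $m$, so that $\Pi_m$ and $\calZ_m$ are fixed. The proof rests on the fact that the squared loss has curvature. Write $\ell(a,y)=(a-y)^2$. For every round $t$ and action $a$,
\[
\E_t[\ell(a,y_t)]-\E_t[\ell(\optp,y_t)] = (a-q_t)^2-(\optp-q_t)^2 = (a-\optp)^2 + 2(a-\optp)(\optp-q_t) \ge (a-\optp)^2 - 2c_t .
\]
Now multiply by $\calP_t(s)$, which is $\calF_{t-1}$-measurable, and sum over $s\in\calZ_m$ and $t\in\calT_m$. Using $\sum_s\calP_t(s)=1$ gives
\[
\sum_{t\in\calT_m}\sum_{s}\calP_t(s)(a_{s,t}-\optp)^2 \le \sum_{t,s}\calP_t(s)\,\E_t\big[\ell(a_{s,t},y_t)-\ell(\optp,y_t)\big] + 2C_m .
\]
It then remains to replace the conditional expectations by their realizations and to bound the realized quantity by the regret of the OGD instances.

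\textbf{Step 1 (regret).} For each $s$, compare to the fixed comparator $k=\optp$ in the definition of $\ExtReg^{(m)}_s$ in \pref{eq:pseudo_ext_reg}. This gives $\sum_t \calP_t(s)\big(\ell(a_{s,t},y_t)-\ell(\optp,y_t)\big)\le \ExtReg^{(m)}_s$. Because the weighted squared losses are exp-concave/strongly convex, the OGD instance used by \citet{fishelson2025full} enjoys logarithmic regret, $\ExtReg^{(m)}_s=\order(\log T)$. Summing over the $|\calZ_m|=\order(|\Pi_m|)$ instances yields $\order(|\Pi_m|\log T)$. This is the first term of the lemma.

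\textbf{Step 2 (martingale, self-bounding).} Define
\[
X_t=\sum_s\calP_t(s)\Big(\E_t\big[\ell(a_{s,t},y_t)-\ell(\optp,y_t)\big]-\big(\ell(a_{s,t},y_t)-\ell(\optp,y_t)\big)\Big).
\]
This is a martingale difference sequence with $|X_t|\le 2$. Since $\ell(a,y)-\ell(\optp,y)=(a-\optp)(a+\optp-2y)$, we get $|X_t|\lesssim \sum_s\calP_t(s)|a_{s,t}-\optp|$. Hence, by Jensen, $\E_t[X_t^2]\lesssim \sum_s \calP_t(s)(a_{s,t}-\optp)^2$. Let $S=\sum_{t,s}\calP_t(s)(a_{s,t}-\optp)^2$. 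Freedman's inequality (\pref{lem:freedman}), applied with a fixed $\lambda$ (say $\lambda=1/4$ scaled to the bound $B$), gives $\sum_t X_t\le \lambda\, c\, S + \iota/\lambda$ with probability $1-\delta/(2\log T)$. Combining this with the display above and Step 1,
\[
S\le \order(|\Pi_m|\log T) + 2C_m + \tfrac12 S + \order(\iota).
\]
Rearranging gives the claim, and a union bound over the $\order(\log T)$ epochs finishes the proof.

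\textbf{Main obstacle.} The delicate point is Step 1. We need the per-instance regret of the OGD subroutine against the comparator $\optp$ to be $\order(\log T)$, and not $\order(\sqrt{T})$, even though the losses are weighted by the time-varying $\calP_t(s)$. I would first try to invoke the guarantee stated in \citet{fishelson2025full}: their analysis of $\FSR$ already sums exactly these per-instance regrets into an $\order(|\Pi|\log T)$ bound, and the unconstrained comparator $\inf_{k\in[0,1]}$ in \pref{eq:pseudo_ext_reg} includes $k=\optp$. If only a bound containing $\Delta_{J_{s,t}}$ rounding terms is available, I would bound the regret of the continuous actions $a_{s,t}$ directly; since the lemma is stated in terms of $a_{s,t}$ and not the rounded points, the rounding does not enter. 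A second subtlety is that the variance bound in Step 2 needs $\calP_t(s)$ to be $\calF_{t-1}$-measurable. This holds because $\calP_t$ is the stationary distribution computed before $y_t$ is revealed.
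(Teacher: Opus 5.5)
Your proposal is correct and follows essentially the same argument as the paper. It uses logarithmic OGD regret against the fixed comparator $\optp$ (cited from \citet{fishelson2025full}), a Freedman bound on the martingale term $2\sum_{t}\big(\sum_{s}\calP_t(s)(a_{s,t}-\optp)\big)(y_t-q_t)$ whose conditional variance is self-bounded by the target quantity via Jensen, and a final rearrangement. The only difference is minor: you bound the non-stationarity term directly by $2C_m$ using $|a_{s,t}-\optp|\le 1$, whereas the paper routes it through Cauchy--Schwarz and AM--GM to reach the same $\order(C_m)$.
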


\begin{proof}
For shorthand, we denote
\[
v_t := \sum_{s \in \calZ_m}  \calP_t(s) (a_{s,t}-\optp),\quad \Gamma_m =  \sum_{t \in \calT_m}   \sum_{s \in \calZ_m}  \calP_t(s) (a_{s,t}-\optp)^2.
\]

Our objective is to bound $\Gamma_m$.
For any $s \in \calZ_m$, \citet[Lemma 24]{fishelson2025full} shows that the external regret guarantee of OGD ensures
\begin{align*}
\sum_{t \in \calT_m} \calP_t(s) \rbr{ (a_{s,t}-y_t)^2 - (\optp-y_t)^2  } \leq \ExtReg^{(m)}_s \leq \order \rbr{\log T}.
\end{align*}

We also have
\begin{align*}
(a_{s,t}-y_t)^2 - (\optp-y_t)^2  = (a_{s,t}-\optp)^2  -2(a_{s,t}-\optp)(y_t-\optp) .
\end{align*}

Multiplying the above equality by $\calP_t(s)$, and then summing over all $s \in \calZ_m$ and all $t \in \calT_m$, yields
\begin{align*}
\Gamma_m
& =\sum_{t \in \calT_m}   \sum_{s \in \calZ_m}  \calP_t(s) (a_{s,t}-\optp)^2 \\
&\leq \sum_{s \in \calZ_m} \sum_{t \in \calT_m} \calP_t(s) \rbr{ (a_{s,t}-y_t)^2 - (\optp-y_t)^2  }    +  2 \abr{ \sum_{t \in \calT_m} v_t  (y_t-\optp) } \\
&\leq \sum_{s \in \calZ_m} \sum_{t \in \calT_m} \calP_t(s) \rbr{ (a_{s,t}-y_t)^2 - (\optp-y_t)^2  }    +  2 \abr{ \sum_{t \in \calT_m} v_t  (q_t-\optp) }  +  2 \abr{ \sum_{t \in \calT_m} v_t   (q_t-y_t) } \\
&\leq \order (|\Pi_m|\log T)   +  2 \abr{ \sum_{t \in \calT_m} v_t  (q_t-\optp) }    +  2 \abr{ \sum_{t \in \calT_m} v_t  (y_t-q_t) } .
\end{align*}

Let $Y_t = v_t  (y_t-q_t) $. As $v_t$ is $\calF_{t-1}$-measurable, $\E_t[Y_t] = v_t\E_t [y_t-q_t] =0$. In addition, $|v_t|\leq 1, |y_t-q_t| \in [0,1]$ gives $|Y_t| \leq 1$. One can further show that 
\begin{align*}
   \E_t \sbr{ Y_t^2 }    = v_t^2   \E_t [(y_t-q_t)^2] \leq  v_t^2 \leq      \sum_{s \in \calZ_m } \calP_t(s) (a_{s,t}-\optp)^2,
\end{align*}
where the last inequality uses the Cauchy–Schwarz to show
\begin{align} \label{eq:vtsquare_bound}
   v_t^2= 
   \rbr{\sum_{s \in \calZ_m }\sqrt{ \calP_t(s)} \sqrt{ \calP_t(s)} (a_{s,t}-\optp)}^2 \leq      \sum_{s \in \calZ_m } \calP_t(s) (a_{s,t}-\optp)^2 .
\end{align}

By Freedman's inequality (see \pref{lem:freedman}) and union bound over all $m$, with probability at least $1-\delta/2$, for all $m$,
\begin{align*}
    \abr{ \sum_{t \in \calT_m} v_t  (q_t-y_t) }  \leq  \frac{1}{8}    \Gamma_m+ 8 \iota.
\end{align*}

On the other hand, we have
\begin{align*}
\abr{ \sum_{t \in \calT_m} v_t (q_t-\optp) } &\leq \sum_{t \in \calT_m} c_t |v_t| \\
&\leq \sqrt{\sum_{t \in \calT_m} c_t  } \sqrt{\sum_{t \in \calT_m} c_t v_t^2} \\
&\leq \sqrt{\sum_{t \in \calT_m} c_t  } \sqrt{\sum_{t \in \calT_m}  v_t^2} \tag{$c_t \leq 1$} \\
 &\leq \sqrt{C_m } \sqrt{\Gamma_m } \tag{by \pref{eq:vtsquare_bound}} \\
  &\leq \frac{1}{8}\Gamma_m  +2 C_m \tag{AM-GM inequality}
\end{align*}

Combining the above results, we have
\begin{align*}
\Gamma_m \leq \order \rbr{|\Pi_m|\log T +C_m +\iota  }  + \frac{1}{2} \Gamma_m  .
\end{align*}

Rearranging the above gives the claimed bound.
\end{proof}

\begin{definition}[Nice event $\calE_2$] \label{def:nice_event_PCal2}
Let $\calE_2$ be the event that all high probability bounds in \pref{lem:bound_occupancy_PCal2} and \pref{lem:corruption_analysis_Cal2_Approach1} hold simultaneously.
\end{definition}

\begin{lemma} \label{lem:bound4_DeltaJ_abs_gap_Cal2}
Suppose that $\calE_2$ holds where $\calE_2$ is defined in \pref{def:nice_event_PCal2} and $K=\big\lceil (1+C)^{1/3} \iota^{-1} \big \rceil$.    
For any $m \geq 2$, any $J \in \Pi_{m}^{out}$, and any $a \in J$, we have $\Delta_J \leq \frac{|a-\optp|}{K}$.
\end{lemma}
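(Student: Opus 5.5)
The argument is purely geometric: on $\calE_2$ we use only the inclusion $\optp\in[\wh{y}_m-r_m,\wh{y}_m+r_m]$ from \pref{lem:corruption_analysis_Cal2_Approach1} and the band structure in \pref{eq:outer_division_Cal2}. By symmetry it suffices to treat a nonempty cell $J\in\calL_{m,q}$ for some $q\in\{0,\ldots,Q\}$, meaning $J$ is one of the (at most) $K$ equal pieces of $L_{m,q}$. The two steps are an upper bound on $\Delta_J$ in terms of the band index, and a matching lower bound on the distance from any $a\in J$ to $\optp$.

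\textbf{Bounding the width.} Before intersecting with $[0,1]$, the band $L_{m,q}$ has length $(2^{q+1}-1)r_m-(2^q-1)r_m=2^q r_m$, so intersecting can only shorten it. Since $\UnifPart(L_{m,q},K)$ splits it into $K$ equal parts, every cell satisfies $\Delta_J\le 2^q r_m/K$. (If $K=1$ this is trivially true as well.)

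\textbf{Bounding the distance.} On $\calE_2$ we have $\optp\ge \wh{y}_m-r_m$. Any $a\in L_{m,q}$ satisfies $a< a_m-(2^q-1)r_m$. Since $a_m\le \wh{y}_m-2r_m$ whenever the clip at $0$ is inactive, this gives
\[
\optp-a> \wh{y}_m-r_m-\wh{y}_m+2r_m+(2^q-1)r_m=2^q r_m .
\]
Hence $|a-\optp|\ge 2^q r_m\ge K\Delta_J$, which is exactly the claim.

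\textbf{Main obstacle: the boundary case.} The delicate point is when $a_m=\max\{\wh{y}_m-2r_m,0\}=0$ because of the intersection with $[0,1]$. In that case the left outer region $[0,a_m)$ is empty, so every $L_{m,q}\cap[0,1]$ is empty and there is no nonempty $J$ to consider; the analogous statement holds for $b_m=1$ on the right. I would record this explicitly, so that whenever a left band is nonempty we may use $a_m=\wh{y}_m-2r_m$ exactly. The right bands $R_{m,q}$ are handled by the mirror argument using $\optp\le\wh{y}_m+r_m$ and $b_m=\wh{y}_m+2r_m$. The specific value of $K$ plays no role beyond $K\ge 1$.
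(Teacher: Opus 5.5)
Your proof is correct and follows the paper's argument: $\Delta_J \le |L_{m,q}|/K \le 2^q r_m/K$, and $\optp \ge \wh{y}_m - r_m$ gives $\optp - a \ge 2^q r_m$, with the right bands handled by symmetry. You also treat the clipped case $a_m = 0$ explicitly (all left bands are then empty); the paper's proof of this lemma skips it by writing $a_m = \wh{y}_m - 2r_m$ directly, though it does handle the same case explicitly in the $\PKL$ analogue.
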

\begin{proof}
Fix an epoch $m \geq $ and fix an index $q$. Recall that $\Pi_{m}^{\text{out}}= \{\calL_{m,q}\}_{q=0}^{Q} \cup \{\calR_{m,q}\}_{q=0}^{Q}$. We first show that for any $J \in \calL_{m,q}$ and any $a \in J$, $\Delta_J \leq \frac{|a-\optp|}{K}$ holds.
As $\optp \in \sbr{ \wh{y}_m - r_m,\wh{y}_m +r_m  } $ by \pref{lem:corruption_analysis_Cal2_Approach1}, we have $\optp \geq \wh{y}_m - r_m$. Since $J \in \calL_{m,q}$, $a \in J$, and $L_{m,q} = [\wh{y}_m - 2r_m -(2^{q+1}-1) r_m , \wh{y}_m - 2r_m -(2^{q}-1) r_m ) \cap [0,1]$, we have
\[
a \leq \wh{y}_m - 2r_m -(2^{q}-1) r_m .
\]

Then, one can show that
\begin{align*}
\optp -a \geq     \wh{y}_m - r_m  - \rbr{ \wh{y}_m - 2r_m -(2^{q}-1) r_m }     \geq 2^q r_m \geq |L_{m,q}|.
\end{align*}

Therefore, we have
\begin{align*}
\Delta_J \leq \frac{ |L_{m,q}|}{K} \leq \frac{ \optp -a }{K} = \frac{ |\optp -a |}{K}.
\end{align*}

One can symmetrically repeat this argument to show for any $J \in \calR_{m,q}$ and any $a \in J$, $\Delta_J \leq \frac{|a-\optp|}{K}$ holds. Finally, repeating this argument for all $q,m$ completes the proof.
\end{proof}

\begin{lemma} \label{lem:two_points_dist}
Let $\loss(s,y_t)=(s-y_t)^2$ where $y_t \in \{0,1\}$.
For any epoch $m \geq 2$, any interval $J \in \Pi_m$ and any point $q \in J$, we have $\E_{s \sim H(q)}[\loss(s,y_t)]-\loss(q,y_t) \leq \Delta_J^2/4$ for all $t \in \calT_m$.
\end{lemma}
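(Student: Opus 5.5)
Since $H(q)$ is the randomized rounding of $q$ to the two endpoints of the interval $J$ containing it, the claim reduces to a one-dimensional computation. Write the endpoints of $J$ as $l \le u$, so that $u-l = \Delta_J$. I will use the standard form of $H$ from \citet{fishelson2025full}: $H(q)$ puts mass $\lambda = \frac{u-q}{u-l}$ on $l$ and mass $1-\lambda = \frac{q-l}{u-l}$ on $u$. (If $l=u$ the claim is trivial.) The key property is unbiasedness, $\E_{s\sim H(q)}[s] = \lambda l + (1-\lambda)u = q$.

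For any fixed $y_t\in\{0,1\}$, expand the squared loss:
\[
\E_{s\sim H(q)}[(s-y_t)^2] - (q-y_t)^2 = \E_{s\sim H(q)}[s^2] - q^2 - 2y_t\rbr{\E_{s\sim H(q)}[s]-q} = \Variance_{s\sim H(q)}(s).
\]
The cross term vanishes by unbiasedness, so the gap is independent of $y_t$ and equals the variance of $H(q)$. That variance is
\[
\lambda(1-\lambda)(u-l)^2 = (u-q)(q-l),
\]
which is at most $\rbr{\frac{u-l}{2}}^2 = \Delta_J^2/4$ by AM--GM, with equality when $q$ is the midpoint. This bound holds for every $t\in\calT_m$, every epoch $m$, and every $J\in\Pi_m$, since none of it depends on the partition's structure beyond $J$ being an interval of length $\Delta_J$.

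The only point needing care is boundary conventions. The half-open intervals from $\UnifPart$, and the $\cap[0,1]$ truncations in the outer bands, might seem to make the supporting endpoints ambiguous. However, the endpoints of $J$ belong to $\calZ_m$ by definition, and the argument only uses that the support is $\{\inf J, \sup J\}$ and that $q$ lies between them. No genuine obstacle arises.
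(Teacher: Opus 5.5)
Your proposal is correct and is essentially the paper's proof: both use that the two-point rounding $H(q)$ is unbiased, so the cross term $-2y_t(\E_{s\sim H(q)}[s]-q)$ vanishes and the gap equals $\E_{s\sim H(q)}[s^2]-q^2=(q-a)(b-q)\le \Delta_J^2/4$ by AM--GM. Calling this quantity the variance of $H(q)$ is just a cleaner name for the paper's direct computation of $\E_{s\sim H(q)}[s^2]-q^2$.
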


\begin{proof}
Suppose that $J=[a,b]$. By the rounding procedure in the algorithm of \citet{fishelson2025full},
\begin{align*}
   H(q)(a) =   \frac{b-q}{b-a},\qquad    H(q)(b) = \frac{q-a}{b-a}.
\end{align*}

We have $\E_{s \sim H(q)}[s]=q$.
On the one hand, as $y_t$ is independent of $s \sim H(q)$, we have
\begin{align*}
    \E_{s \sim H(q)}[\loss(s,y_t)] &= \E_{s \sim H(q)} \sbr{ (s-y_t)^2 } \\
    &= \E_{s \sim H(q)} \sbr{s^2 -2sy_t +y_t^2}\\
    & = \E_{s \sim H(q)}  \sbr{ s^2  } -2qy_t +y_t^2 .
\end{align*}

Then, we can show that
\begin{align*}
  \E_{s \sim H(q)}[\loss(s,y_t)] -   \loss(q,y_t) &  = \E_{s \sim H(q)}  \sbr{ s^2  } - q^2 \\
  &= \frac{b-q}{b-a} a^2 +  \frac{q-a}{b-a} b^2 -q^2  \\
  &= (q-a)(b-q).
\end{align*}

We further show that
\begin{align*}
    (q-a)(b-q)  \leq \rbr{ \frac{(q-a)+(b-q)}{2}  }^2 =\frac{(b-a)^2}{4}=\frac{\Delta_J^2}{4}.
\end{align*}

The claimed result thus follows.
\end{proof}

\subsection{Main Results for $\PCaldist_2$ and $\Caldist_2$} 

\begin{theorem} \label{thm:PCal2_bound}
With $\Alg$ instantiated by the algorithm of \citet{fishelson2025full}, $N_m=\big \lceil (s_m |I_m|^2/ \iota)^{1/3} \big \rceil$, and $K=\big \lceil (1+C)^{1/3} \iota^{-1} \big \rceil$, \pref{alg:framework_non_uniform} ensures that with probability at least $1-\delta$,
$    \PCaldist_2  \leq \order  \rbr{\iota^3   + C^{1/3}  \iota^2  }.$
Moreover, the computational complexity per round is $\otil \big((1+C)^{2/3}\big)$.
\end{theorem}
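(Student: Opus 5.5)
Condition on the nice event $\calE_2$ (\pref{def:nice_event_PCal2}), which holds with probability at least $1-\delta$. By \pref{corr:decomp_Cal2_into_block}, $\PCaldist_2 \le \sum_m \PCaldist_2^{(m)}$. The first epoch contributes $\order(1)$, and there are $\order(\log T)$ epochs. Fix an epoch $m\ge 2$. As in \citet{fishelson2025full}, $\PCaldist_2^{(m)}$ equals the pseudo swap regret $\FSR^{(m)}$ for the squared loss.

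The swap regret decomposes over grid points $s\in\calZ_m$. Each term is the OGD pseudo external regret $\ExtReg^{(m)}_s=\order(\log T)$ plus the rounding overhead. By \pref{lem:two_points_dist}, this overhead is $\sum_t\sum_s \calP_t(s)\Delta^2_{J_{s,t}}/4$. Hence
\[
\FSR^{(m)} \le \order\Big(|\Pi_m|\log T + \sum_{t\in\calT_m}\sum_{s\in\calZ_m}\calP_t(s)\Delta_{J_{s,t}}^2\Big).
\]
The stationarity of $\calP_t$ is what lets $\E_{p\sim\calP_t}$ of the loss equal $\sum_s\calP_t(s)\E_{H(a_{s,t})}\loss$. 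I then split the sum according to whether $J_{s,t}\in\Pi_m^{\text{in}}$ or $\Pi_m^{\text{out}}$.

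\textbf{Inner part.} Here $\Delta_J=|I_m|/N_m$ and $\sum_{t,s}\calP_t(s)\le s_m$. So the inner part together with $N_m\log T$ is at most $\order(N_m\log T + s_m|I_m|^2/N_m^2)$. With $N_m=\lceil (s_m|I_m|^2/\iota)^{1/3}\rceil$ this becomes $\order(\iota+ \log T\,\iota^{-1/3}\cdot s_m^{1/3}|I_m|^{2/3})$, roughly. The three-case argument of \pref{lem:inner_error_Cal2_approach1}, which uses the width bound \pref{eq:bound4Im}, sums this over epochs to $\otil(C^{1/3}+\iota^{\cdot})$. The $\iota$ powers are loose but within the claimed $\iota^3+C^{1/3}\iota^2$.

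\textbf{Outer part.} By \pref{lem:bound4_DeltaJ_abs_gap_Cal2}, applied with $a=a_{s,t}\in J_{s,t}$, we have $\Delta_{J_{s,t}}\le |a_{s,t}-\optp|/K$. So the outer sum is at most $K^{-2}\sum_{t,s}\calP_t(s)(a_{s,t}-\optp)^2$. By \pref{lem:bound_occupancy_PCal2} this is $\order((|\Pi_m|\log T + C_m+\iota)/K^2)$. Together with $|\Pi_m^{\text{out}}|\log T=\order(K\log^2 T)$, the outer contribution is $\order(K\log^2T + (N_m\log T+C_m+\iota)/K^2)$.

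The main obstacle is that $|\Pi_m|$ contains $N_m$ as well as the outer count $K$. This creates a circular-looking term $N_m/K^2$, and the choice $K=\lceil(1+C)^{1/3}\iota^{-1}\rceil$ (rather than $\iota^{-1/3}$) is presumably made to absorb it. I would bound $N_m\le \otil((1+C)^{1/3})$ as in the complexity computation of \pref{thm:cal2_approach1}, so that $N_m/K^2\le\otil(1)$. Summing over epochs, $\sum_m C_m\le C$ gives $C/K^2\le \iota^2 C^{1/3}$. Also $K\log^2T\le \otil(C^{1/3}\iota^{-1}+1)$. Collecting all terms with care for the $\iota$ and $\log T$ powers (absorbing $\log T\le\iota$) gives $\order(\iota^3+C^{1/3}\iota^2)$.

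\textbf{Complexity.} Computing a stationary distribution of a $|\calZ_m|\times|\calZ_m|$ matrix costs $\otil(|\Pi_m|^2)=\otil((1+C)^{2/3})$ per round.
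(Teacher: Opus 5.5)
Your proposal is correct and follows essentially the same route as the paper. Both write the swap regret as OGD external regret plus a rounding overhead bounded via \pref{lem:two_points_dist}, split into inner and outer parts, handle the outer part with \pref{lem:bound4_DeltaJ_abs_gap_Cal2} and \pref{lem:bound_occupancy_PCal2}, and handle the inner part with \pref{lem:inner_error_Cal2_approach1}.

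The one place you overcomplicate things is the ``circular'' term. The paper just uses $K\ge 1$ to bound $(|\Pi_m|\log T+\iota)/K^2\le |\Pi_m|\log T+\iota$, and both of these terms are already present. Also, the $\iota^{-1}$ in $K$ is not there to absorb $N_m/K^2$. It balances $|\Pi_m^{\text{out}}|\iota=\order(K\iota^2)$, summed over epochs, against $C/K^2$, which here carries no band-count $\log T$ factor (unlike in \pref{thm:cal2_approach1}).
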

\begin{proof}
The computational complexity analysis follows that of \pref{thm:cal2_approach1}. The main difference is that the base algorithm requires computing a stationary distribution of $\calQ_t$. Constructing $\calQ_t$ costs $\order(|\Pi_m|^2)$. Computing its stationary distribution via power iteration requires $\tilde{O}(|\Pi_m|)$ per iteration because each row of $\calQ_t$ has at most two non-zero entries, and a logarithmic number of iterations suffices. Thus, the total cost is $\tilde{O}(|\Pi_m|^2)$. Applying a similar argument as in \pref{thm:cal2_approach1} yields the claimed result.

The following proof conditions on the nice event $\calE_2$ where $\calE_2$ is defined in \pref{def:nice_event_PCal2}.
Let $\bar{\rho}_{m,p} = \frac{ \sum_{t \in \calT_m} \calP_t(p) y_t}{\sum_{t \in \calT_m} \calP_t(p)}$.
\pref{corr:decomp_Cal2_into_block} shows that $\PCaldist_2 \leq \sum_{m=1}^M \PCaldist_2^{(m)} $, where $\PCaldist_2^{(m)}:= \sum_{t \in \calT_m} \E_{p \sim \calP_t}\big[ \rbr{\bar{\rho}_{m,p}-p}^2 \big]$. Then, it suffices to bound each $\PCaldist_2^{(m)}$. By \citep[Lemma 26]{fishelson2025full}, choosing squared loss $\loss(a,x) = (a-x)^2$ for $\FSR^{(m)}$ gives that $\PCaldist_2^{(m)}=\FSR^{(m)}$. For shorthand, let $\beta_{s,t}= \E_{a \sim H(a_{s,t})}[\loss(a,y_t)]-\loss(a_{s,t},y_t)$. For any epoch $m \geq 2$, we have
\begin{align*}
\FSR^{(m)} & = \sup_{\phi:  [0,1] \to  [0,1]} \sum_{t \in \calT_m} \E_{s \sim \calP_t} \sbr{ \rbr{\loss(s,y_t)  - \loss(\phi(s),y_t)} }\\
& = \sup_{\phi:  [0,1] \to  [0,1]} \sum_{t \in \calT_m} \sum_{s \in \calZ_m} \calP_t(s) \rbr{\loss(s,y_t)  - \loss(\phi(s),y_t)} \\
& =\sum_{s \in \calZ_m } \sup_{k \in [0,1]} \sum_{t \in \calT_m}  \calP_t(s)\rbr{\loss(s,y_t)  - \loss(k,y_t)}\\
& =\sum_{s \in \calZ_m } \sum_{t \in \calT_m}  \calP_t(s)\loss(s,y_t)  - \sum_{s \in \calZ_m }\inf_{k \in [0,1]} \sum_{t \in \calT_m}   \calP_t(s) \loss(k,y_t) \\
& =\sum_{s \in \calZ_m } \sum_{t \in \calT_m}  \sum_{s' \in \calZ_m } \rbr{\calP_t(s') \calQ_t(s',s) }\loss(s,y_t)  - \sum_{s \in \calZ_m }\inf_{k \in [0,1]} \sum_{t \in \calT_m}   \calP_t(s) \loss(k,y_t) \\
& =  \sum_{t \in \calT_m}  \sum_{s' \in \calZ_m }  \calP_t(s')   \E_{a \sim H(a_{s',t})}[\loss(a,y_t)]  - \sum_{s \in \calZ_m}\inf_{k \in [0,1]} \sum_{t \in \calT_m}   \calP_t(s) \loss(k,y_t) \\
&=  \sum_{t \in \calT_m}  \sum_{s' \in \calZ_m }  \calP_t(s')   \rbr{\loss(a_{s',t},y_t) + \beta_{s',t}   }  - \sum_{s \in \calZ_m }\inf_{k \in  [0,1]} \sum_{t \in \calT_m}   \calP_t(s) \loss(k,y_t)  \\
&=  \sum_{t \in \calT_m}  \sum_{s' \in \calZ_m }  \calP_t(s')   \beta_{s',t}       + \sum_{s \in \calZ_m  } \ExtReg^{(m)}_s \\
&\leq  \sum_{t \in \calT_m}  \sum_{s \in \calZ_m }  \calP_t(s)   \beta_{s,t}  +  \order \rbr{ |\calZ_m| \log T}  \\
&\leq  \sum_{t \in \calT_m}  \sum_{s \in \calZ_m }  \calP_t(s)   \beta_{s,t}  +  \order \rbr{ |\Pi_m| \log T}, \numberthis{} \label{eq:block_swap_reg_bound_step1}
\end{align*}
where the first inequality follows from the regret bound of OGD given by \citep[Lemma 24]{fishelson2025full}, and the last inequality uses the fact that $|\calZ_m|=\order (|\Pi_m|)$.

Recall that $J_{s,t} \in \Pi_{m}$ is the interval such that $a_{s,t} \in J_{s,t}$.
In the following, we focus on $m\geq 2$.
\begin{align*}
 & \sum_{t \in \calT_m}  \sum_{s \in \calZ_m }  \calP_t(s)   \beta_{s,t} \\
 & \leq   \sum_{t \in \calT_m}  \sum_{s \in \calZ_m }  \calP_t(s) \frac{\Delta^2_{J_{s,t}} }{4}  \\
& = \sum_{t \in \calT_m}  \sum_{s \in \calZ_m }  \calP_t(s)  \frac{\Delta_{J_{s,t}}^2  }{4} \Ind{J_{s,t}  \in \Pi_m^{\text{out}} } +\sum_{t \in \calT_m}  \sum_{s \in \calZ_m }  \calP_t(s)  \frac{\Delta_{J_{s,t}}^2  }{4}  \Ind{J_{s,t}  \in \Pi_m^{\text{in}} } \\
& = \sum_{t \in \calT_m}  \sum_{s \in \calZ_m }  \calP_t(s)  \frac{\Delta_{J_{s,t}}^2  }{4} \Ind{J_{s,t}  \in \Pi_m^{\text{out}} } +\frac{1 }{4}\sum_{t \in \calT_m}  \sum_{s \in \calZ_m }  \calP_t(s)  \frac{ |I_{m}|^2  }{ N_{m}^2 }  \Ind{J_{s,t}  \in \Pi_m^{\text{in}} } \\
& \leq \sum_{t \in \calT_m}  \sum_{s \in \calZ_m }  \calP_t(s)  \frac{\Delta_{J_{s,t}}^2  }{4} \Ind{J_{s,t}  \in \Pi_m^{\text{out}} } + \frac{ s_m |I_{m}|^2  }{ 4N_{m}^2 }, \numberthis{} \label{eq:sum_pt_betast_bound}
\end{align*}
where the first inequality uses \pref{lem:two_points_dist} and the second equality follows from the fact that for any interval $J \in \Pi_m^{\text{in}}$, $\Delta_J = \frac{|I_{m}|}{N_{m}}$ holds.

Recall that $C_m = \sum_{t \in \calT_m} c_t$.
Then, we show that
\begin{align*}
& \sum_{s \in \calZ_m }     \sum_{t \in \calT_m}  \calP_t(s)  \Delta_{J_{s,t}}^2 \Ind{J_{s,t}  \in \Pi_m^{\text{out}} } \\
&\leq \frac{1}{K^2}    \sum_{s \in \calZ_m }   \sum_{t \in \calT_m}   \calP_t(s) (a_{s,t}-\optp)^2 \\
&\leq \order \rbr{\frac{|\Pi_m|\log T +C_m +\iota }{K^2}  } \\
&\leq \order \rbr{|\Pi_m|\log T +\iota+\frac{C_m}{K^2}  }, \numberthis{} \label{eq:block_swap_reg_bound_step3}
\end{align*}
where the first inequality applies \pref{lem:bound4_DeltaJ_abs_gap_Cal2}, the second inequality follows from \pref{lem:bound_occupancy_PCal2}, and the third inequality simply bounds $K \geq 1$.

Putting \pref{eq:sum_pt_betast_bound} and \pref{eq:block_swap_reg_bound_step3} into \pref{eq:block_swap_reg_bound_step1}, we have that
\begin{align*}
\PCaldist_2^{(m)} &= \FSR^{(m)} \\
&\leq \order \rbr{ \iota  + |\Pi_m| \iota       + \frac{ s_m |I_{m}|^2  }{ 4N_{m}^2 } +  \frac{C_m }{K^2} } \numberthis{}  \label{eq:absorbed_Pim_iota}\\
&= \order \rbr{ \iota +\frac{ s_m |I_{m}|^2  }{ 4N_{m}^2 }+ N_{m}  \iota   + |\Pi_m^{\text{out}}| \iota   +  \frac{C_m }{K^2}     } \\
&= \order \rbr{ \iota + \frac{ s_m |I_{m}|^2  }{ 4N_{m}^2 }  + N_{m}  \iota     + K \iota^2 + \frac{C_m }{K^2}    } \\
&\leq \order \rbr{\iota^2 +\frac{ s_m |I_{m}|^2  }{ 4N_{m}^2 }  + N_{m} \iota      +  (1+C)^{1/3} \iota   + \frac{C_m \iota^2 }{(1+C)^{2/3}}    },
\end{align*}
where the first equality follows from \citep[Lemma 26]{fishelson2025full}, the second equality uses $|\Pi_m|=|\Pi_m^{\text{in}}| +|\Pi_m^{\text{out}}|$ together with $N_{m} =|\Pi_m^{\text{in}}|$, the third equality follows from $|\Pi_m^{\text{out}}|= \order (K \log T) \leq  \order (K \iota)$, and the last inequality uses the choice of $K$.

On the one hand, applying \pref{lem:inner_error_Cal2_approach1}, we can show that
\[
\sum_{m=2}^M   \rbr{\frac{|I_{m}|^2}{N_{m}^2} s_m+N_{m} \iota  }  \leq \order \rbr{   \iota^{2/3} C^{1/3}   + \iota \log  T }.
\]

On the other hand, using $M=\order(\log T)$, summing over all epochs $m$, and bounding $\PCaldist_2^{(1)} = \order(1)$, we have
\begin{align*}
 \PCaldist_2 & \leq \PCaldist_2^{(1)} + \sum_{m=2}^M \PCaldist_2^{(m)}\\
 &\leq \PCaldist_2^{(1)} + \sum_{m=2}^M \order \rbr{   \iota^2  + (1+C)^{1/3}  \iota  + \frac{C_m \iota^2 }{(1+C)^{2/3}}   }   +\order\rbr{  \iota^{2/3} C^{1/3}   + \iota \log  T}\\
& \leq \order  \rbr{\iota^2   \log T+ (1+C)^{1/3}  \iota^2 + \iota^{2/3} C^{1/3}  }\\
& \leq \order  \rbr{\iota^3   + C^{1/3}  \iota^2  }.
\end{align*}

The proof is thus complete.
\end{proof}

\begin{corollary} \label{corr:Cal2_bound_implied_by_PCal2}
Under the same setting of \pref{thm:PCal2_bound}, with probability at least $1-\delta$,
\begin{align*}
\Caldist_2  \leq \order  \rbr{\iota^3   + C^{1/3}  \iota^2  }.
\end{align*}  
\end{corollary}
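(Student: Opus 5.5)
The plan is to reduce $\Caldist_2$ to $\PCaldist_2$ and then invoke \pref{thm:PCal2_bound}. The reduction is the result of \citet{luo2025simultaneous} cited in \pref{sec:pre}, which says that minimizing $\Caldist_2$ can be achieved by minimizing $\PCaldist_2$. Concretely, I would use their high-probability comparison: for a forecaster that samples $p_t\sim\calP_t$ independently given $\calF_{t-1}$, with support contained in a fixed finite grid of size $G$, we have, with probability at least $1-\delta/2$,
\[
\Caldist_2 \le \order\big(\PCaldist_2 + G\,\iota\big).
\]
I would apply this per epoch rather than globally, since the grid $\calZ_m$ changes across epochs.

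\textbf{Step 1: decompose by epoch.} By \pref{corr:decomp_Cal2_into_block}, $\Caldist_2 \le \sum_m \Caldist_2^{(m)}$. Within epoch $m$, I condition on the history before the epoch. This fixes the grid $\calZ_m$, with $|\calZ_m| = \order(|\Pi_m|)$.

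\textbf{Step 2: compare $\Caldist_2^{(m)}$ with $\PCaldist_2^{(m)}$ on each grid point.} For each $z\in\calZ_m$, I would compare the empirical quantities $n_{m,z}$ and $\sum_{t\in\calT_m} \Ind{p_t=z}\, y_t$ with their conditional expectations $\sum_t \calP_t(z)$ and $\sum_t\calP_t(z)y_t$. The tool is Freedman's inequality (\pref{lem:freedman}) applied to the martingale differences $(\Ind{p_t=z}-\calP_t(z))(y_t-z)$. This yields
\[
\Big|\sum_t (\Ind{p_t=z}-\calP_t(z))(y_t-z)\Big| \le \order\Big(\sqrt{\iota \textstyle\sum_t\calP_t(z)}+\iota\Big).
\]
The same bound holds for $|n_{m,z}-\sum_t\calP_t(z)|$. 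Then the per-point contribution $\frac{1}{n_{m,z}}\big(\sum_t\Ind{p_t=z}(y_t-z)\big)^2$ splits into two cases.

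When $\sum_t\calP_t(z)\gtrsim\iota$, I have $n_{m,z}\asymp \sum_t\calP_t(z)$, so the contribution is at most
\[
\order\Big(\textstyle\sum_t\calP_t(z)(\bar\rho_{m,z}-z)^2+\iota\Big).
\]
When $\sum_t\calP_t(z)\lesssim\iota$, I have $n_{m,z}=\order(\iota)$, and the contribution is trivially $\order(\iota)$.

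Summing over $z$ gives, under a union bound over the $\order(\log T)$ epochs and poly$(T)$ grid points,
\[
\Caldist_2^{(m)} \le \order\big(\PCaldist_2^{(m)} + |\Pi_m|\iota\big).
\]
The main technical care is the case split: $n_{m,z}$ could be much smaller than its expectation, and the Freedman bound handles this in both regimes. This is the step I expect to be the only nonroutine one.

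\textbf{Step 3: sum over epochs.} The proof of \pref{thm:PCal2_bound} already bounds $\sum_m\big(\PCaldist_2^{(m)}+|\Pi_m|\iota\big)$. Indeed, the term $|\Pi_m|\iota$ appears explicitly in \pref{eq:absorbed_Pim_iota}, and it is bounded using $N_m$, $K$ and \pref{lem:inner_error_Cal2_approach1}. Hence the total is $\order(\iota^3+C^{1/3}\iota^2)$. Adding the first epoch at cost $\order(1)$ and adjusting $\delta$ by constants completes the proof.
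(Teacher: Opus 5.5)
Your proposal is correct and follows the paper's proof: the paper likewise establishes the per-epoch comparison $\Caldist_2^{(m)} \le \order\big(\PCaldist_2^{(m)} + |\Pi_m|\iota\big)$, absorbs the extra $|\Pi_m|\iota$ term into \pref{eq:absorbed_Pim_iota}, and takes a union bound over the $\order(\log T)$ epochs. The only difference is that the paper gets the per-epoch comparison by citing \citep[Theorem 3]{luo2025simultaneous}, using $|\calZ_m|=\order(|\Pi_m|)$ and $|\calZ_m|=\order(T)$; your Freedman-based case split re-derives this step instead, and it is sound because $p_t$ and $y_t$ are conditionally independent given $\calF_{t-1}$.
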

\begin{proof}
  By \citep[Theorem 3]{luo2025simultaneous}, for an epoch $m$, with probability at least $1-\delta/(2\log_2 T)$
    \[
    \Caldist_2^{(m)}  \leq  \order  \rbr{ \PCaldist_2^{(m)} +  |\calZ_m|\log(|\calZ_m|\log (T)/\delta) }  \leq  \order  \rbr{ \PCaldist_2^{(m)} +  |\Pi_m|\iota },
    \]
    where the last inequality uses the facts that $|\calZ_m|=\order(|\Pi_m|)$ and $|\calZ_m|=\order(T)$.

Then, the claimed bound is immediate since we can put the extra $|\Pi_m|\iota$ term into \pref{eq:absorbed_Pim_iota}, repeat the same analysis together with a union bound over all epochs.
\end{proof}

\section{Omitted Details for Achieving $\PKL=\otil((1+C)^{1/3})$}   \label{app:PKL}

\subsection{Proposed Algorithm for Achieving $\PKL=\otil((1+C)^{1/3})$}

To achieve $\PKL=\otil((1+C)^{1/3})$, we run \pref{alg:framework_non_uniform_PKLCal} with the input confidence $\delta \in (0,1)$, non-stationarity $C$, base algorithm instantiated by Algorithm 1 of \citep{luo2025simultaneous}, $\eta=1/(T+1)$, and $K$ is specified as: 
\begin{equation} \label{eq:Kmq_PKL}
    K =  \max \cbr{ \wt{K} , 2 } \quad \text{where}\quad \wt{K} =  \left \lceil \rbr{\frac{1+C}{\log^2 T}}^{1/3}   \right \rceil.
\end{equation}

A similar modification for \pref{alg:framework_non_uniform_PKLCal} is made for $N_m$, defined as:
\begin{equation} \label{eq:Nm_PKL}
    N_m = \max \cbr{\wt{N}_m , \left \lceil  \frac{2|I_m|}{\pi}   \right \rceil },\quad \text{where}\quad \wt{N}_m =\left \lceil \rbr{s_m}^{1/3} |I_{m}|^{2/3} \iota^{-1/3} \right \rceil.
\end{equation}

We again use $\Pi_m$ to denote the set of intervals, partitioned by \pref{alg:framework_non_uniform_PKLCal}.
We make these modifications for technical reasons to ensure $\Delta_J \leq \pi/2$ for all $J \in \Pi_m$. Moreover, taking maximum in \pref{eq:Kmq_PKL} and \pref{eq:Nm_PKL} does not hurt our analysis since $K \in \big[\wt{K},2\wt{K} \big]$ and $N_{m} \in \big[\wt{N}_m,2\wt{N}_m \big]$. We again use $Q=\lceil \log_2 T \rceil$.

Now, we review Algorithm 1 of \citep{luo2025simultaneous} in the context of our non-uniform partition. Let $\psi(z)=\sin^2(z/2)$. Given partition $\Pi_m$, the algorithm selects a set of discrete points, denoted by
\[
\calZ_m =  \cbr{  \psi(z) :z=\sup_{x \in J} x, J \in \Pi_m } \cup \{\eta,1-\eta\}.
\]

Each point $s \in \calZ_m$ is associated with an instance of Exponentially Weighted Online Optimization (EWOO) \citep{hazan2007logarithmic}. At each round $t$, each EWOO instance $s \in \calZ_m$ outputs an action $a_{s,t} \in [0,1]$. For each $s$, let $J_{s,t} \in \Pi_m$ be the interval such that $\theta(a_{s,t}) \in J_{s,t}$, where for any $y \in [0,1]$, $\theta(y) = 2\arcsin(\sqrt{y})$. The algorithm then defines a distribution $H(a_{s,t}) \in \Delta(\calZ_m)$ supported on the two endpoints of $J_{s,t}$, with probabilities proportional to their weighted distances to $a_{s,t}$. Specifically, suppose that $\theta(a_{s,t}) \in J$ and $J=[u,v]$ (or $[u,v)$). Let $d=\psi(u),b=\psi(v)$. Then for all $s \in \calZ_m \backslash \{d,b\}$, $H(a_{s,t})(s)=0$, and
\begin{align*}
H(a_{s,t})(d) =  \frac{ \frac{b-a_{s,t}}{b(1-b)}  }{ \frac{b-a_{s,t}}{b(1-b)} +\frac{a_{s,t}-d}{d(1-d)}    } , \quad 
H(a_{s,t})(b) =  \frac{ \frac{a_{s,t}-d}{d(1-d)}  }{ \frac{b-a_{s,t}}{b(1-b)} +\frac{a_{s,t}-d}{d(1-d)}    } .
\end{align*}

All other points receive zero mass. Using these distributions, the algorithm constructs a row-stochastic matrix $\calQ_t \in \mathbb{R}^{|\calZ_m|\times|\calZ_m|}$, where $\calQ_t(s,\cdot)=H(a_{s,t})$ and computes a stationary distribution $\calP_t$ induced by $\calQ_t$. 
Finally, the algorithm samples a prediction $p_t \sim \calP_t$. Each EWOO instance $s \in \calZ_m$ is then updated using the weighted loss $\calP_t(s)\loss(a_{s,t},y_t)$ where $\loss$ is the log loss $\loss(p,y) = -y\log p -(1-y)\log(1-p)$.

\setcounter{AlgoLine}{0}
\begin{algorithm}[t]
\DontPrintSemicolon
\caption{Modified framework for $\PKL$ to adapt non-stationarity}
\label{alg:framework_non_uniform_PKLCal}
\textbf{Input}: confidence $\delta \in (0,1)$, non-stationarity $C$, base algorithm $\Alg$, number of grids $\{N_m\}_m$ for inner regions, the number of grids $K$ for outer bands, and $\eta \in (0,1/2]$.

Let $s_m = 2^m$. Predict arbitrarily for the first epoch that lasts for $s_1$ rounds.

\For{epoch $m=2,\ldots$}{

Let $\wh{y}_m$ be the average outcome of the previous epoch.

Define inner region $I_m = [a_m,b_m] =  \sbr{ \theta(\wh{y}_m) - 2r_m, \theta(\wh{y}_m) + 2r_m  } \cap [\theta(\eta),\theta(1-\eta)]$ where $r_m=2\pi \sqrt{\frac{ (\iota+C) }{s_{m-1}}}$.

Define inner partition $\Pi_{m+1}^{\text{in}} \gets \UnifPart(I_m,N_m) $ where $\UnifPart$ is from \pref{def:unifpart}.

Define left and right outer bands as (for each $q=0,\ldots,Q$ where $Q=\lceil \log_2 T \rceil$)
\begin{equation} \label{eq:outer_division}
\begin{aligned}
        L_{m,q} &= [a_m -(2^{q+1}-1) r_m , a_m -(2^{q}-1) r_m ) \cap [\theta(\eta),\theta(1-\eta)],\\
        R_{m,q} &=  (  b_m +( 2^{q} -1)r_m , b_m + (2^{q+1}-1) r_m ] \cap [\theta(\eta),\theta(1-\eta)] .
\end{aligned}
\end{equation}

Define $\calL_{m,q} = \UnifPart(L_{m,q},K),\calR_{m,q} = \UnifPart(R_{m,q},K)$ for $q=0,\ldots,Q$.

Update outer partition $\Pi_{m}^{\text{out}} =  \{\calL_{m,q}\}_{q=0}^{Q} \cup \{\calR_{m,q}\}_{q=0}^{Q}$ 

Define final overall partition$\Pi_{m} = \Pi_{m}^{\text{in}} \cup \Pi_{m}^{\text{out}}$.

Run $\Alg$ over the partition $\Pi_m$ from scratch for $s_m=2^m$ rounds.

}       
\end{algorithm}

\subsection{Supporting Lemmas and Nice Event Construction}

\begin{lemma} \label{lem:range_ast}
For all $s,t$, $a_{s,t} \in \sbr{\frac{1}{T+1},1-\frac{1}{T+1}}$.
\end{lemma}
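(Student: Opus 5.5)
The plan is to read the claim off the definition of the base learner rather than from any property of the partition $\Pi_m$. In Algorithm 1 of \citet{luo2025simultaneous}, with $\eta=1/(T+1)$ as fixed above, each EWOO instance attached to a point $s\in\calZ_m$ works over the truncated action set $\calA_\eta=[\eta,1-\eta]$. The truncation is what keeps the log loss $\loss(p,y)=-y\log p-(1-y)\log(1-p)$ bounded, since on $\calA_\eta$ we have $\loss(p,y)\le \log(T+1)$. The proof then only needs to confirm that an EWOO iterate never leaves its domain.

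\textbf{Step 1: recall the EWOO iterate.} Following \citet{hazan2007logarithmic}, the action of instance $s$ at round $t$ is
\[
a_{s,t}=\frac{\int_{\calA_\eta} x\,w_{s,t}(x)\,dx}{\int_{\calA_\eta} w_{s,t}(x)\,dx},
\qquad
w_{s,t}(x)=\exp\Big(-\alpha\sum_{\tau<t,\ \tau\in\calT_m}\calP_\tau(s)\,\loss(x,y_\tau)\Big),
\]
where $\alpha>0$ is the exp-concavity parameter used for the weighted log losses.

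\textbf{Step 2: the weights are well defined.} For every $x\in\calA_\eta$ each loss satisfies $\loss(x,y_\tau)\le\log(T+1)$. There are fewer than $T$ rounds and each weight $\calP_\tau(s)$ is at most $1$, so the exponent is bounded below by $-\alpha T\log(T+1)$. Hence $w_{s,t}(x)$ is continuous and bounded below by a strictly positive constant on $\calA_\eta$. Since $\calA_\eta$ has positive length (because $\eta\le 1/2$), the normalizer is finite and strictly positive, and $w_{s,t}/\int w_{s,t}$ is a probability density on $\calA_\eta$.

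\textbf{Step 3: conclude by convexity.} The iterate $a_{s,t}$ is the mean of a probability distribution supported on the convex set $\calA_\eta=[1/(T+1),\,1-1/(T+1)]$, so it lies in that interval. This holds for every instance $s$, every round $t$ and every epoch $m$. The first round of an epoch, where the sum is empty, is covered as well: there $w_{s,t}\equiv 1$ and the iterate is the midpoint $1/2$.

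\textbf{Expected obstacle.} The only delicate point is definitional: one must confirm that the EWOO instances in Algorithm 1 of \citet{luo2025simultaneous} are indeed restricted to $[\eta,1-\eta]$ with the same $\eta$ that we pass in, rather than to all of $[0,1]$. I would check this against their pseudocode; the inclusion of $\{\eta,1-\eta\}$ in $\calZ_m$ and the bounded-loss requirement for EWOO both indicate that this is their setup. If they instead projected the iterate onto $[\eta,1-\eta]$, the claim would hold trivially.
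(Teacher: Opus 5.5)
Your proof rests on the premise that each EWOO instance works over the truncated domain $[\eta,1-\eta]$, and that premise does not match the base algorithm being analyzed. In Algorithm~1 of \citet{luo2025simultaneous}, the EWOO learners run over all of $[0,1]$ with no projection. EWOO needs only exp-concavity, not bounded losses, so there is no reason to truncate. The points $\eta,1-\eta$ in $\calZ_m$ belong to the prediction grid, not to the learners' domain. Were the domain truncated, the lemma would be vacuous. Yet the paper needs it precisely to verify the hypothesis $a_{s,t}\in[\eta,1-\eta]$ of \pref{lem:bound_occupancy_PKL}. Over $[0,1]$, your Step~3 only yields $a_{s,t}\in[0,1]$, and your projection fallback does not cover this case. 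So the argument does not establish the claim.

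The missing step is to compute the iterate explicitly. Take $\alpha=1$; the weighted loss $\calP_\tau(s)\loss(\cdot,y_\tau)$ is $1$-exp-concave because $\calP_\tau(s)\le 1$. The EWOO weight is then $w_{s,t}(x)=x^{A}(1-x)^{B}$, where $A=\sum_{\tau\in\calT_m,\tau<t}\calP_\tau(s)y_\tau$ and $B=\sum_{\tau\in\calT_m,\tau<t}\calP_\tau(s)(1-y_\tau)$. This is a $\Betadist(A+1,B+1)$ density, so
\[
a_{s,t}=\frac{A+1}{A+B+2},
\]
which is exactly the closed form the paper imports from \citet[Appendix C.1]{luo2025simultaneous}. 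Then $a_{s,t}\ge \frac{1}{A+B+2}$ and $1-a_{s,t}=\frac{B+1}{A+B+2}\ge \frac{1}{A+B+2}$. Since $A+B=\sum_{\tau\in\calT_m,\tau<t}\calP_\tau(s)\le t-1\le T-1$, both lower bounds are at least $\frac{1}{T+1}$. The content of the lemma is this Laplace-type smoothing, which keeps the untruncated iterate away from $0$ and $1$.
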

\begin{proof}
For shorthand, we define $T_m^{<t}:=\{\tau\in T_m:\tau<t\}$.
According to \citep[Appendix C.1]{luo2025simultaneous}, $a_{s,t}$ has a closed-form solution
\[
a_{s,t} = \frac{ \sum_{\tau \in  T_m^{<t}} \calP_{\tau}(s) y_{\tau} +1 }{ \sum_{\tau \in  T_m^{<t}} \calP_{\tau}(s) +2}.
\]

Then, one can simply show that
\[
a_{s,t} \geq \frac{1 }{ \sum_{\tau \in  T_m^{<t}} \calP_{\tau}(s) +2} \geq \frac{ 1 }{|T_m^{<t}|+2} \geq \frac{1}{T+1}.
\]

On the other hand, we similarly have
\[
1-a_{s,t} \geq \frac{1 }{ \sum_{\tau \in  T_m^{<t}} \calP_{\tau}(s) +2} \geq \frac{ 1 }{|T_m^{<t}|+2} \geq \frac{1}{T+1}.
\]

The claimed result thus follows.
\end{proof}

\begin{lemma} \label{lem:concentration_optp_PKL}
With probability at least $1-\delta/2$, for any $m \geq 2$,
\begin{equation} \label{eq:def_rm_PKL}
     \abr{ \theta(\wh{y}_m) - \theta(\optp) } \leq r_m = 2\pi \sqrt{\frac{ (\iota+C) }{s_{m-1}}}.
\end{equation}
\end{lemma}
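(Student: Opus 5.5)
We need $|\theta(\wh y_m)-\theta(\optp)|\le 2\pi\sqrt{(\iota+C)/s_{m-1}}$ with $\theta(y)=2\arcsin\sqrt y$. The plan is to reduce this to a concentration statement about $\sqrt{\cdot}$ and then use that $\arcsin$ is not too steep on the relevant range. The basic inequality is $|\theta(x)-\theta(y)|\le \pi|\sqrt x-\sqrt y|$ for $x,y\in[0,1]$. Its proof uses $\sin$ on $[0,\pi/2]$: with $u=\arcsin\sqrt x$ and $v=\arcsin\sqrt y$, we have $|\sin u-\sin v|\ge \frac{2}{\pi}|u-v|$, because $\sin$ is concave on $[0,\pi/2]$ with chord slope at least $2/\pi$. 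Hence $|\theta(x)-\theta(y)|=2|u-v|\le \pi|\sqrt x-\sqrt y|$. It therefore suffices to show $|\sqrt{\wh y_m}-\sqrt{\optp}|\le 2\sqrt{(\iota+C)/s_{m-1}}$.

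Fix $m\ge2$ and write $s=s_{m-1}$ and $\bar q=\frac1s\sum_{t\in\calT_{m-1}}q_t$. Bernstein/Freedman (\pref{lem:freedman}) applied to the martingale differences $y_t-q_t$, whose conditional variance is at most $q_t$, gives with probability $1-\delta/(2\log_2T)$:
\[
|\wh y_m-\bar q|\le \order\big(\sqrt{\bar q\,\iota/s}+\iota/s\big).
\]
Also $|\bar q-\optp|\le C/s$ by the definition of $C$. Now convert these additive bounds into a bound on square roots using $|\sqrt a-\sqrt b|=|a-b|/(\sqrt a+\sqrt b)\le \min\{\sqrt{|a-b|},\,|a-b|/\sqrt b\}$. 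For the stochastic part, $|\sqrt{\wh y_m}-\sqrt{\bar q}|\le |\wh y_m-\bar q|/(\sqrt{\wh y_m}+\sqrt{\bar q})$. The term $\sqrt{\bar q\iota/s}/\sqrt{\bar q}=\sqrt{\iota/s}$, and the term $\iota/s$ is handled by splitting on whether $\bar q\ge \iota/s$. If it is, divide by $\sqrt{\bar q}$ to get $\sqrt{\iota/s}$. If not, use $\sqrt{|a-b|}\le\order(\sqrt{\iota/s})$. In both cases $|\sqrt{\wh y_m}-\sqrt{\bar q}|\le c_1\sqrt{\iota/s}$. For the corruption part, $|\sqrt{\bar q}-\sqrt{\optp}|\le\sqrt{|\bar q-\optp|}\le\sqrt{C/s}$. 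By the triangle inequality,
\[
|\sqrt{\wh y_m}-\sqrt{\optp}|\le c_1\sqrt{\iota/s}+\sqrt{C/s}\le 2\sqrt{(\iota+C)/s},
\]
where the absolute constant $c_1$ is absorbed into $\iota$ since $\iota=\Theta(\log(T/\delta))$ is only specified up to constants. A union bound over the $\order(\log T)$ epochs gives total failure probability $\delta/2$. Multiplying by $\pi$ yields the claimed $r_m$.

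The main obstacle is the stochastic term near the boundary $\bar q\approx 0$ (or $\approx1$, which is symmetric via $\theta(1-y)=\pi-\theta(y)$). There Hoeffding alone would only give $\sqrt{\iota/s}$ additively, and therefore $(\iota/s)^{1/4}$ after the square root, which is too weak. The variance-aware Freedman bound combined with the case split on $\bar q$ versus $\iota/s$ is what yields the $\sqrt{\iota/s}$ rate in the $\theta$-geometry. The other delicate point is constant bookkeeping: matching the explicit factor $2\pi$ requires taking $\iota$ at least a suitable constant multiple of $\log(T/\delta)$.
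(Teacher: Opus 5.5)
\textbf{Gap: the $\theta$-to-$\sqrt{\cdot}$ reduction is false near $1$.} Your reduction rests on the inequality $|\theta(x)-\theta(y)|\le\pi|\sqrt{x}-\sqrt{y}|$, and this is false. Concavity of $\sin$ on $[0,\pi/2]$ gives chord slope at least $2/\pi$ only for chords through the origin ($\sin u\ge\frac{2}{\pi}u$). A chord between two points near $\pi/2$ has slope close to $\cos(\pi/2)=0$. Concretely, for $x=1$ and $y=1-\epsilon$ we have $\theta(y)=\pi-2\arcsin\sqrt{\epsilon}$, so $|\theta(x)-\theta(y)|\approx 2\sqrt{\epsilon}$, whereas $\pi|\sqrt{x}-\sqrt{y}|\approx\pi\epsilon/2$.

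Consequently, your bound on $|\sqrt{\wh{y}_m}-\sqrt{\optp}|$ does not control the $\theta$-distance when $\optp$ is near $1$. It only yields $|\wh{y}_m-\optp|\lesssim\sqrt{(\iota+C)/s_{m-1}}$, and hence $|\sqrt{1-\wh{y}_m}-\sqrt{1-\optp}|\lesssim((\iota+C)/s_{m-1})^{1/4}$. That is exactly the too-weak rate you yourself flagged at the other endpoint. Your parenthetical appeal to $\theta(1-y)=\pi-\theta(y)$ does not repair this as written. The reduction is stated for all $x,y$, and a genuinely two-sided argument must also handle the case where $\wh{y}_m$ and $\optp$ lie on opposite sides of $1/2$.

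\textbf{Fix: compare with the Hellinger distance.} Let $H(x,y)=\sqrt{(\sqrt{x}-\sqrt{y})^2+(\sqrt{1-x}-\sqrt{1-y})^2}$. One has $H^2(x,y)=4\sin^2\big((\theta(x)-\theta(y))/4\big)$ with $|\theta(x)-\theta(y)|/4\le\pi/4$. So the chord-through-origin bound does apply here and gives $|\theta(x)-\theta(y)|\le\pi H(x,y)$; this is Lemma~\ref{lem:theta_diff_basic_inequality} in the paper. The repaired argument then runs as follows.
\begin{enumerate}
\item Run your square-root concentration also for $1-\wh{y}_m$, i.e., the same Bernstein argument and case split applied to $1-y_t$, whose conditional variance is at most $1-q_t$. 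This gives $H(\wh{y}_m,\bar q)\le\sqrt{c\,\iota/s}$.
\item For the corruption part, use $H^2(\bar q,\optp)\le 2|\bar q-\optp|\le 2C/s$.
\item Since $H$ is a metric, $|\theta(\wh{y}_m)-\theta(\optp)|\le\pi\big(\sqrt{c\,\iota/s}+\sqrt{2C/s}\big)\le 2\pi\sqrt{(\iota+C)/s}$, once the constant $c$ (it suffices that $c\le 2$) is absorbed into $\iota$.
\end{enumerate}

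\textbf{Comparison with the paper.} The paper follows this Hellinger route but handles the stochastic term differently. It proves a KL-form Chernoff bound $\KL(\wh{y}_m,\bar q)\le 2\log(2/\delta')/s$, dominating the Poisson-binomial moment generating function by a binomial one via Jensen. It then uses $|\theta(p)-\theta(q)|^2\le\pi^2H^2(p,q)\le\pi^2\KL(p,q)$, which treats both endpoints at once without any case split. Your variance-aware Bernstein argument is a legitimate substitute for that step, provided you run it on both $\sqrt{\cdot}$ and $\sqrt{1-\cdot}$.
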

\begin{proof}
Let $\bar{q}_m = \frac{1}{s_{m-1}} \sum_{t \in \calT_{m-1}} q_t$.
We fix an epoch $m \geq 2$ and use \pref{lem:theta_diff_basic_inequality} to write that
\begin{align*}
    \abr{ \theta(\wh{y}_m) - \theta(\optp) }^2 & \leq 2\abr{ \theta(\wh{y}_m) - \theta(\bar{q}_m) }^2+2\abr{ \theta(\bar{q}_m) - \theta(\optp) }^2\\
    & \leq 2\pi^2 \KL(\wh{y}_m,\bar{q}_m) +  4\pi^2 \abr{  \bar{q}_m -\optp} . \numberthis{} \label{eq:concentration_PKL_pre}
\end{align*}

On the one hand, we have
\begin{align} \label{eq:concentration_PKL_corruption}
   \abr{  \bar{q}_m -\optp}  \leq \frac{1}{s_{m-1}} \sum_{t \in \calT_{m-1}} \abr{q_t - \optp }    = \frac{1}{s_{m-1}} \sum_{t \in \calT_{m-1}} c_t \leq \frac{C}{s_{m-1}}. 
\end{align}

On the other hand, as $y_t \sim \texttt{Ber}(q_t)$, for any $\lambda>0$, we have
\begin{align*}
    \E \sbr{  \exp \rbr{ \lambda \sum_{t \in \calT_{m-1}}  y_t  } } = \prod_{t \in \calT_{m-1}} (1-q_t +q_te^{\lambda}) \leq \rbr{ 1 -  \bar{q}_m +  \bar{q}_m e^{\lambda} }^{s_{m-1}},
\end{align*}
where the inequality follows from the fact that Jensen's inequality gives $\frac{1}{s_{m-1}} \sum_{t \in \calT_{m-1}} \log(1-q_t +q_te^{\lambda}) \leq  \log( 1 -  \bar{q}_m +  \bar{q}_m e^{\lambda} )$.

Thus, for any $u > \bar{q}_m$, Chernoff’s bound gives
\begin{align*}
\P \rbr{ \wh{y}_m \geq u } \leq \inf_{ \lambda >0 } \exp \rbr{ -\lambda s_{m-1} u  +s_{m-1} \log(1 -  \bar{q}_m +  \bar{q}_m e^{\lambda}) } = \exp \rbr{  -s_{m-1} \KL(u, \bar{q}_m)}.
\end{align*}

Similarly, for any $u \leq \bar{q}_m$, $\P \rbr{ \wh{y}_m \leq u } \leq  \exp \rbr{  -s_{m-1} \KL(u, \bar{q}_m)}$. Thus, with probability at least $1-\delta'$,
\begin{equation} \label{eq:concentration_PKL_KL_term}
  \KL(\wh{y}_m,\bar{q}_m)  \leq       \frac{2\log(2/\delta
  ')}{s_{m-1}}.
\end{equation}

Putting \pref{eq:concentration_PKL_corruption} and \pref{eq:concentration_PKL_KL_term} into \pref{eq:concentration_PKL_pre}, we have
\begin{align*}
     \abr{ \theta(\wh{y}_m) - \theta(\optp) }^2 \leq \frac{4\pi^2 \log(1/\delta')}{s_{m-1}} +\frac{4\pi^2 C}{s_{m-1}}. 
\end{align*}

Choosing $\delta'$ properly, applying a union bound over all epochs together with the fact that the number of epochs is at most $\order(\log T)$, we complete the proof.
\end{proof}

\begin{lemma} \label{lem:bound_occupancy_PKL}
Suppose that $\eta \in (0,1/2]$ and $a_{s,t} \in [\eta,1-\eta]$ for all $s,t$. With probability at least $1-\delta/2$, for each $m \geq 2$, 
\begin{align*}
\sum_{t \in \calT_m}   \sum_{s \in \calZ_m}  \calP_t(s) \KL(\optp,a_{s,t}) \leq \order \rbr{|\Pi_m|\log T +\log(1/\eta)C_m +\iota \log(1/\eta) +\eta s_m},
\end{align*}
where $C_m = \sum_{t \in \calT_m} c_t$.
\end{lemma}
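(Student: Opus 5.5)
The plan mirrors the proof of \pref{lem:bound_occupancy_PCal2}, with the squared loss replaced by the log loss $\loss(p,y)=-y\log p-(1-y)\log(1-p)$ and the quadratic gap $(a_{s,t}-\optp)^2$ replaced by $\KL(\optp,a_{s,t})$. Fix an epoch $m\ge 2$ and write $\Gamma_m=\sum_{t\in\calT_m}\sum_{s\in\calZ_m}\calP_t(s)\KL(\optp,a_{s,t})$. Because $\optp$ may lie outside $[\eta,1-\eta]$, we first set $\tilde p=\Clip_{[\eta,1-\eta]}(\optp)$ and use $\tilde p$ as the comparator for each EWOO instance.

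\textbf{Step 1 (comparator and regret).} The external-regret guarantee of EWOO on weighted log losses, as used in \citet{luo2025simultaneous}, gives for every $s\in\calZ_m$
\[
\sum_{t\in\calT_m}\calP_t(s)\big(\loss(a_{s,t},y_t)-\loss(\tilde p,y_t)\big)\le \order(\log T).
\]
Summing over $s$ yields the $|\Pi_m|\log T$ term, since $|\calZ_m|=\order(|\Pi_m|)$. Moving the comparator from $\tilde p$ to $\optp$ costs $\order(\eta)$ per round in expectation, because $|\tilde p-\optp|\le\eta$ and $a_{s,t}\in[\eta,1-\eta]$. This produces the $\eta s_m$ term; the realized noise from this swap can be absorbed into the martingale term of Step 3.

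\textbf{Step 2 (exact decomposition).} For $a\in[\eta,1-\eta]$, define
\[
g(a)=\log\frac{\optp}{a}-\log\frac{1-\optp}{1-a},
\]
clipped versions being understood when $\optp\in\{0,1\}$ through $\tilde p$. A direct computation gives
\[
\loss(a,y)-\loss(\optp,y)=\KL(\optp,a)+(y-\optp)\,g(a).
\]
Let $v_t=\sum_s\calP_t(s)g(a_{s,t})$, which is $\calF_{t-1}$-measurable and satisfies $|v_t|\le \order(\log(1/\eta))$. Summing the identity with weights $\calP_t(s)$ gives
\[
\Gamma_m\le \order\big(|\Pi_m|\log T+\eta s_m\big)+\Big|\sum_t v_t(q_t-\optp)\Big|+\Big|\sum_t v_t(y_t-q_t)\Big|.
\]
The corruption term is at most $\sum_t c_t|v_t|\le \order(\log(1/\eta)C_m)$. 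This is cruder than the AM-GM argument used for $\PCaldist_2$, but it suffices for the stated bound.

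\textbf{Step 3 (self-bounding martingale term; the main obstacle).} We apply Freedman's inequality (\pref{lem:freedman}) to $Y_t=v_t(y_t-q_t)$, with range bound $B=\order(\log(1/\eta))$ and $\lambda\asymp 1/\log(1/\eta)$. This gives
\[
\Big|\sum_t Y_t\Big|\le \lambda\sum_t v_t^2+\iota/\lambda .
\]
To close the argument we need a self-bounding inequality. Cauchy--Schwarz gives $v_t^2\le\sum_s\calP_t(s)g(a_{s,t})^2$, so it suffices to prove the key pointwise bound
\[
g(a)^2\le \order\big(\log(1/\eta)\big)\,\KL(\optp,a)\qquad\text{for } a\in[\eta,1-\eta].
\]
This inequality is the step I expect to be delicate. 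My plan is to prove it in two regimes:
\begin{itemize}
\item When $a$ and $\optp$ are within a constant factor of each other (and likewise for $1-a$ and $1-\optp$), both sides behave like $(\optp-a)^2/(a(1-a))$, up to constants.
\item Otherwise, $\KL(\optp,a)$ is at least of order $|g(a)|$ times a constant, while $|g(a)|\le 2\log(1/\eta)$, which gives the required factor.
\end{itemize}
Granting this bound, the choice $\lambda\asymp 1/\log(1/\eta)$ yields $|\sum_t Y_t|\le \Gamma_m/4+\order(\iota\log(1/\eta))$. Rearranging then gives the claimed bound for epoch $m$, and a union bound over the $\order(\log T)$ epochs completes the proof.
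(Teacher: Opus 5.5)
Your Step 3 rests on the pointwise inequality $g(a)^2\le\order(\log(1/\eta))\KL(\optp,a)$, and this inequality is false. It fails not by a constant but by a factor as large as order $1/\eta$.

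Take $\optp=\eta$, so no clipping is involved, and $a=2\eta$. Then $g(a)=\log\frac{1-2\eta}{2(1-\eta)}\to-\log 2$, so $g(a)^2=\Theta(1)$. On the other hand, $\KL(\eta,2\eta)=(1-\log 2)\eta+\order(\eta^2)$. The ratio is $\Theta(1/\eta)=\Theta(T)$ for $\eta=1/(T+1)$.

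The error is in your first regime. When $a\asymp\optp$, one has $\KL(\optp,a)\asymp(\optp-a)^2/(a(1-a))$, but $g(a)^2\asymp(\optp-a)^2/(a(1-a))^2$. The two sides differ by exactly the factor $a(1-a)\asymp\optp(1-\optp)$. Your second regime also fails, by a logarithmic factor: for $\optp=\eta$ and $a=1/2$ you get $|g(a)|\approx\log(1/\eta)$ but $\KL(\optp,a)\approx\log 2$.

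Without this inequality there is no self-bounding. Freedman then only yields a term of order $\sqrt{\iota\sum_t v_t^2}$. This can be of order $\sqrt{\iota s_m}$ even when $C_m=0$ (small $\optp$, $a_{s,t}\approx 2\optp$), so the claimed bound does not follow.

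The missing idea is that you discarded the Bernoulli variance when you bounded $\E_t[(y_t-q_t)^2]\le 1$. The paper keeps $\E_t[Y_t^2]\le v_t^2\,q_t(1-q_t)$ and uses $q_t(1-q_t)\le\optp(1-\optp)+c_t$.

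The $c_t$ part contributes $\order(c_t\log^2(1/\eta))$. Freedman with $\lambda\asymp1/\log(1/\eta)$ turns this into $\order(\log(1/\eta)C_m)$.

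The remaining part carries the factor $\optp(1-\optp)$ that your inequality was missing, and it is exactly a variance:
\[
\optp(1-\optp)\log^2\frac{a(1-\optq)}{\optq(1-a)}=\Variance_{Y\sim\texttt{Ber}(\optp)}\big(\loss(a,Y)-\loss(\optq,Y)\big)\le 4(\log(1/\eta)+1)\big(\KL(\optp,a)+\KL(\optp,\optq)\big).
\]
The inequality follows by splitting through $\loss(\optp,Y)$ and applying \pref{lem:bound_diff_log_loss_square}. This is the correct self-bounding inequality, and $\KL(\optp,\optq)\le 2\eta$ supplies the $\eta s_m$ term.

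You also need to take the log-ratio with the clipped $\optq$ whenever $\optp\notin[\eta,1-\eta]$, not only when $\optp\in\{0,1\}$. With $\optp$ itself, $|g(a)|$ can be of order $\log(1/\optp)\gg\log(1/\eta)$. That breaks both $|v_t|\le\order(\log(1/\eta))$ in your corruption bound and the range condition in Freedman.

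The paper's identity handles both issues at once:
\[
\loss(a,y)-\loss(\optq,y)=\KL(\optp,a)-\KL(\optp,\optq)-(y-\optp)\log\frac{a(1-\optq)}{\optq(1-a)}.
\]
With these two changes, the rest of your outline (regret term, crude corruption bound, rearranging) matches the paper's proof.
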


\begin{proof}
For shorthand, we denote $\optq =\Clip_{[\eta,1-\eta]}(\optp)$, 
\[
V_t := \sum_{s \in \calZ_m}  \calP_t(s) \log \rbr{ \frac{a_{s,t}(1-\optq)}{\optq(1-a_{s,t})}  } ,\quad \Gamma_m =  \sum_{t \in \calT_m}   \sum_{s \in \calZ_m}  \calP_t(s) \KL(\optp,a_{s,t}) .
\]

By direct expansion, we have for any $a_{s,t} \in (0,1)$ and any $x \in \{0,1\}$,
\begin{align*}
\loss(a_{s,t},x)-\loss(\optq ,x) =\KL(\optp,a_{s,t})-\KL(\optp,\optq ) -(x-\optp)  \log \rbr{ \frac{a_{s,t}(1-\optq)}{\optq (1-a_{s,t})} } .
\end{align*}

For the above equality, we set $x=y_t$ and multiply both sides by $\calP_t(s)$ for each $t$, then sum over $s \in \calZ_m ,t \in \calT_m$, and finally rearrange, we have 
\begin{align*}
   \Gamma_m &= \underbrace{s_m \KL(\optp,\optq )}_{\Term_1} + \underbrace{ \sum_{s \in \calZ_m}  \sum_{t \in \calT_m} \calP_t(s) \rbr{ \loss(a_{s,t},y_t) -\loss(\optq,y_t) } }_{\Term_2}\\
   &\quad +\underbrace{ \sum_{t \in \calT_m} (y_t-q_t)V_t }_{\Term_3}+ \underbrace{ \sum_{t \in \calT_m} (q_t-\optp)V_t }_{\Term_4} . 
\end{align*}

\textbf{Bounding $\Term_1$.} If $\optp \in [\eta,1-\eta]$, then $\optp=\optq$, and thus $\KL(\optp,\optq)=0$. If $\optp <\eta$, then $\optq=\eta$, and
\[
\KL (\optp,\optq) =\KL (\optp,\eta) \leq  \KL (0,\eta) =\log \rbr{ \frac{1}{1-\eta}  }.
\]

If $\optp >1-\eta$, then $\optq=1-\eta$, and similarly 
\[
\KL (\optp,\optq) =\KL (\optp,1-\eta) \leq  \KL (1,1-\eta) =\log \rbr{ \frac{1}{1-\eta}  }.
\]

Combining two cases, we have $\KL (\optp,\optq) \leq -\log(1-\eta)$.
For any $\eta \in (0,1/2]$, we have $-\log(1-\eta) \leq 2\eta$, which gives $\KL (\optp,\optq) \leq 2\eta$. Therefore,
\begin{equation} \label{eq:bound_term1_logloss_lemma}
    \Term_1 \leq 2 \eta s_m .
\end{equation}

\textbf{Bounding $\Term_2$.} One can show that
\begin{equation}\label{eq:bound_term2_logloss_lemma}
    \Term_2  = \sum_{s \in \calZ_m}  \sum_{t \in \calT_m} \calP_t(s) \rbr{ \loss(a_{s,t},y_t) -\loss(\optq,y_t) } \leq    \sum_{s \in \calZ_m}  \ExtReg^{(m)}_s \leq \order \rbr{|\Pi_m|\log T},
\end{equation}
where the last inequality uses the external regret guarantee of EWOO together with $|\calZ_m|= \order(|\Pi_m|)$.

\textbf{Bounding $\Term_4$.} We have
\begin{align} \label{eq:abs_Vt_logloss}
    |V_t| = \abr{ \sum_{s \in \calZ_m}  \calP_t(s) \log \rbr{ \frac{a_{s,t}(1-\optq)}{\optq(1-a_{s,t})}  } } \leq  \sum_{s \in \calZ_m}  \calP_t(s)  \abr{\log \rbr{ \frac{a_{s,t}(1-\optq)}{\optq(1-a_{s,t})}  } }  \leq 2 \log(1/\eta),
\end{align}
where the last inequality uses the fact that for any $\optq,a_{s,t} \in [\eta,1-\eta]$, $\abr{\log \rbr{ \frac{a_{s,t}(1-\optq)}{\optq(1-a_{s,t})}  }} \leq 2\log(1/\eta)$.
Then, we can show that
\begin{align} \label{eq:bound_term4_logloss_lemma}
\Term_4   \leq  \abr{ \sum_{t \in \calT_m} V_t (q_t-\optp) } \leq \sum_{t \in \calT_m} c_t |V_t| \leq 2 \log(1/\eta)C_m.
\end{align}

\textbf{Bounding $\Term_3$.}
By the Cauchy–Schwarz inequality, we have
\begin{align} \label{eq:vtsquare_bound_PKLCal}
   V_t^2= 
   \rbr{  \sum_{s \in \calZ_m}  \sqrt{\calP_t(s)} \sqrt{\calP_t(s)} \log \rbr{ \frac{a_{s,t}(1-\optq)}{\optq(1-a_{s,t})}  }    }^2 \leq      \sum_{s \in \calZ_m } \calP_t(s) \log^2 \rbr{ \frac{a_{s,t}(1-\optq)}{\optq(1-a_{s,t})}  } .
\end{align}

Let $Y_t = V_t  (y_t-q_t) $. We have $\E_t[Y_t] = V_t\E_t [y_t-q_t] =0$, and use \pref{eq:vtsquare_bound_PKLCal} to show
\begin{align*} 
\E_t \sbr{ Y_t^2 }   & = V_t^2   \E_t [(y_t-q_t)^2] \leq  V_t^2 q_t \rbr{1-q_t} \\
&\leq     \sum_{s \in \calZ_m } \calP_t(s) q_t \rbr{1-q_t} \log^2 \rbr{ \frac{a_{s,t}(1-\optq)}{\optq(1-a_{s,t})}  } \\
&\leq     \sum_{s \in \calZ_m } \calP_t(s) \rbr{ \optp \rbr{1- \optp } +c_t} \log^2 \rbr{ \frac{a_{s,t}(1-\optq)}{\optq(1-a_{s,t})}  }  \\
&\leq     \sum_{s \in \calZ_m } \calP_t(s)\optp \rbr{1- \optp } \log^2 \rbr{ \frac{a_{s,t}(1-\optq)}{\optq(1-a_{s,t})}  } + 4c_t \log^2(1/\eta), \numberthis{} \label{eq:second_order_Yt_squared}
\end{align*}
where the second inequality follows from the following:
\[
q_t(1-q_t) = \optp \rbr{1- \optp } +  (q_t -\optp)(1-q_t-\optp) \leq \optp \rbr{1- \optp } +|q_t -\optp| \leq \optp \rbr{1- \optp } +c_t.
\]

We can further rewrite the first term in \pref{eq:second_order_Yt_squared} as
\begin{align*}
&    \sum_{s \in \calZ_m } \calP_t(s) \optp \rbr{1-\optp} \log^2 \rbr{ \frac{a_{s,t}(1-\optq)}{\optq(1-a_{s,t})}  } \\
& =    \sum_{s \in \calZ_m } \calP_t(s) \Variance_{Y \sim \text{Ber}(\optp)} \rbr{ \loss(a_{s,t},Y) -\loss(\optq,Y) } \\
& =    \sum_{s \in \calZ_m } \calP_t(s) \Variance_{Y \sim \text{Ber}(\optp)} \rbr{ \loss(a_{s,t},Y) -\loss(\optp,Y)+\loss(\optp,Y) -\loss(\optq,Y) } \\
& \leq    2\sum_{s \in \calZ_m } \calP_t(s)  \rbr{\E_{Y \sim \text{Ber}(\optp)} \sbr{ \rbr{\loss(a_{s,t},Y) -\loss(\optp,Y)}^2}  +\E_{Y \sim \text{Ber}(\optp)} \sbr{ \rbr{ \loss(\optp,Y) -\loss(\optq,Y)}^2 } } \\
& \leq    4 \rbr{\log(1/\eta)+1} \sum_{s \in \calZ_m } \calP_t(s)  \rbr{ \KL(\optp,a_{s,t})  +\KL(\optp,\optq)} ,
\end{align*}
where the last inequality uses \pref{lem:bound_diff_log_loss_square}.

Summing over all $t \in \calT_m$, we have
\begin{align*}
    \sum_{t \in \calT_m}  \E_t \sbr{ Y_t^2 } & \leq   4 \rbr{\log(1/\eta)+1}   \rbr{s_m \KL(\optp,\optq)+  \Gamma_m   }  +4C_m \log^2(1/\eta) \\
    &\leq   4 \rbr{\log(1/\eta)+1}   \rbr{2\eta s_m +  \Gamma_m   }+4C_m \log^2(1/\eta).
\end{align*}

From \pref{eq:abs_Vt_logloss}, we have $|Y_t|=|(q_t-y_t)V_t| \leq |V_t| \leq 2\log(1/\eta)$. By Freedman's inequality (\pref{lem:freedman} with $\lambda=\Theta(1/\log(1/\eta))$) and a union bound over all $m$, with probability at least $1-\delta/2$
\begin{align} \label{eq:bound_term3_logloss_lemma}
 \Term_3 \leq    \abr{ \sum_{t \in \calT_m} Y_t }  \leq  \frac{1}{2}    \Gamma_m+ \order \rbr{ \iota \log(1/\eta) +C_m \log(1/\eta) +\eta s_m}.
\end{align}

\textbf{Putting together.}
Combining \pref{eq:bound_term1_logloss_lemma}, \pref{eq:bound_term2_logloss_lemma}, \pref{eq:bound_term3_logloss_lemma}, and \pref{eq:bound_term4_logloss_lemma}, we have
\begin{align*}
\Gamma_m \leq \order \rbr{|\Pi_m|\log T +\log(1/\eta)C_m +\iota \log(1/\eta) +\eta s_m}  + \frac{1}{2} \Gamma_m  .
\end{align*}

Rearranging the above gives the claimed bound.
\end{proof}

\begin{definition}[Nice event $\calE_3$] \label{def:nice_event_PKL}
Let $\calE_3$ be the event that all high probability bounds in \pref{lem:concentration_optp_PKL} and \pref{lem:bound_occupancy_PKL} hold simultaneously.
\end{definition}

\begin{lemma} \label{lem:inner_error_PKLCal}
The following holds.
\begin{align*}
\sum_{m=1}^M \rbr{\wt{N}_{m} \log T + s_{m+1} \frac{|I_{m}|^2}{\wt{N}_{m}^2}} \leq  \order \rbr{  \log^2 T (\iota+C)^{1/3}    }.
\end{align*} 
\end{lemma}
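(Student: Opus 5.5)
We follow the proof of \pref{lem:inner_error_Cal2_approach1}, adapted to the transformed geometry. Here $r_m = 2\pi\sqrt{(\iota+C)/s_{m-1}}$ and $I_m \subseteq [0,\pi]$, so
\[
|I_m| \leq \min\{\pi, 4r_m\} = \order\rbr{\min\cbr{1,\sqrt{(\iota+C)/s_m}}},
\]
using $s_m = 2s_{m-1}$. The first step is to substitute the choice $\wt{N}_m = \lceil s_m^{1/3}|I_m|^{2/3}\iota^{-1/3}\rceil$ and use $s_{m+1} = 2s_m$. The two summands then become
\[
\wt{N}_m \log T \leq \log T + s_m^{1/3}|I_m|^{2/3}\iota^{-1/3}\log T
\]
and
\[
s_{m+1}\frac{|I_m|^2}{\wt{N}_m^2} \leq 2 s_m^{1/3}|I_m|^{2/3}\iota^{2/3}.
\]
Since $\iota = \Theta(\log(T/\delta)) \geq \Omega(\log T)$, we have $\iota^{-1/3}\log T \leq \order(\log^{2/3} T)$ and $\iota^{2/3}$ is at most polylogarithmic. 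We absorb these factors, keeping track of them only loosely since the target carries a $\log^2 T$ factor. The per-epoch cost thus reduces to $\order(\log T + \mathrm{polylog}\cdot s_m^{1/3}|I_m|^{2/3})$.

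Next, we bound $s_m^{1/3}|I_m|^{2/3} \leq s_m^{1/3}\min\{1,(\iota+C)/s_m\}^{1/3}$, which is simpler than the three-case analysis in \pref{lem:inner_error_Cal2_approach1} because $r_m$ has the single form $\sqrt{(\iota+C)/s_{m-1}}$ here. We split into two cases.

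\textbf{Case $s_m \leq \iota+C$.} The term is at most $s_m^{1/3}$. Summing a geometric series over such $m$ gives $\order((\iota+C)^{1/3})$.

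\textbf{Case $s_m > \iota+C$.} The term is at most $s_m^{1/3}\rbr{(\iota+C)/s_m}^{1/3} = (\iota+C)^{1/3}$. Summing over the $\order(\log T)$ epochs gives $\order(\log T\,(\iota+C)^{1/3})$.

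Adding the $\order(\log^2 T)$ contribution from the $\log T$ terms over $M = \order(\log T)$ epochs, we get
\[
\order\rbr{\log^2 T + \mathrm{polylog}\cdot\log T\,(\iota+C)^{1/3}}.
\]
Since $(\iota+C)^{1/3} \geq 1$, this is $\order(\log^2 T\,(\iota+C)^{1/3})$ once the $\iota$ powers are absorbed.

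The main obstacle is bookkeeping the powers of $\iota$ against $\log T$ so that they fit inside $\log^2 T(\iota+C)^{1/3}$. We will absorb $\iota^{2/3}$ by writing
\[
\iota^{2/3}(\iota+C)^{1/3} \leq (\iota+C)^{1/3}\cdot\order(\log^{2/3}(T/\delta)),
\]
treating $\log(T/\delta)$ as $\order(\log T)$ in the paper's convention. If that convention is not allowed, the statement should be read with $\log T$ replaced by $\iota$, and the same case analysis goes through unchanged.
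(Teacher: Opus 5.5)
Your proposal is correct and follows the paper's proof. Both substitute $\wt{N}_m$ and use $|I_m| = \order(\min\{1,\sqrt{(\iota+C)/s_m}\})$ to bound each epoch's term by $\order(\log T\,(1+\min\{s_m^{1/3},(\iota+C)^{1/3}\}))$; your two cases are exactly this minimum, and both then sum over the $\order(\log T)$ epochs. Your explicit caveat about the powers of $\iota$ is warranted: the paper's step labeled ``Choice of $\wt{N}_m$'' silently uses $\iota^{2/3} = \order(\log T)$, i.e., $\delta \ge 1/\mathrm{poly}(T)$, which is the same convention you adopt.
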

\begin{proof}
We can show that
\begin{align*}
& \wt{N}_{m} \log T  + s_{m+1} \frac{|I_{m}|^2}{\wt{N}_{m}^2}  \\
&=  \order \rbr{\wt{N}_{m}\log T  + s_{m} \frac{|I_{m}|^2}{\wt{N}_{m}^2} } \tag{$s_{m+1}=2s_m$} \\
& =  \order \rbr{ \log T \rbr{1+ s_m^{1/3}   |I_{m}|^{2/3} }  } \tag{Choice of $\wt{N}_{m}$} \\
& \leq  \order \rbr{ \log T \rbr{1+ s_m^{1/3}  \min \cbr{1,\sqrt{\frac{\iota+C }{ s_m} } }^{2/3} }  }  \\
& =  \order \rbr{ \log T \rbr{1+ \min \cbr{ s_m^{1/3}, (\iota+C)^{1/3} }    }  }  \\
& \leq   \order \rbr{ \log T (\iota+C)^{1/3}  },
\end{align*}
where the first inequality follows from the definition of $I_m$ that for any epoch $m$
\begin{equation} \label{eq:bound4Im_PKL}
    |I_m| \leq 4r_m  \leq \order \rbr{ \min \cbr{1,\sqrt{\frac{\iota +C}{ s_{m-1}} } }  } \leq \order \rbr{ \min \cbr{1,\sqrt{\frac{\iota +C}{ s_{m}} } }  }.
\end{equation}

Then, we can write
\begin{align*}
\sum_{m=1}^M \rbr{\wt{N}_{m} \log T  + s_{m+1} \frac{|I_{m}|^2}{\wt{N}_{m}^2} }\leq  \sum_{m=1}^M  \order \rbr{ \log T (\iota+C)^{1/3}  } \leq  \order \rbr{ \log^2 T (\iota+C)^{1/3}  }.
\end{align*}

\end{proof}

\begin{lemma} \label{lem:bound4_DeltaJ_abs_gap}
Suppose that $\calE_3$ holds where $\calE_3$ is defined in \pref{def:nice_event_PKL}. 
For any $m \geq 2$, any $J \in \Pi_{m+1}^{out}$, and any $\theta(a) \in J$, we have $\Delta_J \leq \frac{|\theta(a)-\theta(\optp)|}{\wt{K}}$.
\end{lemma}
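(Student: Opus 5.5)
This is the transformed-domain analogue of \pref{lem:bound4_DeltaJ_abs_gap_Cal2}, and I will copy that argument, working in $\theta$-coordinates throughout. On $\calE_3$, \pref{lem:concentration_optp_PKL} gives $|\theta(\wh{y}_m)-\theta(\optp)|\le r_m$. Hence
\[
\theta(\optp)\in[\theta(\wh{y}_m)-r_m,\theta(\wh{y}_m)+r_m].
\]
By symmetry it suffices to treat an interval $J$ in a left band, $J\in\calL_{m,q}$, for some $q\in\{0,\dots,Q\}$. Here the indexing $\Pi_{m+1}^{\text{out}}$ in the statement is read as the outer partition built at epoch $m$ in \pref{alg:framework_non_uniform_PKLCal}.

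The first key step is to locate the inner interval relative to the center. One wrinkle is that $I_m$ has been intersected with $[\theta(\eta),\theta(1-\eta)]$. If the left endpoint is unclipped, then $a_m=\theta(\wh{y}_m)-2r_m$. If it is clipped, then $a_m=\theta(\eta)$, but then $L_{m,q}$ lies below $\theta(\eta)$. After intersecting with $[\theta(\eta),\theta(1-\eta)]$, $L_{m,q}$ is empty or degenerate, so there is nothing to prove. I would state this case explicitly to keep the argument honest.

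In the unclipped case, any $\theta(a)\in J\subseteq L_{m,q}$ satisfies
\[
\theta(a)<a_m-(2^q-1)r_m=\theta(\wh{y}_m)-2r_m-(2^q-1)r_m .
\]
Combining this with $\theta(\optp)\ge\theta(\wh{y}_m)-r_m$ gives
\[
\theta(\optp)-\theta(a)\ge 2^q r_m\ge |L_{m,q}|,
\]
since the band has length at most $(2^{q+1}-1)r_m-(2^q-1)r_m=2^qr_m$.

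Next I use that $J$ is one of $K$ equal pieces of $L_{m,q}$ from $\UnifPart$ (\pref{def:unifpart}). This gives
\[
\Delta_J\le |L_{m,q}|/K\le |L_{m,q}|/\wt K\le |\theta(a)-\theta(\optp)|/\wt K,
\]
using $K=\max\{\wt K,2\}\ge\wt K$. The right bands $\calR_{m,q}$ are handled identically, using $\theta(\optp)\le\theta(\wh{y}_m)+r_m$ and $b_m=\theta(\wh{y}_m)+2r_m$ when unclipped.

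The only real obstacle is the boundary handling. This covers the clipping to $[\theta(\eta),\theta(1-\eta)]$ and the possibility that a truncated band is shorter than $2^qr_m$; the latter only helps, since $\Delta_J$ shrinks. I would check that in every case where the band is nonempty, its upper end still lies at or below $a_m-(2^q-1)r_m$ with $a_m=\theta(\wh{y}_m)-2r_m$. If instead the inner interval's left end were clipped at $\theta(\eta)$ while $J$ is nonempty, that would contradict $J\subseteq[\theta(\eta),\theta(1-\eta)]$ lying strictly left of $a_m$.
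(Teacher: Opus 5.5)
Your proposal is correct and follows the paper's proof essentially step for step. You obtain $\theta(\optp)\ge\theta(\wh{y}_m)-r_m$ from \pref{lem:concentration_optp_PKL} on $\calE_3$, dismiss the clipped case $a_m=\theta(\eta)$ because the band is then empty (or degenerate, so $\Delta_J=0$, which is slightly more careful than the paper's ``$J$ is empty''), and in the unclipped case derive $\theta(\optp)-\theta(a)\ge 2^q r_m\ge |L_{m,q}|$ before dividing by $K\ge\wt{K}$.
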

\begin{proof}
Fix an epoch $m\geq 2$ and an index $q$.
Recall that $\Pi_{m}^{(\text{out})}= \{\calL_{m,q}\}_{q=0}^{Q} \cup \{\calR_{m,q}\}_{q=0}^{Q}$.
We first show that for any $J \in \calL_{m,q}$ and any $\theta(a) \in J$, we have $\Delta_J \leq \frac{|\theta(a)-\theta(\optp)|}{\wt{K}}$.
As $\theta(\optp) \in \sbr{ \theta(\wh{y}_m) - r_m,\theta(\wh{y}_m) +r_m  } $ by \pref{lem:concentration_optp_PKL}, we have $\theta(\optp) \geq \theta(\wh{y}_m) - r_m$. Since $J \in \calL_{m,q}$, $\theta(a) \in J$, and $L_{m,q} = [a_m -(2^{q+1} -1)r_m , a_m -(2^{q}-1) r_m ] \cap [\theta(\eta),\theta(1-\eta)]$, where $a_m = \max \cbr{\theta(\eta),  \theta(\wh{y}_m) - 2r_m}$, we have
\[
\theta(a) \leq a_m - (2^{q}-1) r_m .
\]

Consider two cases. If $a_m = \theta(\eta)$, then interval $J $ is empty, and the claimed result holds trivially. It remains to consider $a_m = \theta(\wh{y}_m) - 2r_m$.
We have
\begin{align*}
\theta(\optp) - \theta(a) \geq    \theta( \wh{y}_m) - r_m  - \rbr{\theta( \wh{y}_m) - 2r_m -(2^{q}-1) r_m }  \geq  2^q r_m \geq |L_{m,q}|.
\end{align*}

Using $K \geq \wt{K}$, we can show that
\begin{align*}
\Delta_J \leq \frac{ |L_{m,q}|}{K} \leq \frac{ \theta(\optp) - \theta(a) }{K} = \frac{ |\theta(\optp) - \theta(a) |}{K} \leq \frac{ |\theta(\optp) - \theta(a) |}{\wt{K}}.
\end{align*}

One can symmetrically repeat this argument to show that for any $J \in \calR_{m,q}$ and any $\theta(a) \in J$, $\Delta_J \leq \frac{|\theta(\optp) - \theta(a)|}{\wt{K}}$ holds. Finally, repeating this argument for all $q,m$ completes the proof.
\end{proof}

\begin{lemma} \label{lem:two_point_dist_theta}
Let $\loss(\cdot,\cdot)$ be the log loss.
For any interval $J \in \Pi_m$, any point $q \in [0,1]$ such that $\theta(q) \in J$, and any $x \in \{0,1\}$, if $\Delta_J \leq \pi/2$, then $\E_{s \sim H(q)}[\loss(s,x)]-\loss(q,x) \leq \Delta_J^2$.
\end{lemma}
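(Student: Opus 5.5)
The plan is to compute the rounding gap explicitly in probability space, bound it by an elementary rational expression, and then push that expression back into the $\theta$-geometry, where trigonometric identities turn it into a function of $\Delta_J$ alone.

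\textbf{Setup.} Let $J$ have endpoints $u\le v$, so $v-u=\Delta_J\le\pi/2$. Put $d=\psi(u)$ and $b=\psi(v)$; since $\psi$ is increasing and $\theta(q)\in J$, we have $d\le q\le b$. If $d=b$ the claim is trivial, so assume $d<b$. Write
\[
\alpha=d(1-d),\qquad \beta=b(1-b),\qquad A=\frac{b-q}{\beta},\qquad B=\frac{q-d}{\alpha},
\]
so that $H(q)(d)=A/(A+B)$ and $H(q)(b)=B/(A+B)$.

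\textbf{Case $x=1$.} Here $\loss(s,1)=-\log s$, and
\[
\E_{s\sim H(q)}[\loss(s,1)]-\loss(q,1)=\frac{A\log(q/d)+B\log(q/b)}{A+B}.
\]
Use $\log(q/d)\le (q-d)/d$ and $\log(q/b)=-\log(b/q)\le -(b-q)/b$. The numerator is then at most
\[
\frac{(b-q)(q-d)}{bd}\Big(\frac{1}{1-b}-\frac{1}{1-d}\Big)=\frac{(b-q)(q-d)(b-d)}{bd(1-b)(1-d)}.
\]
Multiplying numerator and denominator by $\alpha\beta$, the gap is bounded by
\[
\frac{(b-q)(q-d)(b-d)}{(b-q)\alpha+(q-d)\beta}.
\]
The case $x=0$ is identical after the symmetry $s\mapsto 1-s$, which swaps the roles of $d$ and $b$ but leaves $\alpha$ and $\beta$ unchanged. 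So it suffices to bound this single expression.

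\textbf{Key step (the main obstacle).} The naive bound, denominator $\ge\min(\alpha,\beta)(b-d)$, fails. It produces a ratio of $\max\sin^2$ over $J$ to $\min(\sin^2u,\sin^2v)$, and this ratio is unbounded when $u$ is close to $0$ (or $v$ is close to $\pi$). The fix is to use that, for $X,Y\ge0$,
\[
\frac{XY}{X\alpha+Y\beta}\le\frac{X+Y}{(\sqrt\alpha+\sqrt\beta)^2}.
\]
This holds because the left side is $1/(\alpha/Y+\beta/X)$, and minimizing $\alpha/Y+\beta/X$ subject to $X+Y$ fixed (Cauchy--Schwarz) gives $(\sqrt\alpha+\sqrt\beta)^2/(X+Y)$. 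Taking $X=b-q$ and $Y=q-d$, so that $X+Y=b-d$, the gap is at most
\[
\frac{(b-d)^2}{(\sqrt\alpha+\sqrt\beta)^2}.
\]

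\textbf{Trigonometric finish.} Now use $\psi(z)=\sin^2(z/2)$ and $\psi(z)(1-\psi(z))=\tfrac14\sin^2 z$, so that $\sqrt\alpha=\tfrac12\sin u$ and $\sqrt\beta=\tfrac12\sin v$, both nonnegative on $[0,\pi]$. We also have
\[
b-d=\sin\tfrac{v-u}{2}\,\sin\tfrac{u+v}{2},\qquad \sin u+\sin v=2\sin\tfrac{u+v}{2}\cos\tfrac{v-u}{2}.
\]
The common factor $\sin\tfrac{u+v}{2}$ cancels. This cancellation is exactly what removes the boundary degeneracy that broke the naive bound. The gap is therefore at most
\[
\tan^2\tfrac{v-u}{2}.
\]
Since $(v-u)/2\le\pi/4$ and $\tan y\le \tfrac{4}{\pi}y$ on $[0,\pi/4]$ (by convexity of $\tan$), we get
\[
\tan^2\tfrac{v-u}{2}\le\Big(\tfrac{2}{\pi}\Delta_J\Big)^2\le\Delta_J^2,
\]
which is the claimed bound.
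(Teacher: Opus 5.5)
Your proof is correct and follows the same route as the paper: applying $\log x\le x-1$ at both support points yields exactly the paper's intermediate bound $\frac{(q-d)(b-q)}{q+db-q(d+b)}$ (note $(b-q)\alpha+(q-d)\beta=(b-d)\bigl(q+db-q(d+b)\bigr)$), whose maximum over $q$ is $\tan^2(\Delta_J/2)$, and both arguments finish with a linear bound on $\tan$ over $[0,\pi/4]$. The only difference is that you carry out the maximization over $q$ with Cauchy--Schwarz and sum-to-product identities, whereas the paper differentiates to find the maximizer and applies the tangent subtraction formula; yours is a slightly cleaner way to reach the same value.
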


\begin{proof}
Suppose that $J=[u,v] \subseteq [\theta(\eta),\theta(1-\eta)]$.
Let $a=\theta^{-1}(u)$ and $b=\theta^{-1}(v)$.
Then, 
\begin{align*}
H(q)(a) =  \frac{ \frac{b-q}{b(1-b)}  }{ \frac{b-q}{b(1-b)} +\frac{q-a}{a(1-a)}    } \quad 
H(q)(b) =  \frac{ \frac{q-a}{a(1-a)}  }{ \frac{b-q}{b(1-b)} +\frac{q-a}{a(1-a)}    } .
\end{align*}

For shorthand, we define
\[
F(q) =q+ab-q(a+b).
\]

Then, we consider two cases for $x =1$ and $x=0$. 

\textbf{Case 1: $x=1$.} In this case, $\loss(s,1)=-\log(s)$, which gives
\[
\E_{s \sim H(q)}[\loss(s,1)]-\loss(q,1)=\E_{s \sim H(q)} [\log(q/s)] \leq q \E_{s \sim H(q)} \sbr{ \frac{1}{s} } -1 =  \frac{(q-a)(b-q)}{F(q)} ,
\]
where the inequality uses $\log(x) \leq x-1$, and the last equality follows from $\E_{s \sim H(q)} \sbr{ \frac{1}{s} }=\frac{1-q}{F(q)}$.

\textbf{Case 2: $x=0$.} In this case, $\loss(s,0)=-\log(1-s)$, which gives
\begin{align*}
\E_{s \sim H(q)}[\loss(s,0)]-\loss(q,0) &=\E_{s \sim H(q)} \sbr{\log \rbr{ \frac{1-q}{1-s}  }} \\
&\leq (1-q) \E_{s \sim H(q)} \sbr{ \frac{1}{1-s} } -1 =  \frac{(q-a)(b-q)}{F(q)} ,
\end{align*}
where the inequality uses $\log(x) \leq x-1$, and the last equality follows from $\E_{s \sim H(q)} \sbr{ \frac{1}{1-s} }=\frac{q}{F(q)}$.

Combining both cases, we have that for any $y \in \{0,1\}$, 
\[
\E_{s \sim H(q)}[\loss(s,y)]-\loss(q,y) \leq   \frac{(q-a)(b-q)}{F(q)} =:G(q)  .
\]

Differentiating function $G(q)$ with respect to $q$ and setting to zero gives the unique maximizer
\[
q^* =  \frac{ \sqrt{ab}   }{ \sqrt{ab} +\sqrt{(1-a)(1-b)}  }.
\]

Using the facts that $\sqrt{a/(1-a)}=\tan (\theta(a)/2)$ and $\sqrt{b/(1-b)}=\tan (\theta(b)/2)$, we have
\begin{align*}
\E_{s \sim H(q)}[\loss(s,y)]-\loss(q,y)  \leq G(q^*) &=   \rbr{    \frac{ \sqrt{ \frac{b}{1-b}}  - \sqrt{\frac{a}{1-a} }   }{ 1+ \sqrt{ \frac{a}{1-a} \frac{b}{1-b}  }   }   }^2  =  \tan^2 \rbr{  \frac{\theta(b)-\theta(a)}{2} }.
\end{align*}
where the second equality follows from the tangent subtraction formula.

Finally, using the fact that for any $x \in [0,\pi/4]$, $\tan x \leq 2x$ and the definitions of $a,b$, we show that
\[
\E_{s \sim H(q)}[\loss(s,y)]-\loss(q,y)  \leq \tan^2 \rbr{  \frac{\Delta_J}{2} } \leq \Delta_J^2,
\]
which thus completes the proof.
\end{proof}

\begin{lemma} \label{lem:theta_diff_basic_inequality}
Let $\theta(p)=2\arcsin(\sqrt{p})$ for any $p \in [0,1]$.
For any $p,q \in [0,1]$,
\begin{align*}
& \abr{  \theta(p) -\theta(q)}^2  \leq  \min \cbr{ 2\pi^2  \abr{ p-q },\pi^2  \KL(p,q),\pi^2  \KL(q,p) }.
\end{align*}
\end{lemma}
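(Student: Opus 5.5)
The plan is to route all three bounds through the Bhattacharyya coefficient $\mathrm{BC}(p,q)=\sqrt{pq}+\sqrt{(1-p)(1-q)}$. The angular parametrization $\theta$ is exactly the one that turns this coefficient into a cosine.

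\textbf{Step 1 (angle identity).} Since $\theta(p)=2\arcsin(\sqrt p)$, we have $\sqrt p=\sin(\theta(p)/2)$ and $\sqrt{1-p}=\cos(\theta(p)/2)$, with $\theta(p)/2\in[0,\pi/2]$. The cosine subtraction formula then gives
\[
\mathrm{BC}(p,q)=\cos\rbr{\frac{\theta(p)-\theta(q)}{2}}.
\]
Write $D=|\theta(p)-\theta(q)|\in[0,\pi]$. Then
\[
1-\mathrm{BC}(p,q)=1-\cos(D/2)=2\sin^2(D/4).
\]
Because $D/4\in[0,\pi/4]\subseteq[0,\pi/2]$, Jordan's inequality $\sin x\ge 2x/\pi$ yields $\sin(D/4)\ge D/(2\pi)$. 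Hence
\[
1-\mathrm{BC}(p,q)\ge \frac{D^2}{2\pi^2},\qquad\text{that is,}\qquad D^2\le 2\pi^2\rbr{1-\mathrm{BC}(p,q)}.
\]
This is the core inequality. The remaining steps only upper bound $1-\mathrm{BC}$ by each of the three quantities in the statement.

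\textbf{Step 2 ($\ell_1$ bound).} Expanding squares gives
\[
2\rbr{1-\mathrm{BC}(p,q)}=(\sqrt p-\sqrt q)^2+(\sqrt{1-p}-\sqrt{1-q})^2.
\]
For $a,b\ge0$ we have $(\sqrt a-\sqrt b)^2\le|\sqrt a-\sqrt b|(\sqrt a+\sqrt b)=|a-b|$. Applying this to both terms bounds the right-hand side by $2|p-q|$, so $1-\mathrm{BC}\le|p-q|$. Combining with Step 1 gives $D^2\le 2\pi^2|p-q|$.

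\textbf{Step 3 (KL bounds).} Jensen's inequality for the concave logarithm gives
\[
-\KL(p,q)=\E_{Y\sim\texttt{Ber}(p)}\sbr{\log\frac{\P_q(Y)}{\P_p(Y)}}\le 2\log\E_{Y\sim\texttt{Ber}(p)}\sbr{\sqrt{\frac{\P_q(Y)}{\P_p(Y)}}}=2\log\mathrm{BC}(p,q),
\]
obtained by writing the log-ratio as $2\log\sqrt{\cdot}$. Then $\log x\le x-1$ yields
\[
\KL(p,q)\ge -2\log\mathrm{BC}(p,q)\ge 2\rbr{1-\mathrm{BC}(p,q)}\ge \frac{D^2}{\pi^2},
\]
so $D^2\le\pi^2\KL(p,q)$. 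Both $\mathrm{BC}$ and $D$ are symmetric in $(p,q)$, so the same argument gives $D^2\le\pi^2\KL(q,p)$. Taking the minimum of the three bounds proves the lemma.

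\textbf{Main obstacle and boundary cases.} The only delicate point is matching the constants in Step 1, namely that $D/4\le\pi/4$ so Jordan's inequality applies. Endpoint cases, where $p$ or $q$ lies in $\{0,1\}$ and $\KL$ may be $+\infty$ or involve $0\log 0$, must also be handled. There I will use the conventions $0\log0=0$ and treat a bound with $\KL=+\infty$ as trivially true. The Jensen step remains valid when restricted to the support of $\texttt{Ber}(p)$.
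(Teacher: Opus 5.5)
Your proposal is correct and follows essentially the same route as the paper: your quantity $2(1-\mathrm{BC}(p,q))$ is exactly the squared Hellinger distance $H^2(p,q)$ used there, and your three steps ($H^2=4\sin^2(D/4)$ combined with Jordan's inequality, then $H^2\le 2|p-q|$, then $H^2\le \KL$) match the paper's. The one difference is that you derive $H^2\le\KL$ yourself via Jensen's inequality, whereas the paper simply cites it as a known fact.
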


\begin{proof}
Let $H^2(p,q)$ be (squared) Hellinger distance for two Bernoulli distributions with mean $p,q$, defined as
\[
H^2(p,q):=(\sqrt p-\sqrt q)^2+(\sqrt{1-p}-\sqrt{1-q})^2.
\]

For shorthand, we define
\[
u=\frac{\theta(p)}{2}=\arcsin(\sqrt p),
\qquad
v=\frac{\theta(q)}{2}=\arcsin(\sqrt q).
\]

With these definitions, we have $\sqrt p=\sin u, \sqrt{1-p}=\cos u$, and $\sqrt q=\sin v, \sqrt{1-q}=\cos v.$
Then, we write
\[
H^2(p,q)
=(\sin u-\sin v)^2+(\cos u-\cos v)^2
=2-2(\sin u\sin v+\cos u\cos v)
=2-2\cos(u-v).
\]

Therefore, we have
\[
H^2(p,q)=2-2\cos(u-v)=4\sin^2 \rbr{\frac{u-v}{2}}
=4\sin^2 \rbr{\frac{\theta(p)-\theta(q)}{4}}.
\]

Since $\theta([0,1])=[0,\pi]$, we have $\left|\frac{\theta(p)-\theta(q)}{4}\right|\leq \frac{\pi}{4}.$
Using the fact that $\sin x \ge \frac{2}{\pi}x$,
for all $x\in [0,\frac{\pi}{2}]$, we obtain
\begin{align} \label{eq:abs_diff_theta_hellinger}
    H^2(p,q)
=
4\sin^2\!\left(\frac{|\theta(p)-\theta(q)|}{4}\right)
\ge
4\left(\frac{2}{\pi}\cdot \frac{|\theta(p)-\theta(q)|}{4}\right)^2
=
\frac{|\theta(p)-\theta(q)|^2}{\pi^2}.
\end{align}

Using the facts that $H^2(p,q)=H^2(q,p)$ and $H^2(p,q) \leq \KL(p,q)$, 
\begin{equation}\label{eq:theta_hellinger}
|\theta(p)-\theta(q)|^2 \le \pi^2 \min \cbr{ \KL(q,p),\KL(p,q) }.
\end{equation}

Next we bound \(H^2(p,q)\) by \(2|p-q|\). Indeed, using $(\sqrt p+\sqrt q)^2 \geq p+q+2\sqrt{pq} \geq p+q \geq |p-q|$ to show
\[
(\sqrt p-\sqrt q)^2
=
\frac{(p-q)^2}{(\sqrt p+\sqrt q)^2}
\le |p-q|.
\]

Moreover, we can show
\[
(\sqrt{1-p}-\sqrt{1-q})^2 = \frac{(p-q)^2}{ \rbr{  \sqrt{1-p}+\sqrt{1-q} }^2  }\le |p-q|,
\]
where the last inequality follows from
\[
\rbr{  \sqrt{1-p}+\sqrt{1-q} }^2= (1-p)+(1-q)+2\sqrt{(1-p)(1-q)} \geq (1-p)+(1-q) \geq |p-q|.
\]

Therefore, we have
\begin{equation}\label{eq:hellinger_l1}
H^2(p,q)\le 2|p-q|.
\end{equation}
Combining \pref{eq:abs_diff_theta_hellinger} and \pref{eq:hellinger_l1} yields
\[
|\theta(p)-\theta(q)|^2 \le 2\pi^2 |p-q|.
\]

Combining all the above, we conclude the proof.
\end{proof}

\begin{lemma} \label{lem:bound_diff_log_loss_square}
Let $\loss(\cdot,\cdot)$ be the log loss. If $Y \sim \texttt{Ber}(p)$ and $x \in [\eta ,1-\eta]$, then
\[
\E_{Y \sim \mathrm{Ber}(p) } \sbr{ \rbr{ \loss(x,Y) -\loss(p,Y) }^2 } \leq 2(\log(1/\eta)+1) \KL(p,x).
\]
\end{lemma}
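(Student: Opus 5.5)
The plan is to rewrite both sides as expectations under $\texttt{Ber}(x)$ of functions of the likelihood ratio, and then compare the two integrands pointwise. For $y\in\{0,1\}$ the loss difference is
\[
\loss(x,y)-\loss(p,y)=\log\frac{P(y)}{X(y)}, \qquad P=\texttt{Ber}(p),\ X=\texttt{Ber}(x).
\]
Set $r(y)=P(y)/X(y)$, so $r(1)=p/x$ and $r(0)=(1-p)/(1-x)$. Since $x\in[\eta,1-\eta]$, we have $0\le r(y)\le 1/\eta=:B$, and $B\ge 2\ge 1$.

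The left-hand side becomes
\[
\E_{Y\sim P}\big[\log^2 r(Y)\big]=\E_{Y\sim X}\big[r(Y)\log^2 r(Y)\big],
\]
with the convention $0\log^2 0=0$, which covers $p\in\{0,1\}$. For the right-hand side, note that $\E_{Y\sim X}[r(Y)-1]=0$. Hence
\[
\KL(p,x)=\E_{Y\sim X}\big[r\log r\big]=\E_{Y\sim X}\big[\varphi(r(Y))\big], \qquad \varphi(u):=u\log u-u+1\ge 0.
\]
So it suffices to prove the pointwise inequality
\[
u\log^2 u\le 2(\log B+1)\,\varphi(u) \quad \text{for all } u\in[0,B],
\]
and then take the $X$-expectation of both sides.

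For the pointwise inequality I would split into two cases.

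Case $u\in(0,1]$. Let $g(u)=2\varphi(u)-u\log^2 u$. Then $g(1)=0$ and
\[
g'(u)=2\log u-\log^2 u-2\log u=-\log^2 u\le 0,
\]
so $g$ is nonincreasing. Hence $g(u)\ge g(1)=0$ on $(0,1]$, and the endpoint $u=0$ follows by continuity. This gives the bound with constant $2\le 2(\log B+1)$.

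Case $u\in[1,B]$. Let $c=2(\log B+1)$ and $h(u)=c\,\varphi(u)-u\log^2 u$. Then $h(1)=0$ and
\[
h'(u)=\log u\,(c-2-\log u)=\log u\,(2\log B-\log u),
\]
which is nonnegative for $1\le u\le B^2$, and in particular on $[1,B]$. So $h\ge 0$ there.

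Combining the two cases and averaging over $Y\sim X$ gives the claim.

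I expect the main obstacle to be choosing the right change of measure. Comparing directly under $P$ does not give a clean pointwise bound, because $\KL$ is not a pointwise-nonnegative integrand under $P$. Adding the zero-mean term $-(r-1)$ under $X$ is what makes $\varphi\ge 0$ and reduces the problem to the one-variable calculus above. The boundedness $r\le 1/\eta$, which comes from $x\in[\eta,1-\eta]$, is exactly what produces the $\log(1/\eta)$ factor.
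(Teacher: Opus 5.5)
Your proof is correct and is essentially the paper's argument. Your pointwise bound $u\log^2 u\le 2(\log(1/\eta)+1)(u\log u-u+1)$ on $[0,1/\eta]$ is the paper's inequality $y^2\le 2(\log(1/\eta)+1)(e^{-y}+y-1)$ for $y\le\log(1/\eta)$, after substituting $u=e^{y}$ and multiplying by $u$; likewise your identity $\E_{Y\sim\mathrm{Ber}(x)}[r(Y)]=1$ is the paper's $\E_{Y\sim\mathrm{Ber}(p)}[e^{\loss(p,Y)-\loss(x,Y)}]=1$ read under the other measure, so the obstacle you anticipate does not really arise (the paper stays under $\texttt{Ber}(p)$ by adding the zero-mean term $e^{-y}-1$), and your calculus verification of the scalar inequality usefully fills in a step the paper only asserts.
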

\begin{proof}
For any $y \leq \log(1/\eta)$,
\begin{equation} \label{eq:tool_ineq_1}
    y^2 \leq 2(\log(1/\eta)+1)(e^{-y}+y-1).
\end{equation}

We first verify $\loss(x,Y) -\loss(p,Y) \leq \log(1/\eta)$. To this end, it suffices to show that $\loss(x,0) -\loss(p,0) \leq \log(1/\eta)$ and $\loss(x,1) -\loss(p,1) \leq \log(1/\eta)$. 
Indeed, one can use $x \in [\eta,1-\eta]$ to show that $\loss(x,0) -\loss(p,0)=\log( (1-p)/(1-x) ) \leq \log(1/(1-x)) \leq \log(1/\eta)$ and $\loss(x,1) -\loss(p,1)=\log(p/x ) \leq \log(1/x) \leq \log(1/\eta)$.

Then, we apply $ y =  \loss(x,Y) -\loss(p,Y)$ in \pref{eq:tool_ineq_1} and take expectation on both sides to get
\begin{align*}
    \E \sbr{   (\loss(x,Y) -\loss(p,Y))^2  } \leq 2(\log(1/\eta)+1) \E \sbr{ e^{ \loss(p,Y) - \loss(x,Y)}  + \loss(x,Y) -\loss(p,Y)   - 1  }.
\end{align*}

Let $\P_u$ be the Bernoulli law with parameter $u$.  One can show that
\[
\E \sbr {  \loss(x,Y) -\loss(p,Y) }= \E \sbr { \log\frac{\P_p(Y)}{\P_x(Y)} }= \KL(p,x),
\]
and 
\[
\E \sbr{ e^{\loss(p,Y) -\loss(x,Y)  }  } = \E \sbr{ e^{ \log\frac{\P_x(Y)}{\P_p(Y)} }  } = \E \sbr{  \frac{\P_x(Y)}{\P_p(Y)}   }  =1.
\]

Combining the above equalities, we get the claimed bound.
\end{proof}

\subsection{Main Result for $\PKL$}

\begin{theorem} \label{thm:PKLCal_bound_knownC}
With $\Alg$ instantiated by Algorithm 1 of \citet{luo2025simultaneous}, $N_m$ given in \pref{eq:Nm_PKL}, $K$ given in \pref{eq:Kmq_PKL}, and $\eta=1/(T+1)$, \pref{alg:framework_non_uniform_PKLCal} ensures that with probability at least $1-\delta$, 
\begin{align*}
\PKL  \leq \order \rbr{\iota \log^2 T+ (1+C)^{1/3}  \log^{\frac{7}{3}}  T }.
\end{align*}

Moreover, the computational complexity per round is $\otil \big((1+C)^{2/3}\big)$.
\end{theorem}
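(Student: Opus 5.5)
The plan mirrors the proof of \pref{thm:PCal2_bound}, with the squared loss replaced by the log loss and the partition living in the $\theta$-domain. We condition on the nice event $\calE_3$ (\pref{def:nice_event_PKL}) and use subadditivity (\pref{corr:decomp_Cal2_into_block}) to write $\PKL \le \sum_m \PKL^{(m)}$. The first epoch, and epochs with $s_m=\order(\iota)$, contribute $\order(\log T)$ after using the clipping of predictions to $[\eta,1-\eta]$. For each epoch $m\ge 2$ we invoke the equivalence from \citet{luo2025simultaneous} between $\PKL^{(m)}$ and pseudo swap regret with respect to log loss. The same chain of equalities as in \pref{eq:block_swap_reg_bound_step1} then gives
\[
\PKL^{(m)} \le \sum_{t\in\calT_m}\sum_{s\in\calZ_m}\calP_t(s)\beta_{s,t} + \sum_{s}\ExtReg_s^{(m)},
\]
where $\beta_{s,t}=\E_{a\sim H(a_{s,t})}[\loss(a,y_t)]-\loss(a_{s,t},y_t)$. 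The EWOO regret for exp-concave log loss, on the clipped range guaranteed by \pref{lem:range_ast}, bounds the last sum by $\order(|\Pi_m|\log T)$.

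Next we use \pref{lem:two_point_dist_theta}, which applies because the choices of $K$ and $N_m$ enforce $\Delta_J\le\pi/2$. It gives $\beta_{s,t}\le \Delta_{J_{s,t}}^2$, and we split the sum into inner and outer parts. The inner part is at most $s_m|I_m|^2/N_m^2$, and together with $N_m\log T$ it is controlled by \pref{lem:inner_error_PKLCal} (since $N_m\le 2\wt N_m$), giving $\order(\log^2T(\iota+C)^{1/3})$ overall. For the outer part we apply \pref{lem:bound4_DeltaJ_abs_gap} to get $\Delta_{J_{s,t}}^2\le (\theta(a_{s,t})-\theta(\optp))^2/\wt K^2$. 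Then \pref{lem:theta_diff_basic_inequality} turns this into at most $\pi^2\KL(\optp,a_{s,t})/\wt K^2$. Finally, \pref{lem:bound_occupancy_PKL} with $\eta=1/(T+1)$ (so $\log(1/\eta)=\order(\log T)$ and $\eta s_m\le1$) bounds the outer part by
\[
\order\big((|\Pi_m|\log T+C_m\log T+\iota\log T)/\wt K^2\big).
\]

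To sum up, we use $|\Pi_m^{\text{out}}|=\order(K\log T)$ and $\sum_m C_m\le C$. The outer contributions total $\order(\log T\cdot K\log T + \iota\log^2T + C\log T/\wt K^2)$. With $\wt K=\lceil((1+C)/\log^2T)^{1/3}\rceil$, both $K\log^2T$ and $C\log T/\wt K^2$ are $\order((1+C)^{1/3}\log^{4/3}T\cdot \log T)=\order((1+C)^{1/3}\log^{7/3}T)$. Summing the $\log T$ epochs and adding the inner term then yields the stated $\order(\iota\log^2T+(1+C)^{1/3}\log^{7/3}T)$ bound, up to bookkeeping of the log powers. The computational complexity matches \pref{thm:PCal2_bound}: each round computes a stationary distribution of a sparse $|\calZ_m|\times|\calZ_m|$ matrix at cost $\otil(|\Pi_m|^2)$, and $|\Pi_m|=\otil((1+C)^{1/3})$ as in \pref{thm:cal2_approach1}.

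The main obstacle is the careful log-factor accounting. The occupancy lemma carries an extra $\log(1/\eta)$ on $C_m$, and this is exactly why $K$ is tuned with $\log^2T$ rather than $\iota$. A second source of care is the boundary regions near $\theta(\eta),\theta(1-\eta)$, where the partition is truncated. There we must check that \pref{lem:bound4_DeltaJ_abs_gap} still applies (empty bands are trivial) and that predictions at $\eta,1-\eta$ do not inflate the KL terms.
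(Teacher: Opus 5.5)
Your proposal is correct and follows essentially the same route as the paper's proof. Both arguments use subadditivity across epochs, the swap-regret decomposition with the EWOO external-regret bound, \pref{lem:two_point_dist_theta} for the rounding cost, the chain \pref{lem:bound4_DeltaJ_abs_gap}, \pref{lem:theta_diff_basic_inequality}, \pref{lem:bound_occupancy_PKL} for the outer region, and \pref{lem:inner_error_PKLCal} for the inner region. The only blemishes are cosmetic: singling out epochs with $s_m=\order(\iota)$ is unnecessary (and would cost $\order(\iota\log T)$ rather than $\order(\log T)$), and the leftover $\order(\log^3 T)$ from the ceiling in $\wt K$ (i.e., $K\ge 2$ over $\order(\log T)$ epochs) is simply absorbed into $\iota\log^2 T$.
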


\begin{proof}
The computational complexity analysis follows that of \pref{thm:PCal2_bound}. The only difference is that the base algorithm employs the EWOO subroutine, which incurs $\order(1)$ cost per round by the analysis in \citep[Appendix C.1]{luo2025simultaneous}. Thus, the claimed computational complexity is immediate.

The following proof conditions on event $\calE_3$, defined in \pref{def:nice_event_PKL}. 
Let $\bar{\rho}_{m,p} = \frac{ \sum_{t \in \calT_m} \calP_t(p) y_t}{\sum_{t \in \calT_m} \calP_t(p)}$.
\pref{corr:decomp_Cal2_into_block} shows $ \PKL \leq \sum_{m=1}^M \PKL^{(m)}$, where $    \PKL = \sum_{t \in \calT_m} \E_{p \sim \calP_t } \sbr {   \KL \rbr{ \bar{\rho}_{m,p} , p } }$. Then, it suffices to bound each $\PKL^{(m)}$. By \citep[Proposition 1]{luo2025simultaneous}, minimizing $\PKL^{(m)}$ is equivalent to minimizing $\FSR^{(m)}$ with log loss $\loss(p,y) = -y\log p -(1-y)\log(1-p)$.
Recall that $\beta_{s,t}= \E_{a \sim H(a_{s,t})}[\loss(a,y_t)]-\loss(a_{s,t},y_t)$ and recall $\ExtReg_s^{(m)}$ from \pref{eq:pseudo_ext_reg}. Then, we write for any $m \geq 2$
\begin{align*}
\FSR^{(m)}& = \sup_{\phi:  [0,1] \to  [0,1]} \sum_{t \in \calT_m} \E_{s \sim \calP_t}\sbr{\loss(s,y_t)  - \loss(\phi(s),y_t)}\\
& = \sup_{\phi:  [0,1] \to  [0,1]} \sum_{t \in \calT_m} \sum_{s \in \calZ_m} \calP_t(s) \sbr{\loss(s,y_t)  - \loss(\phi(s),y_t)}\\
& =\sum_{s \in \calZ_m } \sup_{k \in [0,1]} \sum_{t \in \calT_m}  \calP_t(s)\rbr{  \loss(s,y_t)  - \loss(k,y_t)  }\\
& =\sum_{s \in \calZ_m } \sum_{t \in \calT_m}  \calP_t(s) \loss(s,y_t)  - \sum_{s \in \calZ_m }\inf_{k \in [0,1]} \sum_{t \in \calT_m}   \calP_t(s) \loss(k,y_t) \\
& =\sum_{s \in \calZ_m } \sum_{t \in \calT_m}  \sum_{s' \in \calZ_m } \rbr{\calP_t(s') \calQ_t(s',s) }\loss(s,y_t)  - \sum_{s \in \calZ_m }\inf_{k \in [0,1]} \sum_{t \in \calT_m}   \calP_t(s) \loss(k,y_t)\\
& =  \sum_{t \in \calT_m}  \sum_{s' \in \calZ_m }  \calP_t(s')   \E_{a \sim H(a_{s',t})}[\loss(a,y_t)]  - \sum_{s \in \calZ_m}\inf_{k \in [0,1]} \sum_{t \in \calT_m}   \calP_t(s) \loss(k,y_t) \\
&=  \sum_{t \in \calT_m}  \sum_{s' \in \calZ_m }  \calP_t(s')   \rbr{\loss(a_{s',t},y_t) + \beta_{s',t} }  - \sum_{s \in \calZ_m }\inf_{k \in  [0,1]} \sum_{t \in \calT_m}   \calP_t(s) \loss(k,y_t)  \\
&=   \sum_{t \in \calT_m} \sum_{s' \in \calZ_m  }    \calP_t(s') \beta_{s',t}   + \sum_{s \in \calZ_m  } \ExtReg^{(m)}_s \\
&\leq    \sum_{t \in \calT_m}   \sum_{s \in \calZ_m  }  \calP_t(s)\beta_{s,t}   +  \order \rbr{ |\Pi_m| \log T}  , \numberthis{} \label{eq:block_swap_reg_bound_step1_PKL}
\end{align*}
where the last inequality follows from \citep[Lemma 4]{luo2025simultaneous} and the fact that $|\calZ_m|=\order (|\Pi_m|)$.
Using \pref{lem:two_point_dist_theta} (our partition ensures $\Delta_J \leq \pi/2$ for all $J \in \Pi_m$) and a similar argument of \pref{eq:sum_pt_betast_bound}, we have that for any $m \geq 2$
\begin{align*}
 \sum_{t \in \calT_m}   \sum_{s \in \calZ_m }  \calP_t(s) \beta_{s,t}   \leq \sum_{t \in \calT_m}  \sum_{s \in \calZ_m }  \calP_t(s) \Delta_{J_{s,t}}^2 \Ind{J_{s,t}  \in \Pi_m^{\text{out}} } + \frac{ s_m |I_{m}|^2  }{ N_{m}^2 }.  \numberthis{} \label{eq:block_swap_reg_bound_step2_PKL}
\end{align*}

Recall that $C_m = \sum_{t \in \calT_m} c_t$.
Then, we show that
\begin{align*}
&\sum_{t \in \calT_m}  \sum_{s \in \calZ_m }  \calP_t(s)  \Delta_{J_{s,t}}^2  \Ind{J_{s,t}  \in \Pi_m^{\text{out}} } \\
& \leq \order \rbr{\frac{\sum_{t \in \calT_m}  \sum_{s \in \calZ_m }  \calP_t(s) |\theta(a_{s,t})-\theta(\optp)|^2}{\wt{K}^2} }  \\
& \leq \order \rbr{\frac{|\Pi_m|\log T +\log(T)C_m +\iota \log(T) }{\wt{K}^2}  } \\
& \leq \order \rbr{|\Pi_m|\log T +\iota \log(T) + \frac{\log(T)C_m }{\wt{K}^2}  },\numberthis{} \label{eq:block_swap_reg_bound_step3_PKL}
\end{align*}
where the first inequality uses \pref{lem:bound4_DeltaJ_abs_gap}, the second inequality follows from \pref{lem:theta_diff_basic_inequality} and \pref{lem:bound_occupancy_PKL} with $\eta=\frac{1}{T+1}$ verified by \pref{lem:range_ast}, and the third inequality bounds $\wt{K}\geq 1$.

Putting \pref{eq:block_swap_reg_bound_step2_PKL} and \pref{eq:block_swap_reg_bound_step3_PKL} into \pref{eq:block_swap_reg_bound_step1_PKL}, we have that for any $m\geq 2$
\begin{align*}
\PKL^{(m)} &=\FSR^{(m)}\\
&\leq \order \rbr{ \iota \log T + |\Pi_m| \log T       +\frac{|I_{m}|^2}{N_{m}^2} s_m +  \frac{C_m \log(T)}{\wt{K}^2} } \\
&= \order \rbr{ \iota \log T + \frac{|I_{m}|^2}{N_{m}^2} s_m+N_{m} \log T  + |\Pi_m^{\text{out}}| \log T   + \frac{C_m \log(T)}{\wt{K}^2}     } \\
&\leq \order \rbr{ \iota \log T+ \frac{|I_{m}|^2}{N_{m}^2} s_m+N_{m} \log T     + \wt{K}\log^2 T + \frac{C_m \log(T)}{\wt{K}^2}    } \\
&\leq \order \rbr{ \iota \log T + \frac{|I_{m}|^2}{N_{m}^2} s_m+N_{m} \log T     +  (1+C)^{1/3}  \log^{\frac{4}{3}}   T  + \frac{C_m \log^{\frac{7}{3}} T}{(1+C)^{2/3}}    },
\end{align*}
where the first inequality follows from \citep[Proposition 1]{luo2025simultaneous}, the second equality uses $|\Pi_m|=|\Pi_m^{\text{in}}| +|\Pi_m^{\text{out}}|$ together with $|\Pi_m^{\text{in}}|=N_{m}$, the third equality follows from $|\Pi_m^{\text{out}}|=\order (\wt{K}\log T )$, and the last inequality uses the choice of $\wt{K}$.

On the one hand, we have 
\begin{align*}
   \sum_{m=2}^M   \rbr{\frac{|I_{m}|^2}{N_{m}^2} s_m+N_{m}\log T } &\leq \sum_{m=2}^M   \rbr{\frac{|I_{m}|^2}{\wt{N}_{m}^2} s_m+\wt{N}_{m}\log T } \\
   &\leq \order \rbr{   \log^2 T (\iota+C)^{1/3}    }.
\end{align*}
where the first inequality uses $N_{m} \in [\wt{N}_{m},2\wt{N}_{m}]$, and the last inequality applies \pref{lem:inner_error_PKLCal}.

We write 
\begin{align*}
   \PKL & \leq \PKL^{(1)}+\sum_{m=2}^M\PKL^{(m)} .  
\end{align*}

We first bound $\PKL^{(1)}= \sum_{t \in \calT_1} \E_{p \sim \calP_t} [\KL(\bar{\rho}_p^{(1)},p)]$, where $\bar\rho^{(1)}_p
=
\frac{\sum_{t\in\mathcal \calT_1} \calP_t(p)y_t}
{\sum_{t\in\mathcal \calT_1}\calP_t(p)} $. Using the fact that for any $p \in [\eta,1-\eta]$ and any $q \in [0,1]$, we have $\KL(q,p) \leq \log(1/\eta)$. Since the sampled prediction $p \sim \calP_t$ is supported on $\calZ_1 \subseteq [\eta,1-\eta]$ at each round,
\[
\PKL^{(1)}=\sum_{t \in \calT_1} \E_{p \sim \calP_t} [\KL(\bar{\rho}_p^{(1)},p)] \leq  s_1 \log(T+1) \leq \order(\log T).
\]

On the other hand, using $M=\order(\log T)$, summing over all epochs $m$, and bounding $\PKL^{(1)}=\order(\log T)$ we have, 
\begin{align*}
 \PKL & \leq \PKL^{(1)}+\sum_{m=2}^M\PKL^{(m)}  \\
 &\leq  \order \rbr{    \log^2 T (\iota+C)^{1/3}   } +\sum_{m=2}^M \rbr{\iota \log T+ (1+C)^{1/3}  \log^{\frac{4 }{3}}  T  + \frac{C_m \log^{\frac{7}{3}} T}{(1+C)^{2/3}}   } \\
& \leq \order \rbr{\iota \log^2 T+ (1+C)^{1/3}  \log^{\frac{7}{3}}  T }.
\end{align*}

The proof is thus complete.
\end{proof}

\section{Omitted Details for $\Caldist_1$ with Improved Time Complexity}

\subsection{Technical Lemmas}

\begin{lemma} \label{lem:cal1_better_inner}
We have
\begin{align*}
   \sum_{m=2}^M\rbr{\iota N_m  + \sqrt{ \iota N_m s_m } + s_m \frac{|I_m|}{N_m}}  \leq \order \rbr{ \iota \log T+  \sqrt{\iota T} + (\iota TC)^{1/3} }.
\end{align*}
\end{lemma}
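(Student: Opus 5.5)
The plan is a per-epoch computation followed by summing geometric series over epochs. The value of $N_m$ is not visible at this point; I assume the choice made for \pref{thm:cal1_better} is the one that balances the last two terms, namely $N_m=\big\lceil (s_m|I_m|^2/\iota)^{1/3}\big\rceil$. (If the actual choice differs, the same steps apply with the exponents adjusted.) The only property of $I_m$ I need is \pref{eq:bound4Im}, that is, $|I_m|\le \order\big(\min\{1,\sqrt{\iota/s_m}+C/s_m\}\big)$. I also assume $\iota\le T$, since otherwise the claimed bound exceeds $T$ and is trivial.

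\textbf{Step 1 (one epoch).} Write $N_m=\wt N_m+\epsilon_m$ with $\wt N_m=(s_m|I_m|^2/\iota)^{1/3}$ and $\epsilon_m\in[0,1)$. Using $\wt N_m\le N_m\le \wt N_m+1$ together with $\sqrt{a+b}\le\sqrt a+\sqrt b$, I will show
\[
\iota N_m+\sqrt{\iota N_m s_m}+s_m\frac{|I_m|}{N_m}\le \order\Big(\iota+\sqrt{\iota s_m}+\iota^{2/3}s_m^{1/3}|I_m|^{2/3}+\iota^{1/3}s_m^{2/3}|I_m|^{1/3}\Big).
\]
The rounding contributes $\iota+\sqrt{\iota s_m}$. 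The term $s_m|I_m|/N_m\le s_m|I_m|/\wt N_m$ equals the balanced quantity $\iota^{1/3}s_m^{2/3}|I_m|^{1/3}$, and so does $\sqrt{\iota\wt N_m s_m}$.

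\textbf{Step 2 (plug in $|I_m|$).} I use $(a+b)^{1/3}\le a^{1/3}+b^{1/3}$ on the last term. This gives $\iota^{1/3}s_m^{2/3}\big((\iota/s_m)^{1/6}+(C/s_m)^{1/3}\big)=\sqrt{\iota s_m}+(\iota C)^{1/3}s_m^{1/3}$.

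For $\iota^{2/3}s_m^{1/3}|I_m|^{2/3}$ I split into two cases, mirroring the proof of \pref{lem:inner_error_Cal2_approach1}.
\begin{itemize}
\item If $s_m\le C$, I use $|I_m|\le 1$, which bounds the term by $\iota^{2/3}s_m^{1/3}$.
\item If $s_m>C$, I use $|I_m|\le\order(\sqrt{\iota/s_m}+C/s_m)$, which bounds the term by $\order(\iota+\iota^{2/3}C^{2/3}s_m^{-1/3})$.
\end{itemize}

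\textbf{Step 3 (sum over epochs).} Since $s_m=2^m$ and $\sum_m s_m\le 2T$, geometric sums are dominated by their largest term.
\begin{itemize}
\item $\sum_m\sqrt{\iota s_m}=\order(\sqrt{\iota T})$.
\item $\sum_m(\iota C)^{1/3}s_m^{1/3}=\order((\iota TC)^{1/3})$.
\item $\sum_{s_m\le C}\iota^{2/3}s_m^{1/3}+\sum_{s_m>C}\iota^{2/3}C^{2/3}s_m^{-1/3}=\order(\iota^{2/3}C^{1/3})$, and $\iota^{2/3}C^{1/3}\le(\iota TC)^{1/3}$ because $\iota\le T$.
\item The additive $\iota$ per epoch, over $M=\order(\log T)$ epochs, gives $\order(\iota\log T)$.
\end{itemize}
Adding these yields the claimed bound.

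The main obstacle I expect is the $\iota N_m$ term in Step 2. A crude bound using only $|I_m|\le 1$ gives $\iota^{2/3}T^{1/3}$, which is not dominated by the target when $C=0$. The case split at $s_m\approx C$ is exactly what avoids this, so I will check that each regime telescopes to $\iota^{2/3}C^{1/3}$ (plus $\iota$ per epoch) and does not pick up a stray $T^{1/3}$ factor.
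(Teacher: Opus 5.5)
Your proof is correct and follows the paper's skeleton. Your choice of $N_m$ is exactly the paper's, and your per-epoch bound $\order\big(\iota+\sqrt{\iota s_m}+\iota^{2/3}s_m^{1/3}|I_m|^{2/3}+\iota^{1/3}s_m^{2/3}|I_m|^{1/3}\big)$ is the paper's first display. Both proofs then finish by geometric summation using \pref{eq:bound4Im}.

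The differences are only in the case analysis:
\begin{itemize}
\item For $\iota^{1/3}s_m^{2/3}|I_m|^{1/3}$, the paper splits into three regimes ($s_m\le C$, $C<s_m\le C^2/\iota$, $s_m>\max\{C,C^2/\iota\}$). Your subadditivity of $x\mapsto x^{1/3}$ gives $\sqrt{\iota s_m}+(\iota C)^{1/3}s_m^{1/3}$ in one line, which is cleaner.
\item For $\iota^{2/3}s_m^{1/3}|I_m|^{2/3}$, the paper bounds the term by $\iota$ when $s_m\le\iota$. Otherwise it bounds it by $\iota^{1/3}s_m^{2/3}|I_m|^{1/3}$, since $\iota|I_m|\le s_m$, and so reuses the first estimate. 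Your split at $s_m\le C$ also works.
\end{itemize}

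Two small corrections.

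First, the ``obstacle'' you anticipate is not real. Since $\iota^{2/3}T^{1/3}=\iota^{1/3}(\sqrt{\iota T})^{2/3}\le\iota+\sqrt{\iota T}$ by AM--GM, even the crude bound $|I_m|\le 1$ suffices for that term.

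Second, your reduction to $\iota\le T$ is justified incorrectly. This lemma is a deterministic inequality whose left-hand side is not a calibration error, so it need not be at most $T$. You only use $\iota\le T$ to get $\iota^{2/3}C^{1/3}\le(\iota TC)^{1/3}$. However, $\iota^{2/3}C^{1/3}\le\iota+(\iota TC)^{1/3}$ holds unconditionally because $C\le T$: if $\iota>T$ then $\iota^{2/3}C^{1/3}\le\iota$. So nothing breaks.
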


\begin{proof}
Using the choice of $N_m=\left \lceil \rbr{s_m}^{1/3} |I_{m}|^{2/3} \iota^{-1/3} \right \rceil$, we have for each $m$
\[
\iota N_m  + \sqrt{ \iota N_m s_m } + s_m \frac{|I_m|}{N_m} \leq \order \rbr{ \iota^{1/3}s_m^{2/3} |I_m|^{1/3} +\iota^{2/3} s_m^{1/3} |I_m|^{2/3} +\sqrt{ \iota s_m} +\iota  },
\]
where the last two terms are used to handle ceiling effects.
Notice that $\sum_{m \geq 2} (\sqrt{\iota s_m} +\iota) \leq \order (\sqrt{\iota T} +\iota \log T)$. We then focus on the remaining two terms.
Since $s_{m+1}=2s_m$, we have
\[
|I_m| \leq \order \rbr{ \min \cbr{1,\sqrt{ \frac{\iota}{s_{m-1}} }  +\frac{C}{s_{m-1}} } }  \leq \order \rbr{ \min \cbr{1,\sqrt{ \frac{\iota}{s_{m}} }  +\frac{C}{s_{m}} } } .
\]

For each $m \geq 2$, we consider three cases for $\iota^{1/3}s_m^{2/3} |I_m|^{1/3} $.

\textbf{Case 1: $s_m\leq C$.} We simply bound $|I_m| \leq 1$ to show $\iota^{1/3}s_m^{2/3} |I_m|^{1/3}  \leq \iota^{1/3}s_m^{2/3}$. Then,
\begin{align*}
\sum_{m:s_m \leq C}\iota^{1/3}s_m^{2/3} |I_m|^{1/3}  \leq \order  \rbr{ \iota^{1/3}C^{2/3}  }  \leq \order  \rbr{ (\iota TC)^{1/3}  }.
\end{align*}

\textbf{Case 2: $C< s_m\leq C^2/\iota$.} This case only exists when $C \geq \iota$, and then $\sqrt{\iota/s_m} \leq C/s_m$. Hence, $|I_m| \leq \order(C/s_m)$, which gives
\begin{align*}
\sum_{m:C< s_m\leq C^2/\iota}\iota^{1/3}s_m^{2/3} |I_m|^{1/3}  \leq \order  \rbr{ \sum_{m:C< s_m\leq C^2/\iota} \iota^{1/3}s_m^{2/3} \rbr{ \frac{C}{s_m}   }^{1/3}  }  \leq \order  \rbr{ (\iota TC)^{1/3}  }.
\end{align*}

\textbf{Case 3: $ s_m >\max\{ C^2/\iota,C\}$.} In this case, $C/s_m \leq \sqrt{\iota/s_m}$, and thus $|I_m| \leq \order(\sqrt{\iota/s_m})$.
\begin{align*}
\sum_{m:s_m >\max\{ C^2/\iota,C\}}\iota^{1/3}s_m^{2/3} |I_m|^{1/3}  \leq \order  \rbr{\sum_{m:s_m >\max\{ C^2/\iota,C\}}  \iota^{1/3}s_m^{2/3} \rbr{\sqrt{ \frac{\iota}{s_m}}   }^{1/3}  }  \leq \order  \rbr{ \sqrt{\iota T}  }.
\end{align*}

Thus,
\[
\sum_{m=2}^M \iota^{1/3}s_m^{2/3} |I_m|^{1/3} \leq \order  \rbr{ \sqrt{\iota T} + (\iota TC)^{1/3}   }.
\]

For $\iota^{2/3} s_m^{1/3} |I_m|^{2/3}$, we consider two cases. If $s_m \leq \iota$, then $\iota^{2/3} s_m^{1/3} |I_m|^{2/3} \leq \iota$. Otherwise, $\iota^{2/3} s_m^{1/3} |I_m|^{2/3} \leq \iota^{1/3} s_m^{2/3} |I_m|^{1/3}$. Combining all the above, we get the claimed bound.
\end{proof}

\subsection{Main Results}

\begin{theorem} \label{thm:cal1_better}
With  $\Alg$ instantiated as \pref{alg:cal1_knownC}, $N_m=\big \lceil \rbr{s_m}^{1/3} |I_{m}|^{2/3} \iota^{-1/3} \big \rceil$,
and $K =\big\lceil \rbr{(1+C)^2/(\iota T) }^{1/3}  \big \rceil$,
\pref{alg:framework_non_uniform} ensures that with probability at least $1-\delta$
\[
\Caldist_1 = \order \rbr{  \iota \log T+  \sqrt{\iota T} + (\iota TC)^{1/3}   }.
\]

The computational complexity per round is $\otil \big( (1+C)^{1/3}\big)$.
\end{theorem}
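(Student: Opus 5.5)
Our plan is to run the same epoch-wise analysis as for \pref{thm:cal2_approach1}, but for $\Caldist_1$, and to use \pref{lem:cal1_better_inner} for the inner region. We condition on the nice event $\calE_1$ (\pref{def:nice_event_Cal2_approach1}). By the subadditivity in \pref{corr:decomp_Cal2_into_block}, $\Caldist_1 \le \sum_m \Caldist_1(\calT_m)$, and the first epoch contributes $\order(1)$. For each epoch $m\ge 2$, \pref{lem:block_version_bound} gives
\[
\Caldist_1(\calT_m) \le \order\Big(\sum_{J\in\Pi_m}\big(\sqrt{\iota n_{m,J}} + \iota + n_{m,J}\Delta_J\big)\Big).
\]
We split this sum into an inner part ($J\in\Pi_m^{\text{in}}$) and an outer part ($J\in\Pi_m^{\text{out}}$).

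\textbf{Inner part.} Cauchy--Schwarz with $\sum_J n_{m,J}\le s_m$ and $\Delta_J=|I_m|/N_m$ bounds it by $\iota N_m+\sqrt{\iota N_m s_m}+s_m|I_m|/N_m$. Summed over $m$, \pref{lem:cal1_better_inner} turns this into $\order(\iota\log T+\sqrt{\iota T}+(\iota TC)^{1/3})$.

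\textbf{Outer part.} We bound each term using \pref{lem:visit_number_control_Cal2_block_main}, which gives $n_{m,J}\le\order(\iota/d_J^2+C_{m,J}/d_J)$. Fix a nonempty $J\in\calL_{m,q}$ (the $\calR_{m,q}$ bands are symmetric), so that $d_J\ge 2^q r_m$ and $\Delta_J\le 2^q r_m/K$.
\begin{itemize}
\item Since $\sqrt{\iota n_{m,J}}\le\sqrt{\iota}\,(\sqrt{\iota}/d_J+\sqrt{C_{m,J}/d_J})$, the first piece is at most $\iota K/(2^q r_m)$ per band. Using $r_m\ge\sqrt{\iota/s_{m-1}}$, summing over $q$ (a geometric series) gives $\order(\sqrt{\iota s_m}\,K)$.
\item The second piece, after Cauchy--Schwarz over the $K$ cells of a band, is at most $\sqrt{\iota K C/(2^q r_m)}$, which sums over $q$ to $\order(\sqrt{K C\sqrt{s_m}}\,\iota^{1/4})$.
\item The term $\iota|\Pi_m^{\text{out}}|$ is $\order(\iota K\log T)$.
\item For the drift term, $n_{m,J}\Delta_J\le(\iota/d_J^2+C_{m,J}/d_J)\,2^q r_m/K \le \iota/(K\,2^q r_m)+C_{m,J}/K$. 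Summing over the $K$ cells and over $q$ gives $\order(\sqrt{\iota s_m}+C\log T/K)$.
\end{itemize}

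With these bounds, the per-epoch outer contribution is roughly
\[
\otil\Big(K\sqrt{\iota s_m}+\sqrt{KC}\,(\iota s_m)^{1/4}+\sqrt{\iota s_m}+\iota K+C/K\Big).
\]
Summing the geometric factors $\sqrt{s_m}$ over epochs gives $\otil(K\sqrt{\iota T}+\sqrt{KC}(\iota T)^{1/4}+\sqrt{\iota T}+C/K)$. The main balance is between $C/K$ and $K\sqrt{\iota T}$.

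This is the step we expect to be the main obstacle. The bound $K\sqrt{\iota T}$ from the first piece is too crude: with $K=\lceil((1+C)^2/(\iota T))^{1/3}\rceil$ it gives $C^{2/3}(\iota T)^{1/3}$ rather than $\sqrt{\iota T}+(\iota TC)^{1/3}$. We would instead keep the finer bound $\sqrt{n_{m,J}}\le\sqrt{s_m}$ in the first band, and also note that every outer band has length at most $1$ while $r_m$ shrinks.

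The better route is to charge the $\sqrt{\iota n_{m,J}}$ terms via Cauchy--Schwarz over all outer cells, using $|\Pi_m^{\text{out}}|=\order(K\log T)$ together with $n_{m,J}\le\iota/d_J^2+C_{m,J}/d_J$. This gives
\[
\sum_{J\in\Pi_m^{\text{out}}}\sqrt{\iota n_{m,J}}\le\sqrt{\iota\, K\log T\sum_{J\in\Pi_m^{\text{out}}} n_{m,J}}\le\sqrt{\iota K s_m\log T}.
\]
The total over epochs is then $\sqrt{\iota K T\log T}$. Combined with $C\log T/K$ and the choice $K\approx(C^2/(\iota T))^{1/3}$, this yields $\otil(\sqrt{\iota T}+(\iota TC)^{1/3})$ when $K\ge1$. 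When $K=1$ (so $C^2\lesssim \iota T$), the terms are $\sqrt{\iota T}$ and $C\le(\iota TC)^{1/3}$, both already inside the claimed bound. The remaining $\iota K\log T$ per epoch is lower order after the log factors are tracked. Adding the inner and outer contributions gives the claimed bound on $\Caldist_1$.

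For the complexity, as in \pref{thm:cal2_approach1}, it suffices to bound $|\Pi_m|=N_m+\order(K\log T)$. The argument in \pref{lem:cal1_better_inner} bounds $N_m$ by $\otil((1+C)^{1/3})$, and $K\le(1+C)^{1/3}$ up to logarithmic factors since $C\le T$. Hence the per-round cost is $\otil((1+C)^{1/3})$.
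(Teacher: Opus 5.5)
Your inner-region argument (Cauchy--Schwarz plus \pref{lem:cal1_better_inner}) and your outer drift terms $n_{m,J}\Delta_J$ match the paper. For the outer $\sqrt{\iota n_{m,J}}$ terms, however, you take a genuinely different route: a single Cauchy--Schwarz over all of $\Pi_m^{\text{out}}$, giving $\sqrt{\iota K s_m\log T}$ per epoch. It needs neither the visit-count bound for these terms nor a case split on $C$.

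This is simpler and sound, but it proves the theorem only up to logarithmic factors, not the stated $\order(\iota\log T+\sqrt{\iota T}+(\iota TC)^{1/3})$. The number of nonempty bands can be $\Theta(\log T)$, so your leading terms carry an extra $\sqrt{\log T}$. In addition, your $\iota|\Pi_m^{\text{out}}|$ term sums over epochs to $\iota K\log^2 T$. Separately, the outer cells are disjoint, so $\sum_q\sum_{J}C_{m,J}\le C_m$ and $\sum_m C_m\le C$; the drift contribution is therefore $C/K$ overall, not $C\log T/K$ per epoch.

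The paper keeps your first computation and repairs only the step you flagged as the obstacle.
\begin{itemize}
\item Per left band, the visit-count bound gives $\frac{\iota K}{2^q r_m}+\sqrt{\frac{\iota K C_{m,q}}{2^q r_m}}$.
\item Apply AM--GM with $a=C_{m,q}/K$ and $b=\iota K^2/(2^q r_m)$, and use $2^q r_m=\order(1)$ on nonempty bands. This also absorbs $\iota K$ and the drift piece $\iota/(2^q r_m)$, so the whole band is $\order(\iota K^2/(2^q r_m)+C_{m,q}/K)$.
\item The geometric sum over $q$ then removes the band count, leaving $\sum_m \iota K^2/r_m+C/K$.
\end{itemize}
The observation you are missing is that $r_m\ge C/s_{m-1}$ holds as well as $r_m\ge\sqrt{\iota/s_{m-1}}$. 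If $C\le\sqrt{\iota T}$, then $K=\order(1)$ and $\sum_m\iota/r_m=\order(\sqrt{\iota T})$. If $C>\sqrt{\iota T}$, then $\sum_m 1/r_m=\order(T/C)$ and $\iota K^2T/C=\order((\iota TC)^{1/3})$.

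That second case is exactly where your crude bound $K\sqrt{\iota T}$ failed. Incidentally, that bound is $\approx C^{2/3}(\iota T)^{1/6}$, not $C^{2/3}(\iota T)^{1/3}$. In short, your route trades a $\sqrt{\log T}$ factor for simplicity, while the paper's route attains the exact stated rate. Your complexity argument is fine.
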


\begin{proof}
Since the same base algorithm is used in \pref{alg:framework_non_uniform} and the choice $K=\big\lceil \rbr{(1+C)^2/(\iota T) }^{1/3}  \big \rceil \leq \order(\big\lceil \rbr{(1+C)/\iota }^{1/3}  \big \rceil)$ (recall in \pref{thm:cal2_approach1}, the algorithm uses $K= \big\lceil \rbr{(1+C)/\iota }^{1/3}  \big \rceil$), repeating the argument in \pref{thm:cal2_approach1} directly gives the claimed result.

The following proof conditions on the nice event $\calE_1$ defined in \pref{def:nice_event_Cal2_approach1}.
 From \pref{corr:decomp_Cal2_into_block}, we have $\Caldist_1 \leq \sum_{m=1}^M \Caldist_1^{(m)}$, where
\begin{align*}
    \Caldist_1^{(m)} = \sum_{J \in \Pi_m} \abr{   \sum_{t \in \calT_m} (y_t-z_J) \Ind{p_t=z_J}    }.
\end{align*}

Consider any epoch $m\geq 2$.
By \pref{lem:block_version_bound}, for each $J \in \Pi_m$,
\begin{align*}
    \abr{   \sum_{t \in \calT_m} (y_t-z_J) \Ind{p_t=z_J}    } \leq \order \rbr{ \sqrt{ \iota \cdot  n_{m,J}  } + \iota + n_{m,J}  \Delta_J } ,
\end{align*}
which implies 
\begin{align*}
  \Caldist_1^{(m)} &\leq  \order \rbr{ \iota |\Pi_m| + \sum_{J \in \Pi_m} \rbr{ \sqrt{ \iota \cdot  n_{m,J}  }+ n_{m,J}  \Delta_J  } } \\
   &\leq  \order \rbr{\iota |\Pi_m^{\text{in}}| + \sum_{J \in \Pi_m^{\text{in}}} \rbr{ \sqrt{ \iota \cdot  n_{m,J}  }+ n_{m,J}  \Delta_J  }  } \\
   &\quad + \order \rbr{ \iota |\Pi_m^{\text{out}}| + \sum_{J \in \Pi_m^{\text{out}}} \rbr{ \sqrt{ \iota \cdot  n_{m,J}  }+ n_{m,J}  \Delta_J  }  } .
\end{align*}

For each $J \in \Pi_m^{\text{in}}$, $\Delta_J = \frac{|I_m|}{N_m}$, and thus
\begin{align*}
&\sum_{ m \geq 2 } \rbr{ \iota |\Pi_m^{\text{in}}| + \sum_{J \in \Pi_m^{\text{in}}} \rbr{ \sqrt{ \iota \cdot  n_{m,J}  }+ n_{m,J}  \Delta_J  }}\\
&  \leq \order  \rbr{\sum_{ m \geq 2 } \rbr{ \iota N_m  + \sqrt{ \iota N_m s_m } + s_m \frac{|I_m|}{N_m} }} \\
& \leq \order \rbr{ \iota \log T+  \sqrt{\iota T} + (\iota TC)^{1/3} },
\end{align*}
where the last inequality applies \pref{lem:cal1_better_inner}.

On the other hand, we have
\begin{align*}
   & \iota |\Pi_m^{\text{out}}| + \sum_{J \in \Pi_m^{\text{out}}} \rbr{ \sqrt{ \iota \cdot  n_{m,J}  }+ n_{m,J}  \Delta_J  } \\
   &\leq \order \rbr{   \iota \sum_{q=0}^Q K + \sum_{J \in \Pi_m^{\text{out}}} \rbr{ \sqrt{ \iota \cdot  n_{m,J}  }+ n_{m,J}  \Delta_J  }  } \\
    &\leq \order \rbr{    \sum_{q=0}^Q  \rbr{ \iota K +  \sum_{J \in \calL_{m,q}} \rbr{ \sqrt{ \iota \cdot  n_{m,J}  }+ n_{m,J}  \Delta_J  } 
    +  \sum_{J \in \calR_{m,q}} \rbr{ \sqrt{ \iota \cdot  n_{m,J}  }+ n_{m,J}  \Delta_J  }    } }.
\end{align*}

As the idea of bounding left and right outer regions is the same, we here take the left outer region as an example. For any non-empty $J \in \calL_{m,q}$, we have $d_J \geq  2^q r_{m}$ and $\Delta_J = \frac{2^q r_{m}}{K}$.
Recall that $C_{m,J} = \sum_{t\in \calT_m} c_t  \calP_t \rbr{z_{J}}$. Based on this, we further define
\begin{align} \label{eq:def_Cmq}
    C_{m,q}: = \sum_{J \in \calL_{m,q}} \sum_{t\in \calT_m} c_t  \calP_t \rbr{z_{J}} + \sum_{J \in \calR_{m,q}} \sum_{t\in \calT_m} c_t  \calP_t \rbr{ z_{J}}.
\end{align}

We use \pref{lem:visit_number_control_Cal2_block_main} to write
\begin{align*}
&  \sum_{J \in \calL_{m,q}} \rbr{ \sqrt{ \iota \cdot  n_{m,J}  }+ n_{m,J}  \Delta_J  }  \\
&\leq \order \rbr{  \sum_{J \in \calL_{m,q}} \rbr{ \sqrt{ \iota  \rbr{  \frac{\iota}{ (2^q r_m)^2  }  +\frac{C_{m,J}}{2^qr_m}  }     }+ \rbr{  \frac{\iota}{ 2^q r_m K  }  +\frac{C_{m,J}}{K}  }     }  }  \\
&\leq \order \rbr{ \frac{\iota K}{ 2^q r_m  } +   \frac{\iota}{ 2^q r_m  }  +\frac{C_{m,q}}{K} +  \sum_{J \in \calL_{m,q}}  \sqrt{  \frac{\iota  C_{m,J}}{2^qr_m}       }        } \\
&\leq \order \rbr{ \frac{\iota K}{ 2^q r_m  } +   \frac{\iota}{ 2^q r_m  }  +\frac{  C_{m,q}}{K} +    \sqrt{  \frac{\iota KC_{m,q}}{2^qr_m}       }        } 
\end{align*}

Thus,
\begin{align*}
 &   \iota K + \sum_{J \in \calL_{m,q}} \rbr{ \sqrt{ \iota \cdot  n_{m,J}  }+ n_{m,J}  \Delta_J  } \\
  & \leq   \order \rbr{ \frac{\iota K}{ 2^q r_m  } +   \frac{\iota}{ 2^q r_m  }  +\frac{C_{m,q}}{K} +    \sqrt{  \frac{\iota  KC_{m,q}}{2^qr_m}       } + \iota K       } \\
    & \leq   \order \rbr{ \frac{\iota K^2}{ 2^q r_m  }   +\frac{C_{m,q}}{K} +    \sqrt{  \frac{\iota  KC_{m,q}}{2^qr_m}       }       } \\
        & \leq   \order \rbr{ \frac{\iota K^2}{ 2^q r_m  }   +\frac{C_{m,q}}{K}        },
\end{align*}
where the last inequality uses $2\sqrt{ab}\leq a+b$ with $a=\frac{C_{m,q}}{K}$ and $b=\frac{\iota K^2}{ 2^q r_m  }$.

Hence, we have
\begin{align*}
&\sum_{ m \geq 2 }\rbr{ \iota |\Pi_m^{\text{out}}| + \sum_{J \in \Pi_m^{\text{out}}} \rbr{ \sqrt{ \iota \cdot  n_{m,J}  }+ n_{m,J}  \Delta_J  }} \\
&\leq \order \rbr{\sum_{ m \geq 2 } \sum_{q=0}^Q \rbr{   \frac{\iota K^2}{ 2^q r_m  }   +\frac{C_{m,q}}{K}    }  } \\
&\leq \order \rbr{\sum_{ m \geq 2 }   \frac{\iota K^2}{ r_m  }   +\frac{C}{K}      } \\
&\leq \order \rbr{ \sum_{ m \geq 2 }   \frac{\iota K^2}{ r_m  }  + (\iota TC)^{1/3}  },
\end{align*}
where the second inequality uses $\sum_q (2^q r_m)^{-1} \leq \order(r_m^{-1})$, and the last inequality follows from the choice of $K$. 
Then, we bound $\sum_{ m \geq 2 }   \frac{\iota K^2}{ r_m  }$ by considering two cases. 
If $C \leq \sqrt{\iota T}$, then $K=\order(1)$. Since $r_m \geq \sqrt{\iota/s_{m-1}}$, using $\sum_q (2^q r_m)^{-1} \leq \order(1/r_m)$ gives
$\sum_{m=2}^M \frac{\iota K^2}{r_m} \leq \order \rbr{\sum_{m=2}^M   \frac{\iota}{r_m}  } \leq \order (\sqrt{\iota T}).$
If $C >\sqrt{\iota T}$, we lower-bound $r_m \geq \frac{C}{s_{m-1}} = \frac{2C}{s_m}$, which implies $\sum_{m} r_m^{-1} \leq \order(T/C)$. Then,
\[
\sum_{m=2}^M \frac{\iota K^2}{  r_m}   \leq \order \rbr{ \frac{\iota K^2T}{C}  } \leq \order \rbr{ (\iota TC)^{1/3}  },
\]
where the last inequality uses the choice of $K$.
Combining both cases, we have $\sum_{ m \geq 2 }   \frac{\iota K^2}{ r_m  } \leq \order(\sqrt{\iota T} +(\iota TC)^{1/3})$.

Putting all the above and bounding $\Caldist_1^{(1)}=\order(1)$, we get the claimed bound on $\Caldist_1$.
\end{proof}

\end{document}